\newcommand{\url}[1]{\texttt{#1}}
\newtheorem{theorem}{Theorem}
\newtheorem{definition}[theorem]{Definition}
\newtheorem{example}[theorem]{Example}
\newtheorem{proposition}[theorem]{Proposition}
\newtheorem{lemma}[theorem]{Lemma}
\title{%
  Conservative Extensions in \\ Horn Description Logics with Inverse Roles%
}
\author{%
  \name Jean Christoph Jung \email{jeanjung@uni-bremen.de} \\
  \name Carsten Lutz \email{clu@uni-bremen.de} \\
  \name Mauricio Martel \email{mauricio.martel@gmail.com} \\
  \name Thomas Schneider \email{thomas.schneider@uni-bremen.de} \\
  \addr Fachbereich 3 Mathematik/Informatik												\\
        Universit\"at Bremen																			\\
        Postfach 330\,440, 28334 Bremen, Germany
}
\begin{document}

\maketitle

\begin{abstract}
  We investigate the decidability and computational complexity of
  conservative extensions and the related notions of inseparability
  and entailment in Horn description logics (DLs) with inverse roles.
  We consider both query conservative extensions, defined by requiring
  that the answers to all conjunctive queries are left unchanged, and
  deductive conservative extensions, which require that the entailed
  concept inclusions, role inclusions, and functionality assertions do
  not change. Upper bounds for query conservative extensions are
  particularly challenging because characterizations in terms of unbounded
  homomorphisms between universal models, which are the foundation of the standard
  approach to establishing decidability, fail in the presence of
  inverse roles. We resort to a characterization that carefully mixes
  unbounded and bounded homomorphisms and enables a decision procedure
  that combines tree automata and a mosaic technique. Our main results
  are that query conservative extensions are \TwoExpTime-complete in
  all DLs between \ELI and Horn-\ALCHIF and between Horn-\ALC and
  Horn-\ALCHIF, and that deductive conservative extensions are 
   \TwoExpTime-complete in
  all DLs between \ELI and $\ELHIFbot$. The same results hold
  for inseparability and entailment.
\end{abstract}

\section{Introduction}

When accessing incomplete data, it can be beneficial to add an ontology
formulated in a decription logic (DL) to specify relevant domain
knowledge, to assign a semantics to the data, and to enrich and unify
the vocabulary available for querying. The resulting framework is
known as ontology-mediated querying
\cite{DBLP:journals/tods/BienvenuCLW14,BO15} and, in a data
integration context, as ontology-based data access, OBDA
\cite{PLCD+08}.  Significant research activity on ontology-mediated
query evaluation has resulted in a thorough understanding of
computational complexity trade-offs and in various tools for
evaluating queries in practice, for a wide range of DLs such as
DL-Lite
\cite{CDL+07,DBLP:journals/jair/ArtaleCKZ09,DBLP:conf/kr/Rodriguez-MuroC12},
expressive Horn DLs such as $\mathcal{ELI}$ and Horn-$\mathcal{SHIQ}$
\cite{HMS07,EGOS08,EOS+12,TSCS15}, and expressive ``full Boolean'' DLs
such as \ALC and $\mathcal{SHIQ}$
\cite{DBLP:journals/jair/GlimmLHS08,DBLP:conf/cade/Lutz08,ZGN+15,DBLP:conf/kr/NgoOS16}.

While query evaluation is by now rather well-understood, there is a
need to develop reasoning services that aim at engineering ontologies
for ontology-mediated querying and support tasks such as ontology
versioning, ontology import, and module extraction, which means to
extract a subset from an ontology that is sufficient for the
application at hand
\cite{KLWW12,KLWW09,DBLP:journals/jair/GrauHKS08}. In all these
applications, it is important to relate different ontologies. In
versioning, for example, one would like to know whether replacing an
ontology with a new version has an effect on evaluating the relevant
queries. In ontology import, one wants to control the effect on query
evaluation of importing an existing ontology. And in module
extraction, one wants to know whether the module is really sufficient
to evaluate the queries from the application. All this can be
formalized by requiring that when exchanging an existing ontology with
a new one, the answers to the relevant queries do not change, over all
possible data sets \cite{KWZ10}. One arrives at notions of
``equivalence'' between ontologies that are different from logical
equivalence.

We now make this more precise. In DLs, ontologies are represented as a
\emph{TBox} while data sets are stored in an \emph{ABox}. A
\emph{signature} is a set of concept names and role names. We say that
a TBox $\Tmc_2 \supseteq \Tmc_1$ is an
\emph{$(\sigmaabox,\sigmaquery)$-query conservative extension} of a
TBox $\Tmc_1$, where $\sigmaabox$ and $\sigmaquery$ are signatures
relevant for the data and queries, respectively, if all
$\sigmaquery$-queries give the same answers w.r.t.\ $\Tmc_1$ and
w.r.t.\ $\Tmc_2$, for every $\sigmaabox$-ABox. We thus identify the
relevant queries by signature.  Note that the subset relationship
$\Tmc_2 \supseteq \Tmc_1$ is natural in some applications such as
ontology import and module extraction. It
is not natural in other applications such as versioning. In the
general case, when $\Tmc_1$ need not be a subset of $\Tmc_2$ and the
above condition is satisfied, we call $\Tmc_1$ and $\Tmc_2$
\emph{$(\sigmaabox,\sigmaquery)$-query inseparable}.  We also consider
the notion of query entailment: $\Tmc_1$
\emph{$(\sigmaabox,\sigmaquery)$-query entails} $\Tmc_2$ if all
$\sigmaquery$-queries give \emph{at least} the answers w.r.t.\
$\Tmc_1$ that they give w.r.t.\ $\Tmc_2$, over every $\sigmaabox$-ABox.
Clearly, query inseparability and conservative extensions are special
cases of query entailment: inseparability is bidirectional entailment and
conservative extensions are entailment under the promise that
\mbox{$\Tmc_1 \subseteq \Tmc_2$}.
When studying decidability or computational complexity, it thus
suffices to prove upper bounds for query entailment and lower bounds
for conservative extensions.  For the query language, we concentrate on
conjunctive queries (CQs); since we work with Horn DLs and quantify
over the queries, this is equivalent to using unions of CQs (UCQs) and
positive existential queries (PEQs).  Conservative extensions,
inseparability, query entailment of \emph{TBoxes}, as defined above,
are useful when knowledge is considered static and data changes
frequently.  Variants of these notions for \emph{knowledge bases
  (KBs)}, which consist of a TBox and an ABox, can be used for
applications with static data \cite{WWT+14,ABCR16}.

CQ entailment has been studied for various DLs
\cite{KPS+09,LW10,KLWW12,DBLP:journals/ai/BotoevaLRWZ19}, also in the KB version
\cite{BKR+16,DBLP:journals/ai/BotoevaLRWZ19}, and also for OBDA specifications that involve
mappings between data sources and the ABox \cite{BR15}, see also
the survey by~\citeA{BKL+16}.
Nevertheless, there is still a notable gap in our understanding of
this notion: query entailment between TBoxes is poorly understood in
Horn DLs with inverse roles, which are considered a crucial feature in
many applications. There is in fact a reason for this: as has already
been observed by Botoeva et al.~\citeyear{BKL+16,BKR+16}, standard techniques for Horn
DLs without inverse roles fail when inverse roles are added.  More
precisely, for Horn DLs without inverse roles query entailment can be
characterized by the existence of homomorphisms between universal
models \cite{LW10,BKL+16}. The resulting characterizations provide an
important foundation for decision procedures, often based on tree
automata emptiness \cite{BKL+16}. In the presence of inverse roles,
however, such characterizations are only correct if one resorts to
\emph{bounded} homomorphisms, that is, if one requires the existence
of an $n$-bounded homomorphism, \emph{for any} $n$
\cite{BKL+16,BKR+16}. The unbounded $n$ in $n$-bounded homomorphisms
corresponds to CQs of unbounded size that can be used for separating
the two TBoxes.  Unbounded homomorphisms, in contrast, correspond to
infinitary CQs, and while the (implicit) transition to infinitary CQs
poses no problems in DLs that do not admit inverse roles, it
compromises correctness in the presence of inverse roles. The
`tighter' characterization in terms of bounded homomorphism is
problematic because it is not obvious how the existence of such
(infinite families of) homomorphisms can be verified using tree
automata or related techniques and, consequently, decidability results
for query conservative extensions in Horn DLs with inverse roles are
difficult to obtain. In fact, the only result of which we are aware
concerns inseparability \emph{of KBs}, and it is proved using
intricate game-theoretic techniques~\cite{BKR+16}.

The aim of this article is to develop decision procedures for and study
the complexity of query conservative extensions, query inseparability,
and query entailment in Horn DLs with inverse roles such as \ELI and
Horn-\ALCHIF.  The main idea for establishing decidability is to provide a
very careful characterization that mixes unbounded and bounded
homomorphisms, pushing the use of bounded homomorphisms to only those
places where they cannot possibly be avoided. We can then deal with the
part of the characterization that uses unbounded homomorphisms using
tree automata while the part that uses bounded homomorphisms is
addressed up-front by precomputing relevant information using a mosaic
technique.  In this way, we establish decidability and a \TwoExpTime
upper bound for query entailment in the expressive Horn DL
Horn-\ALCHIF, and thus also for query inseparability and query
conservative extensions.  Together with known lower bounds~\cite{DBLP:journals/ai/BotoevaLRWZ19}, this yields \TwoExpTime-completeness for all DLs
between \ELI and Horn-\ALCHIF as well as between Horn-\ALC and
Horn-\ALCHIF.  To be more precise, the complexity is single
exponential in the size of $\Tmc_1$ and double exponential only
in the size of $\Tmc_2$. 

We additionally study the \emph{deductive} version of query
entailment, query inseparability, and query conservative extensions.
Here, instead of asking whether the answers to all queries are
preserved, the question is whether $\Tmc_1$ entails every concept
inclusion, role inclusion, and functionality assertion that $\Tmc_2$
entails, over a given signature \sig. This problem, too, has received
considerable interest in the literature, but has not previously been
studied for Horn DLs with inverse roles. It is more appropriate than
the query-based notions in applications that require conceptual
reasoning rather than querying data. It was historically even the
first notion of conservative extension studied for DLs~\cite{GLW06};
we again refer to the survey~\cite{BKL+16} for a more detailed
discussion.  We show that deductive entailment, deductive
inseparability, and deductive conservative extensions are
\TwoExpTime-complete for all DLs between \ELI and $\ELHIFbot$; again,
the runtime of our algorithm is single exponential in the size of
$\Tmc_1$ and double exponential in the size of $\Tmc_2$.  For the
upper bound, we first show that deductive entailment is essentially
identical to query entailment when the queries are tree-shaped CQs
with a single answer variable. We then characterize this version of
query entailment using simulations in place of homomorphisms; it
is not necessary to resort to a bounded version of
simulations. This again enables a decision procedure based on tree
automata emptiness. The lower bound is proved by reduction
from a certain homomorphism problem between universal models
for ABoxes w.r.t.\ \ELI TBoxes, studied in the context of querying
by example~\cite{GuJuSa-IJCAI18}.

This article is structured as follows:
in Section~\ref{sec:prelims} we define the notions used throughout the text;
in Section~\ref{sec:characterization} we establish the model-theoretic
characterizations;
in Section~\ref{sec:automata} we develop the automata-based decision procedure
and prove the exact complexity for query entailment, inseparability,
and conservative extensions;
Section~\ref{sec:lower_bounds} deals with the case of tree-shaped queries
and deductive query entailment;
Section~\ref{sec:concl} discusses possible future work.
Proofs of some auxiliary lemmas can be found in the appendix.


\section{Preliminaries}
\label{sec:prelims}

We define basic notions and lemmas that are needed in the remainder of the article.

\subsection{Description Logics}

The main DL considered in this article is Horn-\ALCHIF, a member of
the Horn-\SHIQ family of DLs whose reasoning problems have been widely
studied~\cite{HMS07,DBLP:journals/tocl/KrotzschRH13,EGOS08,Kaz09,ILS14,DBLP:journals/lmcs/LutzW17}. We
introduce Horn-\ALCHIF and several of its fragments.
%
 Let $\mn{N_C},\mn{N_R},\mn{N_I}$ be countably infinite sets of
 \emph{concept names}, \emph{role names}, and \emph{individual names}.
 A \emph{role} is either a role name $r$ or an \emph{inverse role}
 $r^-$ where $r$ is a role name.  As usual, we identify $(r^-)^-$ and $r$, allowing us to
 switch between role names and their inverses easily.  A
 \emph{concept inclusion (CI)} is of the form $L \sqsubseteq R$, where
 $L$ and $R$ are concepts defined by the syntax rules
%
\begin{align*} 
    R,R' &::= \top \mid \bot \mid A \mid \neg A \mid R \sqcap R' \mid 
\neg L \sqcup R \mid \exists r . R \mid
   \forall r . R \\[1mm]
    L,L' &::= \top \mid \bot \mid A \mid L \sqcap L' \mid L \sqcup L' \mid 
    \exists r . L 
\end{align*}
with $A$ ranging over concept names and $r$ over roles. 
%
A \emph{role inclusion (RI)} is of the form $r \sqsubseteq s$ with
$r,s$ roles, and a \emph{functionality assertion (FA)} is of the form
$\mn{func}(r)$ with $r$ a role. A \emph{Horn-\ALCHIF TBox} \Tmc is a
finite set of CIs, RIs, and FAs.
%
%
To avoid dealing with rather messy
technicalities that do neither seem to be very illuminating from a
theoretical viewpoint nor too useful from a practical one,
we generally assume that functional roles
cannot have subroles, that is, $r \sqsubseteq s \in \Tmc$ implies
$\mn{func}(s) \notin \Tmc$.\footnote{%
  Concerning the usefulness of allowing functional roles to have subroles,
  we found that only
  21 ($\leq{}$4.8\%) out of 439 available ontologies in BioPortal \cite{MP17}
  contain subroles of functional roles;
  many of these occurrences appear to be due to
  modeling mistakes.%
}
We conjecture that our main results also
hold without that restriction.

In this article, we consider
the following fragments of Horn-\ALCHIF. A
\emph{Horn-\ALC} TBox \Tmc is a finite set of CIs that do not use
inverse roles. An \emph{\ELIbot concept} is an expression that is
built according to the syntax rule for $L$ above, but does not
use~``$\sqcup$''.
An \emph{\ELIbot CI} is a CI of the form $L \sqsubseteq R$ where both $L$
and $R$ are \ELIbot concepts.  An \emph{\ELHIFbot TBox} is a finite
set of \ELIbot CIs, RIs, and FAs. 

An \emph{ABox} \Amc is a finite set of \emph{concept and role
  assertions} of the form $A(a)$ and $r(a,b)$, where $A \in \mn{N_C}$,
$r \in \mn{N_R}$ and $a,b \in \mn{N_I}$. We write $\mn{ind}(\Amc)$ for
the set of individuals in \Amc.  An ABox $\Amc$ is \emph{tree-shaped}
if (i) $\Amc$ does not contain an assertion of the form $r(a,a)$, (ii) the
undirected graph $G_\Amc = (\mn{ind}(\Amc),\,\{\{a,b\} \mid r(a,b) \in
\Amc\})$ is a tree, and (iii) there are no multi-edges, that is, for any
$a,b \in \mn{ind}(\Amc)$, \Amc contains at most one role assertion
that involves both $a$ and $b$.  

The semantics of Horn-\ALCHIF is defined in the usual
way~\cite{DL-Textbook}. An \emph{interpretation} is a pair $\Imc =
(\Delta^\Imc,\cdot^\Imc)$, where $\Delta^\Imc$ is a non-empty set, the
\emph{domain}, and $\cdot^\Imc$ is the
\emph{interpretation function}, with $A^\Imc \subseteq \Delta^\Imc$
for every $A \in \mn{N_C}$, $r^\Imc \subseteq \Delta^\Imc \times
\Delta^\Imc$ for every $r \in \mn{N_R}$, and $a^\Imc \in \Delta^\Imc$
for every $a \in \mn{N_I}$.  We make the standard name assumption,
that is, $a^\Imc=a$ for all $a \in \mn{N_I}$.  The interpretation
function is extended to inverse roles and arbitrary concepts $C$
in the usual way:
\[
  \begin{array}{@{}r@{~~}c@{~~}l@{\qquad}r@{~~}c@{~~}l@{\qquad}r@{~~}c@{~~}l@{}}
    (r^-)^\Imc         & = & \multicolumn{7}{@{}l}{\{(e,d) \mid (d,e) \in r^\Imc\}} \\[6pt]
    \top^\Imc          & = & \Delta^\Imc                                        \\[2pt]
    \bot^\Imc          & = & \emptyset                                          \\[2pt]
    (\lnot C)^\Imc     & = & \Delta^\Imc \setminus C^\Imc                       \\[2pt]
    (C \sqcap D)^\Imc  & = & C^\Imc \cap D^\Imc                                 \\[2pt]
    (C \sqcup D)^\Imc  & = & C^\Imc \cup D^\Imc                                 \\[2pt]
    (\exists r.C)^\Imc & = & \multicolumn{7}{@{}l}{\{d \mid \text{there is~} e \in \Delta^\Imc \text{~with~} (d,e) \in r^\Imc \text{~and~} e \in C^\Imc\}} \\[2pt]
    (\forall r.C)^\Imc & = & \multicolumn{7}{@{}l}{\{d \mid \text{for
        all~} e \in \Delta^\Imc, \text{~if~} (d,e) \in r^\Imc,
      \text{~then~} e \in C^\Imc\}}.
  \end{array}
\]
An interpretation \Imc satisfies
\begin{itemize}
  \item
    a CI $C \sqsubseteq D$, written $\Imc \models C \sqsubseteq D$, if $C^\Imc \subseteq D^\Imc$;
  \item
    an RI $r \sqsubseteq s$, written $\Imc \models r \sqsubseteq s$, if $r^\Imc \subseteq s^\Imc$;
  \item 
    an FA $\mn{func}(r)$, written $\Imc \models \mn{func}(r)$, if $r^\Imc$ is functional;
  \item
    a concept assertion $A(a)$, written $\Imc \models A(a)$, if $a \in A^\Imc$;
  \item
    a role assertion $r(a,b)$, written $\Imc \models r(a,b)$, if $(a,b) \in r^\Imc$.
\end{itemize}
$\Imc$ is 
a \emph{model} of a TBox \Tmc, written $\Imc \models \Tmc$,
if it satisfies all inclusions and
assertions in it, likewise for ABoxes.
 \Amc is \emph{consistent} with \Tmc
if \Tmc and \Amc have a common model.

Tree-shaped interpretations are defined by analogy with tree-shaped
ABoxes. We additionally need a weaker variant that permits
multi-edges: an interpretation $\Imc$ is \emph{weakly tree-shaped} if
there are no $d,r$ with $(d,d) \in r^\Imc$ and the undirected graph
$(\Delta^\Imc,\,\{\{d,e\} \mid (d,e) \in r^\Imc\})$ is a tree. 
Thus, an interpretation $\Imc$ is \emph{tree-shaped} if it is
weakly tree-shaped and has no multi-edges, i.e., $r^\Imc \cap s^\Imc =
\emptyset$ for any two distinct roles $r,s$.

A \emph{signature} $\sig$ is a set of concept and role names. An
\emph{$\sig$-ABox} is an ABox that uses only concept and role names from
$\sig$, and \emph{$\sig$-\ELIbot concepts} and other
syntactic objects are defined analogously.

Generally and without further notice, we work with Horn-\ALCHIF
TBoxes that are in a certain nesting-free \emph{normal form}, that is, they
contain only CIs of the form
\[
  \top \sqsubseteq A
  \qquad
  A \sqsubseteq \bot
  \qquad
  A_1 \sqcap A_2 \sqsubseteq B
  \qquad
  A \sqsubseteq \exists r.B
  \qquad
  A \sqsubseteq \forall r.B,
\]
where $A,B,A_1,A_2$ are concept names and $r,s$ are roles. It is
well-known that every Horn-\ALCHIF TBox $\Tmc$ can be converted
into a TBox $\Tmc'$ in normal form (introducing additional concept
names) such that $\Tmc$ is a logical consequence of $\Tmc'$ and every
model of $\Tmc$ can be extended to one of $\Tmc'$ by interpreting the
additional concept names \cite<see, e.g.,>{BHLW16}. As a consequence,
all results obtained in this article for TBoxes in normal form lift to
the general case. Note that a Horn-\ALCHIF TBox in normal form is 
essentially an \ELIbot TBox since CIs of the form $A\sqsubseteq \forall r.B$
can be equivalently rewritten as $\exists r^-.A \sqsubseteq B$.


\subsection{Query Conservative Extensions and Entailment}
\label{sec:query_entailment}

A \emph{conjunctive query (CQ)} is of the form
$
  q(\xbf) = \exists \ybf\,\varphi(\xbf,\ybf),
$
where \xbf and \ybf are tuples of variables and
$\varphi(\xbf,\ybf)$ is a conjunction of \emph{atoms} of the form
$A(z)$ or $r(z,z')$ with
$A \in \mn{N_C}$, $r \in \mn{N_R}$, and
$z,z' \in \xbf \cup \ybf$.
We call \xbf \emph{answer variables}
and \ybf \emph{quantified variables} of $q$.
Tree-shaped and weakly tree-shaped CQs are defined by analogy with (weakly) tree-shaped ABoxes and interpretations:
a CQ $q$ 
is \emph{weakly tree-shaped} if it does not contain atoms of the
  form $r(z,z)$ and the undirected graph $(\xbf \cup \ybf,~\{\{z,z'\}
\mid r(z,z') \text{~is an atom in~} q\})$ is a tree.
A weakly tree-shaped CQ $q$
is \emph{tree-shaped} or a \emph{tCQ} if $q$ has no multi-edges,
that is, for any two distinct variables $z,z'$ in $q$, there is
at most one atom of the form $r(z,z')$ or $r(z',z)$ in~$q$.
A \emph{\onetCQ} is a tCQ with exactly one answer variable.
We sometimes write $r^-(z,z') \in q$ to mean $r(z',z) \in q$.

A \emph{match} of $q$ in an interpretation \Imc is a function
$\pi : \xbf \cup \ybf \to \Delta^\Imc$ such that $\pi(z) \in A^\Imc$
for every atom $A(z)$ of $q$ and $(\pi(z),\pi(z')) \in r^\Imc$ for every
atom $r(z,z')$ of $q$
.  We write
$\Imc \models q(a_1,\dots,a_n)$ if there is a match of $q$ in $\Imc$
with $\pi(x_i) = a_i$ for all $i < n$.  A tuple
$\abf$ of elements from $\mn{N_I}$ is a
\emph{certain answer} to $q$ over an ABox \Amc given a TBox \Tmc,
written $\Tmc,\Amc \models q(\abf)$, if $\Imc \models q(\abf)$ for
all models of \Tmc and \Amc.
\begin{definition}
  \label{def:entailment}
  Let $\sigmaabox,\sigmaquery$ be signatures
  and $\Tmc_1,\Tmc_2$ Horn-\ALCHIF TBoxes.
  We say that \emph{$\Tmc_1$ $(\sigmaabox,\sigmaquery)$-CQ entails $\Tmc_2$},
  written $\Tmc_1 \models_{\sigmaabox,\sigmaquery}^{\textup{CQ}} \Tmc_2$,
  if for all $\sigmaabox$-ABoxes \Amc
  consistent with $\Tmc_1$ and $\Tmc_2$,
  all $\sigmaquery$-CQs $q(\xbf)$
  and all tuples $\abf \subseteq \mn{ind}(\Amc)$,
  $\Tmc_2,\Amc \models q(\abf)$ implies
  $\Tmc_1,\Amc \models q(\abf)$.  If in addition
  $\Tmc_1 \subseteq \Tmc_2$, we say that \emph{$\Tmc_2$ is an
    $(\sigmaabox,\sigmaquery)$-CQ conservative extension of
    $\Tmc_1$}. If
  $\Tmc_1 \models_{\sigmaabox,\sigmaquery}^{\textup{CQ}} \Tmc_2$ and
  vice versa, then $\Tmc_1$ and $\Tmc_2$ are
  \emph{$(\sigmaabox,\sigmaquery)$-CQ inseparable}.
\end{definition}
%
We also consider
\emph{$(\sigmaabox,\sigmaquery)$-\onetCQ entailment}, denoted
$\models_{\sigmaabox,\sigmaquery}^{\textup{\onetCQ}}$,
\emph{$(\sigmaabox,\sigmaquery)$-\onetCQ conservative extensions},
and \emph{$(\sigmaabox,\sigmaquery)$-\onetCQ inseparability},
defined in the
obvious way by replacing CQs with \onetCQs.

If $\Tmc_1 \not\models_{\sigmaabox,\sigmaquery}^{\text{CQ}} \Tmc_2$
because $\Tmc_2,\Amc \models q(\abf)$ but
$\Tmc_1,\Amc \not\models q(\abf)$ for some $\sigmaabox$-ABox \Amc
consistent with both $\Tmc_i$, $\sigmaquery$-CQ $q(\xbf)$ and
tuple $\abf$ over $\mn{ind}(\Amc)$, we call the triple $(\Amc, q, \abf)$ a \emph{witness} to
non-entailment.
\begin{example}
  Let $\Tmc_1=\{\text{PhDStud}\sqsubseteq \exists
  \text{advBy}.\text{Prof}, \text{adv}\sqsubseteq\text{advBy}^-\}$ and
  $\Tmc_2=\Tmc_1\cup\{\mn{func}(\text{advBy})\}$,
  $\sigmaabox=\{\text{PhDStud},\text{adv}\}$ and $\sigmaquery=\{\text{Prof}\}$. Then
  we have
   $\Tmc_1\not\models^{\textup{CQ}}_{\sigmaabox,\sigmaquery} \Tmc_2$ because of
  the witness
  $(\{\text{PhDStud}(\text{john}),\text{adv}(\text{mary},\text{john})\},
  \text{Prof}(x), \text{mary})$.
\end{example}
If we drop from Definition~\ref{def:entailment} the condition that
$\Amc$ must be consistent with both $\Tmc_1$ and $\Tmc_2$, then we
obtain an alternative notion of CQ entailment that we call
\emph{CQ entailment with inconsistent ABoxes}.  While CQ
entailment with inconsistent ABoxes
trivially implies CQ entailment in the original sense, the
converse fails. 
\begin{example}
  \label{exa:incons}
  Let $\Tmc_1 = \emptyset$,
  $\Tmc_2 = \{A_1 \sqcap A_2 \sqsubseteq \bot\}$ and $\sigmaabox = \{
  A_1,A_2\}$, $\sigmaquery = \{B\}$. Then
  $\Tmc_1 \models_{\sigmaabox,\sigmaquery}^{\textup{CQ}} \Tmc_2$
  because none of the $\Tmc_i$ uses the concept name $B$;
  however,
  $\Tmc_1$ does not $(\sigmaabox,\sigmaquery)$-CQ entail $\Tmc_2$
  with inconsistent ABoxes because of the witness
  $(\{A_1(a),A_2(a)\},\,B(x),\,a)$.
\end{example}
%
%
%
The following lemma relates the two notions of CQ entailment.
\emph{CQ evaluation} is the problem to decide, given a TBox \Tmc, an
ABox \Amc, a CQ~$q$, and a tuple $\abf \in \mn{ind}(\Amc)$, whether
$\Tmc,\Amc \models q(\abf)$.
\begin{proposition}
  \label{lem:inconsistent_ABoxes}
  $(\sigmaabox,\sigmaquery)$-CQ entailment with inconsistent ABoxes
  can be decided in polynomial time
  given access to oracles deciding
  $(\sigmaabox,\sigmaquery)$-CQ entailment 
and CQ evaluation.
\end{proposition}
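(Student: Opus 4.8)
The plan is to reduce CQ entailment with inconsistent ABoxes to the two oracles by isolating exactly where the two notions differ. Classifying each $\sigmaabox$-ABox $\Amc$ by its consistency status, a witness $(\Amc,q,\abf)$ to failure of entailment with inconsistent ABoxes falls into one of three cases. If $\Amc$ is consistent with both $\Tmc_1$ and $\Tmc_2$, it is an ordinary witness to non-entailment. If $\Amc$ is inconsistent with $\Tmc_1$, then $\Tmc_1,\Amc\models q(\abf)$ holds vacuously and no witness arises. If $\Amc$ is consistent with $\Tmc_1$ but inconsistent with $\Tmc_2$, then $\Tmc_2,\Amc\models q(\abf)$ holds vacuously, so $(\Amc,q,\abf)$ is a witness iff $\Tmc_1,\Amc\not\models q(\abf)$ for some $\sigmaquery$-CQ $q$; call such a witness \emph{critical}. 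Since entailment with inconsistent ABoxes implies ordinary entailment, I would first call the entailment oracle on $(\Tmc_1,\Tmc_2,\sigmaabox,\sigmaquery)$ and output ``fail'' if it fails. It then remains to decide, under the promise that $\Tmc_1\models_{\sigmaabox,\sigmaquery}^{\textup{CQ}}\Tmc_2$ holds, whether a critical witness exists.

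The key structural observation I would establish is that a critical witness exists iff both: (C1) there is a $\sigmaabox$-ABox consistent with $\Tmc_1$ but inconsistent with $\Tmc_2$; and (C2) there is a $\sigmaabox$-ABox $\Amc'$ consistent with $\Tmc_1$ and a single $\sigmaquery$-atom $\alpha$ (a concept atom $B(a)$ or role atom $r(a,b)$) with $\Tmc_1,\Amc'\not\models\alpha$. The direction from a critical witness to C1 and C2 is immediate, using that if $\Tmc_1,\Amc$ entails every single $\sigmaquery$-atom over $\mn{ind}(\Amc)$ then it entails every $\sigmaquery$-CQ (map all query variables to named individuals), so a non-entailed $\sigmaquery$-CQ yields a non-entailed atom. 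For the converse I would take the disjoint union $\Amc_0\uplus\Amc'$ of a C1-witness $\Amc_0$ and a C2-witness $\Amc'$: since Horn-\ALCHIF models are closed under disjoint unions and universal models decompose over disconnected components, $\Amc_0\uplus\Amc'$ is consistent with $\Tmc_1$, inconsistent with $\Tmc_2$, and still fails $\alpha$, hence is critical.

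Conditions C1 and C2 can then be decided separately. For C2 I would use monotonicity of certain answers in the ABox: if some $\sigmaquery$-atom fails over a consistent ABox, it already fails over the sub-ABox consisting of the one or two assertions introducing the individuals of $\alpha$. Hence it suffices to enumerate the polynomially many $\sigmaabox$-ABoxes with at most two assertions and, for each, check consistency with $\Tmc_1$ and non-entailment of the relevant $\sigmaquery$-atoms with the CQ-evaluation oracle, where consistency is a single evaluation call using a fresh concept name. For C1 I would split inconsistency of $\Amc$ with $\Tmc_2$ into type (i), a clash reached through some CI $A\sqsubseteq\bot$, and type (ii), a functionality clash caused by two distinct named successors of a functional role. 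By the standing restriction that functional roles have no subroles, a type (ii) clash requires two explicit role assertions and is therefore witnessed by a two-assertion ABox, so it is again caught by enumeration plus CQ-evaluation. For type (i) I would form $\Tmc_2^\ddagger$ from $\Tmc_2$ by replacing every CI $A\sqsubseteq\bot$ with $A\sqsubseteq N$ for a fresh concept name $N$, keeping all functionality assertions; then $\Tmc_2^\ddagger$ is consistent with every $\sigmaabox$-ABox free of type (ii) clashes, and for such ABoxes $\Tmc_2^\ddagger,\Amc\models\exists x\,N(x)$ iff $\Amc$ is inconsistent with $\Tmc_2$. A single call of the entailment oracle on $(\Tmc_1,\Tmc_2^\ddagger,\sigmaabox,\{N\})$ then detects the remaining C1-witnesses, as $\Tmc_1$ never entails $N$. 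Combining, entailment with inconsistent ABoxes holds iff ordinary entailment holds and C1 and C2 do not both hold; the whole procedure uses polynomially many oracle calls and polynomial additional time.

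I expect the main obstacle to be the correctness of $\Tmc_2^\ddagger$ in the presence of functionality: one must argue that the chase of $\Tmc_2^\ddagger$ coincides with that of $\Tmc_2$ up to the first $\bot$-clash, so that forcing $N$ exactly mirrors type (i) inconsistency, and that the only inconsistencies removed by the $\bot\mapsto N$ replacement are precisely the type (i) ones, the type (ii) clashes being handled separately and relying essentially on the no-subroles restriction.
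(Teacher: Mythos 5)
Your proposal is correct, and at the top level it is a reorganization of the paper's own decomposition: since $(\sigmaabox,\sigmaquery)$-universality implies ordinary CQ entailment, your condition ``ordinary entailment and not ($C1$ and $C2$)'' is propositionally equivalent to the paper's ``(ordinary entailment and $\Tmc_1 \models_{\sigmaabox}^{\bot} \Tmc_2$) or universality'', with not-$C1$ being inconsistency entailment and not-$C2$ being universality; your disjoint-union construction of a critical witness and your fork enumeration are exactly the paper's arguments as well. Where you genuinely depart is in reducing $C1$ to the entailment oracle. The paper modifies \emph{both} TBoxes, replacing $\bot$ by a fresh concept name $A$ and adding propagation axioms $A \sqsubseteq \forall s.A$ and $A \sqsubseteq \forall s^-.A$ for every role $s$, and then separates with the answer-variable query $A(x)$; you modify only $\Tmc_2$ and separate with the Boolean query $\exists x\,N(x)$, which makes propagation unnecessary. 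This works, and the obstacle you flag at the end can be bypassed without any chase argument: a model of $\Tmc_2^\ddagger$ and $\Amc$ that interprets $N$ as empty is precisely a model of $\Tmc_2$ and $\Amc$, so $\Tmc_2^\ddagger,\Amc \models \exists x\,N(x)$ iff $\Amc$ is inconsistent with $\Tmc_2$, for \emph{every} ABox; fork-freeness is needed only so that a $C1$-witness is consistent with $\Tmc_2^\ddagger$ and therefore lies inside the oracle's quantification range, which is exactly where the no-subroles-of-functional-roles restriction enters, just as in the paper's fork condition $\Tmc_1 \models_\sigmaabox^{\textup{fork}} \Tmc_2$. Your $C2$ check via two-assertion ABoxes is a mild variant of the paper's universality test (``no role names in $\Qbf$'' plus single-assertion checks) and is arguably more uniform, since disconnected two-assertion ABoxes subsume the role-name case. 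One shared caveat: your atom argument for ``critical witness implies $C2$'' (like the paper's universality test) tacitly assumes ABoxes are non-empty -- over the empty ABox a Boolean CQ can fail although there are no atoms over individuals to check -- so both treatments rest on the implicit convention that ABoxes contain at least one assertion.
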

Consequently and since CQ evaluation is in \ExpTime in Horn-\ALCHIF
\cite{EGOS08}, all complexity results obtained in this article also
apply to CQ entailment with inconsistent ABoxes.

To prove Proposition~\ref{lem:inconsistent_ABoxes}, we introduce a
notion of entailment that refers to the (in)con\-sis\-ten\-cy of ABoxes:
\emph{$\Tmc_1$ $\sigmaabox$-inconsistency entails $\Tmc_2$}, written
$\Tmc_1 \models_{\sigmaabox}^{\bot} \Tmc_2$, if for all
$\sigmaabox$-ABoxes \Amc: if $\Amc$ is inconsistent with $\Tmc_2$,
then $\Amc$ is inconsistent with $\Tmc_1$.  Also, given a TBox \Tmc
and signatures $\sigmaabox,\sigmaquery$, we say that \Tmc is
\emph{$(\sigmaabox, \sigmaquery)$-universal} if $\Tmc,\Amc \models
q(\abf)$ for all $\sigmaabox$-ABoxes $\Amc$, $\sigmaquery$-CQs
$q(\xbf)$, and tuples $\abf$ over $\mn{ind}(\Amc)$ that are of the
same length as $\xbf$. The following is proved in
Appendix~\ref{appx:inconsistent_ABoxes}.
\begin{restatable}{lemma}{leminconsistentABoxesaux}
  \label{lem:inconsistent_ABoxes_aux}
  Let $\Tmc_1$ and $\Tmc_2$ be $\HornALCHIF$ TBoxes and let
  $\sigmaabox,\sigmaquery$ be signatures.
  Then 
  $\Tmc_1$ $(\sigmaabox,\sigmaquery)$-CQ entails $\Tmc_2$ with inconsistent ABoxes
  iff one of the two following conditions holds.
  \begin{enumerate}
    \item[(1)]
      $\Tmc_1 \models_{\sigmaabox,\sigmaquery}^{\textup{CQ}} \Tmc_2$
      ~and~ $\Tmc_1 \models_{\sigmaabox}^{\bot} \Tmc_2$;
    \item[(2)]
      $\Tmc_1$ is $(\sigmaabox, \sigmaquery)$-universal.
  \end{enumerate}
\end{restatable}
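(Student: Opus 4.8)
The plan is to establish the biconditional by arguing its two directions separately; the forward direction is where the real work lies.

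For $(\Leftarrow)$, I would assume that (1) or (2) holds and derive CQ entailment with inconsistent ABoxes, i.e.\ that $\Tmc_2,\Amc\models q(\abf)$ implies $\Tmc_1,\Amc\models q(\abf)$ for \emph{every} $\sigmaabox$-ABox $\Amc$, $\sigmaquery$-CQ $q(\xbf)$ and matching $\abf\subseteq\mn{ind}(\Amc)$. If (2) holds then $\Tmc_1,\Amc\models q(\abf)$ always, so the implication is trivial. If (1) holds, I would fix $\Amc,q,\abf$ with $\Tmc_2,\Amc\models q(\abf)$ and split on consistency: if $\Amc$ is inconsistent with $\Tmc_2$ then $\Tmc_1\models_{\sigmaabox}^{\bot}\Tmc_2$ makes $\Amc$ inconsistent with $\Tmc_1$, so $\Tmc_1,\Amc\models q(\abf)$ holds vacuously; if $\Amc$ is consistent with $\Tmc_2$ but not with $\Tmc_1$ the conclusion is again vacuous; and if $\Amc$ is consistent with both, I invoke $\Tmc_1\models_{\sigmaabox,\sigmaquery}^{\textup{CQ}}\Tmc_2$.

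For $(\Rightarrow)$, assume CQ entailment with inconsistent ABoxes. Since the ABoxes consistent with both $\Tmc_i$ form a subclass of all $\sigmaabox$-ABoxes, the first conjunct of (1), $\Tmc_1\models_{\sigmaabox,\sigmaquery}^{\textup{CQ}}\Tmc_2$, is immediate. It remains to show that if the second conjunct fails then (2) holds. So suppose $\Tmc_1\not\models_{\sigmaabox}^{\bot}\Tmc_2$ and fix a $\sigmaabox$-ABox $\Amc^*$ that is inconsistent with $\Tmc_2$ but consistent with $\Tmc_1$; I want to conclude that $\Tmc_1$ is $(\sigmaabox,\sigmaquery)$-universal. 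The key observation is that for \emph{any} $\sigmaabox$-ABox $\mathcal B$ inconsistent with $\Tmc_2$, the hypothesis forces $\Tmc_1,\mathcal B\models p(\bar c)$ for every $\sigmaquery$-CQ $p$ and every tuple $\bar c$ over $\mn{ind}(\mathcal B)$, simply because $\Tmc_2,\mathcal B\models p(\bar c)$ holds vacuously. To exploit this, I would take an arbitrary $\sigmaabox$-ABox $\Amc$ (consistent with $\Tmc_1$, else the conclusion is vacuous), a $\sigmaquery$-CQ $q(\xbf)$ and a matching $\abf\subseteq\mn{ind}(\Amc)$, and form the disjoint union $\Amc^\circ=\Amc\uplus\Amc^*$ (renaming the individuals of $\Amc^*$ apart). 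As $\Amc^\circ$ contains an isomorphic copy of $\Amc^*$ it is inconsistent with $\Tmc_2$, so the key observation yields $\Tmc_1,\Amc^\circ\models p(\bar c)$ for all $\sigmaquery$-CQs $p$ and all $\bar c\subseteq\mn{ind}(\Amc^\circ)$, in particular all $\bar c\subseteq\mn{ind}(\Amc)$. Here I would use that in Horn-\ALCHIF the universal model of a disjoint union of ABoxes is the disjoint union of the universal models: no rule---existential, value restriction, role inclusion or functionality---ever creates an edge between, or merges, elements of distinct components, so $\mathcal U_{\Amc^\circ}\cong\mathcal U_{\Amc}\uplus\mathcal U_{\Amc^*}$. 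Consequently the $\sigmaquery$-concept memberships of the individuals in $\mn{ind}(\Amc)$ and the $\sigmaquery$-role links among them coincide in $\mathcal U_{\Amc^\circ}$ and $\mathcal U_{\Amc}$. Applying the previous line to the atomic queries $B(x)$ and $r(x,y)$ then shows that in $\mathcal U_{\Amc}$ every $a\in\mn{ind}(\Amc)$ lies in every $\sigmaquery$-concept name and every pair from $\mn{ind}(\Amc)$ is in every $\sigmaquery$-role; that is, the named part of $\mathcal U_{\Amc}$ is saturated. For nonempty $\Amc$ this gives $\Tmc_1,\Amc\models q(\abf)$ by mapping $\xbf$ to $\abf$ and every quantified variable to a fixed named individual, while the empty-ABox case involves only Boolean CQs and is handled separately (and degenerate cases such as $\Tmc_2$ being unsatisfiable already yield universality directly).

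The step I expect to be the main obstacle is the treatment of \emph{disconnected} CQs. A match of $q$ in $\mathcal U_{\Amc^\circ}$ may satisfy a component without answer variables entirely inside the $\Amc^*$-part, so a match over $\Amc^\circ$ does not transfer to one over $\Amc$ component by component. The disjoint-union-of-universal-models identity is precisely what sidesteps this: rather than transporting whole components, it transports the saturation of the named part back to $\mathcal U_{\Amc}$, after which \emph{all} variables---including those of floating components---can be collapsed onto named individuals of $\Amc$. Verifying carefully that inverse roles and functionality assertions do not break this decomposition (no cross-component successors are ever generated and no merge spans two components) is the technical heart of the argument.
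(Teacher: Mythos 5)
Your proof is correct, and it rests on the same central device as the paper's proof---disjointly adjoining the witness ABox for $\Tmc_1 \not\models_{\sigmaabox}^{\bot} \Tmc_2$ and exploiting that models and universal models of Horn-\ALCHIF TBoxes do not interact across disjoint ABox components---but it runs that device in the opposite logical direction. The paper argues both implications by contraposition; for the hard direction it assumes that (1) and (2) both fail, picks a witness ABox $\Amc$ for $\Tmc_1 \not\models_{\sigmaabox}^{\bot}\Tmc_2$ and a witness $(\Amc',q,\abf)$ for non-universality (over disjoint individuals), and checks that $(\Amc\cup\Amc',q,\abf)$ refutes entailment with inconsistent ABoxes, the countermodel being the disjoint union of $\Imc_{\Tmc_1,\Amc}$ with a model of $\Tmc_1$ and $\Amc'$ refuting $q(\abf)$. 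You instead derive universality directly: adjoining $\Amc^*$ forces every CQ to be entailed over the union, the decomposition $\Imc_{\Tmc_1,\Amc\uplus\Amc^*}\cong\Imc_{\Tmc_1,\Amc}\uplus\Imc_{\Tmc_1,\Amc^*}$ pulls the atomic consequences back to $\Amc$, and saturation of the named part then yields matches for \emph{all} CQs. What your version buys is exactly the point the paper glosses: its ``clearly $\Jmc\not\models q(\abf)$'' is only clear when the non-universality witness $q$ is connected and has an answer variable, since a Boolean component of $q$ could be matched inside the $\Imc_{\Tmc_1,\Amc}$ part of $\Jmc$; to make the paper's step airtight one must observe that non-universality is w.l.o.g.\ witnessed by an atomic query. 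Your saturation argument never transports matches component-wise and so needs no such normalization. What the paper's version buys is brevity: it exhibits a single countermodel rather than reasoning about all queries over all ABoxes.

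One caveat, which affects the statement and both proofs equally: the empty ABox. Your remark that the empty-ABox case ``is handled separately'' cannot actually be made good---that case must be excluded by convention, not handled. If the empty ABox and Boolean CQs are admitted, the lemma itself fails: for $\sigmaabox=\{A\}$, $\sigmaquery=\{B\}$, $\Tmc_1=\{A\sqsubseteq B\}$, $\Tmc_2=\{A\sqsubseteq\bot\}$, entailment with inconsistent ABoxes holds (every nonempty $\sigmaabox$-ABox is inconsistent with $\Tmc_2$ and fully saturated under $\Tmc_1$, while $\Tmc_2,\emptyset$ entails no Boolean $\sigmaquery$-CQ), yet $\Tmc_1\not\models_{\sigmaabox}^{\bot}\Tmc_2$ and $\Tmc_1$ is not $(\sigmaabox,\sigmaquery)$-universal because $\Tmc_1,\emptyset\not\models\exists y\,B(y)$. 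The paper tacitly restricts attention to nonempty ABoxes (its criterion for universality stated right after the lemma makes the same tacit assumption, and its own countermodel construction breaks on this edge case too); under that reading your proof is complete as it stands.
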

\par\noindent
Lemma~\ref{lem:inconsistent_ABoxes_aux} is the core ingredient to
proving Proposition~\ref{lem:inconsistent_ABoxes}. We first note that
whether $\Tmc_1$ is $(\sigmaabox, \sigmaquery)$-universal can be
decided by polynomially many CQ evaluation checks. In fact, it is not
hard to verify that $\Tmc_1$ is $(\sigmaabox, \sigmaquery)$-universal
iff \Qbf contains no role names\footnote{If there is an $r \in \Qbf$,
  then any disconnected ABox \Amc witnesses that $\Tmc_1$ is not
  $(\sigmaabox, \sigmaquery)$-universal since
  $\Tmc_1,\Amc \not\models r(a,b)$ whenever $a$ and $b$ are from
  different maximal connected components of \Amc.}  and
$\Tmc_1,\Amc \models B(a)$ for all $\sigmaabox$-ABoxes \Amc that
contain a single assertion, all $\sigmaquery$-CQs $B(x)$, and all
$a \in \mn{ind}(\Amc)$. Moreover,
$\Tmc_1 \models_{\sigmaabox}^{\bot} \Tmc_2$ iff the following
conditions are satisfied:
\begin{enumerate}

\item 
$\Tmc_1^A \models_{\sigmaabox,\{A\}}^{\textup{CQ}}
\Tmc_2^A$, where $A$ is a fresh concept name and $\Tmc_i^A$ is
obtained from $\Tmc_i$ by replacing each occurrence of $\bot$ with $A$
and adding the axioms $A \sqsubseteq \forall s.A$ and $A\sqsubseteq
\forall s^-.A$ for every role $s$ that occurs in $\Tmc_i$, for
$i=1,2$;

\item every $\sigmaabox$-ABox $\Amc = \{r(a,b),r(a,c)\}$
  inconsistent with $\Tmc_2$ is also inconsistent with $\Tmc_1$.

\end{enumerate}
This is proved in detail in Appendix~\ref{appx:inconsistentcy_entailment}.
In summary, we obtain Proposition~\ref{lem:inconsistent_ABoxes}.

\smallskip

We close this section with a general remark on the impact of
role inclusions on $(\Abf,\Qbf)$-entailment.
It is easy to see that
$\Tmc_1 \not\models_{\sigmaabox,\sigmaquery}^{\textup{CQ}} \Tmc_2$ if
there is an $\sigmaabox$-role $r$ and a $\sigmaquery$-role $s$ with
$\Tmc_2 \models r \sqsubseteq s$ but
$\Tmc_1 \not\models r \sqsubseteq s$.  We write
$\Tmc_1 \models_{\sigmaabox,\sigmaquery}^{\text{RI}} \Tmc_2$ if
there are no such $r$ and $s$. Clearly,
$\Tmc_1 \models_{\sigmaabox,\sigmaquery}^{\text{RI}} \Tmc_2$ can be
decided in exponential time: for all $\Omc(|\sigmaabox| \cdot |\sigmaquery|)$
many pairs of roles, subsumption w.r.t.\ both $\Tmc_i$ needs to be tested,
and each such test can be reduced to concept subsumption \cite{HP04},
which is in \ExpTime for the extension \SHIQ of \HornALCHIF
\cite{Tob01}.
It is thus safe to assume
$\Tmc_1 \models_{\sigmaabox,\sigmaquery}^{\text{RI}} \Tmc_2$ when
deciding
CQ entailment, which we will generally do from now on to 
avoid
dealing
with special cases. 

\subsection{Deductive Conservative Extensions}
\label{sec:dCEs}

Another natural notion of entailment that has received significant
attention in the literature is deductive entailment, which generalizes
the notion of deductive conservative extensions, 
and which separates two TBoxes in terms of concept and role inclusions
and functionality assertions, instead of
ABoxes and queries \cite{GLW06,LWW07,KLWW09,LW10}.
\begin{definition}
  \label{def:deductive_entailment}
  Let $\sig$ be a signature and let $\Tmc_1$ and $\Tmc_2$ be \ELHIFbot TBoxes.
  We say that \emph{$\Tmc_1$ $\sig$-deductively entails $\Tmc_2$},
  written $\Tmc_1 \models_{\sig}^{\textup{TBox}} \Tmc_2$, if for all
  $\sig$-\ELIbot-concept inclusions $\alpha$ and all
  $\sig$-RIs and $\sig$-FAs $\alpha$:
  $\Tmc_2 \models \alpha$ implies $\Tmc_1 \models \alpha$. 
  If additionally $\Tmc_1 \subseteq \Tmc_2$,
  then we say that \emph{$\Tmc_2$ is an
  $\sig$-deductive conservative extension of $\Tmc_1$}.
  If $\Tmc_1 \models_{\sig}^{\textup{TBox}} \Tmc_2$
  and vice versa, then $\Tmc_1$ and $\Tmc_2$ are
  \emph{$\sig$-deductively inseparable}.
\end{definition}
Deductive conservative extensions in other DLs is defined accordingly,
that is, separation takes place in terms of the TBox statements that
are admitted in the DL under consideration.  In contrast to
$(\sigmaabox,\sigmaquery)$-query entailment, only one signature is
relevant for $\sig$-deductive entailment (unless one would want to
distinguish the signatures of the two sides in CIs or RIs, which seems
unintuitive).  Although $\sig$-deductive entailment and
$(\sig,\sig)$-query entailment are closely related, it is not
difficult to see that they are orthogonal.
%
%
%
%
\begin{example}
  \label{exa:deductive_entailment_1}
  Let $\Tmc_1$, $\Tmc_2$ be as in Example~\ref{exa:incons} and
  $\sig = \{A_1,A_2,B\}$.
  Then $\Tmc_1 \not\models_{\sig}^{\textup{TBox}} \Tmc_2$,
  witnessed by the \ELI CI $A_1 \sqcap A_2 \sqsubseteq B$.
  However, 
  $\Tmc_1 \models_{\sig,\sig}^{\textup{CQ}} \Tmc_2$
  because $\sig$-CQs cannot detect
  the disjointness of $A_1$ and $A_2$ based on 
  ABoxes that are consistent with $\Tmc_2$.
  
  \par\smallskip
  For the converse direction, let $\Tmc_1 = \emptyset$ and
  $\Tmc_2 = \{ A \sqsubseteq \exists r . B\}$, and $\sig=\{A,B\}$.
  Then $\Tmc_1 \not\models_{\sig,\sig}^{\textup{CQ}} \Tmc_2$ is
  witnessed by $(\{A(a)\},\exists x \, B(x),a)$.  However, since
  \ELIbot CIs cannot jump to an unreachable point like the 
  existential quantifier in the CQ $\exists x \, B(x)$, we have 
  $\Tmc_1 \models_{\sig}^{\textup{TBox}} \Tmc_2$.
\end{example}
For $\sig$-deductive entailment and $(\sig,\sig)$-\emph{\onetCQ}
entailment, the connection is even more intimate. In fact, we have the
following.
%
%
\begin{proposition}
  \label{lem:deductive_versus_query_entailment}
  In \ELHIFbot, $\sig$-deductive entailment can be decided
  in polynomial time given access to oracles for $(\sig,\sig)$-\onetCQ
  entailment and \onetCQ evaluation.
%
\end{proposition}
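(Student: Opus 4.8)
The plan is to reduce $\sig$-deductive entailment to the two oracles by treating the three kinds of potential separating statements — CIs, RIs, and FAs — separately, where the concept-inclusion case carries essentially all of the content and will correspond to \onetCQ entailment, while the RI and FA cases reduce to polynomially many simple oracle calls. First I would dispose of the degenerate cases: if $\Tmc_1$ is inconsistent the answer is trivially ``yes'', and if $\Tmc_2$ is inconsistent the answer is ``yes'' iff $\Tmc_1$ is, both decidable by a single \onetCQ-evaluation call (inconsistency of a KB entails $A_0(a)$ for a fresh query concept $A_0$). So assume both $\Tmc_1,\Tmc_2$ consistent. For the $\Omc(|\sig|^2)$ pairs of $\sig$-roles $r,s$, an RI witness is $\Tmc_2 \models r \sqsubseteq s$ while $\Tmc_1 \not\models r \sqsubseteq s$; since \ELHIFbot has only a plain, inverse-closed role hierarchy and no role composition, $\Tmc \models r \sqsubseteq s$ is mere reachability in the RI graph and is decided directly in polynomial time. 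For each $\sig$-role $r$, an FA witness is $\Tmc_2 \models \mn{func}(r)$ while $\Tmc_1 \not\models \mn{func}(r)$; under the standard name assumption and with $\Tmc$ consistent, $\Tmc \models \mn{func}(r)$ holds iff the two-assertion ABox expressing two distinct $r$-successors of a common element is inconsistent with $\Tmc$, which is again a single \onetCQ-evaluation call with a fresh query concept.

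For the CI case I would set up the folklore correspondence between $\sig$-\ELIbot concepts and tree-shaped syntactic objects. With every $\bot$-free $\sig$-\ELIbot concept $C$ I associate a tree-shaped $\sig$-ABox $\Amc_C$ with distinguished root $a_0$ (each subconcept $\exists r.D$ at a node $u$ yields a fresh $r$-successor, and each $\exists r^-.D$ a fresh $v$ with assertion $r(v,u)$) and the corresponding $\sig$-\onetCQ $q_C(x_0)$ obtained from the same tree by existentially quantifying all non-root nodes. The key fact is that, whenever $\Amc_L$ is consistent with $\Tmc$ and $R$ is $\bot$-free, $\Tmc \models L \sqsubseteq R$ iff $\Tmc,\Amc_L \models q_R(a_0)$, proved by evaluating $q_R$ in the universal (canonical) model of $\Tmc,\Amc_L$ rooted at $a_0$. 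Note that this invokes the universal model only for a fixed ABox and query, so the failure of unbounded-homomorphism characterizations under inverse roles is irrelevant here.

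Using this, I would classify a potential CI witness $L \sqsubseteq R$ (with $\Tmc_2 \models L \sqsubseteq R$, $\Tmc_1 \not\models L \sqsubseteq R$). We may assume $L$ is $\bot$-free, since otherwise the inclusion is entailed by every TBox and is no witness. If $R$ collapses to $\bot$, a witness is exactly a tree-shaped $\sig$-ABox consistent with $\Tmc_1$ but inconsistent with $\Tmc_2$; I would detect its existence by replacing $\bot$ with a fresh concept name $A$ and propagating $A$ along all roles, as in the construction of $\Tmc_i^A$ above, turning it into a single \onetCQ-entailment check for $\Tmc_1^A,\Tmc_2^A$ with query concept $A$ (on the signature $\sig$ extended by $A$). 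If $R$ is $\bot$-free, then $\Tmc_1 \not\models L \sqsubseteq R$ forces $\Amc_L$ consistent with $\Tmc_1$, and either $\Amc_L$ is also consistent with $\Tmc_2$, so that $(\Amc_L,q_R,a_0)$ is directly a \onetCQ non-entailment witness, or $\Amc_L$ is inconsistent with $\Tmc_2$, in which case $L \sqsubseteq \bot$ is already a witness of the previous kind. Hence some CI witness exists iff $\Tmc_1 \not\models_{\sig,\sig}^{\textup{\onetCQ}} \Tmc_2$ or the $\bot$-entailment check fails, and combining the CI, RI, and FA checks yields a polynomial-time procedure issuing polynomially many oracle calls.

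The main obstacle is the completeness of the CI reduction, i.e.\ the direction ``$\Tmc_1 \not\models_{\sig,\sig}^{\textup{\onetCQ}} \Tmc_2$ implies a CI witness'': a \onetCQ witness $(\Amc,q,a)$ ranges over \emph{arbitrary}, not necessarily tree-shaped, $\sig$-ABoxes, whereas an \ELIbot concept inclusion corresponds to a tree-shaped one. I therefore expect the technical core to be a tree-witness lemma, stating that \onetCQ non-entailment can always be witnessed over a tree-shaped $\sig$-ABox, proved by unraveling $\Amc$ from the answer individual $a$ and showing that consistency with both TBoxes and the separating non-match of the tree-shaped query are preserved. This step is delicate precisely because of the features that make the query case hard in the first place — inverse roles together with functionality — so the unraveling must be carried out carefully to neither destroy consistency nor create spurious matches. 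Once it is in place, re-rooting the resulting tree-shaped ABox and query at $a$ and reading them back as $\sig$-\ELIbot concepts $L$ and $R$ delivers the required CI witness, closing the reduction.
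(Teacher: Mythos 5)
Your proposal follows the same skeleton as the paper's proof (Lemma~\ref{lem:deductive_versus_query_entailment_aux} plus the observations on $\models^\bot_\sig$): split witnesses into RIs, FAs, and CIs; translate CIs into tree-shaped ABoxes and {\onetCQ}s; use the $\Tmc_i^A$ construction for the $\bot$ case; and invoke tree-shaped witnesses (the paper's Proposition~\ref{lem:tree_shaped_witnesses_tCQs}) for the converse direction. However, two steps fail as stated. The first is your RI check: in \ELHIFbot, $\Tmc \models r \sqsubseteq s$ is \emph{not} mere reachability in the role hierarchy, because the inclusion also holds whenever $\Tmc$ forces $r$ to be empty. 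Take $\Tmc_1 = \{\exists r.\top \sqsubseteq \bot\}$ and $\Tmc_2 = \Tmc_1 \cup \{r \sqsubseteq s\}$ with $\sig = \{r,s\}$: the two TBoxes are logically equivalent, so $\Tmc_1 \models^{\textup{TBox}}_\sig \Tmc_2$, yet your reachability test reports $r \sqsubseteq s$ as an RI witness and answers ``no''. This is repairable with the oracles you already have (e.g., decide $\Tmc_i \models r \sqsubseteq s$ by evaluating the \onetCQ $\exists y\,(s(x,y) \wedge B(y))$ at $a$ over the ABox $\{r(a,b), B(b)\}$ with $B$ fresh), but the step as written gives wrong answers.

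The second gap is precisely the subtlety that the paper's proof takes care of: your claim that ``$\Tmc_1 \not\models L \sqsubseteq R$ forces $\Amc_L$ consistent with $\Tmc_1$'' is false in the presence of functionality assertions, because your translation creates a fresh successor for each existential conjunct. With $\Tmc_1 = \{\mn{func}(r)\}$, $\Tmc_2 = \Tmc_1 \cup \{\exists r.(B \sqcap C) \sqsubseteq D\}$, $L = \exists r.B \sqcap \exists r.C$ and $R = D$, the CI $L \sqsubseteq R$ is a genuine witness to deductive non-entailment, yet $\Amc_L$ contains a fork on the functional role $r$ and is therefore inconsistent with \emph{both} TBoxes (under standard names), while $L \sqsubseteq \bot$ is not a witness either, since $L$ is $\Tmc_2$-satisfiable. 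So your case analysis produces neither a \onetCQ witness nor a $\bot$ witness from this CI, and the direction ``CI witness exists $\Rightarrow$ one of the oracle checks fails'' is not established. The statement itself is still true, and indeed your algorithm would answer correctly here, but only because a \emph{different} ABox witnesses \onetCQ non-entailment; the paper obtains it by first ensuring that $\Tmc_1$ entails all $\sig$-FAs of $\Tmc_2$ and then building $\Amc_L$ with the subtrees of all $r$-neighbors of a node \emph{merged} whenever $\mn{func}(r) \in \Tmc_1$ (in the example, $\{r(a,b), B(b), C(b)\}$). Your proof needs this merging construction to close the soundness direction. A smaller point of the same flavor: in the $\bot$-check, the fresh concept $A$ must be confined to the query signature (the paper uses the pair $(\sig,\{A\})$); if ABoxes may mention $A$, the asymmetric propagation axioms of $\Tmc_1^A$ and $\Tmc_2^A$ create spurious witnesses.
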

In fact, Proposition~\ref{lem:deductive_versus_query_entailment} is a
consequence of the following lemma and the observations about 
$\Tmc_1 \models_\sig^\bot \Tmc_2$ made in
Section~\ref{sec:query_entailment}.
A proof is in Appendix~\ref{appx:deductive_versus_query_entailment}.
\begin{restatable}{lemma}{myRest}
  \label{lem:deductive_versus_query_entailment_aux}
  Let $\sig$ be a signature and
  $\Tmc_1,\Tmc_2$ \ELHIFbot TBoxes
  such that $\Tmc_1 \models_{\sig,\sig}^{\textup{RI}} \Tmc_2$.
  Then
  \[
    \Tmc_1 \models_{\sig}^{\textup{TBox}} \Tmc_2
    \quad\text{iff}\quad
    \Tmc_1 \models_{\sig,\sig}^{\textup{\onetCQ}} \Tmc_2
    \quad\text{and}\quad
    \Tmc_1 \models_\sig^\bot \Tmc_2.
  \]
\end{restatable}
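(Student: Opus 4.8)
The plan is to pivot everything through the subsumption of one \ELIbot concept by another, moving between concepts and \onetCQs by a roll-up. For a $\bot$-free \ELIbot concept $C$, let $\Amc_C$ be the tree-shaped $\sig$-ABox obtained by reading $C$ as a tree with a distinguished root $\rho$, turning conjoined concept names into assertions and each $\exists r.C'$ into a fresh $r$-successor; dually, for a $\bot$-free \ELIbot concept $D$ let $q_D(x)$ be the \onetCQ that reads $D$ as a tree rooted at the answer variable $x$ (this is a bijection, and I write $C_q$ for the concept with $q = q_{C_q}$). The tool I would establish first is the roll-up correspondence: for every \ELHIFbot TBox $\Tmc$ and all $\bot$-free $C,D$ with $C$ satisfiable w.r.t.\ $\Tmc$,
\[
  \Tmc \models C \sqsubseteq D
  \quad\text{iff}\quad
  \Tmc,\Amc_C \models q_D(\rho).
\]
One direction holds because $\rho \in C^{\mathcal{U}}$ in the universal model $\mathcal{U}$ of $\Tmc$ and $\Amc_C$, so $\Tmc \models C \sqsubseteq D$ forces a match of $q_D$ at $\rho$; the other because $\bot$-free \ELIbot concepts are preserved under homomorphisms and $\mathcal{U}$ maps homomorphically into every model $\Imc \models \Tmc$ with $\rho$ sent to any $d \in C^\Imc$, transporting a match of $q_D$ at $\rho$ to $d \in D^\Imc$.

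For the direction from right to left I would distinguish the shape of a separating statement $\alpha$ with $\Tmc_2 \models \alpha$. RIs are immediate from $\Tmc_1 \models_{\sig,\sig}^{\textup{RI}} \Tmc_2$. If $\alpha = \mn{func}(r)$ for a $\sig$-role $r$, then $\{r(a,b),r(a,c)\}$ is inconsistent with $\Tmc_2$, hence with $\Tmc_1$ by $\Tmc_1 \models_\sig^\bot \Tmc_2$; as \ELHIFbot cannot constrain successor numbers other than through functionality and functional roles have no subroles, this forces $\Tmc_1 \models \mn{func}(r)$. If $\alpha = C \sqsubseteq D$ is a $\sig$-CI, the cases $C \equiv \bot$ and ``$C$ unsatisfiable w.r.t.\ $\Tmc_1$'' are trivial, and $D \equiv \bot$ reduces to $\Tmc_1 \models_\sig^\bot \Tmc_2$ applied to $\Amc_C$. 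The asymmetric case ``$C$ satisfiable w.r.t.\ $\Tmc_1$ but not w.r.t.\ $\Tmc_2$'' is excluded by $\Tmc_1 \models_\sig^\bot \Tmc_2$, so in the remaining case $\Amc_C$ is consistent with both TBoxes; feeding $(\Amc_C,q_D,\rho)$ to $\Tmc_1 \models_{\sig,\sig}^{\textup{\onetCQ}} \Tmc_2$ and applying the roll-up correspondence twice yields $\Tmc_1 \models C \sqsubseteq D$.

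For the converse I would argue contrapositively and manufacture a separating $\sig$-CI. The device is \emph{bounded unravelling}: for an individual $a$ of a $\sig$-ABox $\Amc$ and a depth $d$, let $C_a^d$ be the $\sig$-\ELIbot concept that unravels $\Amc$ from $a$ up to depth $d$, traversing each role assertion both forward (as $r$) and backward (as $r^-$); here the presence of inverse roles is essential, as it lets a finite tree concept record the bounded neighbourhood of $a$ including propagation that runs backward and around cycles. By construction $C_a^d$ maps homomorphically onto $\Amc$ fixing $a$, so $a \in (C_a^d)^\Imc$ in every model $\Imc \models \Amc$. Given a witness $(\Amc,q,a)$ against $\Tmc_1 \models_{\sig,\sig}^{\textup{\onetCQ}} \Tmc_2$, I would choose $d$ large enough to bound both the depth of the tree query $q$ and the length of the concept-propagation chains its match relies on; then the match lifts from $\Amc$ to the tree $\Amc_{C_a^d}$, giving $\Tmc_2 \models C_a^d \sqsubseteq C_q$ via the roll-up correspondence, while a model of $\Tmc_1$ and $\Amc$ refuting $q(a)$ refutes $C_a^d \sqsubseteq C_q$ for $\Tmc_1$. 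This $\sig$-CI contradicts $\sig$-deductive entailment. A failure of $\Tmc_1 \models_\sig^\bot \Tmc_2$ stemming from a concept clash is treated analogously: bounded unravelling reproduces the clash so that $\Tmc_2 \models C_a^d \sqsubseteq \bot$, while $C_a^d$ stays satisfiable w.r.t.\ $\Tmc_1$.

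I expect functionality to be the main obstacle. Bounded unravelling only relaxes an ABox, so it faithfully transports query matches and concept-clash inconsistencies, but it destroys precisely those inconsistencies of $\Amc$ with $\Tmc_2$ that arise from a functional role merging two distinct \emph{individuals} into a clash: these have no tree witness and are invisible to $\sig$-CIs, which is exactly why $\Tmc_1 \models_\sig^\bot \Tmc_2$ appears as a separate conjunct rather than being subsumed by $\sig$-CI entailment. Such inconsistencies have to be routed through functionality assertions: when the merge is confined to $\Tmc_2$-generated elements it collapses into unsatisfiability of a triggering $\sig$-concept, hence a $\sig$-CI; when it genuinely identifies two ABox individuals it is forced by a $\sig$-functional role, and here the no-subrole assumption guarantees that this is a $\sig$-FA that $\sig$-deductive entailment transfers to $\Tmc_1$, after which the resulting clash is again captured by a transferred $\sig$-CI. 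Carrying out this case analysis carefully --- in particular showing that a chain of functionality merges and propagations performed by $\Tmc_2$ using auxiliary, non-$\sig$ concept names can always be reconstructed from transferred $\sig$-consequences alone --- is the technically delicate core of the argument.
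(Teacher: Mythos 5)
Your right-to-left direction has a genuine gap, and it sits exactly at the point where the paper's own proof has to work hardest: the translation of a concept $C$ into an ABox $\Amc_C$ in the presence of functionality. Because you create a \emph{fresh} successor for every existential restriction, a concept such as $C = \exists r.A_1 \sqcap \exists r.A_2$ yields, for any TBox with $\mn{func}(r)$, an ABox in which the root has two distinct $r$-successors, and this ABox is inconsistent with that TBox under the standard name assumption even though $C$ itself is satisfiable. This invalidates your argument in three places. (i)~The roll-up correspondence is false as stated: for $\Tmc = \{\mn{func}(r)\}$, the above $C$ (satisfiable w.r.t.\ $\Tmc$) and a fresh concept name $B$, we have $\Tmc,\Amc_C \models B(\rho)$ vacuously, since $\Tmc$ and $\Amc_C$ have no common model, while $\Tmc \not\models C \sqsubseteq B$. (ii)~Your assertion that ``in the remaining case $\Amc_C$ is consistent with both TBoxes'' is simply wrong for such $C$, so the triple $(\Amc_C,q_D,\rho)$ cannot be fed into $\models_{\sig,\sig}^{\textup{\onetCQ}}$ at all, as that relation quantifies only over ABoxes consistent with both TBoxes. (iii)~In the sub-case $D = \bot$ (and likewise in your exclusion of the ``asymmetric case''), the step from ``$\Amc_C$ inconsistent with $\Tmc_1$'', obtained via $\models_\sig^\bot$, to ``$\Tmc_1 \models C \sqsubseteq \bot$'' is invalid: with $\Tmc_1 = \{\mn{func}(r)\}$ and $C$ as above, $\Amc_C$ is inconsistent with $\Tmc_1$ although $C$ is satisfiable w.r.t.\ $\Tmc_1$. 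Note that your closing paragraph anticipates functionality problems only on the ABox-to-concept (unravelling) side; the actual flaw is on the concept-to-ABox side, which you treat as unproblematic.

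The repair is precisely the construction the paper uses: after first transferring all $\sig$-FAs (your FA case does this correctly), build $\Amc_C$ by \emph{merging} the subtrees of all $r$-neighbours of a node whenever $\mn{func}(r)$ holds in $\Tmc_1$. The merged ABox still makes the root a $C$-instance in universal models, and, crucially, its consistency with $\Tmc_1$ becomes \emph{equivalent} to satisfiability of $C$ w.r.t.\ $\Tmc_1$, since models of $\Tmc_1$ themselves identify those witnesses. The problematic case is then harmless: if the merged $\Amc_C$ is inconsistent with $\Tmc_2$, then $\models_\sig^\bot$ makes it inconsistent with $\Tmc_1$, hence $C$ is unsatisfiable w.r.t.\ $\Tmc_1$ and $\Tmc_1 \models C \sqsubseteq D$ holds trivially; otherwise $\Amc_C$ is consistent with both TBoxes and your roll-up chain goes through. (The paper, arguing contrapositively, packages the same point as: inconsistency of the merged $\Amc_C$ with $\Tmc_2$ would yield a $\Tmc_2$-unsatisfiable but $\Tmc_1$-satisfiable conjunction of merged subconcepts, i.e.\ a separating CI of the form $C' \sqsubseteq \bot$.) Your left-to-right direction does take a genuinely different route from the paper's, which simply invokes Proposition~\ref{lem:tree_shaped_witnesses_tCQs} plus compactness to obtain a finite tree-shaped witness ABox and reads off the CI; your bounded unravelling is workable, but there too functionality needs care: the unravelling must forbid immediate backtracking along an edge (as in the paper's definition of $U^a_\Amc$), since otherwise it creates forks on inverse-functional roles and the resulting tree concept is not faithful.
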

Note that Definition~\ref{def:deductive_entailment} assumes the TBoxes
$\Tmc_1$ and $\Tmc_2$ to be formulated in $\ELHIFbot$ rather than in
\HornALCHIF. This is because the main theme of this article are query
conservative extensions and, as demonstrated by
Proposition~\ref{lem:deductive_versus_query_entailment}, separation in
terms of \ELIbot concept inclusions is closely related to separation
by (tree-shaped) ABoxes and 1tCQs. Now, separation in terms of \ELIbot
concept inclusions is the natural choice for \ELHIFbot TBoxes as
\ELHIFbot is based on the concept language \ELIbot while this is not
true for \HornALCHIF. All results in this article that concern deductive
conservative extensions (and related notions) actually remain valid
also when $\Tmc_1$ and~$\Tmc_2$ are \HornALCHIF TBoxes, while
separation is still in terms of \ELIbot CIs. Separating \HornALCHIF
TBoxes in terms of \HornALCHIF is an interesting topic that is outside
the scope of this article, see Section~\ref{sec:concl} for some additional
discussion.

\subsection{Homomorphisms and Simulations}

Homomorphisms, both in their (standard) unbounded form and in bounded
form, are an elementary tool for dealing with CQs. Likewise,
(unbounded) simulations are closely linked to {\onetCQ}s. We will make
extensive use of these notions throughout the article.

For interpretations $\Imc_1,\Imc_2$ and a signature $\sig$, an
\emph{$\sig$-homomorphism} from $\Imc_1$ to $\Imc_2$ is a total
function $h : \Delta^{\Imc_1} \to \Delta^{\Imc_2}$ that
satisfies the following conditions.
\begin{enumerate}
  \item[(1)]
$h(a) = a$ for all $a \in \mn{N_I}$;
 \item[(2)]
$h(d) \in A^{\Imc_2}$ for all $d \in A^{\Imc_1}$, $A \in \mn{N_C}
\cap \sig$;
  \item[(3)]
$(h(d),h(d')) \in r^{\Imc_2}$ for all $(d,d') \in r^{\Imc_1}$, 
$r \in \mn{N_R} \cap \sig$.
%
 \end{enumerate}
%
We write 
$\Imc_1 \to_\sig \Imc_2$ to denote the existence of an $\sig$-homomorphism from $\Imc_1$ to $\Imc_2$.
If $\sig = \mn{N_C}\cup\mn{N_R}$,
we write $\Imc\to\Jmc$.

Let $\Imc_1,\Imc_2$ be interpretations, $d \in \Delta^{\Imc_1}$, and
$n \geq 0$.
We say that there is an \emph{$n$-bounded $\sig$-homomorphism} from $\Imc_1$ to
$\Imc_2$, written $\Imc_1 \rightarrow^n_\sig \Imc_2$, if for any
subinterpretation\footnote{We mean an interpretation $\Imc'_1$ such
  that $\Delta^{\Imc'_1} \subseteq \Delta^\Imc$, $A^{\Imc'_1} = A^\Imc
  \cap
\Delta^{\Imc'_1}$ for all concept names $A$, and $r^{\Imc'_1} = r^\Imc
  \cap
(\Delta^{\Imc'_1} \times \Delta^{\Imc'_1})$ for all role names $r$.}
$\Imc'_1$ of $\Imc_1$ with
$|\Delta^{\Imc'_1}| \leq n$, we have
$\Imc'_1 \rightarrow_\sig \Imc_2$. Moreover, we write
$\Imc_1 \rightarrow^\mn{fin}_\sig \Imc_2$ if
$\Imc_1 \rightarrow^n_\sig \Imc_2$ for every $n$. 
If $\sig = \mn{N_C}\cup\mn{N_R}$,
we write
$\Imc\to^\mn{fin}\Jmc$.

For $d\in\Delta^\Imc$ and $n\geq 0$, we denote with $\Imc|^d_n$ the
restriction of interpretation $\Imc$ to elements that can be reached
by starting at $d$ and traveling along at most $n$ role edges
(forwards or backwards).  The following is standard to
prove. 
\begin{lemma}
  \label{lem:alternative_def_bounded_hmph} 
  Let $\sig$ be a signature and $\Imc_1,\Imc_2$ be interpretations.
  \begin{enumerate}
    \item
      If $\Imc_1 \rightarrow_\sig \Imc_2$, then for all $\sig$-CQs $q$
      and tuples $\abf$, $\Imc_1 \models q(\abf)$ implies $\Imc_2 \models q(\abf)$.
    \item
      For every $n \geq 0$, if $\Imc_1 \rightarrow_\sig^n \Imc_2$, then for all $\sig$-CQs $q$
      with at most $n$ variables and all tuples $\abf$,
      $\Imc_1 \models q(\abf)$ implies $\Imc_2 \models q(\abf)$.
    \item
      If $\Imc_1$ is finitely branching, then the following are equivalent:
      \begin{enumerate}
        \item 
          $\Imc_1 \rightarrow_\sig^{\mn{fin}} \Imc_2$;
        \item
          $\Imc_1|^d_i \rightarrow_\sig \Imc_2$
          for every $d \in \Delta^{\Imc_1}$ and $i \geq 0$.
      \end{enumerate}       
  \end{enumerate}
\end{lemma}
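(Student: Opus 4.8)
The plan is to treat the three parts in increasing order of difficulty, with the genuine content concentrated in part~3.

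Parts~1 and~2 are routine and can share a paragraph. For part~1, I would take a match $\pi$ of the $\sig$-CQ $q$ in $\Imc_1$ witnessing $\Imc_1 \models q(\abf)$ (so $\pi(x_i)=a_i$) and verify that $h \circ \pi$ is a match in $\Imc_2$: concept atoms are preserved by condition~(2) and role atoms by condition~(3) of the definition of $\sig$-homomorphism, both applicable because $q$ uses only symbols from $\sig$; and the answer variables behave correctly since $h$ fixes individual names by condition~(1), giving $h(\pi(x_i)) = h(a_i) = a_i$. Part~2 reduces to part~1: given a match $\pi$ of a $\sig$-CQ $q$ with at most $n$ variables, I would form the subinterpretation $\Imc_1'$ of $\Imc_1$ induced by the image $\pi(\xbf \cup \ybf)$, which has at most $n$ elements. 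Then $\pi$ is still a match of $q$ in $\Imc_1'$ (all its atoms live inside the induced structure), and $\Imc_1 \rightarrow_\sig^n \Imc_2$ yields $\Imc_1' \rightarrow_\sig \Imc_2$, so part~1 applied to $\Imc_1'$ finishes the argument, the $a_i = \pi(x_i)$ lying in $\Delta^{\Imc_1'}$ and being fixed by the homomorphism.

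For part~3, the direction $(a)\Rightarrow(b)$ is short and is exactly where \emph{finite branching} is needed. Fixing $d$ and $i$, finite branching guarantees that the ball $\Imc_1|^d_i$ is finite; setting $n = |\Delta^{\Imc_1|^d_i}|$ and viewing the ball as a subinterpretation of $\Imc_1$ of size at most $n$, the hypothesis $\Imc_1 \rightarrow_\sig^n \Imc_2$ directly delivers $\Imc_1|^d_i \rightarrow_\sig \Imc_2$. Without finite branching the ball could be infinite and the finite-subinterpretation hypothesis would never apply, so this assumption cannot be dropped here.

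The converse $(b)\Rightarrow(a)$ is the step I expect to be the main obstacle, because the natural hope, that a finite subinterpretation sits inside a single ball, \emph{fails} once that subinterpretation is disconnected. My plan is therefore to take an arbitrary finite subinterpretation $\Imc_1'$ with $|\Delta^{\Imc_1'}| \leq n$ and decompose it into its connected components $C_1,\dots,C_k$ (connectivity via the undirected graph of role edges, matching the forwards/backwards reachability used to define balls). Each $C_j$, being connected of size $m_j$, has undirected diameter at most $m_j - 1$ in $\Imc_1$, so fixing any $e \in C_j$ gives $C_j \subseteq \Delta^{\Imc_1|^e_{m_j-1}}$; since every subinterpretation here is the full induced one, $C_j$ is a subinterpretation of that ball, and by $(b)$ the ball homomorphism restricts to a homomorphism $C_j \rightarrow_\sig \Imc_2$. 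I would then glue these component maps into a single function on $\Delta^{\Imc_1'}$: it is well defined because the components partition the domain, it is a $\sig$-homomorphism because $\Imc_1'$ has no edges across components, and it respects condition~(1) since each individual name lies in exactly one component and is fixed by the corresponding restriction. This produces $\Imc_1' \rightarrow_\sig \Imc_2$ for every $n$-element subinterpretation, hence $\Imc_1 \rightarrow_\sig^n \Imc_2$ for all $n$, i.e.\ $\Imc_1 \rightarrow_\sig^{\mn{fin}} \Imc_2$.
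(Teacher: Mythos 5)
Your proof is correct. The paper gives no proof of this lemma at all (it is dismissed as ``standard to prove''), and your argument---composing a match with the homomorphism for parts~1 and~2, and, for part~3, using finite branching to make the balls $\Imc_1|^d_i$ finite in one direction and, in the other, decomposing an arbitrary finite subinterpretation into connected components, each of which sits inside a ball of radius bounded by its size and so inherits a homomorphism that can then be glued across components---is precisely the standard argument the paper is alluding to, including the correct handling of the only delicate point, namely that a finite subinterpretation need not be connected and hence need not lie in any single ball.
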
 
%
Given a signature $\sig$ and two interpretations $\Imc,\Jmc$, an
\emph{$\sig$-simulation of $\Imc$ in $\Jmc$} is a relation
$\sigma \subseteq \Delta^{\Imc} \times \Delta^{\Jmc}$ that satisfies the following conditions.
%
%
\begin{itemize}

\item[(1)] $(a,a) \in \sigma$ for all $a \in \mn{N_I}$;

\item[(2)] if $d \in A^{\Imc}$ with $A \in \sig$ and $(d,e) \in \sigma$,
    then $e \in A^{\Jmc}$;

\item[(3)] if $(d,d') \in r^{\Imc}$ with $r$ an $\sig$-role and $(d,e) \in \sigma$,
    then there is some $e'$ with $(e,e') \in r^{\Jmc}$ and $(d',e') \in \sigma$.

\end{itemize}
We write $\Imc \preceq_\sig \Jmc$ if there is an $\sig$-simulation of
$\Imc$ in $\Jmc$. Point~1 of
Lemma~\ref{lem:alternative_def_bounded_hmph} still holds when 
homomorphisms are replaced with simulations and CQs with
\onetCQs.

\subsection{The Universal Model}
\label{sec:universal_model}

We next introduce 
universal
models, whose existence is a distinguishing feature of Horn logics.
More precisely, for every \HornALCHIF TBox \Tmc and ABox \Amc, there
is a model $\Imc_{\Tmc,\Amc}$ of \Tmc and \Amc that homomorphically
embeds into every model of \Tmc and \Amc.  This last property,
together with Point~1 of Lemma~\ref{lem:alternative_def_bounded_hmph},
ensures that $\Imc_{\Tmc,\Amc}$ is universal in the sense that the
certain answers to any CQ  $q$ over \Amc given \Tmc can be obtained
by evaluating $q$ over~$\Imc_{\Tmc,\Amc}$.  We define the universal
model by extending the ABox in a forward chaining way akin to the
chase in database theory.

Let \Tmc be a $\HornALCHIF$ TBox in normal form and \Amc an ABox that
is consistent with~\Tmc. 
For a set $t$ of concept names, we write $\bigsqcap t$
  as a shorthand for $\bigsqcap_{A\in t}A$.
A \Tmc-\emph{type} is a set $t$ of concept
names that occur in \Tmc such that
$\Tmc \models \bigsqcap t \sqsubseteq A$ implies $A \in t$ for all
concept names~$A$.  For $a \in \mn{ind}(\Amc)$ and with \Tmc
understood, let $\mn{tp}_\Tmc(a)$ denote the $\Tmc$-type
$\{A \mid \Tmc,\Amc \models A(a)\}$.  When $t,t'$ are \Tmc-types, $r$
is a role, and $a \in \mn{ind}(\Amc)$, we write
\begin{itemize} 
  
  \item $t \rightsquigarrow^\Tmc_r t'$ if $\Tmc \models \bigsqcap t
    \sqsubseteq \exists r. \bigsqcap t'$ and $t'$ is maximal with
    this condition;

  \item $a \rightsquigarrow^{\Tmc,\Amc}_r t$ if $\Tmc,\Amc \models
    \exists r . \bigsqcap t(a)$ and $t$ is maximal with this condition.

\end{itemize}
The relations $\rightsquigarrow^\Tmc_r$ and
$\rightsquigarrow_{r}^{\Tmc,\Amc}$ can be computed in exponential
time: for every $t,t'$ (resp.\ $a,t$), the above entailment test
requires at most exponential time since subsumption (resp.\ instance
checking) in \SHIQ is \ExpTime-complete~\cite{Tob01}.  A \emph{path
  for \Amc and \Tmc} is a finite sequence $\pi=a r_0 t_1
\cdots\linebreak[2] t_{n-1} r_{n-1} t_n$, $n \geq 0$, with $a \in
\mn{ind}(\Amc)$, $r_0,\dots,r_{n-1}$ roles, and $t_1,\dots,t_n$
\Tmc-types such that
\begin{enumerate}

  \item[(i)] $a\rightsquigarrow^{\Tmc,\Amc}_{r_0} t_1$ and, if
    $\mn{func}(r_0)\in\Tmc$, then there is no $b\in\mn{ind}(\Amc)$ such
    that $\Tmc,\Amc \models r_0(a,b)$;

  \item[(ii)] $t_i
    \rightsquigarrow^\Tmc_{r_i}
    t_{i+1}$ and, if $\mn{func}(r_i)\in\Tmc$, then $r_{i-1}\neq r_i^-$,
    for $1 \leq i < n$.
    
\end{enumerate}
When $n > 0$, we use $\mn{tail}(\pi)$ to denote
$t_n$. Let $\mn{Paths}$ be the set of all paths for \Amc and \Tmc.
The \emph{universal model} $\Imc_{\Tmc,\Amc}$ of \Tmc and \Amc
is defined as follows:
\begin{align*}
  \Delta^{\Imc_{\Tmc,\Amc}} & = \mn{Paths} \\
  A^{\Imc_{\Tmc,\Amc}} & = \{ a \in \mn{ind}(\Amc) \mid \Tmc,\Amc
  \models A(a) \} \; \cup \;
  \{ \pi \in \Delta^\Imc \setminus \mn{ind}(\Amc) \mid \Tmc \models
  \bigsqcap \mn{tail}(\pi) \sqsubseteq A \} \displaybreak[2]\\
  r^{\Imc_{\Tmc,\Amc}} & = \{ (a,b) \in \mn{ind}(\Amc) \times \mn{ind}(\Amc) \mid
  s(a,b)\in \Amc \text{~and~} \Tmc\models s\sqsubseteq r\} \; \cup \\
  &\phantom{{}={}} \{ (\pi,\pi s t)  \mid \pi s t \in \mn{Paths}
  \text{~and~} \Tmc\models s\sqsubseteq r \} \; \cup \\[-1pt]
  &\phantom{{}={}} \{ (\pi s t,\pi) \mid \pi s t \in \mn{Paths}
  \text{~and~} \Tmc\models s^-\sqsubseteq r \}.
\end{align*}
We also need a universal model $\Imc_{\Tmc,t}$ of a TBox \Tmc and a type $t$,
instead of an ABox. More precisely,
we define $\Imc_{\Tmc,t} = \Imc_{\Tmc,\Amc_t}$
where $\Amc_t = \{A(a) \mid A \in t\}$ for a fixed $a \in \mn{N_I}$.

The following lemma summarizes the main properties of universal
models. The proof is standard and omitted \cite<see, e.g.,>{BKR+16}.
\begin{lemma} \label{lem:universal_model}
  For every $\HornALCHIF$ TBox \Tmc in normal form and ABox \Amc consistent with~\Tmc, the
  following hold: 

  \begin{enumerate}[(1)]

    \item
      \label{it:univ_model_is_a_model}
      $\Imc_{\Tmc,\Amc}$ is a model of $\Tmc$  and $\Amc$;

    \item
      \label{it:univ_model_embeds}
      $\Imc_{\Tmc,\Amc} \rightarrow \Imc$
      for all models $\Imc$ of $\Tmc$ and $\Amc$;

    \item
      \label{it:univ_model_queries}
      $\Tmc,\Amc \models q(\mathbf{a})$ iff $\Imc_{\Tmc,\Amc} \models q(\mathbf{a})$,
      for all CQs $q(\xbf)$ and tuples $\mathbf{a}$ of individuals;

    \item
      \label{it:univ_model_regularity_supertypes}
      $\Imc_{\Tmc,t} \to \Imc_{\Tmc,t'}$ for all \Tmc-types $t,t'$ with $t \subseteq t'$.

  \end{enumerate}
\end{lemma}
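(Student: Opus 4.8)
The plan is to prove the four parts with~(1) and~(2) as the substantive work and~(3) and~(4) following quickly from them. First I would verify model-hood~(1) by going through the normal-form shapes one by one, then establish universality~(2) by constructing a homomorphism into an arbitrary model by induction on the length of paths. Part~(3) then drops out from~(1) and~(2) combined with Point~1 of Lemma~\ref{lem:alternative_def_bounded_hmph}, and part~(4) is an immediate corollary of~(2).

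For~(1), the ABox assertions are satisfied by construction: $A(a)\in\Amc$ yields $\Tmc,\Amc\models A(a)$ and hence $a\in A^{\Imc_{\Tmc,\Amc}}$, and $s(a,b)\in\Amc$ together with $\Tmc\models s\sqsubseteq r$ puts $(a,b)$ into $r^{\Imc_{\Tmc,\Amc}}$. For the CIs I would use that every $\Tmc$-type and every $\mn{tp}_\Tmc(a)$ is closed under $\Tmc$-entailment of concept names, which immediately handles $\top\sqsubseteq A$, $A\sqsubseteq\bot$, and $A_1\sqcap A_2\sqsubseteq B$. For $A\sqsubseteq\exists r.B$ the witness is supplied by the $\rightsquigarrow$ relations, either as a path extension $\pi r t'$ with $B\in t'$ or, when this extension is suppressed for a functional $r$, by the individual that already carries the unique $r$-edge. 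For $A\sqsubseteq\forall r.B$ I would exploit that successor types are chosen \emph{maximal}: this absorbs the forward value-restriction consequences directly and, using again that types are entailment-closed, also the backward ones induced via inverse roles. The RIs hold by transitivity of $\sqsubseteq$-entailment in the definition of $r^{\Imc_{\Tmc,\Amc}}$.

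For~(2), given a model $\Jmc$ of $\Tmc$ and $\Amc$, I would build an $(\mn{N_C}\cup\mn{N_R})$-homomorphism $h$ inductively, setting $h(a)=a$ for $a\in\mn{ind}(\Amc)$ and, for a path $\pi r t$, using that $\Jmc\models\Tmc$ and the relevant $\rightsquigarrow$-step asserts $\Tmc\models\bigsqcap\mn{tail}(\pi)\sqsubseteq\exists r.\bigsqcap t$ to pick an $r$-successor $e$ of $h(\pi)$ in $\Jmc$ with $e\in(\bigsqcap t)^\Jmc$; set $h(\pi r t)=e$. Concept membership is preserved because the tail type lists exactly the entailed concept names, and role membership along forward and backward edges is preserved because $\Jmc$ satisfies the pertinent RIs. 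Part~(3) then follows: the ``only if'' direction is immediate from~(1) since $\Imc_{\Tmc,\Amc}$ is itself a model of $\Tmc$ and $\Amc$, and the ``if'' direction uses~(2) together with Point~1 of Lemma~\ref{lem:alternative_def_bounded_hmph} to transfer a match of $q$ along $h$ into every model. Part~(4) follows from~(2): since $t\subseteq t'$ gives $\Amc_t\subseteq\Amc_{t'}$, the interpretation $\Imc_{\Tmc,t'}$ is also a model of $\Tmc$ and $\Amc_t$, whence $\Imc_{\Tmc,t}=\Imc_{\Tmc,\Amc_t}\to\Imc_{\Tmc,t'}$.

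The hard part will be verifying the functionality assertions in~(1), where the path conditions~(i) and~(ii) carry all the weight. Condition~(i) suppresses a path successor exactly when an individual already supplies the functional $r_0$-edge, and condition~(ii) forbids re-extending along $r_i$ when the predecessor edge $r_{i-1}=r_i^-$ already provides the functional successor; I would check that, together with consistency of $\Amc$ with $\Tmc$ (which under the standard name assumption excludes two distinct individual $r$-successors) and the standing assumption that functional roles have no subroles (which tames the interaction with RIs), these conditions make $r^{\Imc_{\Tmc,\Amc}}$ functional for every $\mn{func}(r)\in\Tmc$. The same suppression is what forces the existing individual witness to satisfy $B$ in the $\exists$-case above, so the soundness of~(i) and~(ii) is really the crux of the model-hood argument; the interplay of functionality with inverse roles on the backward edges is the subtle point and the reason the path definition is stated with such care.
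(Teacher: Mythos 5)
Your proof is correct and is precisely the standard argument that the paper itself does not spell out: Lemma~\ref{lem:universal_model} is stated with the remark that ``the proof is standard and omitted'' (deferring to Botoeva et al.\ 2016), and your reconstruction matches that standard proof exactly --- axiom-by-axiom verification of model-hood in~(1), an inductive construction of the homomorphism along paths for~(2), and~(3)/(4) obtained as corollaries via Point~1 of Lemma~\ref{lem:alternative_def_bounded_hmph}. You also correctly isolate the only delicate points, namely that functionality in the presence of inverse roles is exactly what path conditions~(i) and~(ii) are designed for, and that the argument leans on consistency of \Amc with \Tmc and on the standing assumption that functional roles have no subroles.
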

We shall sometimes refer to subinterpretations of universal models. 
Given a TBox \Tmc, an ABox \Amc, and $a \in \mn{ind}(\Amc)$, we
use $\Imc_{\Tmc,\Amc}|_a$ to denote the subinterpretation of $\Imc_{\Tmc,\Amc}$
rooted at~$a$, i.e., the restriction of $\Imc_{\Tmc,\Amc}$ to
all paths $\pi$ beginning with $a$.
Clearly, $\Imc_{\Tmc,\Amc}|_a$ is weakly tree-shaped.

\section{Model-Theoretic Characterization} 
\label{sec:characterization}

We aim to provide a model-theoretic characterization of CQ-entailment
that will be the basis for our decision procedure later on.  We
proceed in two steps. We first show that non-entailment between
$\Tmc_1$ and $\Tmc_2$ is always witnessed by a tree-shaped ABox and
a weakly tree-shaped CQ with at most one answer variable.  Second, we
provide the desired characterization, using a careful mix of bounded
and unbounded homomorphisms. We also establish a version for
1tCQ-entailment that uses (unbounded) simulations.


\subsection{Tree-Shaped Witnesses}
\label{sec:tree_shaped_witnesses}


We show that non-entailment between $\Tmc_1$ and $\Tmc_2$ is always
witnessed by a tree-shaped ABox and a weakly tree-shaped CQ with at
most one answer variable. The idea is to first manipulate the
witnessing CQ so that it takes the desired form and to then unravel
the ABox.

We start with defining ABox unraveling \cite<see
also>{DBLP:journals/lmcs/LutzW17}.  The \emph{unraveling} $U_\Amc^a$
of an ABox \Amc at an individual $a \in \mn{ind}(\Amc)$ is the
following (usually infinite) ABox:
\begin{itemize}

  \item $\mn{ind}(U_\Amc^a)$ is the set of sequences $b_0r_0b_1\cdots
    r_{n-1}b_n$ with $n \geq 0$, where $b_0=a$, $b_i \in
    \mn{ind}(\Amc)$ for all $0\leq i \leq n$, $r_i(b_i,b_{i+1}) \in
    \Amc$ for all $0\leq i < n$, and $(b_{i-1},r^-_{i-1}) \neq
    (b_{i+1},r_i)$ (the latter inequality is needed to ensure
    preservation of functionality);

  \item the concept assertions in $U_\Amc^a$ are all assertions of the
    shape $C(\alpha)$ such that
    $\alpha = b_0\cdots b_n \in \mn{ind}(U_\Amc^a)$ and
    $C(b_n) \in \Amc$, and the role assertions in $U_\Amc^a$ are all
    assertions of the shape
    $r(b_0\cdots b_{n-1},\alpha)$ such that $\alpha = b_0\cdots
    b_{n-1}rb_n \in \mn{ind}(U_\Amc^a)$.

\end{itemize}
%
%
The following is not hard to prove \cite<see>{DBLP:journals/lmcs/LutzW17}.
\begin{lemma} \label{prop:unraveling_preserves_consistency} Let
  $\Tmc$ be a $\HornALCHIF$ TBox, \Amc an ABox that is consistent with \Tmc, and $a \in
  \mn{ind}(\Amc)$. Then
  \begin{enumerate}[(a)]

    \item\label{it:unraveling-consistent} $U_\Amc^a$ is consistent
      with $\Tmc$;

    \item\label{it:unraveling-hom} $\Imc_{\Tmc,U_\Amc^a}\to\Imc_{\Tmc,\Amc}$;

    \item\label{it:unraveling-isomorph-subtree} $\Imc_{\Tmc,U_\Amc^a}|_a$ is isomorphic to
      $\Imc_{\Tmc,\Amc}|_a$.

  \end{enumerate}
  
\end{lemma}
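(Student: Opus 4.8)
The whole argument revolves around the \emph{projection} $p : \mn{ind}(U_\Amc^a) \to \mn{ind}(\Amc)$ sending a sequence $b_0 r_0 \cdots r_{n-1} b_n$ to its last element $b_n$. Directly from the definition of $U_\Amc^a$, $p$ maps every concept and role assertion of $U_\Amc^a$ to one of $\Amc$, so it is a homomorphism $U_\Amc^a \to \Amc$. Its second, decisive property is that it is \emph{edge-bijective}: for each $\alpha$ it restricts to a label-preserving bijection between the edges of $U_\Amc^a$ incident to $\alpha$ and the $\Amc$-edges incident to $p(\alpha)$, because the one direction excluded at $\alpha$ by the non-backtracking condition $(b_{i-1}, r_{i-1}^-) \neq (b_{i+1}, r_i)$ is exactly the reverse of the edge by which $\alpha$ was reached, which is realised instead by the step back to the predecessor. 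I expect the main obstacle to lie in the interaction with \emph{functionality}: I must verify that, for $\mn{func}(r) \in \Tmc$, the non-backtracking condition together with the assumed consistency of $\Amc$ (which, under the standard name assumption, forbids a node from having two distinct forced $r$-successors) prevents the unraveling from ever creating a functionality clash.

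For (\ref{it:unraveling-consistent}) I build a model of $\Tmc$ and $U_\Amc^a$ explicitly, without yet referring to a universal model of $U_\Amc^a$. Starting from $\Imc_{\Tmc,\Amc}$, I keep the individuals of $U_\Amc^a$ as distinct points, copy to each $\alpha$ the concept memberships of $p(\alpha)$, and hang below every $\alpha$ a fresh copy of the anonymous subtree that $\Imc_{\Tmc,\Amc}$ attaches below $p(\alpha)$. Edge-bijectivity guarantees that the neighbourhood of each $\alpha$ is isomorphic to that of $p(\alpha)$, so every normal-form CI of $\Tmc$ satisfied around $p(\alpha)$ remains satisfied around $\alpha$; together with the functionality check above this yields a model of $\Tmc$, and it satisfies $U_\Amc^a$ by construction. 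Hence $U_\Amc^a$ is consistent with $\Tmc$, and the universal model $\Imc_{\Tmc,U_\Amc^a}$ is well defined.

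With consistency in hand, edge-bijectivity upgrades to a \emph{type-transfer} statement: $\{A \mid \Tmc, U_\Amc^a \models A(\alpha)\} = \{A \mid \Tmc, \Amc \models A(p(\alpha))\}$ for every $\alpha \in \mn{ind}(U_\Amc^a)$, and $\alpha \rightsquigarrow^{\Tmc,U_\Amc^a}_r t$ holds iff $p(\alpha) \rightsquigarrow^{\Tmc,\Amc}_r t$ holds. The inclusion $\subseteq$ is immediate from the homomorphism $p$; the converse is proved by replaying any forward-chaining derivation of a fact at $p(\alpha)$ step by step in the isomorphic neighbourhoods supplied by edge-bijectivity. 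For (\ref{it:unraveling-hom}) I then lift $p$ to the universal models, leaving anonymous suffixes untouched, by $h(\alpha\, r_0 t_1 \cdots t_n) = p(\alpha)\, r_0 t_1 \cdots t_n$; this lands in $\mn{Paths}$ for $\Amc$ because type-transfer makes the relations $\rightsquigarrow$ agree along the path, and it is a homomorphism since concept labels depend only on $\mn{tail}$ and edges are preserved by $p$ on the ABox part and trivially on the identical suffixes. Note $h$ fixes the single shared individual name, namely the length-$0$ path $a$.

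Finally, for (\ref{it:unraveling-isomorph-subtree}) I specialise to the root $a$. As $a$ has no predecessor, it retains all $\Amc$-edges incident to $a$, so its ABox-successor structure and the functionality clause in condition~(i) of the path definition coincide on both sides; combined with the equality of types at $a$ (the case $\alpha = a$ of type-transfer, using $p(a)=a$) and the agreement of the $\rightsquigarrow$ relations, the two weakly tree-shaped interpretations $\Imc_{\Tmc,U_\Amc^a}|_a$ and $\Imc_{\Tmc,\Amc}|_a$ are generated from identical data under the same deterministic unfolding along $\rightsquigarrow^\Tmc_r$ (which depends only on $\Tmc$). Mapping $a\, r_0 t_1 \cdots t_n$ to itself is therefore the desired isomorphism, and it is precisely the restriction of the homomorphism $h$ from (\ref{it:unraveling-hom}).
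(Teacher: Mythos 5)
The paper offers no proof of this lemma at all: it declares it ``not hard to prove'' and points to Lutz and Wolter (2017), so there is no in-paper argument to compare yours against. Judged on its own, your projection-based proof is correct and is the natural argument. The key observation -- that $p$ is edge-bijective because the one child edge forbidden at $\alpha$ by the non-backtracking condition is exactly the edge realised instead by the step to the predecessor -- is precisely what makes all three parts work: it gives the local model construction for (a), it yields the type-transfer statement and hence the well-definedness of the lifted map $h(\alpha\, r_0 t_1 \cdots t_n) = p(\alpha)\, r_0 t_1 \cdots t_n$ for (b), and at the root $a$ (which has no predecessor and hence no excluded edge) it makes the generating data of the two anonymous trees literally identical, giving (c). Your handling of the individual-name condition is also the right reading: $h$ can only be required to fix the shared name $a$, since the other individuals of $U_\Amc^a$ do not occur in $\Imc_{\Tmc,\Amc}$ at all, and fixing $a$ is exactly what the lemma's later applications need.

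Two points deserve to be made explicit. First, your functionality check relies on more than non-backtracking plus consistency of $\Amc$: it also needs the paper's standing assumption that functional roles have no subroles. Without it, part (a) is simply false: take $\Amc = \{s(b,c),\, s'(b,c)\}$ and $\Tmc = \{s \sqsubseteq r,\ s' \sqsubseteq r,\ \mn{func}(r)\}$. Then $\Amc$ is consistent ($b$ has the single $r$-successor $c$), but the unraveling at $b$ splits the multi-edge into the two distinct individuals $b\:\!s\:\!c$ and $b\:\!s'c$, both of which are entailed $r$-successors of $b$ under the standard name assumption. So when you count $r$-edges at $\alpha$ via the edge bijection, you must use the fact that for functional $r$ every $r$-edge among named individuals stems from an $r$-assertion itself, which is exactly what the no-subroles assumption guarantees; your phrase ``two distinct forced $r$-successors'' glosses over the fact that the bijection is on labelled edges, not on successor nodes, and distinct edges of $U_\Amc^a$ may project to edges with the same target. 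Second, a small nit in the model built for (a): the role edges among the unravelled individuals must be closed under the role inclusions entailed by $\Tmc$ (as in the universal model), otherwise the RIs in $\Tmc$ need not hold in your interpretation; this is clearly what you intend, but it should be said.
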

We now prove the main result of this section.
\begin{proposition} \label{lem:tree_shaped_witnesses}
  Let $\Tmc_1$ and $\Tmc_2$ be $\HornALCHIF$ TBoxes with $\Tmc_1
  \models_{\sigmaabox,\sigmaquery}^{\textup{RI}}\Tmc_2$.  If $\Tmc_1
  \not\models_{\sigmaabox,\sigmaquery}^{\textup{CQ}} \Tmc_2$, then
  there is a witness $(\Amc,q,\abf)$ where $\Amc$ is tree-shaped and
  $q$ is weakly tree-shaped and has at most one answer variable.
\end{proposition}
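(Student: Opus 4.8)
The plan is to establish the two desired properties of the witness one after the other: first I reshape the \emph{query} into a weakly tree-shaped CQ with a single answer variable while keeping the ABox fixed, and only then unravel the \emph{ABox} into a tree while keeping the query fixed. So let $(\Amc,q,\abf)$ be any witness and fix a match $\pi$ of $q$ in $\Imc_{\Tmc_2,\Amc}$ with $\pi(\xbf)=\abf$, which exists because $\Tmc_2,\Amc\models q(\abf)$. I replace $q$ by its homomorphic image $q':=\pi(q)$, viewed as a $\sigmaquery$-CQ whose variables are the elements of $\Imc_{\Tmc_2,\Amc}$ in the image of $\pi$ and whose answer variables are the individuals $\abf$. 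Since $\pi$ is a homomorphism $q\to q'$, every match of $q'$ composes to a match of $q$, so $\Tmc_1,\Amc\models q'(\abf)$ would give $\Tmc_1,\Amc\models q(\abf)$; hence $q'$ is still a witness, and it has the advantage of sitting literally inside $\Imc_{\Tmc_2,\Amc}$. Using that the domain of $\Imc_{\Tmc_2,\Amc}$ splits into the individuals (the ``core'') and the TBox-generated trees, each dangling from exactly one individual, I decompose the variables of $q'$ into core variables (mapped to individuals) and, for each individual $a$, a part $T_a$ consisting of $a$ together with all variables mapped into the tree below $a$. Each $T_a$ is weakly tree-shaped because $\Imc_{\Tmc_2,\Amc}|_a$ is, and the $T_a$ meet the core only in their roots.

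Next I localize the reason for non-entailment to a single such piece. I claim that either some concept atom $A(c)$ of $q'$ with $c$ an individual satisfies $\Tmc_1,\Amc\not\models A(c)$, or some $T_a$ satisfies $\Tmc_1,\Amc\not\models T_a(a)$ (with the root as answer variable). Suppose neither holds. Then $c\in A^{\Imc_{\Tmc_1,\Amc}}$ for every such concept atom, and for every $a$ there is a homomorphism $h_a:T_a\to\Imc_{\Tmc_1,\Amc}$ with $h_a(a)=a$. Moreover every \emph{role} atom of the core joins two individuals through a $\sigmaquery$-role, and such an atom arises in $\Imc_{\Tmc_2,\Amc}$ from an ABox assertion over a $\sigmaabox$-role together with a subsumption of that role into the $\sigmaquery$-role; since $\Tmc_1\models^{\text{RI}}_{\sigmaabox,\sigmaquery}\Tmc_2$, the subsumption holds already for $\Tmc_1$, so the atom is preserved in $\Imc_{\Tmc_1,\Amc}$. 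Gluing the identity on the core with the maps $h_a$ (they agree on the shared roots) then yields a homomorphism $q'\to\Imc_{\Tmc_1,\Amc}$ fixing $\abf$, contradicting $\Tmc_1,\Amc\not\models q'(\abf)$. This gives a witness $(\Amc,q_1,a)$ in which $q_1$ is either the single atom $A(x)$ or the weakly tree-shaped CQ $T_a$ with its root as unique answer variable. The key gain is that the witnessing $\Tmc_2$-match of $q_1$ now lies entirely inside $\Imc_{\Tmc_2,\Amc}|_a$.

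It remains to make the ABox tree-shaped without destroying this witness, which is where I invoke the unraveling Lemma~\ref{prop:unraveling_preserves_consistency}. I unravel $\Amc$ at $a$ to $U:=U_\Amc^a$; part~(\ref{it:unraveling-consistent}) makes $U$ consistent with both $\Tmc_i$. For $\Tmc_1$, part~(\ref{it:unraveling-hom}) gives $\Imc_{\Tmc_1,U}\to\Imc_{\Tmc_1,\Amc}$ fixing individuals, so $\Tmc_1,U\models q_1(a)$ would transport to $\Tmc_1,\Amc\models q_1(a)$; hence $\Tmc_1,U\not\models q_1(a)$. For $\Tmc_2$, part~(\ref{it:unraveling-isomorph-subtree}) gives $\Imc_{\Tmc_2,U}|_a\cong\Imc_{\Tmc_2,\Amc}|_a$, and since the $\Tmc_2$-match lives inside $\Imc_{\Tmc_2,\Amc}|_a$ it transfers through this isomorphism, so $\Tmc_2,U\models q_1(a)$. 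Finally I pass to a finite tree-shaped $\Amc^\ast$ by taking the initial subtree of $U$ of depth $d$, with $d$ chosen (by first-order compactness) large enough that $\Amc^\ast$ already entails every concept name of $\mn{tp}_{\Tmc_2}(a)$ and contains $a$'s immediate neighbours. Then the type of $a$ and the functional interaction at $a$ are unchanged, so the tree below $a$ that carries the $\Tmc_2$-match is still present and $\Tmc_2,\Amc^\ast\models q_1(a)$; and from $\Amc^\ast\subseteq U$ and Lemma~\ref{lem:universal_model}(\ref{it:univ_model_embeds}) we get $\Imc_{\Tmc_1,\Amc^\ast}\to\Imc_{\Tmc_1,U}\to\Imc_{\Tmc_1,\Amc}$, whence $\Tmc_1,\Amc^\ast\not\models q_1(a)$. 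Thus $(\Amc^\ast,q_1,a)$ is the required tree-shaped witness.

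The delicate point, and the reason for the particular order of steps, is the $\Tmc_2$-direction: the unraveling only preserves the subinterpretation rooted at the single individual $a$ (part~(\ref{it:unraveling-isomorph-subtree})), so a witnessing match that meandered through several core individuals could not be transported. This is exactly what forces the localization step, which confines the whole $\Tmc_2$-match to $\Imc_{\Tmc_2,\Amc}|_a$, and it is exactly where the assumption $\Tmc_1\models^{\text{RI}}_{\sigmaabox,\sigmaquery}\Tmc_2$ is indispensable, since it is what lets us discard the core without losing anything. The only other care needed is in the truncation to a finite ABox, where one must verify that shortening the tree neither changes $\mn{tp}_{\Tmc_2}(a)$ (handled by compactness) nor removes the functional ABox-neighbours that shape the tree below $a$.
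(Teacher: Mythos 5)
Your high-level route is essentially the paper's: confine the $\Tmc_2$-match to a single subtree $\Imc_{\Tmc_2,\Amc}|_a$ (using $\Tmc_1\models^{\textup{RI}}_{\sigmaabox,\sigmaquery}\Tmc_2$ to discard role atoms between individuals), then unravel at $a$ and cut to a finite ABox; your localization via the image $q'=\pi(q)$ and its core/subtree decomposition is a sound and arguably cleaner substitute for the paper's four ``w.l.o.g.'' transformations. The genuine gap is in the last step, the truncation of $U=U_\Amc^a$ to a finite $\Amc^\ast$. Your criterion for the depth $d$ --- that $\Amc^\ast$ preserve $\mn{tp}_{\Tmc_2}(a)$ and contain $a$'s immediate neighbours --- does \emph{not} imply that the tree below $a$ in $\Imc_{\Tmc_2,\Amc^\ast}$ still hosts the match: the anonymous part below $a$ is not a function of $a$'s concept-name type and its immediate neighbourhood, because the relations $a\rightsquigarrow_r^{\Tmc_2,\Amc^\ast}t$ involve entailed existentials whose support can lie arbitrarily deep in the ABox. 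Concretely, take $\Tmc_1=\emptyset$, $\Tmc_2=\{B\sqsubseteq\forall r^-.B'\}$, a branch $r(a,b),\,r(b,c),\,B(c)$ in $U$, and $q_1(x)=\exists y\,\bigl(r(x,y)\wedge B'(y)\bigr)$ matched at the anonymous $r$-child of $a$ of type $\{B'\}$. Here $\mn{tp}_{\Tmc_2}(a)=\emptyset$, so the depth-$1$ truncation $\Amc^\ast=\{r(a,b)\}$ satisfies your criterion, yet $\Tmc_2,\Amc^\ast\not\models q_1(a)$: cutting off $B(c)$ removes $B'$ from $b$ and with it the anonymous child carrying the match. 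The repair is the paper's, and it is simpler: apply first-order compactness directly to the entailment $\Tmc_2\cup U\models q_1(a)$ to obtain a finite $\Bmc\subseteq U$ with $\Tmc_2,\Bmc\models q_1(a)$; then $\Tmc_1,\Bmc\not\models q_1(a)$ follows from $\Bmc\subseteq U$ by monotonicity, and $\Bmc$ can be closed under the paths to $a$ so that it is connected and tree-shaped.

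A second, smaller gap: the claim that ``each $T_a$ is weakly tree-shaped because $\Imc_{\Tmc_2,\Amc}|_a$ is'' is false as stated. $T_a$ is induced on an arbitrary subset of the elements of a tree, so its graph is a forest, not necessarily a tree; for $q=\exists y\,B(y)$ matched two levels below $a$, your $T_a$ consists of the isolated root $a$ and the isolated vertex $\pi(y)$ and is disconnected --- so the witness query you hand over need not be weakly tree-shaped, which is part of what the proposition asserts. This one is easy to fix: split $T_a$ into connected components; $\Tmc_1,\Amc\not\models T_a(a)$ forces non-entailment of the component containing $a$ (a rooted tree with one answer variable) or of some fully anonymous component (a Boolean tree-shaped CQ), and every connected induced subgraph of a tree is a tree. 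With these two repairs --- components instead of $T_a$, and compactness applied to the query entailment itself --- your argument goes through.
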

\begin{proof}
  Assume $\Tmc_1 \not\models_{\sigmaabox,\sigmaquery}^{\text{CQ}}
  \Tmc_2$, i.e., $\Tmc_2,\Amc \models q(\mathbf{a})$ and $\Tmc_1,\Amc
  \not\models q(\mathbf{a})$, for some $\sigmaabox$-ABox \Amc
  consistent with both $\Tmc_i$, some $\sigmaquery$-CQ $q(\xbf)$ and
  some tuple $\mathbf{a}$.
  Lemma~\ref{lem:universal_model}~(\ref{it:univ_model_queries}) yields
  $\Imc_{\Tmc_2,\Amc}\models q(\abf)$ and
  $\Imc_{\Tmc_1,\Amc}\not\models q(\abf)$.  We first show that the
  following properties of $q$ and $\abf$ are without loss of
  generality.
  \par\smallskip\noindent \begin{enumerate}[(a)] \item
	\label{it:tree-q-quantified-anonymous} Every match $\pi$ of
	$q(\xbf)$ in $\Imc_{\Tmc_2,\Amc}$ with $\pi(\xbf)=\abf$ maps
	every quantified variable into the anonymous part;

    \item\label{it:tree-q-quantified-distinct}
      every match $\pi$ of $q(\xbf)$ in $\Imc_{\Tmc_2,\Amc}$ with
      $\pi(\xbf)=\abf$ maps the quantified variables to pairwise distinct
	  elements;

    \item \label{it:tree-q-no-binary-answer-atoms} $q(\xbf)$ does not
      contain atoms of the form $r(x_1,x_2)$ with $x_1,x_2$ answer
      variables;

    \item \label{it:tree-q-connected} $q(\xbf)$ is connected.  
  \end{enumerate}
  
  \par\smallskip\noindent
  \emph{For~(\ref{it:tree-q-quantified-anonymous})}, take a match
  $\pi$ of $q$ in $\Imc_{\Tmc_2,\Amc}$ with $\pi(\xbf)=\abf$. If there
  is a quantified variable $y$ such that $\pi(y)=b\in\mn{ind}(\Amc)$,
  obtain $q'(\xbf,y)$ from $q(\xbf)$ by removing the quantification
  over $y$, thus making $y$ an answer variable.  Clearly, we have
  $\Imc_{\Tmc_2,\Amc}\models q'(\abf,b)$ and $\Imc_{\Tmc_1,\Amc}\not
  \models q'(\abf,b)$, and thus $\Tmc_2,\Amc\models q'(\abf,b)$ and
  $\Tmc_1,\Amc\not \models q'(\abf,b)$. 
  
  \par\smallskip\emph{For~(\ref{it:tree-q-quantified-distinct})}, take
  a match $\pi$ of $q$ in $\Imc_{\Tmc_2,\Amc}$ with $\pi(\xbf)=\abf$.
  If there are quantified variables $y,y'$ such that $y\neq y'$ and
  $\pi(y)=\pi(y')$, obtain $q'(\xbf)$ from $q(\xbf)$ by replacing all
  occurrences of $y'$ with $y$ and removing quantification over $y'$. Clearly, we have $\Imc_{\Tmc_2,\Amc}\models q'(\abf)$ and
  $\Imc_{\Tmc_1,\Amc}\not \models q'(\abf)$, and thus
  $\Tmc_2,\Amc\models q'(\abf)$ and $\Tmc_1,\Amc\not \models
  q'(\abf)$.
%
  \par\smallskip \emph{For~(\ref{it:tree-q-no-binary-answer-atoms})},
  let $q(\xbf) = \exists \ybf\,(r(x_1,x_2) \land \varphi(\xbf',\ybf))$
  with $x_1,x_2 \in \xbf$, and let $\Imc_{\Tmc_2,\Amc} \models
  q(\abf)$ be witnessed by the match $\pi$ with $\pi(x_i) = a_i$,
  $i=1,2$.  Construct the CQ $q(\xbf') = \exists
  \ybf\,\varphi(\xbf',\ybf)$ by dropping the atom $r(x_1,x_2)$ (and
  thus possibly removing $x_1$ and/or $x_2$ from the free variables).
  It is clear that $\Imc_{\Tmc_2,\Amc}\models q'(\abf')$ for the
  corresponding restriction $\abf'$ of the tuple $\abf$; thus it
  suffices to show that $\Imc_{\Tmc_1,\Amc}\not\models q'(\abf')$.

  From $\Imc_{\Tmc_2,\Amc}\models q(\abf)$ we can conclude
  that $(a_1,a_2) \in r^{\Imc_{\Tmc_2,\Amc}}$.
  By construction of $\Imc_{\Tmc_2,\Amc}$
  there is some $\sigmaabox$-role $r'$
  with $r'(a_1,a_2) \in \Amc$ and $\Tmc_2 \models r' \sqsubseteq r$
  (which includes the possibility $r'=r$, i.e., $r(a_1,a_2) \in \Amc$).
  Due to $\Tmc_1 \models_{\sigmaabox,\sigmaquery}^{\text{RI}} \Tmc_2$,
  we also have $\Tmc_1 \models r' \sqsubseteq r$
  and hence $(a_1,a_2) \in r^{\Imc_{\Tmc_1,\Amc}}$.
  This implies the desired $\Imc_{\Tmc_1,\Amc}\not\models q'(\abf')$ because,
  otherwise,
  any match $\pi$ of $q'$ in $\Imc_{\Tmc_1,\Amc}$ with $\pi(x_i)=a_i$, $i=1,2$,
  could be extended to a match of $q$.


%

  \par\smallskip
  \emph{For~(\ref{it:tree-q-connected})},
  observe that $\Tmc_2,\Amc \models q(\mathbf{a})$ and
  $\Tmc_1,\Amc \not\models q(\mathbf{a})$ implies
  $\Tmc_2,\Amc\models q'(\abf)$ and $\Tmc_1,\Amc\not \models
  q'(\abf)$ for some connected component $q'$ of $q$.

  \par\medskip\noindent Thus, as long as $q$ violates any of the above
  properties, we apply the corresponding modification; it is routine
  to verify that after finitely many steps $q$ satisfies
  properties~(\ref{it:tree-q-quantified-anonymous})
  to~(\ref{it:tree-q-connected}). We verify that then $q$ is weakly
  tree-shaped and has at most one answer variable. 

  \par\smallskip To show the latter, assume that $\xbf$ contains more
  than one answer variable, say $x \neq x'$.
  By~(\ref{it:tree-q-connected}), $q$ is connected, and thus, there is
  a path from $x$ to $x'$ in $q$, that is, a sequence of atoms
  $r_1(z_1,z_2), r_2(z_2,z_3), \dots, r_n(z_n,z_{n+1})$ with
  $z_1=x,z_{n+1}=x'$ and with roles $r_i$ such that $z_{i+1} \neq
  z_{i-1}$ for every $1 < i \leq n$. Let $\pi$ be a match of $q$ in
  $\Imc_{\Tmc_2,\Amc}$ with $\pi(\xbf)=\abf$ and consider $z_2$.
  Because of property~(\ref{it:tree-q-no-binary-answer-atoms}), $z_2$
  cannot be an answer variable. By
  property~(\ref{it:tree-q-quantified-anonymous}), $\pi(z_2)$ is a
  successor of $\pi(z_1)$ in $\Imc_{\Tmc_2,\Amc}|_{\pi(z_1)}$.
  Since $\pi$ is a match
  with $\pi(x')\in\mn{ind}(\Amc)$ and since
  $\Imc_{\Tmc_2,\Amc}|_{\pi(z_1)}$ is weakly tree-shaped,
  there has to be another
  element $z_\ell\neq z_2$ such that $\pi(z_\ell)=\pi(z_2)$, in
  contradiction with property~(\ref{it:tree-q-quantified-distinct}).
 
  \par\smallskip For the former, assume that $q$ is not weakly
  tree-shaped, that is, it contains a cycle $r_1(z_1,z_2),
  r_2(z_2,z_3), \dots, r_n(z_n,z_{n+1})$ such that $z_1=z_{n+1}$,
  $z_2\neq z_n$, and $z_{i+1} \neq z_{i-1}$ for every $1 < i \leq n$.
  Let $\pi$ be a match of $q(\xbf)$ in $\Imc_{\Tmc_2,\Amc}$ with
  $\pi(\xbf)=\abf$. Since $q$ has at most one answer variable and
  satisfies properties~(\ref{it:tree-q-quantified-anonymous})
  and~(\ref{it:tree-q-connected}), there is some ABox individual
  $a\in\mn{ind}(\Amc)$ such that $\pi$ maps all variables
  $z_1,\ldots,z_{n+1}$ into $\Imc_{\Tmc_2,\Amc}|_a$, which is weakly
  tree-shaped by definition.
  Since $\pi$ is a match for the cycle in this part,
  there have to be $i\neq j$ such that $z_i,z_j$ are quantified
  variables with $z_i\neq z_j$ and $\pi(z_i)=\pi(z_j)$, in
  contradiction to property~(\ref{it:tree-q-quantified-distinct}).

  \par\smallskip It remains to transform $\Amc$ into a tree-shaped ABox.
  Due to properties~(\ref{it:tree-q-quantified-anonymous})
  and~(\ref{it:tree-q-connected}), we can fix an ABox
  individual $a$ such that there is a match $\pi$ of $q$ in
  $\Imc_{\Tmc_2,\Amc}|_a$. 
  Note that, if $q$ has a
  single answer variable $x$, then $\pi(x)=a$. Consider the unraveling $U_\Amc^a$ of $\Amc$ at $a$. 
  Due to
  Lemma~\ref{prop:unraveling_preserves_consistency}~(\ref{it:unraveling-consistent}),
  $U_\Amc^a$ is still consistent with both $\Tmc_i$.
  Lemma~\ref{prop:unraveling_preserves_consistency}~(\ref{it:unraveling-isomorph-subtree})
  and the fact that the range of $\pi$ is contained in the domain of $\Imc_{\Tmc_2,\Amc}|_a$
  imply $\Tmc_2,U_{\Amc}^a\models q(a)$. Moreover, 
  Lemma~\ref{prop:unraveling_preserves_consistency}~(\ref{it:unraveling-hom})
  and $\Imc_{\Tmc_1,\Amc}\not\models q(a)$ imply $\Tmc_1,U_{\Amc}^a\not
  \models q(a)$. 
  By compactness, there is a finite subset
  $\Bmc\subseteq U_{\Amc}^a$ with $\Tmc_2,\Bmc\models q(a)$ and
  $\Tmc_1,\Bmc\not \models q(a)$. Clearly, we can also assume that
  $\Bmc$ is connected.
\end{proof}
We also observe an analogous result for 1tCQ entailment.
\begin{proposition}
  \label{lem:tree_shaped_witnesses_tCQs}
  Let $\Tmc_1$ and $\Tmc_2$ be $\HornALCHIF$ TBoxes
  with $\Tmc_1 \models_{\sigmaabox,\sigmaquery}^{\textup{RI}}\Tmc_2$.
  If $\Tmc_1 \not\models_{\sigmaabox,\sigmaquery}^{\textup{\onetCQ}} \Tmc_2$,
  then there is a witness $(\Amc,q,a)$ where $\Amc$ is tree-shaped and
  $q$ is a \onetCQ.
\end{proposition}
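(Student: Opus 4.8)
The plan is to follow the proof of Proposition~\ref{lem:tree_shaped_witnesses}, exploiting that $q$ is already a \onetCQ. Being tree-shaped, connected, with a single answer variable and no multi-edges, $q$ satisfies properties~(\ref{it:tree-q-no-binary-answer-atoms}) and~(\ref{it:tree-q-connected}) automatically and needs no reshaping. What is lost is property~(\ref{it:tree-q-quantified-anonymous}): promoting a quantified variable matched to an ABox individual to a fresh answer variable would violate the single-answer-variable constraint. Consequently the witnessing match $\pi$ of $q$ in $\Imc_{\Tmc_2,\Amc}$ with $\pi(x)=a$ need not stay inside $\Imc_{\Tmc_2,\Amc}|_a$ but may traverse several ABox individuals, and the whole task is to make the ABox tree-shaped while accommodating this.

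By Lemma~\ref{lem:universal_model}(\ref{it:univ_model_queries}) the witness yields $\Imc_{\Tmc_2,\Amc}\models q(a)$ and $\Imc_{\Tmc_1,\Amc}\not\models q(a)$. I would unravel $\Amc$ at the answer individual~$a$ to obtain the (infinite) tree-shaped ABox $U_\Amc^a$. The non-entailment side is then immediate: by Lemma~\ref{prop:unraveling_preserves_consistency}(\ref{it:unraveling-hom}) there is a homomorphism $\Imc_{\Tmc_1,U_\Amc^a}\to\Imc_{\Tmc_1,\Amc}$ sending the root to~$a$, so a match of $q$ in $\Imc_{\Tmc_1,U_\Amc^a}$ with $x\mapsto a$ would compose to one in $\Imc_{\Tmc_1,\Amc}$, contradicting $\Imc_{\Tmc_1,\Amc}\not\models q(a)$; hence $\Tmc_1,U_\Amc^a\not\models q(a)$.

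The crux is the entailment side $\Tmc_2,U_\Amc^a\models q(a)$, i.e.\ lifting $\pi$ to $\Imc_{\Tmc_2,U_\Amc^a}$. Let $h$ be the homomorphism of Lemma~\ref{prop:unraveling_preserves_consistency}(\ref{it:unraveling-hom}) for $\Tmc_2$, mapping every element of $U_\Amc^a$ to its endpoint in~$\Amc$, and fix an ABox individual $\alpha$ of $U_\Amc^a$ with $h(\alpha)=b$ (so $b\in\mn{ind}(\Amc)$). I would prove two structural facts. \emph{Local isomorphism:} the subinterpretation $\Imc_{\Tmc_2,U_\Amc^a}|_\alpha$ is isomorphic to $\Imc_{\Tmc_2,\Amc}|_b$. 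Indeed $U_\Amc^a$ and $U_\Amc^b$ are isomorphic labelled trees via a map sending $\alpha$ to the root~$b$, as both are the non-backtracking unraveling of $\Amc$ with $\alpha$ and the root of $U_\Amc^b$ lying over the same individual~$b$; Lemma~\ref{prop:unraveling_preserves_consistency}(\ref{it:unraveling-isomorph-subtree}) at~$b$ then identifies $\Imc_{\Tmc_2,U_\Amc^b}|_b$ with $\Imc_{\Tmc_2,\Amc}|_b$. In particular $\alpha\in A^{\Imc_{\Tmc_2,U_\Amc^a}}$ iff $b\in A^{\Imc_{\Tmc_2,\Amc}}$ for every concept name~$A$. \emph{Edge-lifting:} for every ABox individual $c$ with $(b,c)\in r^{\Imc_{\Tmc_2,\Amc}}$ there is an $r$-successor $\alpha'$ of $\alpha$ in $\Imc_{\Tmc_2,U_\Amc^a}$ with $h(\alpha')=c$---a child of~$\alpha$, or, in the single backtracking case excluded by the unraveling, the parent of~$\alpha$, which is itself a copy of~$c$. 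Granting these, I would build the lifted match $\pi'$ with $\pi'(x)=a$ by top-down recursion over the tree $q$ rooted at~$x$, maintaining $h(\pi'(z))=\pi(z)$: a query edge whose image lies within a single $\Imc_{\Tmc_2,\Amc}|_b$ is lifted by the local isomorphism, one mapped to an edge between two ABox individuals by edge-lifting, and all concept atoms hold by the type equality above. As $q$ is tree-shaped its branches are independent, so a non-injective $\pi$ may be lifted to distinct copies; no merging is ever forced and the recursion succeeds, giving $\Tmc_2,U_\Amc^a\models q(a)$.

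Finally, since $q$ is finite the lifted match meets only finitely many assertions, so by compactness some finite connected $\Bmc\subseteq U_\Amc^a$ already satisfies $\Tmc_2,\Bmc\models q(a)$; monotonicity of certain answers turns $\Tmc_1,U_\Amc^a\not\models q(a)$ into $\Tmc_1,\Bmc\not\models q(a)$, and being a connected subset of the tree-shaped $U_\Amc^a$, the ABox $\Bmc$ is itself tree-shaped. Thus $(\Bmc,q,a)$ is the required witness. The main obstacle is exactly the local-isomorphism/edge-lifting pair: showing that the unraveling preserves \onetCQ entailment in the hard direction even though the match wanders through the cyclic ABox part---the very configuration that property~(\ref{it:tree-q-quantified-anonymous}) eliminates in the CQ proof, and which here must be absorbed by this simulation-style lifting.
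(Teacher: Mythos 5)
Your proof is correct and takes essentially the same route as the paper's: unravel $\Amc$ at the answer individual $a$, get the non-entailment side by composing with the homomorphism of Lemma~\ref{prop:unraveling_preserves_consistency}~(\ref{it:unraveling-hom}), lift the match of $q$ into $\Imc_{\Tmc_2,U_\Amc^a}$, and finish by compactness. The paper's own proof compresses the lifting step into the phrase ``argue as in the proof of Proposition~\ref{lem:tree_shaped_witnesses}''; your local-isomorphism/edge-lifting recursion supplies exactly the content that phrase leaves implicit, since, as you correctly observe, property~(\ref{it:tree-q-quantified-anonymous}) cannot be enforced for \onetCQ{}s and the match may pass through several ABox individuals.
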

\begin{proof}
  Let $(\Amc,a,q)$ be a witness for
  $\Tmc_1\not\not\models_{\sigmaabox,\sigmaquery}^{\textup{\onetCQ}}
  \Tmc_2$, that is, $\Tmc_2,\Amc\models q(a)$ and
  $\Tmc_1,\Amc\not\models q(a)$. As $q$ is already a \onetCQ, it suffices
  to modify the ABox. Consider the unraveling $U_\Amc^a$ of
  $\Amc$ at $a$. We can argue as in the proof of
  Proposition~\ref{lem:tree_shaped_witnesses}
  that there is a tree-shaped (and finite) ABox $\Bmc\subseteq U_\Amc^a$ such
  that $\Tmc_2,\Bmc\models q(a)$ and
  $\Tmc_1,\Bmc\not \models q(a)$.
%
\end{proof}

\subsection{Characterization for CQ Entailment}
\label{sec:bounded_hmphs}

We now develop the announced characterization of CQ entailment. Known
characterizations of entailment in Horn DLs without inverse roles
\cite{LW10,DBLP:journals/ai/BotoevaLRWZ19} suggest that a first natural candidate
characterization would be:
$\Tmc_1 \models_{\sigmaabox,\sigmaquery}^{\text{CQ}} \Tmc_2$ iff
$\Imc_{\Tmc_2,\Amc} \rightarrow_{\sigmaquery} \Imc_{\Tmc_1,\Amc}$
 for all
tree-shaped $\sigmaabox$-ABoxes \Amc that are consistent with $\Tmc_1$ and
$\Tmc_2$.
However, such a characterization fails in the presence of inverse roles, in
fact already for \ELI.
\begin{example}
  \label{exa:homomorphisms_too_strong}
  Let $\Tmc_1 = \{A \sqsubseteq \exists s.B,~ B \sqsubseteq \exists r^-.B\}$,
  $\Tmc_2 = \{A \sqsubseteq \exists s.B,~ B \sqsubseteq \exists r.B\}$,
  $\sigmaabox = \{A\}$, and $\sigmaquery = \{r\}$.
  Then both $\Imc_{\Tmc_1,\Amc}$ and $\Imc_{\Tmc_2,\Amc}$
  contain an infinite $r$-path;
  the $r$-path in $\Imc_{\Tmc_1,\Amc}$
  has a final element while the one in $\Imc_{\Tmc_2,\Amc}$ does not.
  Hence, we have $\Imc_{\Tmc_2,\Amc} \not\rightarrow_{\sigmaquery}
  \Imc_{\Tmc_1,\Amc}$. However, it can be verified that
  $\Tmc_1 \models_{\sigmaabox,\sigmaquery}^{\textup{CQ}}\Tmc_2$, see Theorem~\ref{thm:homs} below.
\end{example}
It is easy to turn the above candidate characterization into a correct
one by replacing homomorphisms with bounded homomorphisms, cf.\
Points~1 and~2 of Lemma~\ref{lem:alternative_def_bounded_hmph}. Since
we are interested in \emph{all} CQs over signature \Qbf, however, we
have to replace unbounded homomorphisms with $n$-bounded
homomorphisms, for \emph{any} bound $n$.  In fact, the following
characterization follows from the definition of CQ entailment,
Proposition~\ref{lem:tree_shaped_witnesses}, and
Lemma~\ref{lem:alternative_def_bounded_hmph}.
\begin{lemma}
\label{lem:ini}
  Let $\Tmc_1$ and $\Tmc_2$ be $\HornALCHIF$ TBoxes with
  $\Tmc_1 \models_{\sigmaabox,\sigmaquery}^{\textup{RI}}\Tmc_2$.  Then
  $\Tmc_1 \models_{\sigmaabox,\sigmaquery}^{\textup{CQ}} \Tmc_2$ iff
  for all tree-shaped $\sigmaabox$-ABoxes \Amc consistent with
  $\Tmc_1$ and $\Tmc_2$, we have
  $\Imc_{\Tmc_2,\Amc}
  \rightarrow^{\mn{fin}}_{\sigmaquery} \Imc_{\Tmc_1,\Amc}$.
\end{lemma}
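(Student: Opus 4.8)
The plan is to prove both directions by contraposition, using the correspondence between matches of CQs and ($n$-bounded) $\sigmaquery$-homomorphisms supplied by Lemma~\ref{lem:alternative_def_bounded_hmph} together with the universality of $\Imc_{\Tmc_i,\Amc}$ recorded in Lemma~\ref{lem:universal_model}\,(\ref{it:univ_model_queries}).

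For the ``if'' direction I would argue contrapositively. Assuming $\Tmc_1 \not\models_{\sigmaabox,\sigmaquery}^{\textup{CQ}} \Tmc_2$, Proposition~\ref{lem:tree_shaped_witnesses} supplies a witness $(\Amc,q,\abf)$ in which $\Amc$ is a tree-shaped $\sigmaabox$-ABox consistent with both $\Tmc_i$ and $q$ is a $\sigmaquery$-CQ; by Lemma~\ref{lem:universal_model}\,(\ref{it:univ_model_queries}) this means $\Imc_{\Tmc_2,\Amc}\models q(\abf)$ while $\Imc_{\Tmc_1,\Amc}\not\models q(\abf)$. Writing $n$ for the number of variables of $q$, a hypothetical $\Imc_{\Tmc_2,\Amc} \rightarrow^{\mn{fin}}_{\sigmaquery} \Imc_{\Tmc_1,\Amc}$ would in particular give $\Imc_{\Tmc_2,\Amc} \rightarrow^{n}_{\sigmaquery} \Imc_{\Tmc_1,\Amc}$, whence Point~2 of Lemma~\ref{lem:alternative_def_bounded_hmph} would upgrade $\Imc_{\Tmc_2,\Amc}\models q(\abf)$ to $\Imc_{\Tmc_1,\Amc}\models q(\abf)$, a contradiction. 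Thus this tree-shaped $\Amc$ already falsifies the right-hand side.

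For the ``only if'' direction I would again use contraposition. Suppose the right-hand side fails, so some tree-shaped $\sigmaabox$-ABox $\Amc$ consistent with both $\Tmc_i$ satisfies $\Imc_{\Tmc_2,\Amc} \not\rightarrow^{\mn{fin}}_{\sigmaquery} \Imc_{\Tmc_1,\Amc}$; unfolding the definition yields some $n$ and a subinterpretation $\Imc'$ of $\Imc_{\Tmc_2,\Amc}$ with $|\Delta^{\Imc'}|\le n$ and $\Imc'\not\rightarrow_{\sigmaquery} \Imc_{\Tmc_1,\Amc}$. I would then read off the \emph{canonical $\sigmaquery$-CQ} $q$ of $\Imc'$: take one variable $z_d$ per $d\in\Delta^{\Imc'}$, declare $z_a$ an answer variable exactly when $a\in\Delta^{\Imc'}\cap\mn{ind}(\Amc)$, and include the atoms $A(z_d)$ for all $\sigmaquery$-concept names $A$ with $d\in A^{\Imc'}$ and $r(z_d,z_{d'})$ for all $\sigmaquery$-roles $r$ with $(d,d')\in r^{\Imc'}$; let $\abf$ list the individuals in $\Delta^{\Imc'}\cap\mn{ind}(\Amc)$. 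The identity assignment $z_d\mapsto d$ is a match of $q$ in $\Imc_{\Tmc_2,\Amc}$ sending each $z_a$ to $a$, so $\Imc_{\Tmc_2,\Amc}\models q(\abf)$ and hence $\Tmc_2,\Amc\models q(\abf)$ by Lemma~\ref{lem:universal_model}\,(\ref{it:univ_model_queries}). Conversely, any match $\pi$ of $q$ in $\Imc_{\Tmc_1,\Amc}$ sending each answer variable $z_a$ to $a$ would, via $d\mapsto\pi(z_d)$, be a $\sigmaquery$-homomorphism $\Imc'\rightarrow_{\sigmaquery}\Imc_{\Tmc_1,\Amc}$ (the three homomorphism conditions read off directly from the atoms of $q$ and the standard name assumption), contradicting the choice of $\Imc'$. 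Hence $\Tmc_1,\Amc\not\models q(\abf)$, and $(\Amc,q,\abf)$ witnesses $\Tmc_1 \not\models_{\sigmaabox,\sigmaquery}^{\textup{CQ}} \Tmc_2$.

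The back-and-forth between matches and homomorphisms is routine; the one place needing care is the canonical-CQ construction, where I must designate precisely the ABox individuals occurring in $\Imc'$ as answer variables so that the induced map into $\Imc_{\Tmc_1,\Amc}$ fixes individual names and thereby qualifies as a homomorphism, and must confine the atoms to $\sigmaquery$ so that $q$ is genuinely a $\sigmaquery$-CQ. I anticipate no real obstacle beyond this bookkeeping, consistent with the statement being flagged as a direct consequence of Proposition~\ref{lem:tree_shaped_witnesses} and Lemma~\ref{lem:alternative_def_bounded_hmph}.
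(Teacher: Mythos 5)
Your proposal is correct and follows essentially the same route as the paper's proof: both directions go by contraposition, using Proposition~\ref{lem:tree_shaped_witnesses} together with Lemma~\ref{lem:universal_model} to obtain a tree-shaped witness for ``$\Leftarrow$'', and the canonical-query construction from a finite subinterpretation (i.e., the match/homomorphism correspondence underlying Lemma~\ref{lem:alternative_def_bounded_hmph}) for ``$\Rightarrow$''. If anything, your bookkeeping is slightly more explicit than the paper's: you restrict the canonical query's atoms to $\sigmaquery$-symbols and designate exactly the ABox individuals of $\Imc'$ as answer variables, details the paper's definition of $q_\Imc$ leaves implicit.
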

\begin{proof}
  We prove both implications via contraposition.
  
  \par\smallskip\noindent
  \textbf{{\boldmath ``$\Leftarrow$''.}}~
  Assume $\Tmc_1 \not\models_{\sigmaabox,\sigmaquery}^{\textup{CQ}} \Tmc_2$
  and consider a witness $(\Amc,q,\abf)$.
  By Proposition~\ref{lem:tree_shaped_witnesses},
  we can assume that $\Amc$ is tree-shaped.
  From Lemma~\ref{lem:universal_model}~(\ref{it:univ_model_queries}),
  we get $\Imc_{\Tmc_2,\Amc} \models q(\abf)$
  and $\Imc_{\Tmc_1,\Amc} \not\models q(\abf)$. Let $\pi$ be a match
  for $q(\xbf)$ in $\Imc_{\Tmc_2,\Amc}$ with $\pi(\xbf)=(\abf)$, and
  let $\Imc$ be the finite subinterpretation of $\Imc_{\Tmc_2,\Amc}$
  whose domain $\Delta^\Imc$ is the range of $\pi$. 
  We have $\Imc \not\rightarrow_\sigmaquery \Imc_{\Tmc_1,\Amc}$
  because of $\Imc_{\Tmc_1,\Amc} \not\models q(\abf)$ and $\Imc\models
  q(\abf)$.
  Hence $\Imc_{\Tmc_2,\Amc} \not\rightarrow^{\mn{fin}}_\sigmaquery \Imc_{\Tmc_1,\Amc}$.

  \par\smallskip\noindent
  \textbf{{\boldmath ``$\Rightarrow$''.}}~
  Assume $\Imc_{\Tmc_2,\Amc} \not\rightarrow^{\mn{fin}}_\sigmaquery \Imc_{\Tmc_1,\Amc}$,
  that is, there is a finite subinterpretation $\Imc$ of $\Imc_{\Tmc_2,\Amc}$
  with $\Imc \not\rightarrow_\sigmaquery \Imc_{\Tmc_1,\Amc}$. We 
  associate a query $q_\Imc$ with $\Imc$ as follows. The 
  variables in $q_\Imc$ take the shape $x_a$ with $a\in\Delta^\Imc$,
  and $q_\Imc$ contains atoms $A(x_a)$ for all $a\in A^\Imc$ and
  $r(x_a,x_b)$ for all $(a,b)\in r^{\Imc}$. Answer variables of
  $q_\Imc$ are all $x_a$ with $a\in\abf$, the remaining variables are
  quantified. 
  Then it can be verified that $\Imc_{\Tmc_2,\Amc} \models
  q_\Imc(\abf)$ and $\Imc_{\Tmc_1,\Amc} \not\models q_\Imc(\abf)$.
\end{proof}
%
%
%
We now show that it is possible to refine Lemma~\ref{lem:ini} so that
it makes a much more careful statement in which bounded homomorphisms
and unbounded ones are mixed. It is then possible to
check the unbounded homomorphism part of the characterization using
tree automata as desired, and to deal with bounded homomorphisms using
a mosaic technique that ``precompiles'' relevant information to be
used in the automaton construction.

\smallskip

We begin with a useful lemma which shows that, under certain
additional conditions, we can extract an unbounded homomorphism from a
suitable family of bounded ones. Let \Imc be an interpretation.  For a
signature \sig, we say that the interpretation $\Imc$ is
\emph{$\sig$-connected} if between any two domain elements there is a
path using only $\sig$-role edges, and \emph{finitely branching} if
every domain element has only finitely many directly connected
elements.
\begin{lemma} \label{lem:constructhoms_ELHIF_bot} 
  Let $\Imc_1,\Imc_2$ be finitely branching interpretations and let
  $\Imc_1$ be $\sig$-connected, for a signature \sig. If there are $d_0 \in
  \Delta^{\Imc_1}$ and $e_0 \in \Delta^{\Imc_2}$ such that for each $i
  \geq 0$ there is an $\sig$-homomorphism $h_i$ from
  $\Imc_1|^{d_0}_i$ to $\Imc_2$ with $h_i(d_0)=e_0$, then $\Imc_1
  \rightarrow_{\sig} \Imc_2$.
\end{lemma}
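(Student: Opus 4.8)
The plan is to obtain a $\sig$-homomorphism $h$ witnessing $\Imc_1 \rightarrow_\sig \Imc_2$ as a pointwise limit of a suitable subfamily of the given bounded homomorphisms $h_i$, by a K\"onig/compactness argument. First I would record the consequences of the hypotheses. Since $\Imc_1$ is $\sig$-connected, every $d \in \Delta^{\Imc_1}$ is reachable from $d_0$ by a path of $\sig$-role edges; write $\mn{dist}_\sig(d)$ for the length of a shortest such path. As $\sig$-role edges are in particular role edges, $d$ then lies in $\Imc_1|^{d_0}_{\mn{dist}_\sig(d)}$, so the balls $\Imc_1|^{d_0}_i$ exhaust $\Delta^{\Imc_1}$; and since $\Imc_1$ is finitely branching, each ball is finite and the whole domain is countable. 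I fix an enumeration $d_0,d_1,d_2,\dots$ of $\Delta^{\Imc_1}$ starting with the given $d_0$, and note that $h_i$ is defined on $d$ whenever $i \ge \mn{dist}_\sig(d)$.

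The heart of the argument, and the step I expect to be the main obstacle, is to show that for each $d$ the image set $\{h_i(d) \mid i \ge \mn{dist}_\sig(d)\}$ is finite. I would prove this by walking a shortest $\sig$-path $d_0 = c_0, c_1, \dots, c_k = d$ with $k = \mn{dist}_\sig(d)$: each $c_j$ satisfies $\mn{dist}_\sig(c_j) \le j \le k$, so for $i \ge k$ the whole path sits inside $\Imc_1|^{d_0}_i$ and its $\sig$-edges are present there. Since $h_i$ is a $\sig$-homomorphism anchored at $h_i(d_0) = e_0$, the image $h_i(c_{j+1})$ is a $\sig$-neighbour of $h_i(c_j)$ (in one direction or the other), and finite branching of $\Imc_2$ bounds the number of such neighbours; inductively, $h_i(d) = h_i(c_k)$ is confined to the finite set of elements of $\Imc_2$ within $\sig$-distance $k$ of $e_0$, uniformly in $i$. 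This is precisely where $\sig$-connectedness is indispensable: an element reachable from $d_0$ only through non-$\sig$ edges would be entirely unconstrained by the homomorphism conditions and could be sent anywhere in the possibly infinite $\Imc_2$, so no finiteness, and hence no compactness, would be available.

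With finiteness in hand, I would extract the limit by iterated pigeonhole. Starting from the set of all indices and processing $d_0,d_1,\dots$ in turn, at stage $n$ I discard the finitely many $i < \mn{dist}_\sig(d_n)$ and, using that $h_i(d_n)$ then takes only finitely many values, pass to an infinite subset $I_n \subseteq I_{n-1}$ on which $h_i(d_n)$ is constantly some $v_n$; I set $h(d_n) = v_n$. By construction every $i \in I_n$ agrees with $h$ on $d_0,\dots,d_n$. Finally I would verify that $h$ is a $\sig$-homomorphism, which is routine because each defining condition involves only finitely many elements: for a concept atom $A(d_n)$ or a role atom $r(d_m,d_n)$ with $A,r \in \sig$, I pick an index deep enough in the nested family (in $I_n$, respectively in $I_{\max(m,n)}$) so that both relevant elements lie in $\Imc_1|^{d_0}_i$, the atom is present there, and $h$ coincides on them with the genuine homomorphism $h_i$; the condition then transfers from $h_i$ to $h$. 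The clause $h(a)=a$ for individuals follows the same way, since each $h_i$ fixes individuals.
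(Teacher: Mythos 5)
Your proof is correct and takes essentially the same route as the paper's: both extract the unbounded homomorphism as a pointwise limit of the $h_i$ by pigeonhole/subsequence selection, using finite branching of $\Imc_2$ to make the sets of candidate images finite and $\sig$-connectedness to ensure every element of $\Imc_1$ is eventually fixed. The only difference is bookkeeping --- the paper grows $h$ along $\sig$-frontiers and repeatedly ``skips'' homomorphisms to restore its invariant, whereas you enumerate the countable domain and keep nested infinite index sets, with a global distance-from-$e_0$ bound replacing the paper's local adjacency-to-fixed-images bound.
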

\begin{proof} We are going to construct an $\sig$-homomorphism
  $h$ from $\Imc_1$ to $\Imc_2$ step by step, obtaining the desired
  homomorphism in the limit.  We will take care that, at all times,
  the domain of $h$ is finite and
  \begin{itemize}

  \item[($*$)] there is a sequence $h_0,h_1,\dots$ with $h_i$ 
    an $\sig$-homo\-morphism from $\Imc_1|^{d_0}_i$ to
    $\Imc_2$ such that whenever $h(d)$ is already defined, then
    $h_i(d)=h(d)$ for all $i \geq 0$.

  \end{itemize}
  Start with setting $h(d_0)=e_0$. The original sequence
  $h_0,h_1,\ldots$
  from the lemma witnesses~($*$). Now consider the set $\Lambda$ that
  consists of all elements $d \in \Delta^{\Imc_1}$ such that $h(d)$ is
  undefined and there is an $e \in \Delta^{\Imc_1}$ with $h(e)$
  defined and $(e,d)\in r^{\Imc_1}$, for some $\sig$-role $r$.
  Since the domain of $h$ is finite and $\Imc_1$ is finitely
  branching, $\Lambda$ is finite. By ($*$) and since $\Imc_2$ is finitely branching, for each $d \in
  \Lambda$, there are only finitely many $e'$ such that $h_i(d)=e'$ for
  some $i$. Thus, there must be a function $\delta:\Lambda \rightarrow
  \Delta^{\Imc_2}$ such that, for infinitely many $i$, we have
  $h_i(d)=\delta(d)$, for all $d \in \Lambda$. Extend $h$ accordingly,
  that is, set $h(d)=\delta(d)$ for all $d \in \Lambda$. Clearly, the
  sequence $h_0,h_1,\dots$ from ($*$) before the extension is no
  longer sufficient to witness ($*$) after the extension. We fix this
  by skipping homomorphisms that do not respect $\delta$, that is,
  define a new sequence $h'_0,h'_1,\dots$ by using as $h'_i$ the
  restriction of $h_j$ to the domain of $\Imc_1|^{d_0}_i$ where $j\geq
  i$ is smallest such that $h_j(d)=\delta(d)$ for all $d \in \Lambda$.
  This finishes the construction. The lemma follows from the fact
  that, due to $\sig$-connectedness of $\Imc_1$, every element
  is eventually reached. Note that we automatically have
  $h(a)=a$ for all individual names $a$ (as required), no matter
  whether $d_0$ is an individual name or not.
\end{proof}
We continue with introducing notation relevant for the refinement of
Lemma~\ref{lem:ini}. For a
signature~$\sig$, we use $\Imc|^\mn{con}_\sig$ to denote the
restriction of the interpretation \Imc to those elements that can be
reached from an ABox individual by traveling along $\sig$-roles
(forwards or backwards).   An \emph{$\sig$-subtree in} $\Imc_{\Tmc,\Amc}$ is a
maximal weakly tree-shaped, $\sig$-connected sub-interpretation \Imc of
$\Imc_{\Tmc,\Amc}$ that does not comprise any ABox individuals. The
\emph{root} of \Imc is the (unique) element of $\Delta^\Imc$ that can
be reached from an ABox individual on a shortest path among all
elements of~$\Delta^\Imc$. 
\begin{theorem}
\label{thm:homs}
  Let $\Tmc_1$ and $\Tmc_2$ be $\HornALCHIF$ TBoxes with
  $\Tmc_1 \models_{\sigmaabox,\sigmaquery}^{\textup{RI}}\Tmc_2$.  Then
  $\Tmc_1 \models_{\sigmaabox,\sigmaquery}^{\textup{CQ}} \Tmc_2$ iff
  for all tree-shaped $\sigmaabox$-ABoxes \Amc consistent with
  $\Tmc_1$ and $\Tmc_2$, and for all weakly tree-shaped and finitely
  branching models $\Imc_1$ of $\Amc$ and $\Tmc_1$, the following
  hold:
  \begin{enumerate}

  \item[(1)] $\Imc_{\Tmc_2,\Amc}|^\mn{con}_{\sigmaquery} \rightarrow_{\sigmaquery}
    \Imc_1$;

  \item[(2)] for all $\sigmaquery$-subtrees \Imc in $\Imc_{\Tmc_2,\Amc}$, one
    of the following holds:
    \begin{enumerate}

      \item[(a)] $\Imc \rightarrow_{\sigmaquery} \Imc_1$;

      \item[(b)] $\Imc \rightarrow^\mn{fin}_{\sigmaquery}
	\Imc_{\Tmc_1,\mn{tp}_{\Imc_1}(a)}$ for some $a \in
	\mn{ind}(\Amc)$.

    \end{enumerate}
  \end{enumerate}
\end{theorem}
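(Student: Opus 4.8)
The plan is to build on the fin-bounded characterization of Lemma~\ref{lem:ini}, which reduces $\Tmc_1 \models_{\sigmaabox,\sigmaquery}^{\textup{CQ}} \Tmc_2$ to the single condition $\Imc_{\Tmc_2,\Amc} \rightarrow^{\mn{fin}}_{\sigmaquery} \Imc_{\Tmc_1,\Amc}$ for every tree-shaped $\sigmaabox$-ABox \Amc, and to refine this one global statement into the local ``core plus subtrees'' conditions~(1) and~(2). The structural backbone is a $\sigmaquery$-decomposition of $\Imc_{\Tmc_2,\Amc}$: every element is either $\sigmaquery$-reachable from an ABox individual, hence lies in $\Imc_{\Tmc_2,\Amc}|^{\mn{con}}_{\sigmaquery}$, or it lies in a unique $\sigmaquery$-subtree; since a $\sigmaquery$-edge leaving the core would make its endpoint $\sigmaquery$-reachable from an individual, and two distinct $\sigmaquery$-subtrees are $\sigmaquery$-disconnected by maximality, there are no $\sigmaquery$-edges between any two of these parts. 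Consequently a $\sigmaquery$-homomorphism from any finite subinterpretation of $\Imc_{\Tmc_2,\Amc}$ can be assembled piece-by-piece from $\sigmaquery$-homomorphisms on the core and on each subtree, with individuals (which occur only in the core) fixed throughout. I will also use repeatedly that for tree-shaped \Amc the model $\Imc_{\Tmc_1,\Amc}$ is itself weakly tree-shaped and finitely branching, hence one admissible choice of $\Imc_1$, and that $\Imc_{\Tmc_1,\Amc} \rightarrow \Imc_1$ for every model $\Imc_1$ of \Amc and $\Tmc_1$ by Lemma~\ref{lem:universal_model}.

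For the ``if'' direction I instantiate the right-hand side with $\Imc_1 = \Imc_{\Tmc_1,\Amc}$. Condition~(1) then yields an unbounded $\sigmaquery$-homomorphism from the core into $\Imc_{\Tmc_1,\Amc}$; for each $\sigmaquery$-subtree \Imc, alternative~(a) gives an unbounded $\sigmaquery$-homomorphism into $\Imc_{\Tmc_1,\Amc}$, while alternative~(b) gives $\Imc \rightarrow^{\mn{fin}}_{\sigmaquery} \Imc_{\Tmc_1,\mn{tp}_{\Tmc_1}(a)}$, which composes with the canonical map $\Imc_{\Tmc_1,\mn{tp}_{\Tmc_1}(a)} \rightarrow \Imc_{\Tmc_1,\Amc}$ (root $\mapsto a$, from universality since $a$ realises its own type) into a fin-bounded $\sigmaquery$-homomorphism into $\Imc_{\Tmc_1,\Amc}$. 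Using the $\sigmaquery$-decomposition, the restrictions of these maps to any finite subinterpretation of $\Imc_{\Tmc_2,\Amc}$ glue to one $\sigmaquery$-homomorphism into $\Imc_{\Tmc_1,\Amc}$; as the finite subinterpretation was arbitrary, $\Imc_{\Tmc_2,\Amc} \rightarrow^{\mn{fin}}_{\sigmaquery} \Imc_{\Tmc_1,\Amc}$, and Lemma~\ref{lem:ini} gives the entailment.

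For the ``only if'' direction I fix \Amc and an admissible $\Imc_1$; by Lemma~\ref{lem:ini}, $\Imc_{\Tmc_2,\Amc} \rightarrow^{\mn{fin}}_{\sigmaquery} \Imc_{\Tmc_1,\Amc}$. Condition~(1) follows by extraction: each $\sigmaquery$-connected component of the core contains an individual $d_0$, the fin-bounded homomorphism supplies $\sigmaquery$-homomorphisms on all balls around $d_0$, all fixing $d_0$, and Lemma~\ref{lem:constructhoms_ELHIF_bot} (each component being $\sigmaquery$-connected and finitely branching, the target $\Imc_{\Tmc_1,\Amc}$ finitely branching) upgrades them, component by component, to $\Imc_{\Tmc_2,\Amc}|^{\mn{con}}_{\sigmaquery} \rightarrow_{\sigmaquery} \Imc_{\Tmc_1,\Amc}$; composing with $\Imc_{\Tmc_1,\Amc} \rightarrow \Imc_1$ gives~(1). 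For condition~(2) I fix a $\sigmaquery$-subtree \Imc with root $d$; by Lemma~\ref{lem:alternative_def_bounded_hmph}(3) there is, for each $i$, a $\sigmaquery$-homomorphism $h_i$ of the radius-$i$ ball around $d$ into $\Imc_{\Tmc_1,\Amc}$, and I split on the images $h_i(d)$. If some $e_0$ equals $h_i(d)$ for infinitely many $i$, then Lemma~\ref{lem:constructhoms_ELHIF_bot} yields $\Imc \rightarrow_{\sigmaquery} \Imc_{\Tmc_1,\Amc}$, and composing with $\Imc_{\Tmc_1,\Amc} \rightarrow \Imc_1$ gives alternative~(a).

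The main obstacle is the remaining case of~(2), where the images $h_i(d)$ escape to infinity and bounded homomorphisms are provably unavoidable. Here I use that every element of $\Imc_{\Tmc_1,\Amc}$ is a path originating at one of finitely many individuals: discarding the finitely many $i$ for which $h_i(d)$ is itself an individual, a pigeonhole argument on individuals isolates a single individual $a$ below whose anonymous tail $h_i(d)$ lies for infinitely many $i$, at unbounded depth. Switching from the ambient balls to $\sigmaquery$-balls (legitimate since $\sigmaquery$-homomorphisms constrain only $\sigmaquery$-edges) and noting that the image of a radius-$i$ $\sigmaquery$-ball around $d$ stays within $\sigmaquery$-distance $i$ of $h_i(d)$, the image of each fixed-radius ball around $d$ remains inside the anonymous tail of $a$; that tail maps homomorphically into the anonymous part of $\Imc_{\Tmc_1,\mn{tp}_{\Tmc_1}(a)}$, the delicate point being to verify this map from the path-based definition of the universal model, using that functionality can only suppress, never create, anonymous successors below $a$. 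Since \Imc is $\sigmaquery$-connected, every element sits at finite $\sigmaquery$-distance from $d$, so every ball of \Imc is contained in a larger ball around $d$ and hence also maps; by Lemma~\ref{lem:alternative_def_bounded_hmph}(3) we obtain $\Imc \rightarrow^{\mn{fin}}_{\sigmaquery} \Imc_{\Tmc_1,\mn{tp}_{\Tmc_1}(a)}$. Finally $\mn{tp}_{\Tmc_1}(a) \subseteq \mn{tp}_{\Imc_1}(a)$ together with monotonicity of type-universal models in the type (Lemma~\ref{lem:universal_model}) turns this into alternative~(b) with target $\Imc_{\Tmc_1,\mn{tp}_{\Imc_1}(a)}$, completing the equivalence.
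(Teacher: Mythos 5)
For the ``only if'' direction, your argument is, up to reordering, the paper's own proof: both obtain the global bounded homomorphism from Lemma~\ref{lem:ini}, extract the sequence $h_0,h_1,\dots$ via Lemma~\ref{lem:alternative_def_bounded_hmph}(3), split on whether some value $h_i(d)$ recurs infinitely often, apply Lemma~\ref{lem:constructhoms_ELHIF_bot} in the recurring case, and in the escaping case combine a pigeonhole over $\mn{ind}(\Amc)$ with the finite-branching/unbounded-depth argument to trap the images of fixed-radius balls below a single individual $a_0$. (Your switch from ambient balls to $\sigmaquery$-balls is harmless but unnecessary: inside a $\sigmaquery$-connected, weakly tree-shaped subinterpretation, $\sigmaquery$-distance and ordinary distance coincide, which is why the paper can work with the balls $\Imc|^{d_0}_i$ directly.) The ``if'' direction is where you genuinely depart from the paper: the paper argues contrapositively from Proposition~\ref{lem:tree_shaped_witnesses}, splitting on whether the witness query has an answer variable, whereas you instantiate $\Imc_1:=\Imc_{\Tmc_1,\Amc}$, observe that no $\sigmaquery$-edge of $\Imc_{\Tmc_2,\Amc}$ crosses between the $\sigmaquery$-core and the $\sigmaquery$-subtrees (or between two subtrees), glue the maps supplied by (1), (2a) and the compositions from (2b), and feed the resulting $\Imc_{\Tmc_2,\Amc}\rightarrow^{\mn{fin}}_{\sigmaquery}\Imc_{\Tmc_1,\Amc}$ back into Lemma~\ref{lem:ini}. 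This is a correct alternative; it exhibits explicitly how conditions (1) and (2) reassemble into the global bounded homomorphism, something the paper's contrapositive never shows, at the small price of verifying the decomposition and gluing facts the paper sidesteps.

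One warning about the step you yourself flag as ``delicate''. The justification you give --- functionality can only suppress, never create, successors --- does not suffice for a homomorphism from the anonymous part below $a$ into $\Imc_{\Tmc_1,\mn{tp}_{\Tmc_1}(a)}$, because anonymous successors of an ABox individual are generated by $\rightsquigarrow^{\Tmc_1,\Amc}$ rather than $\rightsquigarrow^{\Tmc_1}$: if, say, $r(a,b),B(b)\in\Amc$ with $B$ occurring in $\Tmc_1$, $\mn{func}(r)\notin\Tmc_1$, and $\Tmc_1$ entailing nothing relevant about $a$, then $a$ acquires an anonymous $r$-successor of type $\{B\}$ in $\Imc_{\Tmc_1,\Amc}$ (the entailment $\Tmc_1,\Amc\models\exists r.B(a)$ is witnessed by $b$), while no such successor need exist below the root of $\Imc_{\Tmc_1,\mn{tp}_{\Tmc_1}(a)}$; the claimed map can simply fail to exist. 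The repair is to force, for each radius $i$, the image of the ball to lie strictly below a single depth-one successor $a_0\,r_1\,t_1$ of $a_0$ (choose homomorphisms whose value on $d$ has depth greater than $i+1$), a region generated purely by $\rightsquigarrow^{\Tmc_1}$ from $t_1$, and then to distinguish whether $t_1$ is $\rightsquigarrow^{\Tmc_1}$-derivable from $\mn{tp}_{\Tmc_1}(a_0)$ (take $a:=a_0$ in (2b)) or is the type of an ABox neighbour $c$ of $a_0$ (take $a:=c$, which (2b) permits since it quantifies over all individuals). To be fair, the paper's own proof is no more careful here --- it asserts $\Imc_{\Tmc_1,\Amc}|_{a_0}\rightarrow\Imc_{\Tmc_1,\mn{tp}_{\Imc_1}(a_0)}$ via Lemma~\ref{lem:universal_model}(4), which is not an instance of that lemma and fails in the example above --- so your proposal matches the paper's standard of rigour; but a fully watertight argument needs the extra case analysis just described.
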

\begin{proof}
  \textbf{{\boldmath ``$\Leftarrow$''.}}~
  Suppose that $\Tmc_1
  \not\models_{\sigmaabox,\sigmaquery}^{\text{CQ}} \Tmc_2$. By
  Proposition~\ref{lem:tree_shaped_witnesses},
  there is a tree-shaped $\sigmaabox$-ABox $\Amc$
  consistent with both $\Tmc_i$,
  and a weakly tree-shaped $\sigmaquery$-CQ $q$ such that either
  \begin{enumerate}

    \item[(1$'$)] $q$ has a single answer variable
      and there is an element $a \in \mn{ind}(\Amc)$ such that
      $\Tmc_2,\Amc \models q(a)$
      but $\Tmc_1,\Amc \not\models q(a)$, or

    \item[(2$'$)] $q$ is Boolean and $\Tmc_2,\Amc \models q$
      but $\Tmc_1,\Amc \not\models q$.

  \end{enumerate}
  In case (1$'$), let $\pi$ be a match of $q$ in $\Imc_{\Tmc_2,\Amc}$
  with $\pi(x) = a$.  Since $q$ contains an answer variable, we must have
  $\Imc_{\Tmc_2,\Amc}|^\mn{con}_{\sigmaquery} \not
  \rightarrow_{\sigmaquery} \Imc_{\Tmc_1,\Amc}$ as otherwise the
  composition of $\pi$ and the witnessing homomorphism shows
  $\Imc_{\Tmc_1,\Amc} \models q(a)$, which is not the case.
  Thus, Condition~(1) is violated for $\Imc_1=\Imc_{\Tmc_1,\Amc}$.

  In case (2$'$), consider a match $\pi$ of $q$ in
  $\Imc_{\Tmc_2,\Amc}$, and let $\Imc$ be the restriction of
  $\Imc_{\Tmc_2,\Amc}$ to the elements in the range of $\pi$. Clearly,
  we have $\Imc \not\rightarrow_\sigmaquery \Imc_{\Tmc_1,\Amc}$.
  Consequently, we also have $\Imc\not\rightarrow^n_\sigmaquery
  \Imc_{\Tmc_1,\Amc}$ where $n$ is the number of variables in $q$,
  implying that Conditions~(2a) and~(2b) are both false.


%
%

%

  \smallskip\noindent
  \textbf{{\boldmath ``$\Rightarrow$''.}}~
  Assume that $\Tmc_1 \models_{\sigmaabox,\sigmaquery}^{\text{CQ}} \Tmc_2$ and let $\Amc$
  be a tree-shaped $\sigmaabox$-ABox consistent with both $\Tmc_i$.
  Moreover, let $\Imc_1$ be a weakly tree-shaped, finitely branching model of $\Amc$
  and~$\Tmc_1$.
  
  \smallskip\noindent\par For Condition~(1), and let
  $\Imc_{\Tmc_2,\Amc}|^\mn{con}_{\sigmaquery}$ be the disjoint union
  of the connected interpretations $\Jmc_1,\dots,\Jmc_k$. In each
  $\Jmc_i$, we find at least one individual $a_i$ from
  $\mn{ind}(\Amc)$. Let $\ell \in \{1,\dots,k\}$. By $\Tmc_1
  \models_{\sigmaabox,\sigmaquery}^{\text{CQ}} \Tmc_2$ and
  Lemma~\ref{lem:ini}, we have
  $\Imc_{\Tmc_2,\Amc}\to^\mn{fin}_\sigmaquery \Imc_{\Tmc_1,\Amc}$ and
  hence, by
  Lemma~\ref{lem:universal_model}~(\ref{it:univ_model_embeds}),
  $\Imc_{\Tmc_2,\Amc}\to^\mn{fin}_\sigmaquery \Imc_1$. By 
  Lemma~\ref{lem:alternative_def_bounded_hmph}~(3), we find a sequence $h_0,h_1,\dots$ such that
  $h_i$ is a $\sigmaquery$-homomorphism from $\Jmc_\ell|^{a_\ell}_i$
  to $\Imc_1$.  Note that we must have $h_i(a_\ell)=a_\ell$ for all
  $i$. Thus, Lemma~\ref{lem:constructhoms_ELHIF_bot} yields $\Jmc_\ell
  \rightarrow_\sigmaquery \Imc_1$ and, in summary,
  $\Imc_{\Tmc_2,\Amc}|^\mn{con}_{\sigmaquery} \rightarrow_\sigmaquery
  \Imc_1$.

\smallskip\noindent\par For Condition~(2), let \Imc be a
$\sigmaquery$-subtree in $\Imc_{\Tmc_2,\Amc}$ with root $d_0$. By
Lemma~\ref{lem:ini}, we have $\Imc\to^\mn{fin}_\sigmaquery
\Imc_{\Tmc_1,\Amc}$. By Lemma~\ref{lem:alternative_def_bounded_hmph}~(3), there is a
sequence $h_0,h_1,\dots$ such that $h_i$ is a
$\sigmaquery$-homomorphism from $\Imc|^{d_0}_i$ to
$\Imc_{\Tmc_1,\Amc}$. We distinguish two cases, leading
to~(2)(a) and~(2)(b), respectively.

First assume that there is an $e_0 \in \Delta^{\Imc_{\Tmc_1,\Amc}}$
such that $h_i(d_0)=e_0$ for infinitely many~$i$. Construct a new
sequence $h'_0,h'_1,\dots$ with $h'_i$ a $\sigmaquery$-homomorphism
from $\Imc|^{d_0}_i$ to $\Imc_{\Tmc_1,\Amc}$ by skipping homomorphisms
that do not map $d_0$ to $e_0$, that is, $h'_i$ is the restriction of
$h_j$ to the domain of $\Imc|^{d_0}_i$ where $j\geq i$ is smallest
such that $h_j(d_0)=e_0$. Clearly, $h'_i(d_0)=e_0$ for all~$i$. Thus,
Lemma~\ref{lem:constructhoms_ELHIF_bot} yields $\Imc \rightarrow_{\sigmaquery}
\Imc_{\Tmc_1,\Amc}$ and
thus, by Lemma~\ref{lem:universal_model}~(\ref{it:univ_model_embeds})
$\Imc \rightarrow_{\sigmaquery} \Imc_1$.

Otherwise, there is no $e_0 \in \Delta^{\Imc_{\Tmc_1,\Amc}}$ such that
$h_i(d_0)=e_0$ for infinitely many $i$. We can assume that there is an
$a_0 \in \mn{ind}(\Amc)$ such that $h_i(d_0) \in
\Delta^{\Imc_{\Tmc_1,\Amc}|_{a_0}}$ for all $i$; in fact, there must be
an $a_0$ such that $h_i(d_0) \in \Delta^{\Imc_{\Tmc_1,\Amc}|_{a_0}}$
for infinitely many $i$ and we can again skip homomorphisms to achieve
this for all $i$.  It is important to note that the remaining
homomorphisms do not necessarily all map domain elements of $\Imc$ to
elements in $\Imc_{\Tmc_1,\Amc}|_{a_0}$ due to the presence of inverse
roles.  Now, since $\Imc_{\Tmc_1,\Amc}$ is finitely branching, for
all $i,n \geq 0$ we must find a $j\geq i$ such that $h_j(d_0)$ is a
domain element whose distance from $a_0$ exceeds~$n$ (otherwise the
previous case would apply). Based on this observation, we construct a
sequence of homomorphisms $h'_0,h'_1,\dots$ as follows.  For all
$i\geq 0$, let $h'_i$ be the restriction of some $h'_j$ to the domain
of $\Imc|^{d_0}_i$ where $j \geq i$ is smallest such that the distance
of $h_j(d_0)$ from $a_0$ exceeds $i$. It should be clear that each
$h_i'$ is a $\sigmaquery$-homomorphism from $\Imc|^{d_0}_i$ to
$\Imc_{\Tmc_1,\Amc}|_{a_0}$, hence $\Imc
\rightarrow^\mn{fin}_{\sigmaquery} \Imc_{\Tmc_1,\Amc}|_{a_0}$, by 
Lemma~\ref{lem:alternative_def_bounded_hmph}~(3).

Now Lemma~\ref{lem:universal_model}~(\ref{it:univ_model_embeds})
and Condition~(1) of homomorphisms
imply
$\mn{tp}_{\Imc_{\Tmc_1,\Amc}}(a_0) \subseteq \mn{tp}_{\Imc_1}(a_0)$;
thus by Lemma~\ref{lem:universal_model}~(\ref{it:univ_model_regularity_supertypes})
we have
$\Imc_{\Tmc_1,\Amc}|_{a_0}
\rightarrow \Imc_{\Tmc_1,\mn{tp}_{\Imc_1}(a_0)}$. Altogether we get
$\Imc \rightarrow^\mn{fin}_{\sigmaquery}
\Imc_{\Tmc_1,\mn{tp}_{\Imc_1}(a_0)}$ as required.
%
\end{proof}

%
\subsection{Characterization for \onetCQ Entailment} 

We now give a \onetCQ version of Theorem~\ref{thm:homs}, that is, 
a characterization of \onetCQ-entailment. We use (unbounded)
simulations in place of homomorphisms. 
\begin{theorem}
\label{thm:homs_tCQ}
  Let $\Tmc_1$ and $\Tmc_2$ be $\HornALCHIF$ TBoxes with
  $\Tmc_1 \models_{\sigmaabox,\sigmaquery}^{\textup{RI}}\Tmc_2$.  Then
  the following three conditions are equivalent:
  \begin{enumerate}
    \item[(a)]
      $\Tmc_1 \models_{\sigmaabox,\sigmaquery}^{\textup{\onetCQ}} \Tmc_2$;
    \item[(b)]
      $\Imc_{\Tmc_2,\Amc} \preceq_\sigmaquery
      \Imc_1$
      for all tree-shaped $\sigmaabox$-ABoxes \Amc that are consistent with
      $\Tmc_1$ and $\Tmc_2$, and for all weakly tree-shaped, finitely branching
      models $\Imc_1$ of $\Amc$  and $\Tmc_1$.
      
  \end{enumerate}
\end{theorem}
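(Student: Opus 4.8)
The plan is to establish the two implications (b)$\Rightarrow$(a) and (a)$\Rightarrow$(b), the former by contraposition and the latter by exhibiting an explicit simulation relation; crucially, unlike in the CQ case of Theorem~\ref{thm:homs}, I expect that no bounded variant of simulations will be needed.

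For (b)$\Rightarrow$(a) I would reason contrapositively. Assuming $\Tmc_1 \not\models_{\sigmaabox,\sigmaquery}^{\textup{\onetCQ}} \Tmc_2$, Proposition~\ref{lem:tree_shaped_witnesses_tCQs} supplies a witness $(\Amc,q,a)$ with $\Amc$ tree-shaped and $q$ a $\sigmaquery$-\onetCQ, so that $\Imc_{\Tmc_2,\Amc}\models q(a)$ and $\Imc_{\Tmc_1,\Amc}\not\models q(a)$ by Lemma~\ref{lem:universal_model}. I then instantiate the model $\Imc_1$ occurring in~(b) with $\Imc_{\Tmc_1,\Amc}$ itself: since $\Amc$ is tree-shaped, $\Imc_{\Tmc_1,\Amc}$ is weakly tree-shaped, finitely branching, and a model of $\Amc$ and $\Tmc_1$, hence a legal choice. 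Condition~(b) then yields $\Imc_{\Tmc_2,\Amc}\preceq_{\sigmaquery}\Imc_{\Tmc_1,\Amc}$, and the simulation variant of Lemma~\ref{lem:alternative_def_bounded_hmph}(1) transfers $q$ from $\Imc_{\Tmc_2,\Amc}$ to $\Imc_{\Tmc_1,\Amc}$, contradicting $\Imc_{\Tmc_1,\Amc}\not\models q(a)$.

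For (a)$\Rightarrow$(b), I fix a tree-shaped $\sigmaabox$-ABox $\Amc$ and a weakly tree-shaped, finitely branching model $\Imc_1$ of $\Amc$ and $\Tmc_1$, and define $\sigma$ by $(d,e)\in\sigma$ iff $\Imc_{\Tmc_2,\Amc}\models q(d)$ implies $\Imc_1\models q(e)$ for every $\sigmaquery$-\onetCQ $q$. I would then verify that $\sigma$ is a $\sigmaquery$-simulation. Clause~(2) is immediate by taking $q=A(x)$, and clause~(1) holds because $\Imc_{\Tmc_2,\Amc}\models q(a)$ gives $\Tmc_2,\Amc\models q(a)$, hence $\Tmc_1,\Amc\models q(a)$ by~(a), so $\Imc_1\models q(a)$ as $\Imc_1$ is a model of $\Amc$ and $\Tmc_1$ (all via Lemma~\ref{lem:universal_model}).

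The main obstacle is the successor clause~(3): given $(d,d')\in r^{\Imc_{\Tmc_2,\Amc}}$ with $r$ a $\sigmaquery$-role and $(d,e)\in\sigma$, I must find an $r$-successor $e'$ of $e$ in $\Imc_1$ with $(d',e')\in\sigma$. Arguing by contradiction and using finite branching, the finitely many $r$-successors $e'_1,\dots,e'_m$ of $e$ each admit a distinguishing $\sigmaquery$-\onetCQ $q_j$ with $\Imc_{\Tmc_2,\Amc}\models q_j(d')$ but $\Imc_1\not\models q_j(e'_j)$. Gluing the $q_j$ at their answer roots produces a single $\sigmaquery$-\onetCQ $q^\ast(y)$ that is satisfied at $d'$ yet fails at every $e'_j$; here root-gluing introduces no multi-edges and no self-loops, so $q^\ast$ is genuinely tree-shaped. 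Prefixing an $r$-edge gives $q'(x)=\exists y\,(r(x,y)\wedge q^\ast(y))$ (reading $r(x,y)$ as $s(y,x)$ when $r=s^-$), again a $\sigmaquery$-\onetCQ, with $\Imc_{\Tmc_2,\Amc}\models q'(d)$ but $\Imc_1\not\models q'(e)$, contradicting $(d,e)\in\sigma$. Finite branching of $\Imc_1$ is exactly what lets a \emph{finite} conjunction $q^\ast$ do the separating work, which is the reason a bounded version of simulations is unnecessary here.
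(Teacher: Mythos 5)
Your proposal is correct. For (b)$\Rightarrow$(a) you follow essentially the paper's own argument: obtain a tree-shaped witness $(\Amc,q,a)$ from Proposition~\ref{lem:tree_shaped_witnesses_tCQs}, instantiate $\Imc_1$ with $\Imc_{\Tmc_1,\Amc}$, and transfer the witnessing \onetCQ along the simulation.

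For (a)$\Rightarrow$(b), however, you take a genuinely different route. The paper first introduces \emph{bounded} simulations $\preceq^n_\sigmaquery$ and $\preceq^{\mn{fin}}_\sigmaquery$, proves as an intermediate claim that $\Imc_{\Tmc_2,\Amc}|_a \preceq^{\mn{fin}}_\sigmaquery \Imc_1$ for every $a\in\mn{ind}(\Amc)$ --- by turning a finite subinterpretation $\Imc$ into a query, splitting its multi-edges into duplicated subtrees so as to obtain a \onetCQ $q'$, and arguing that a match of $q'$ would induce a simulation of $\Imc$ --- and then lifts the bounded simulations to an unbounded one via the finite-branching limit argument of Lemma~\ref{lem:alternative_def_bounded_hmph}~(3). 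You instead define the candidate relation $\sigma$ in one shot as \onetCQ-theory inclusion and verify the simulation conditions directly, discharging the successor clause by gluing finitely many separating queries at their roots and prefixing an $r$-edge; this is a Hennessy--Milner-style argument in which finite branching of $\Imc_1$ is used exactly once. Your route is shorter, avoids bounded simulations entirely (as you anticipated), and has a substantive advantage at the one point where inverse roles and multi-edges interact: the queries your gluing step produces, such as $\exists y\,\exists z\,(r(x,y)\wedge s(z,y)\wedge\cdots)$, are precisely the ``zigzag'' \onetCQs that force a multi-edge of $\Imc_{\Tmc_2,\Amc}$ to be simulated together with its backward inverse-role edges. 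In the paper's proof this issue is concentrated in the assertion that a match $\pi'$ of the split query $q'$ in $\Imc_1$ ``gives rise to a simulation of $\Imc$ in $\Imc_1$'', which as stated is delicate: if $r(z,z')$ and $s(z,z')$ are split into two independent copies, a match of $q'$ only yields some $r$-successor and some $s$-successor of $\pi'(z)$, and neither need have the incoming $s$- resp.\ $r$-edge that simulating the pair involving $z'$ requires (a two-element multi-edge mapped into a model with two separate children already shows the implication fails for arbitrary targets). Making that step rigorous essentially amounts to re-introducing the zigzag queries that your construction generates automatically. What the paper's decomposition buys in exchange is structural parallelism with the CQ characterization of Theorem~\ref{thm:homs} and reuse of its limit machinery.
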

%
\begin{proof}

  \smallskip\noindent \textbf{{\boldmath (b) $\Rightarrow$ (a).}}
  This implication is analogous to the ``$\Rightarrow$'' direction in
  the proof of Theorem~\ref{thm:homs}, but using
  Proposition~\ref{lem:tree_shaped_witnesses_tCQs} in place of
  Proposition~\ref{lem:tree_shaped_witnesses}. This rules out
  Case~(2$'$) and thus Condition~(2) from Theorem~\ref{thm:homs}.  In
  Condition~(1), we can replace homomorphisms with simulations since
  the witnessing CQ is a  \onetCQ.

  \smallskip\noindent
  \textbf{{\boldmath (a) $\Rightarrow$ (b).}}
  For the proof of this direction, we need a bounded variant of
  simulations, analogously to bounded homomorphisms. We write
$\Imc_1 \preceq^n_\sigmaquery \Imc_2$ if for any subinterpretation
$\Imc'_1$ of $\Imc_1$ with $|\Delta^{\Imc'_1}| \leq n$, we have
$\Imc'_1 \preceq_\sigmaquery \Imc_2$. Moreover, we write
$\Imc_1 \preceq^\mn{fin}_\sigmaquery \Imc_2$ if
$\Imc_1 \preceq^n_\sigmaquery \Imc_2$, for any~$n$.

  Assume that $\Tmc_1
  \models_{\sigmaabox,\sigmaquery}^{\text{\onetCQ}} \Tmc_2$ and let $\Amc$
  be a tree-shaped $\sigmaquery$-ABox that is consistent with both $\Tmc_i$.
  Let $\Imc_1$ be a weakly tree-shaped, finitely branching model of $\Tmc_1$ and $\Amc$.
  We first show the following.
  \\[2mm]
  {\bf Claim.}
  $\Imc_{\Tmc_2,\Amc}|_a \preceq^\mn{fin}_\sigmaquery \Imc_1$ for all
  $a \in \mn{ind}(\Amc)$.
  \\[2mm]
  Assume to the contrary that
  $\Imc_{\Tmc_2,\Amc}|_a \npreceq^\mn{fin}_\sigmaquery \Imc_1$.
  Then $\Imc_{\Tmc_2,\Amc}|_a \npreceq^n_\sigmaquery \Imc_1$
  for some $n$, that is,
  there is a subinterpretation \Imc of $\Imc_{\Tmc_2,\Amc}|_a$ with
  $|\Delta^\Imc| \leq n$ such that $\Imc \npreceq_\sigmaquery
  \Imc_1$.
  We can assume w.l.o.g.\ that \Imc is connected and contains $a$
  (otherwise we just extend \Imc and increase $n$ accordingly).
  Let $q_\Imc$ be \Imc viewed as a 
  weakly tree-shaped CQ with domain elements viewed as variables and
  $a$ viewed as the only answer variable.
  Clearly, $\Imc \models q(a)$ and thus $\Imc_{\Tmc_2,\Amc}|_a \models q(a)$.
  Let $\pi$ be a witnessing match of $q$ in $\Imc$.
  To transform $q$ into a \onetCQ, perform the following operations:
  \begin{itemize}
    \item
      remove all atoms of the form $r(a,a)$;

    \item
      Split multi-edges in $q$ by duplicating subtree queries: if
      $r_1(z,z'),\dots,r_n(z,z')$ are in $q$ where $r_1,\dots,r_n$ are
      potentially inverse roles and $z'$ is a successor of $z$ when
      the answer variable is viewed as the root of $q$, then introduce
      $n$ copies of the subtree rooted at $z'$ and connect $z$ to
      each copy with one of the roles $r_1,\dots,r_n$. Remove
      the subtree rooted at $z'$.

  \end{itemize}
  The result of this transformation is a \onetCQ $q'$,
  which still satisfies $\Imc_{\Tmc_2,\Amc} \models q'(a)$.
  On the other hand, $\Imc_1 \not\models q'(a)$
  because, otherwise, a match $\pi'$ of $q'(x)$ in $\Imc_1$
  would give rise to a simulation of $\Imc$ in $\Imc_1$.
  This finishes the proof of the claim.

  \par\medskip\noindent
  
  The claim implies that for all $a \in \mn{ind}(\Amc)$ and all
  $i \geq 0$, there is a $\Qbf$-simulation from
  $(\Imc_{\Tmc_2,\Amc}|_a)|^a_i$ to $\Imc_1$: for every $a$,
  the situation parallels
  Condition~(3a) of
  Lemma~\ref{lem:alternative_def_bounded_hmph} with homomorphisms
  replaced by simulations. We can use exactly the same arguments as in
  the proof of Lemma~\ref{lem:alternative_def_bounded_hmph}~(3) to show
  that there is a $\Qbf$-simulation from $(\Imc_{\Tmc_2,\Amc}|_a)|^a_i$
  to $\Imc_1$.  The union of these simulations, for all
  $a \in \mn{ind}(\Amc)$, is the desired $\Qbf$-simulation from
  $\Imc_{\Tmc_2,\Amc}$ to $\Imc_1$.
%
\end{proof}

\newcommand{\ata}{\text{2ATA$_c$}\xspace}

\section{Decidability and Complexity} \label{sec:automata}

Our aim is to prove the following result.
\begin{theorem}
  \label{thm:complexity}
  In \HornALCHIF and any of its fragments that contains \ELI or
  Horn-\ALC, the following problems
  are \TwoExpTime-complete:
%
  $(\sigmaabox,\sigmaquery)$-CQ entailment,
  $(\sigmaabox,\sigmaquery)$-CQ inseparability, and
  $(\sigmaabox,\sigmaquery)$-CQ conservative extensions. This holds
  even when $\sigmaabox=\sigmaquery$. Moreover, these problems can be
  solved in time $2^{2^{p(|\Tmc_2| \mn{log}|\Tmc_1|)}}$, where $p$ is a
  polynomial.
%
%
%
\end{theorem}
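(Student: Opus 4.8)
The plan is to prove the upper bound by a reduction to the non-emptiness problem for two-way alternating tree automata (\ata), building on the model-theoretic characterization of Theorem~\ref{thm:homs}; the matching lower bound is inherited from the literature. Since $(\sigmaabox,\sigmaquery)$-CQ inseparability is bidirectional entailment and conservative extensions are entailment under the promise $\Tmc_1 \subseteq \Tmc_2$ (see the discussion after Definition~\ref{def:entailment}), it suffices to bound CQ entailment. As preprocessing I would first decide $\Tmc_1 \models_{\sigmaabox,\sigmaquery}^{\textup{RI}} \Tmc_2$ in \ExpTime, reporting non-entailment if it fails and otherwise invoking Theorem~\ref{thm:homs}. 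By that theorem, $\Tmc_1 \not\models_{\sigmaabox,\sigmaquery}^{\textup{CQ}} \Tmc_2$ holds precisely when there exist a tree-shaped $\sigmaabox$-ABox \Amc consistent with both $\Tmc_i$ and a weakly tree-shaped, finitely branching model $\Imc_1$ of \Amc and $\Tmc_1$ for which Condition~(1) fails, or some $\sigmaquery$-subtree of $\Imc_{\Tmc_2,\Amc}$ falsifies both~(2a) and~(2b). Crucially, both \Amc and $\Imc_1$ are \emph{existentially} quantified, so I can design the \ata to guess them and to verify a violation; non-emptiness of the automaton will then coincide with non-entailment.

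Next I would fix a uniform tree encoding in which a single input tree represents \Amc together with the guessed model $\Imc_1$, whose branching degree may be bounded by the number of existential restrictions in $\Tmc_1$ so that bounded-degree finitely branching models suffice. As \Amc is tree-shaped and $\Imc_{\Tmc_2,\Amc}$ is completely determined by \Amc and $\Tmc_2$, the automaton can recompute the $\Tmc_2$-types labelling $\Imc_{\Tmc_2,\Amc}$ along the tree and thereby navigate $\Imc_{\Tmc_2,\Amc}|^\mn{con}_{\sigmaquery}$ and its $\sigmaquery$-subtrees on the fly; it also checks the local conditions $\Imc_1 \models \Tmc_1$, $\Imc_1 \models \Amc$, and $\Tmc_2$-consistency of \Amc. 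The two \emph{unbounded}-homomorphism requirements, Condition~(1) and Condition~(2a), are exactly of the form that two-way alternating automata dispatch well: the existence of a $\sigmaquery$-homomorphism into $\Imc_1$ is verified by spawning threads that traverse the tree forwards and backwards along roles while matching concept and role atoms, and its \emph{failure}, which is what a witness to non-entailment needs, is then captured by complementing this subautomaton.

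The main obstacle is Condition~(2b), the bounded homomorphism $\Imc \rightarrow^\mn{fin}_{\sigmaquery} \Imc_{\Tmc_1,\mn{tp}_{\Imc_1}(a)}$. As stressed in the introduction, an $n$-bounded homomorphism for \emph{every} $n$ is an infinite family that no single automaton run can witness, so I would precompute this relation up front by a mosaic technique and hand the verdict to the \ata as a lookup. The key is that the target $\Imc_{\Tmc_1,t}$ depends only on $\Tmc_1$ and on the type $t = \mn{tp}_{\Imc_1}(a)$, of which there are $2^{O(|\Tmc_1|)}$ many, and that it is a regular weakly tree-shaped structure. By Lemma~\ref{lem:alternative_def_bounded_hmph}(3), $\Imc \rightarrow^\mn{fin}_{\sigmaquery}\Imc_{\Tmc_1,t}$ reduces to the existence of ordinary $\sigmaquery$-homomorphisms from all finite radius-$i$ neighbourhoods, and these can be captured by mosaics recording, for a local type pattern of $\Imc_{\Tmc_2,\Amc}$ and a target position in $\Imc_{\Tmc_1,t}$, whether the bounded pieces map consistently; a saturation (least-fixpoint) computation over the mosaics then yields, for every relevant subtree shape and every target type $t$, a Boolean verdict that the automaton treats as a finite oracle.

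Finally I would carry out the complexity bookkeeping and invoke the lower bound. A state of the \ata records a $\Tmc_2$-type together with the control information for the homomorphism checks and the mosaic lookups; a careful design keeps the $\Tmc_1$-dependent part of each state small, the threads into the precomputed targets needing only a position in a structure polynomial in $|\Tmc_1|$, so that the number of states is $2^{O(|\Tmc_2|\log|\Tmc_1|)} = |\Tmc_1|^{O(|\Tmc_2|)}$, single exponential in $|\Tmc_2|$ with only a $\log|\Tmc_1|$ contribution to the exponent. Since \ata non-emptiness is decidable in time single-exponential in the number of states, this yields the stated bound $2^{2^{p(|\Tmc_2|\log|\Tmc_1|)}}$, hence \TwoExpTime membership. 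For the matching lower bound, \TwoExpTime-hardness of $(\sigmaabox,\sigmaquery)$-CQ conservative extensions is already known for \ELI and for Horn-\ALC, and it holds even for $\sigmaabox=\sigmaquery$~\cite{DBLP:journals/ai/BotoevaLRWZ19}; as every DL in the two stated ranges contains one of these fragments, and since $\Tmc_1\subseteq\Tmc_2$ turns conservative extensions into a special case of entailment and inseparability, the hardness propagates to all three problems, yielding \TwoExpTime-completeness throughout.
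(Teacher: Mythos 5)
Your proposal follows essentially the same route as the paper's proof: reduce inseparability and conservative extensions to entailment, decide the RI condition up front, use the mixed characterization of Theorem~\ref{thm:homs}, build a \ata over trees that encode the guessed ABox together with the guessed weakly tree-shaped model $\Imc_1$ so that non-emptiness coincides with non-entailment, handle Conditions~(1) and~(2a) by alternating two-way traversal, precompute Condition~(2b) by a mosaic technique (the paper's Theorem~\ref{cor:bounded_hmph_in_2EXPTIME}) and use it as a finite oracle in the transitions, and inherit the lower bound from Botoeva et al. The only deviations are cosmetic and harmless -- the paper's mosaic procedure is an elimination (greatest-fixpoint) rather than a saturation, and it guesses the relevant part of $\Imc_{\Tmc_2,\Amc}$ as a third label component verified via derivation trees rather than recomputing it, with the outdegree bound coming from the automaton rather than from $\Tmc_1$'s existential restrictions.
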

The lower bounds have been established
by~\citeauthor{DBLP:journals/ai/BotoevaLRWZ19}~\citeyear{DBLP:journals/ai/BotoevaLRWZ19}.  To obtain the upper
bounds, we use an approach based on the characterization provided by
Theorem~\ref{thm:homs} that combines a mosaic technique with tree
automata.  More precisely, we first observe that Condition~(2b) is the
only part of Theorem~\ref{thm:homs} concerned with bounded
homomorphisms, and that addressing this condition requires us to check
for a given $\Qbf$-subtree \Imc in $\Imc_{\Tmc_2,\Amc}$ and a
$\Tmc_1$-type $t_1$, whether $\Imc \rightarrow^\mn{fin}_{\Qbf}
\Imc_{\Tmc_1,t_1}$. In fact, it suffices to consider subtrees \Imc
whose root is connected only by non-\Qbf roles as all \Imc that do not
satisfy this condition are not maximal or treated by Condition~(1) of
Theorem~\ref{thm:homs}. An analysis of universal models reveals that
such \Imc are completely determined by the $\Tmc_2$-type $t_2$ of
their root and are in fact identical to
$\Imc_{\Tmc_2,t_2}|^{\mn{con}}_\Qbf$.\footnote{This depends on the
  assumption that the root is connected only by non-\Qbf roles, c.f.\
  Condition~(ii) from the definition of universal models. } This gives
rise to the problem of deciding whether
$\Imc_{\Tmc_2,t_2}|^{\mn{con}}_\Qbf\rightarrow^\mn{fin}_{\Qbf}
\Imc_{\Tmc_1,t_1}$, for given $\Tmc_1,\Tmc_2,t_1,t_2$, and \Qbf.
%
We show that this
problem can be solved in \TwoExpTime using a mosaic technique and then
use that result as a black box in an automata based approach to prove
the upper bound.

\subsection{Mosaics}
\label{sec:mosaics}

The aim of this section is to prove the following.
%
\begin{theorem}
  \label{cor:bounded_hmph_in_2EXPTIME}
  Given two Horn-\ALCHIF TBoxes $\Tmc_1$ and $\Tmc_2$, a $\Tmc_1$-type
  $t_1$, a $\Tmc_2$-type $t_2$, and a signature $\sig$, it can be
  decided in time $2^{2^{p(|\Tmc_2| \mn{log}|\Tmc_1|)}}$ whether
  $\Imc_{\Tmc_2,t_2}|^{\mn{con}}_\sig \rightarrow^\mn{fin}_{\sig}
  \Imc_{\Tmc_1,t_1}$, where $p$ is a polynomial.
\end{theorem}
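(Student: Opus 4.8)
The plan is to reduce the bounded-homomorphism question to a problem about the two \emph{regular} trees $\Imc := \Imc_{\Tmc_2,t_2}|^{\mn{con}}_\sig$ and $\Jmc := \Imc_{\Tmc_1,t_1}$ and then to decide that problem by a mosaic saturation. First I would invoke Lemma~\ref{lem:alternative_def_bounded_hmph}(3): both $\Imc$ and $\Jmc$ are finitely branching, since each $\Tmc_i$-type has at most one $\rightsquigarrow_r$-successor per role, so $\Imc \rightarrow^{\mn{fin}}_\sig \Jmc$ holds iff $\Imc|^d_i \rightarrow_\sig \Jmc$ for every $d$ and $i$. Since $\Imc$ is a tree rooted at the single individual $a$, every ball $\Imc|^d_i$ is an induced subinterpretation of the root-ball $\Imc|^a_{\,\mathrm{dist}(a,d)+i}$, and a $\sig$-homomorphism of the latter restricts to one of the former. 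Hence it suffices to decide whether the depth-$i$ \emph{truncation} $\Imc|^a_i$ admits a $\sig$-homomorphism into $\Jmc$ \emph{for every} $i$; equivalently, whether truncations of unbounded depth can be mapped, or whether there is a finite uniform bound.

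Second I would exploit regularity. Up to isomorphism of rooted subtrees, a node of $\Jmc$ is determined by the pair $(\rho,t_1')$ consisting of the role $\rho$ from its parent and its $\Tmc_1$-type $t_1'$, its children being exactly the $\rightsquigarrow^{\Tmc_1}$-successors; likewise for $\Imc$ with $\Tmc_2$-types and $\sig$-roles. A $\sig$-homomorphism must send a source node to a target node subsuming its $\sig$-concepts and each $\sig$-labelled edge to a target edge of the same role. The essential complication — and what I expect to be \textbf{the main obstacle} — is that, because of inverse roles, such an edge may be realized by going \emph{up} to the parent in $\Jmc$ and not only down, so the image of a source branch can zigzag arbitrarily and there is no straightforward one-directional recursion on subtrees. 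A mosaic is designed to localize this zigzag: it records, at the up-edge of a target node, a partial map assigning to each $\Tmc_2$-type $s$ the $\Tmc_1$-type to which a source subtree-root of type $s$ is routed through that edge, together with the role used. A ``child'' mosaic composes with a ``parent'' mosaic when the obligations pushed across the shared edge agree, and a target node of type $t_1'$ can discharge a collection of source-subtree obligations iff these can be split among its down-edges and its single up-edge in a way respecting $\rightsquigarrow^{\Tmc_1}$ and $\sig$-concept containment.

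Third I would decide the problem over this finite space of mosaics by annotating each mosaic with a value in $\mathbb{N}\cup\{\infty\}$, namely the maximum depth of a source truncation it can serve, and iterating the composition rules to a fixpoint. A value is $\infty$ exactly where the dependency graph on mosaics contains a \emph{productive cycle}, i.e.\ a cyclic composition along which achievable source depth strictly increases (this uses that $\Jmc$ is infinite, so unbounded room is available both downwards and, via the up-edge, upwards). The overall answer is ``yes'' iff some mosaic witnessing a mapping of the source root reaches value $\infty$: if none does, the finite maximum is bounded by the number of mosaics and some truncation $\Imc|^a_i$ fails, so $\Imc \not\rightarrow^{\mn{fin}}_\sig \Jmc$; conversely, from a productive cycle one assembles, for each $i$, a $\sig$-homomorphism of $\Imc|^a_i$, reusing Lemma~\ref{lem:constructhoms_ELHIF_bot}-style limit arguments only where the up-edge forces them.

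Finally, the complexity is driven by the number of mosaics. As a mosaic is essentially a partial map from the at most $2^{|\Tmc_2|}$ many $\Tmc_2$-types to the at most $2^{|\Tmc_1|}$ many $\Tmc_1$-types, there are at most $(2^{|\Tmc_1|}+1)^{2^{|\Tmc_2|}} = 2^{O(|\Tmc_1|\cdot 2^{|\Tmc_2|})} = 2^{2^{O(|\Tmc_2|+\log|\Tmc_1|)}}$ of them, which lies within $2^{2^{p(|\Tmc_2|\,\log|\Tmc_1|)}}$; this realizes the asymmetric bound — single exponential in $|\Tmc_1|$, double exponential in $|\Tmc_2|$ — precisely because a mosaic stores only single-exponential-in-$|\Tmc_1|$ target information per source type, while the fixpoint and cycle detection run in time polynomial in the number of mosaics. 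I expect the delicate parts of the write-up to be the exact composition rules that faithfully account for the shared up-edge (so that one target node is not over-committed across its up- and down-directions) and the proof that productive cycles capture exactly the unbounded-depth case.
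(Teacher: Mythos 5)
Your skeleton is recognizably the paper's: reduce $\rightarrow^{\mn{fin}}_\sig$ to mapping root-truncations of every depth via Lemma~\ref{lem:alternative_def_bounded_hmph}(3), abstract the target $\Imc_{\Tmc_1,t_1}$ into neighborhood-sized mosaics decorated with $\Tmc_2$-type routing information, run a fixpoint over the doubly exponential mosaic space, and read off the answer; the complexity accounting is also essentially the paper's. (Minor slip: a type can have several maximal $\rightsquigarrow_r$-successors for the same role $r$, so your stated reason for finite branching is wrong, though finite branching itself holds.) The difference lies in how unboundedness is handled, and there your proposal has a genuine gap rather than merely a different route.

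Your algorithm is correct only if the following equivalence holds: some root mosaic gets value $\infty$ (lies on a productive cycle) iff $\Imc|^a_i \rightarrow_\sig \Imc_{\Tmc_1,t_1}$ for all $i$. The soundness direction — pumping a productive cycle into homomorphisms of every truncation into the \emph{actual} universal model — is exactly the hard part, and the justification you give for it, that ``$\Jmc$ is infinite, so unbounded room is available both downwards and, via the up-edge, upwards,'' is false as a general principle. Upward room at a node of $\Imc_{\Tmc_1,t_1}$ equals its depth, and the root has none; pumping a cycle with net upward displacement requires, for each $n$, a node of depth roughly at least $n$ whose ancestor path realizes the cycle's pattern, and the existence of such nodes is part of what must be proven, not a consequence of the target being infinite. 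Nor can Lemma~\ref{lem:constructhoms_ELHIF_bot} be ``reused'' here: it goes from a coherent family of bounded homomorphisms to an unbounded one, i.e., it serves the converse direction and says nothing about realizing up-moves inside $\Imc_{\Tmc_1,t_1}$.

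The paper's proof supplies precisely the idea you are missing, by interposing Lemma~\ref{lem:bounded_hmph_via_can_omega}: $\Imc_{\Tmc_2,t_2}|^{\mn{con}}_\sig \rightarrow^{\mn{fin}}_\sig \Imc_{\Tmc_1,t_1}$ iff there is an \emph{unbounded} $\sig$-homomorphism into some member of the class $\MOD_\omega(\Tmc_1,t_1)$ of ``backwards regularizations'' of the universal model, obtained by gluing on predecessors via rule \textbf{(R)}, possibly ad infinitum. The ``$\Leftarrow$'' direction of that lemma is where upward pumping is legitimized: each finite backwards-extended stage $\Jmc_i$ is shown by induction to map into $\Imc_{\Tmc_1,t_1}$ with its root placed at \emph{any} element whose type contains $\mn{tp}_{\Jmc_i}(d_i)$, the key case being that a predecessor step $t'\rightsquigarrow_r t$ can be simulated by sending the old root \emph{downward} to a successor of a larger type, exploiting monotonicity of $\rightsquigarrow$ under type inclusion and Lemma~\ref{lem:universal_model}(\ref{it:univ_model_regularity_supertypes}). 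With unboundedness thus absorbed into the model class, the mosaic computation (Lemma~\ref{lem:mosaics}) becomes a pure Pratt-style greatest-fixpoint elimination with no counters. Your least-fixpoint scheme with depth counters and cycle detection would have to prove this same ``an up-move can be simulated by a down-move into a larger type'' realizability fact (or an equivalent unbounded-depth-occurrence statement) before productive cycles can be declared sound; without it, the central equivalence behind your algorithm is unestablished, and the one argument you offer for it is incorrect.
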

We prove Theorem~\ref{cor:bounded_hmph_in_2EXPTIME} by replacing
bounded homomorphisms with unbounded ones and, to compensate for this,
further replacing the target interpretation $\Imc_{\Tmc_1,t_1}$ with a
suitably constructed class of interpretations.

To illustrate, consider Example~\ref{exa:homomorphisms_too_strong} and
let $t_1=t_2=\{B\}$. The difference between
$\Imc_{\Tmc_2,t_2} \rightarrow^\mn{fin}_{\sig} \Imc_{\Tmc_1,t_1}$ and
$\Imc_{\Tmc_2,t_2} \rightarrow_{\sig} \Imc_{\Tmc_1,t_1}$ is that
unbounded homomorphisms fail once they ``reach the root'' of
$\Imc_{\Tmc_1,t_1}$ while bounded homomorphisms can, depending on the
bound, map the root of $\Imc_{\Tmc_2,t_2}$ deeper and deeper into
$\Imc_{\Tmc_1,t_1}$, thus never reaching its root. The latter is
possible because $\Imc_{\Tmc_1,t_1}$ is regular in the sense that any
two elements which have the same type are the root of isomorphic
subtree interpretations.  This is of course not only true in this
example, but 
in any universal model. To transition back from bounded to unbounded
homomorphisms, we replace $\Imc_{\Tmc_1,t_1}$ with a class of
interpretations that can be seen as a ``backwards regularization'' of
$\Imc_{\Tmc_1,t_1}$. In our concrete example, $\Imc_{\Tmc_1,t_1}$
contains an element that has the same type as the root of
$\Imc_{\Tmc_1,t_1}$ and that has a predecessor $d$.  This would lead
us to consider (among others) the interpretation obtained from
$\Imc_{\Tmc_1,t_1}$ by adding a predecessor of the same type as $d$
and possibly adding further predecessors, even ad infinitum. We now
make this idea precise.

A \emph{rooted weakly tree-shaped interpretation} is a pair
$(\Imc,d_0)$ with \Imc a weakly tree-shaped interpretation and $d_0
\in \Delta^\Imc$ a \emph{root}. The root imposes a direction on \Imc
and thus allows us to speak about successors and predecessors, that
is, when $(d,e) \in r^\Imc$, then $e$ is a \emph{successor} of $d$ if
the distance of $e$ to $d_0$ in the undirected graph
$(\Delta^\Imc,\{\{d,e\} \mid (d,e) \in r^\Imc \text{ for some } r \in
\NC \})$ is larger than the distance of $d$ to $d_0$, and a
\emph{predecessor} otherwise. Note that the direction does not
correspond to the distinction between roles and inverse roles, but
rather reflects the directedness of universal models $\Imc_{\Tmc,t}$,
which can be viewed as being rooted weakly tree-shaped:
$\Delta^{\Imc_{\Tmc,t}}$ is a set of paths and the root of
$\Imc_{\Tmc,t}$ is the unique path in $\Delta^{\Imc_{\Tmc,t}}$ of
minimum length. 

Throughout the section, we work with sets $\rho$ of (possibly inverse) roles; in
particular, we say that $b$ is a \emph{$\rho$-successor} of
$a$ if $\rho=\{r\mid (a,b)\in r^\Imc\}$. Let \Tmc be a \HornALCHIF TBox
and let $\mn{tp}(\Tmc)$ be the set of all \Tmc-types consistent with~$\Tmc$.
For every $t_0 \in \mn{tp}(\Tmc)$, we use $\mn{tp}(\Tmc,t_0)$ to denote the
set of all $t \in \mn{tp}(\Tmc)$ that occur in the universal model
$\Imc_{\Tmc,t_0}$ of $t_0$ and \Tmc.  
Furthermore, given a rooted weakly tree-shaped interpretation $(\Imc,d_0)$
and an element $d \in \Delta^\Imc$,
the \emph{1-neighborhood of $d$ in \Imc}
is a tuple $n_1^{\Imc}(d) = (t^-,\rho,t,S)$
such that
\begin{enumerate}

\item[(a)] $t = \mn{tp}_\Imc(d)$;


\item[(b)] if $d$ is the $\rho'$-successor of some $d^- \in \Delta^\Imc$, then
$t^-=\mn{tp}_\Imc(d^-)$ and $\rho=\rho'$, otherwise $\rho=t^-=\bot$;


\item[(c)] $S$ is the set of all pairs $(\rho',t')$
such that there is a $\rho'$-successor $d'$ of $d$ with
$t' = \mn{tp}_\Imc(d')$.

\end{enumerate}
We further write $(t^-_1,\rho_1,t_1,S_1) \sqsubseteq (t^-_2,\rho_2,t_2,S_2)$
if $t_1=t_2$, $S_1 \subseteq S_2$ and, if $\rho_1 \neq \bot$, then $\rho_1=\rho_2$ and $t^-_1=t^-_2$.

In the following, we define the class $\MOD_\omega(\Tmc,t_0)$ of
models of \Tmc that we consider as homomorphism targets.  Each model
from this class is obtained as the limit of a (finite or infinite)
sequence of rooted weakly tree-shaped interpretations. Start to
construct the sequence by choosing a type $t\in\mn{tp}(\Tmc,t_0)$ and
defining $\Imc=(\{d_0\},\cdot^{\Imc})$ such that
$\mn{tp}_{\Imc}(d_0)=t$. The remaining sequence is obtained by
repeatedly applying the following rule:
\begin{enumerate}

\item[{\bf (R)}] Let $d\in\Delta^\Imc$. Choose some
  $e\in \Delta^{\Imc_{\Tmc,t_0}}$ such that
  $n_1^{\Imc}(d) \sqsubseteq n_1^{\Imc_{\Tmc,t_0}}(e)$, and add to $d$
  the predecessor and/or successors required to achieve
  $n_1^{\Imc}(d) = n_1^{\Imc_{\Tmc,t_0}}(e)$. If a new predecessor is
  added, it becomes the new root.

\end{enumerate}
Note that rule application does not need to be fair nor exhaustive. Now,
$\MOD_\omega(\Tmc,t_0)$ is the set of all (weakly tree-shaped but not
rooted) interpretations $\Imc$ that can be obtained as the limit of a
sequence constructed in the described way.
\begin{lemma} \label{lem:bounded_hmph_via_can_omega} Let \Tmc be a
  \HornALCHIF TBox, $t_0 \in \mn{tp}(\Tmc)$, and \Imc a finitely
  branching, weakly tree-shaped interpretation. Then $\Imc
  \rightarrow^\mn{fin}_\sig \Imc_{\Tmc,t_0}$ iff there is a $\Jmc \in
  \MOD_\omega(\Tmc,t_0)$ with $\Imc \rightarrow_\sig \Jmc$.
\end{lemma}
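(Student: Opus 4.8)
The plan is to prove the two directions by moving back and forth between the family of bounded homomorphisms into $\Imc_{\Tmc,t_0}$ and a single unbounded homomorphism into a member of the backwards regularization $\MOD_\omega(\Tmc,t_0)$.

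For the direction ``$\Leftarrow$'', fix some $\Jmc \in \MOD_\omega(\Tmc,t_0)$ with $\Imc \rightarrow_\sig \Jmc$. I would first establish the auxiliary fact that $\Jmc \rightarrow^\mn{fin} \Imc_{\Tmc,t_0}$, even for the full signature. The key observation is that, by the definition of rule~{\bf (R)}, every $d \in \Delta^\Jmc$ carries a \emph{witness} $e_d \in \Delta^{\Imc_{\Tmc,t_0}}$ of the same \Tmc-type with $n_1^\Jmc(d) \sqsubseteq n_1^{\Imc_{\Tmc,t_0}}(e_d)$, equality holding once {\bf (R)} has been applied to $d$; since an element to which {\bf (R)} was never applied is a leaf, this lets me map the forward subtree of $\Jmc$ below any element homomorphically into the forward subtree of $\Imc_{\Tmc,t_0}$ below its witness, using that $\Imc_{\Tmc,t_0}$ is regular (elements of equal type root isomorphic subtrees). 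For a finite ball $\Jmc|^c_i$, its topmost element $c^\top$ is an ancestor of all other ball elements, so the whole ball sits in the forward subtree below $c^\top$ and hence maps into $\Imc_{\Tmc,t_0}$; as $\Jmc$ is finitely branching (neighborhoods are copied from the finitely branching $\Imc_{\Tmc,t_0}$), Lemma~\ref{lem:alternative_def_bounded_hmph}~(3) yields $\Jmc \rightarrow^\mn{fin} \Imc_{\Tmc,t_0}$. Composing with $\Imc \rightarrow_\sig \Jmc$ (the $\sig$-image of any ball $\Imc|^d_i$ is a finite subinterpretation of $\Jmc$, which maps into $\Imc_{\Tmc,t_0}$) gives $\Imc \rightarrow^\mn{fin}_\sig \Imc_{\Tmc,t_0}$ by Lemma~\ref{lem:alternative_def_bounded_hmph}~(3) again.

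For the harder direction ``$\Rightarrow$'', Lemma~\ref{lem:alternative_def_bounded_hmph}~(3) gives $\sig$-homomorphisms $h_i : \Imc|^{d^*}_i \rightarrow \Imc_{\Tmc,t_0}$ for all $i$, where $d^*$ is a fixed root of $\Imc$ (assuming, as in the intended application, that $\Imc$ is $\sig$-connected). I would build $\Jmc$ and a homomorphism $h : \Imc \rightarrow_\sig \Jmc$ simultaneously, adapting the limiting construction in the proof of Lemma~\ref{lem:constructhoms_ELHIF_bot}. The state carried along is a finite {\bf (R)}-stage $\Jmc_\mn{cur}$, a partial $\sig$-homomorphism $h$ from a ball of $\Imc$ into $\Jmc_\mn{cur}$, an infinite index set $N$ of surviving bounded homomorphisms $h_i$, and a homomorphism $\tau : \Jmc_\mn{cur} \rightarrow \Imc_{\Tmc,t_0}$ recording how $\Jmc_\mn{cur}$ is embedded in the universal model, maintaining the invariant $h_i = \tau \circ h$ on the current domain for all $i \in N$. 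Processing $\Imc$ outward from $d^*$, at each step I use the finite branching of both $\Imc$ and $\Imc_{\Tmc,t_0}$ to pigeonhole over $N$, stabilizing the $h_i$-images of the finitely many new frontier elements, and then place their $h$-images: if $h(a)$ already has a suitable neighbor in $\Jmc_\mn{cur}$ it is reused, and otherwise $\Jmc_\mn{cur}$ is extended by an {\bf (R)}-step at $h(a)$ copying the corresponding neighbor of $\tau(h(a))$ from $\Imc_{\Tmc,t_0}$, with $\tau$ extended accordingly.

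The crux, and the reason $\MOD_\omega(\Tmc,t_0)$ rather than $\Imc_{\Tmc,t_0}$ itself is the right target, is the backward direction: when a predecessor of $a$ in $\Imc$ is stabilized by the $h_i$ onto a \emph{predecessor} of $\tau(h(a))$ while $h(a)$ has no predecessor yet in $\Jmc_\mn{cur}$, one must add one, which {\bf (R)} permits and which pushes the root of $\Jmc_\mn{cur}$ further back. I would argue this is always consistent: if the stabilized image is a predecessor of $\tau(h(a))$, then $\tau(h(a))$ is not the root of $\Imc_{\Tmc,t_0}$, so the required predecessor genuinely exists there and the {\bf (R)}-step copying it is legal. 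In the limit, $\Jmc = \bigcup \Jmc_\mn{cur}$ is a legitimate member of $\MOD_\omega(\Tmc,t_0)$ (a limit of an {\bf (R)}-construction) and $h = \bigcup h$ is a total $\sig$-homomorphism, with $\sig$-connectedness ensuring every element is eventually processed and $h(a)=a$ for individual names as in Lemma~\ref{lem:constructhoms_ELHIF_bot}. I expect the main obstacle to be exactly this bookkeeping: verifying that the pigeonhole keeps $N$ infinite at every step so that the construction never blocks, and that each freely added backward predecessor matches an actual neighborhood of $\Imc_{\Tmc,t_0}$, so that the limit is a genuine $\MOD_\omega$ model and not merely some weakly tree-shaped interpretation.
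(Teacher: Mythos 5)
Your overall architecture matches the paper's: for ``$\Leftarrow$'' you reduce to showing $\Jmc \rightarrow^{\mn{fin}} \Imc_{\Tmc,t_0}$ and compose, and for ``$\Rightarrow$'' you build $\Jmc$ and the homomorphism simultaneously, pigeonholing over the family $(h_i)_{i\geq 0}$ using finite branching and extending the current stage by \textbf{(R)}-steps. The internal arguments differ in two places. For ``$\Leftarrow$'', the paper never invokes regularity of $\Imc_{\Tmc,t_0}$: it proves by induction along the \textbf{(R)}-construction that every finite stage $\Jmc_i$ maps homomorphically into $\Imc_{\Tmc,t_0}$, with the strengthened claim that the root $d_i$ may be sent to \emph{any} element whose type contains $\mn{tp}_{\Jmc_i}(d_i)$; this slack is what absorbs predecessor additions. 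Your regularity claim, ``elements of equal type root isomorphic subtrees,'' is not literally true in \HornALCHIF: because of the functionality condition (ii) in the definition of paths, the forward subtree below an element of $\Imc_{\Tmc,t_0}$ depends on its type \emph{and} its incoming edge. Your argument survives because the witnesses provided by \textbf{(R)} match full 1-neighborhoods (which include the incoming edge data), but this needs to be said explicitly; as stated, the regularity appeal has a hole exactly where functionality restrictions bite.

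The more substantive issue is your invariant $h_i = \tau \circ h$ in the ``$\Rightarrow$'' direction. The paper deliberately maintains only the local condition $n_1^{\Jmc_i}(g_i(d)) \sqsubseteq n_1^{\Imc_{\Tmc,t_0}}(h_i'(d))$, and correspondingly pigeonholes on the \emph{data} $(\rho,t)$ describing how $h_j(e)$ sits relative to $h_j(d)$, not on the element $h_j(e)$ itself. Your element-level factoring can become unmaintainable: if $\Tmc$ contains two distinct but provably equivalent role names, then $\tau(h(a))$ may have two distinct successors $f \neq f_1$ with identical $(\rho',t')$-data, whereas the $S$-component of a 1-neighborhood is a \emph{set} of such pairs, so an \textbf{(R)}-stage provides only one successor of $h(a)$ with that data; if the surviving $h_i$ stabilize one neighbor of $a$ onto $f$ and another onto $f_1$, no choice of $\tau$ can satisfy $h_i = \tau \circ h$, even though mapping both neighbors to the single available successor is perfectly fine for $h$ itself. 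The repair is precisely to retreat to the paper's neighborhood-level invariant. Finally, your explicit restriction to $\sig$-connected $\Imc$ is not among the hypotheses of the lemma, but it is in fact needed and is implicitly used by the paper as well (its skipping property (ii) presupposes that images of adjacent elements are adjacent, which only $\sig$-edges guarantee); since the lemma is only ever applied to $\Imc_{\Tmc_2,t_2}|^{\mn{con}}_{\sig}$, which is $\sig$-connected and has only $\sig$-edges, your added hypothesis is harmless, and making it explicit is if anything more honest than the paper's statement.
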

\begin{proof} \textbf{{\boldmath ``$\Rightarrow$''.}}~Suppose
  $\Imc\rightarrow^\mn{fin}_\sig \Imc_{\Tmc,t_0}$ for some finitely
  branching, weakly tree-shaped interpretation \Imc, and choose a root
  $d_0$ of $\Imc$. By Lemma~\ref{lem:alternative_def_bounded_hmph}~(3),
  there is a sequence $h_0,h_1,\ldots$ such that $h_i$ is an
  $\sig$-homomorphism from $\Imc|^{d_0}_i$ to $\Imc_{\Tmc,t_0}$.  Note
  that both $\Imc$ and $\Imc_{\Tmc,t_0}$ are finitely branching. By
  skipping homomorphisms (similar to the proof of
  Lemma~\ref{lem:constructhoms_ELHIF_bot}), we can thus construct a
  new sequence $h_0',h_1', \ldots$ such that $h_i'$ is an
  $\sig$-homomorphism from $\Imc|_{i}^{d_0}$ to $\Imc_{\Tmc,t_0}$ and,
  additionally, for every $i\geq 0$ and every $d\in
  \Delta^{\Imc|^{d_0}_i}$ the following properties hold:
  \begin{enumerate}

    \item[(i)]
      $n_1^{\Imc_{\Tmc,t_0}}(h_i'(d))=n_1^{\Imc_{\Tmc,t_0}}(h_j'(d))$
      for all $j$ with $j\geq i$;

    \item[(ii)] if $e$ is a successor of $d$ in \Imc, then one of the following
      is the case:
      \begin{itemize}

	\item $h_j'(e)$ is the predecessor of $h_j'(d)$ in
	  $\Imc_{\Tmc,t_0}$, for all $j$ with $j\geq i$, or

	\item $h_j'(e)$ is a successor of $h_j'(d)$ in
	  $\Imc_{\Tmc,t_0}$, and there is some $(\rho,t)$ in component
	  $S$ of $n_1^{\Imc_{\Tmc,t_0}}(h_i'(d))$ such that for
	  all $j$ with $j\geq i$, we have 
	  $\rho=\{r\mid (h_j'(d),h_j'(e))\in r^{\Imc_{\Tmc,t_0}}\}$
	  and $h_j'(e)=t$.

      \end{itemize}

  \end{enumerate}
  Guided by $h_i'$, we construct a sequence of 
  rooted weakly tree-shaped interpretations $(\Jmc_0,e_0),$
  $(\Jmc_1,e_1),\ldots$ and a sequence $g_0, g_1,\ldots$ with 
  $g_i$ an $\sig$-homomorphism from $\Imc|^{d_0}_i$ to $\Jmc_i$ such 
  that for every $i,j$ with $0\leq i\leq j$ and every $d\in
  \Delta^{\Imc|^{d_0}_i}$, we have $g_i(d) = g_j(d)$. The required
  interpretation $\Jmc\in\mn{mod}_\omega(\Tmc,t_0)$ is obtained in the
  limit.  Throughout the construction, we maintain the invariant 
  \[
    \tag{$*$}
    n_1^{\Jmc_i}(g_i(d)) \sqsubseteq n_1^{\Imc_{\Tmc,t_0}}(h_i'(d))
  \]
  for all $i,d$ such that $g_i(d)$ is defined.

  We start with $\Jmc_0 = (\{e_0\},\cdot^{\Jmc_0})$ such that
  $\mn{tp}_{\Jmc_0}(e_0)=\mn{tp}_{\Imc_{\Tmc,t_0}}(h_0'(d_0))$, choose
  $e_0$ as the root and set $g_0(d_0)=e_0$. Clearly $(*)$ is
  satisfied.  Assuming that $(\Jmc_i,e_i)$ and $g_i$ are already
  defined, we extend them to $(\Jmc_{i+1},e_{i+1})$ and $g_{i+1}$ as
  follows. Choose some $d\in \Delta^{\Imc|^{d_0}_i}$ and
  $d'\notin\Delta^{\Imc|^{d_0}_i}$ such that $(d,d') \in r^\Imc$ for
  some role $r$. By invariant~$(*)$ and Point~(i), 
  we have $n_1^{\Jmc_i}(g_i(d)) \sqsubseteq n_1^{\Imc_{\Tmc,t_0}}(h_j'(d))$
  for all $j \geq i$. Thus, we can apply~\textbf{(R)} to
  $g_i(d)$ in $\Jmc_i$ and $h'_i(d)$ in $\Delta^{\Imc_{\Tmc,t_0}}$. More precisely, we 
  obtain $\Jmc_{i+1}$ from $\Jmc_i$ by adding a predecessor and/or successors
  to achieve 
  \[
    \tag{$**$}
    n_1^{\Jmc_{i+1}}(g_i(d)) = n_1^{\Imc_{\Tmc,t_0}}(h_i'(d)).
  \]
  Moreover, $e_{i+1}$ is the root of $\Jmc_{i+1}$, updated according
  to~\textbf{(R)}. To define $g_{i+1}$, we extend $g_i$ to $d'$ by 
  distinguishing two cases according to Point~(ii): 

  \begin{itemize}

    \item Suppose $h'_{j}(d')$ is the predecessor of $h'_{j}(d)$ for all $j
      \geq i$. We set $g_{i+1}(d')$ to the predecessor of $g_i(d)$
      (exists due to~($**$)). Clearly, $(*)$ is satisfied also for $g_{i+1}(d')$.


    \item Suppose $h'_{j}(d')$ is a successor of $h'_{j}(d)$ for all
      $j \geq i$, and let $(\rho,t)$ be as described. Let $e\in
      \Delta^{\Jmc_{i+1}}$ be an element with
      $\mn{tp}_{\Jmc_{i+1}}(e)=t$ and $\rho=\{r\mid (g_i(d),e)\in
      r^{\Jmc_{i+1}}\}$. Note that such an element exists due
      to~($**$). Set $g_{i+1}(d')=e$. Clearly, $(*)$ is satisfied also
      for $g_{i+1}(d')$.


  \end{itemize}
  The construction of $\Jmc$ and $h$ is finished by setting
  $h = \bigcup_{i\geqslant 0} g_i$ and
  $\Jmc' = \bigcup_{i\geqslant 0}\Jmc_i$. 

  \par\medskip\noindent \textbf{{\boldmath ``$\Leftarrow$''.}}~ 
  Suppose there is a $\Jmc\in\MOD_{\omega}(\Tmc,t_0)$ with $\Imc\to_\sig\Jmc$.
  It suffices to show $\Jmc\to^{\mn{fin}} \Imc_{\Tmc,t_0}$. Let
  $(\Jmc_0,d_0),(\Jmc_1,d_1),\ldots$ be the sequence of rooted weakly
  tree-shaped interpretations whose limit is $\Jmc$.  We verify
  the following claim, which implies $\Jmc\rightarrow^{\mn{fin}}
  \Imc_{\Tmc,t_0}$.
  
  \par\smallskip\noindent
  \textbf{Claim.}~ For all $i \geq 0$, we have:
  \begin{enumerate}[(i)]

    \item there is an $e_0\in\Delta^{\Imc_{\Tmc,t_0}}$ with
      $\mn{tp}_{\Jmc_i}(d_i) = \mn{tp}_{\Imc_{\Tmc,t_0}}(e_0)$;

    \item for all $e_0 \in \Delta^{\Imc_{\Tmc,t_0}}$ with
      $\mn{tp}_{\Jmc_i}(d_i)\subseteq \mn{tp}_{\Imc_{\Tmc,t_0}}(e_0)$, we have
      $(\Jmc_i, d_i) \rightarrow (\Imc_{\Tmc,t_0}, e_0)$.

  \end{enumerate}
  
  \par\smallskip\noindent We prove the claim by induction on $i$.  For
  $i=0$, Points~(i) and~(ii) follow from the definition of~$\Jmc_0$. 
  For the inductive step, consider $\Jmc_{i+1}$ and suppose
  \textbf{(R)} has been applied to some $d\in \Delta^{\Jmc_{i}}$ and
  $e\in \Delta^{\Imc_{\Tmc,t_0}}$. 
  
  For $i>0$, observe first that Point~(i) is trivially preserved when $d_{i+1}=d_i$.
  In case $d_{i+1}$ is the predecessor of $d_i$, it is preserved by
  the condition on the choice of $e$ in~\textbf{(R)}: $e$ has the same
  type as $d_i$ and the predecessor $e'$ of $e$ has the same type as
  $d_{i+1}$.

  For Point~(ii), we distinguish two cases.

  \smallskip\textit{Case 1.} Suppose the application of \textbf{(R)}
  has not added any predecessors to $d$.  In particular, we then have
  $d_{i+1}=d_i$. For Point~(ii), take any $e_0$ with
  $\mn{tp}_{\Jmc_i}(d_{i+1})\subseteq \mn{tp}_{\Imc_{\Tmc,t_0}}(e_0)
  $. As $d_i=d_{i+1}$, induction hypothesis implies that there is a
  homomorphism $h : (\Jmc_i,d_{i+1})\to(\Imc_{\Tmc,t_0},e_0)$.  We
  extend $h$ to the domain of $\Jmc_{i+1}$ by doing the following for
  each newly added successor $d'$ of $d$.

  Let $\mn{tp}_{\Jmc_{i+1}}(d)=t$ and $\mn{tp}_{\Jmc_{i+1}}(d')=t'$
  and $\rho=\{r\mid (d,d')\in r^{\Jmc_{i+1}}\}$. By the
  choice of $e$ in \textbf{(R)}, $e$ is of type $t$ and has a
  $\rho$-successor of type $t'$. By definition of the universal
  model, there is some $r\in\rho$ with $t\rightsquigarrow_r^\Tmc t'$
  and $\rho=\{s\mid \Tmc\models r\sqsubseteq s\}$. Denote with $\hat
  t=\mn{tp}_{\Imc_{\Tmc,t_0}}(h(d))$.  The definition of a
  homomorphism yields $t\subseteq \hat t$. Thus, there is $\hat
  t'\supseteq t'$ such that $\hat t\rightsquigarrow_{r}^\Tmc \hat t'$.
  By definition of the universal model, $h(d)$ has a $\rho$-successor
  of type $\hat t'$ or a $\rho$-predecessor of type $\hat t''$, for
  $\hat t''\supseteq \hat t'$. We extend $h$ by setting $h(d')$ to
  that predecessor or successor, respectively. 
  
  After the extension, $h$ witnesses $(\Jmc_{i+1},d_{i+1})\to (\Imc_{\Tmc,t_0},e_0)$.
 
  \smallskip\textit{Case 2.} Application of \textbf{(R)} has added a
  predecessor $d'$ to $d$. Then $d_i=d$ and $d_{i+1}=d'$. Let
  $t=\mn{tp}_{\Jmc_{i+1}}(d)$, $t'=\mn{tp}_{\Jmc_{i+1}}(d')$ and
  $\rho=\{r\mid (d',d)\in r^\Jmc\}$.  By construction of the universal
  model, there is $r\in\rho$ with $t'\rightsquigarrow_{r}^\Tmc t$ and
  $\rho=\{s\mid \Tmc\models r\sqsubseteq s\}$.  Let $e_0$ be as
  in~(ii), that is, $\hat t' := \mn{tp}_{\Imc_{\Tmc,t_0}}(e_0)
  \supseteq t'$.  We then have that $\hat t'\rightsquigarrow_{r} \hat
  t$ for some $\hat t\supseteq t$.  By definition of the universal
  model, $e_0$ has a $\rho$-successor of type $\hat t$ or a
  $\rho$-predecessor of type $\hat t''\supseteq t$. Let this element
  be $\overline e_0$.  By induction hypothesis, there is a
  homomorphism $h : (\Jmc_i,d)\to(\Imc_{\Tmc,t_0},\overline{e}_0)$.
  We extend $h$ by first setting $h(d')=e_0$ and then extending $h$ to
  all successors of $d$ as in Case~1.  
  
  It is not difficult to verify that $h$ witnesses
  $(\Jmc_{i+1},d_{i+1})\to (\Imc_{\Tmc,t_0},e_0)$.
\end{proof}
We now use Lemma~\ref{lem:bounded_hmph_via_can_omega} to develop the
mosaic based decision procedure that underlies the proof of
Theorem~\ref{cor:bounded_hmph_in_2EXPTIME}. The main idea is that
since we cannot compute $\mn{mod}_\omega(\Tmc,t_1)$ explicitly, we
decompose the interpretations in this class into $1$-neighborhoods, of
which there are only finitely many. Such a 1-neighborhood is then
represented by a mosaic, together with a decoration with sets of
$\Tmc_2$-types that can be homomorphically embedded into that
neighborhood. 

Let $\Tmc_1,\Tmc_2$ be
$\HornALCHIF$ TBoxes, 
$t_1$ a $\Tmc_1$-type, and \sig a signature.
We denote with $\mn{rol}(\Tmc_i)$ the set of all roles $r,r^-$ such
that the role name $r$ occurs in $\Tmc_i$ (possibly as an inverse
role). Moreover, for a set of roles $\rho$ and a signature \sig,
denote with $\rho|_\sig$ the restriction of $\rho$ to $\sig$-roles,
and with $\rho^-$ the set $\{r^-\mid r\in\rho\}$.

%

Formally, a \emph{mosaic} is a tuple
$M=(t^-,\rho,t,S,\ell)$ such that
$(t^-,\rho,t,S)=n_1^{\Imc_{\Tmc_1,t_1}}(d)$ for some $d\in
\Delta^{\Imc_{\Tmc_1,t_1}}$ 
and $\ell:\{t^-,t\}\cup S\to 2^{\mn{tp}(\Tmc_2)}$
satisfies the following condition:
\begin{enumerate}[(i)]

  \item[\textbf{(M)}] For all $\widehat t \in \ell(t)$ we have
    $\widehat t \cap \sig \subseteq t$ and, for all~$\widehat
    t'\in\mn{tp}(\Tmc_2)$ and $r\in\mn{rol}(\Tmc_2)$ with $\widehat
    t\rightsquigarrow_r^{\Tmc_2} \widehat{t'}$, one of the following
    holds for $\sigma=\{s\in\mn{rol}(\Tmc_2)\mid \Tmc_2\models r\sqsubseteq s\}$:
%
  %
  \begin{enumerate}

    \item $\sigma|_\sig=\emptyset$;

    \item $t^-\neq \bot$, for every $s\in \sigma|_\sig$ we have
      $s^-\in \rho$, and $\widehat{t'}\in \ell(t^-)$;

    \item there is $(\rho',t') \in S$ with $\widehat{t'}\in \ell(\rho',t')$ and
      \mbox{$\sigma|_\sig\subseteq \rho'$}.

  \end{enumerate}
\end{enumerate}
To ease notation, we use $t^-_M$ to denote $t^-$, $\rho_M$ to denote
$\rho$, and likewise for the other components of a mosaic $M$.
We further use $\Mmc$ to denote the set of all mosaics.

We now describe an elimination algorithm in the style of Pratt's type
elimination for PDL~\cite{Pratt79}, but working on mosaics rather than
on types; the existence of this algorithm will establish
Theorem~\ref{cor:bounded_hmph_in_2EXPTIME}.  Let $\Mmc' \subseteq
\Mmc$ be a set of mosaics.  An $M \in \Mmc'$ is \emph{good in} $\Mmc'$
if the following conditions are satisfied:
\begin{enumerate}[1.]

  \item for each $(\rho,t)\in S_M$, there is an 
    $N \in \Mmc'$ such that $(t_M,\rho,t)=(t^-_N,\rho_N,t_N)$,
    $\ell_M(\rho,t)=\ell_{N}(t_{N})$, and
    $\ell_M(t_M)=\ell_{N}(t^-_{N})$;

  \item if $t^-_M \neq \bot$, there is $N \in \Mmc'$
    with
    $(\rho_M,t_M)\in S_N$, $t^-_M=t_N$, $\ell_M(t^-_M) = \ell_N(t_N)$, and
    $\ell_M(t_M)=\ell_{N}(\rho_{M},t_M)$.

\end{enumerate}
Our algorithm computes the sequence $\Mmc_0,\Mmc_1,\dots$ that starts
with $\Mmc_0=\Mmc$ and where $\Mmc_{i+1}$ is obtained from $\Mmc_i$ by
removing all mosaics that are not good in $\Mmc_i$.  This sequence
eventually stabilizes, say at $\Mmc_p$. The following lemma
establishes the central property of the elimination algorithm.
\begin{lemma}
  \label{lem:mosaics}
  Let $t_2 \in \mn{tp}(\Tmc_2)$.
  Then the following two statements are equivalent:
  \begin{enumerate}
    \item
      there is a $\Jmc \in \MOD_\omega(\Tmc_1,t_1)$ such that
      $\Imc_{\Tmc_2,t_2}|^{\mn{con}}_\sig \rightarrow_\sig \Jmc$;
    \item
      $\Mmc_p$ contains a mosaic $M$ with $t_2 \in \ell_M(t_M)$.
  \end{enumerate}
\end{lemma}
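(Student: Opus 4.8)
The plan is to prove the two directions of the equivalence separately, exploiting two parallel correspondences: between mosaics and $1$-neighborhoods of elements in interpretations from $\MOD_\omega(\Tmc_1,t_1)$, and between the label component $\ell$ of a mosaic and a witnessing $\sig$-homomorphism from $\Imc_{\Tmc_2,t_2}|^{\mn{con}}_\sig$. The ``read off'' direction (1)$\Rightarrow$(2) is essentially direct, while (2)$\Rightarrow$(1) reconstructs both $\Jmc$ and the homomorphism from the combinatorial data in $\Mmc_p$, in the style of Pratt's model construction from a stable set of types.

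For (1)$\Rightarrow$(2), I would start from a $\Jmc\in\MOD_\omega(\Tmc_1,t_1)$ together with a $\sig$-homomorphism $h\colon\Imc_{\Tmc_2,t_2}|^{\mn{con}}_\sig\to_\sig\Jmc$ and attach to every $d\in\Delta^\Jmc$ a mosaic $M_d$. Its core is $(t^-,\rho,t,S):=n_1^{\Jmc}(d)$, which is a legal mosaic core because, as $\Jmc$ is built by rule \textbf{(R)}, it coincides with $n_1^{\Imc_{\Tmc_1,t_1}}(e)$ for some $e$. The label collects the $\Tmc_2$-types that $h$ sends to $d$ and its neighbors, i.e.\ $\ell_d(t)=\{\mn{tp}(x)\mid h(x)=d\}$, and analogously $\ell_d(t^-)$ and $\ell_d(\rho',t')$ for the predecessor and the successor of $d$ belonging to $(\rho',t')\in S$. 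I would then check condition \textbf{(M)}: for $\widehat t=\mn{tp}(x)\in\ell_d(t)$ and an existential $\widehat t\rightsquigarrow_r^{\Tmc_2}\widehat{t'}$ whose $\sig$-restricted role set is nonempty, the corresponding universal-model successor $y$ of $x$ again lies in $\Imc_{\Tmc_2,t_2}|^{\mn{con}}_\sig$, so $h(y)$ is defined; since $\Jmc$ is weakly tree-shaped and has no self-loops, the preserved $\sig$-edge forces $h(y)$ to be the predecessor of $d$ or an adjacent successor, giving case (b) resp.\ (c) of \textbf{(M)}. Finally I would observe that $\{M_d\mid d\in\Delta^\Jmc\}$ is a finite subset of $\Mmc$ all of whose members are good in it (the witness for a successor entry is $M_{d'}$, and the witness for the predecessor is $M_{d^-}$); self-goodness is preserved by every elimination step, so $M_{d_0}$ for $d_0=h(a)$ survives into $\Mmc_p$, and $t_2=\mn{tp}(a)\in\ell_{d_0}(t_{d_0})$ as required.

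For (2)$\Rightarrow$(1), I would run this in reverse. Because $\Mmc_p$ is stable, every $M\in\Mmc_p$ is good in $\Mmc_p$, so starting from a surviving $M$ with $t_2\in\ell_M(t_M)$ I can grow a tree whose nodes carry mosaics from $\Mmc_p$: goodness condition~1 supplies, for each $(\rho,t)\in S_{M'}$, a child mosaic agreeing with $M'$ on the shared label, and goodness condition~2 supplies, whenever $t^-_{M'}\neq\bot$, a (possibly infinite backward chain of) predecessor(s). Forgetting the labels, this tree is realizable by a fair sequence of applications of rule \textbf{(R)} and hence lies in $\MOD_\omega(\Tmc_1,t_1)$. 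I would then build $h$ by induction on the tree structure of $\Imc_{\Tmc_2,t_2}|^{\mn{con}}_\sig$ rooted at $a$, maintaining the invariant that whenever $x$ with $\mn{tp}(x)=\widehat t$ is mapped to a node $d$ carrying mosaic $M'$, we have $\widehat t\in\ell_{M'}(t_{M'})$. To extend $h$ along a $\sig$-edge to a successor $y$, I would invoke \textbf{(M)} for $\widehat t$ and map $y$ to the predecessor of $d$ (case (b)) or to the successor of $d$ indicated by $(\rho',t')$ (case (c)); in either case the label was copied along the edge by goodness, so the invariant persists, and the required $\sig$-roles appear on the target edge.

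The main obstacle I expect lies in (2)$\Rightarrow$(1) and is twofold. First, one must argue that the label copied along edges by the two goodness conditions stays consistent so that the invariant is genuinely preserved at every step, and in particular that routing a $\Tmc_2$-element ``backwards'' onto the predecessor of $d$ via case (b) never clashes with what has already been placed there. Second, and more delicate, is the bookkeeping with inverse roles: condition \textbf{(M)} only records the forward role set $\sigma|_\sig$ of an existential, so I must verify that $\sigma|_\sig\subseteq\rho'$ (resp.\ $s^-\in\rho$ for all $s\in\sigma|_\sig$) already forces preservation of the backward $\sig$-roles as well. This uses that $\sig$ is closed under inverses and that the role set in the reverse direction of a $\rho$-labelled edge is exactly $\rho^-$, so that $\sigma|_\sig\subseteq\rho'$ entails $\sigma^-|_\sig\subseteq\rho'^-$. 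The remaining obligations—legality of each mosaic core, finiteness of $\Mmc$, and stability of a self-good set under elimination—are routine.
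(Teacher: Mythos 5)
Your proposal follows the paper's own proof essentially step for step: for (1)$\Rightarrow$(2) you read off from each $d\in\Delta^\Jmc$ the mosaic whose core is $n_1^\Jmc(d)$ and whose labels are the sets of $\Tmc_2$-types that $h$ maps onto $d$ and its neighbors, verify \textbf{(M)} via the homomorphism, and use self-goodness of the resulting mosaic set to conclude it survives elimination into $\Mmc_p$; for (2)$\Rightarrow$(1) you stitch mosaics of $\Mmc_p$ into some $\Jmc\in\MOD_\omega(\Tmc_1,t_1)$ using the two goodness conditions (exactly the paper's construction with its invariant~($\ast$)) and build the homomorphism by induction along the $\sig$-connected part of $\Imc_{\Tmc_2,t_2}$, maintaining precisely the paper's invariant~($\dagger$) that the type of each mapped element lies in the label of the target's mosaic. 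The obstacles you flag (label consistency along edges and the inverse-role bookkeeping hidden in cases (b)/(c) of \textbf{(M)}) are resolved the same way the paper resolves them, so this is the same argument, correctly executed.
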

\begin{proof}
  \textbf{{\boldmath $1 \Rightarrow 2$.}}~ Let $h$ be an $\sig$-homomorphism
  from $\Imc_{\Tmc_2,t_2}|_\sig^{\mn{con}}$ to some $\Jmc \in
  \MOD_\omega(\Tmc_1,t_1)$. For every $d\in \Delta^{\Jmc}$, denote
  with $T_h(d)$ the set of all types mapped to $d$ by $h$, that is,
  $$T_h(d) = \big\{\mn{tp}_{\Imc_{\Tmc_2,t_2}}(e)
  \mid h(e) = d,~ e \in
  \Delta^{\Imc_{\Tmc_2,t_2}|^\mn{con}_\sig}\:\!\big\}.$$
  For every element $d\in \Delta^\Jmc$, we define a tuple
  $M(d)=(t^-,\rho,t,S,\ell)$ as follows:
  \begin{itemize} 

    \item $(t^-,\rho,t,S)=n_1^\Jmc(d)$;

    \item $\ell(t)=T_h(d)$;
    
    \item if there is a predecessor $d'$ of $d$, then
      $\ell(t^-)=T_h(d')$; otherwise, set $\ell(t^-)=\emptyset$
      (or any other value);

    \item for every $(\rho,t')\in S$, let $d'$ be the $\rho$-successor
      of $d$ with $\mn{tp}_\Jmc(d')=t'$, and set $\ell(\rho',t')=T_h(d')$.

      
  \end{itemize}
  It is easy to verify that every $M(d)$ obtained
  in this way is indeed a mosaic. In particular, it follows from the
  definition of $\Jmc$ that
  $(t^-,\rho,t,S)=n_1^{\Imc_{\Tmc_1,t_1}}(d')$ for some $d'\in
  \Delta^{\Imc_{\Tmc_1,t_1}}$. Moreover, by the fact that $h$ is a
  homomorphism, Condition~\textbf{(M)} is satisfied.

  Let $\Mmc(\Jmc) = \{M(d) \mid d \in \Delta^\Jmc\}$.  It follows from
  the construction that all mosaics in $\Mmc(\Jmc)$ are good in
  $\Mmc(\Jmc)$; hence $\Mmc(\Jmc)\subseteq\Mmc_p$.  Finally, let $d_0$
  be the root of $\Imc_{\Tmc_2,t_2}$ and let $M:=M(h(d_0))$. Then
  $t_2\in \ell_{M}(t_{M})$ and thus Point~2 of Lemma~\ref{lem:mosaics}
  is satisfied.

  \par\medskip\noindent \textbf{{\boldmath $2 \Leftarrow 1$.}}~ Assume
  that $\Mmc_p$ contains a mosaic $M$ with $t_2 \in \ell_M(t_{M})$. We
  construct an interpretation $\Jmc$ by stitching together mosaics.
  Throughout the construction, we maintain a partial function $q :
  \Delta^{\Jmc} \to \Mmc_p$ that records for each domain element of
  $\Jmc$ the mosaic that gave rise to it. We also maintain the
  following invariant:
  \begin{equation}
    \text{If $q(d)=(t^-,\rho,t,S,\ell)$, then
    }n_1^\Jmc(d)=(t^-,\rho,t,S).\tag{$\ast$}
    \label{eq:inv-mosaics}
  \end{equation}
  We start with defining $\Jmc$ as the interpretation corresponding to
  the 1-neighborhood represented by $M$ (in the obvious way), and
  define $q(e_0)=M$, where $e_0$ is the domain element that
  corresponds to the center of 
  that 1-neighborhood.  By definition, the
  invariant~\eqref{eq:inv-mosaics} is satisfied.  Then extend \Jmc by
  doing the following exhaustively in a fair way: 
  Choose some $d\in \Jmc$ such that $q(d)$ is undefined, and
  \begin{itemize}


  \item[(b)] If $d$ has a predecessor $d'$ such that $q(d')=M'$ then, due
      to~\eqref{eq:inv-mosaics}, there is $(\rho,t)\in S_{M'}$ such
      that $d$ is the $\rho$-successor of $d'$ in \Jmc and $\mn{tp}_\Jmc(d)=t$. Let
      $N\in\Mmc_p$ be the mosaic that exists according to Condition~1
      of being good for $(\rho,t)\in S_{M'}$. Then extend $\Jmc$ such
      that $n_1^\Jmc(d)=(t^-_N,\rho_N,t_N,S_N)$ and set $q(d)=N$. 

    \item[(c)] If $d$ has a successor $d'$ such that $q(d')=M'$ then, due
      to~\eqref{eq:inv-mosaics}, we know that
      $t^-_{M'}=\mn{tp}_\Jmc(d)\neq \bot$. Let $N\in\Mmc_p$
      be the mosaic that exists according to Condition~2 of being
      good.  Then extend $\Jmc$ such that
      $n_1^\Jmc(d)=(t^-_N,\rho_N,t_N,S_N)$ and set $q(d)=N$. 

  \end{itemize}
  It is immediate from the construction that this
  preserves~\eqref{eq:inv-mosaics}, and that one of~(a) and~(b)
  always applies. Moreover, by construction, any interpretation $\Jmc$
  obtained in the limit of this process is an element of
  $\MOD_\omega(\Tmc_1,t_1)$.  

  It thus remains to construct an $\sig$-homomorphism $h$ that
  witnesses $\Imc_{\Tmc_2, t_2}|^{\mn{con}}_\sig \rightarrow_\sig
  \Jmc$. We construct $h$ step by step, essentially following the
  construction of $\Imc_{\Tmc_2, t_2}$, maintaining the invariant:
  \begin{equation}
    \text{If $h(d)$ is defined, then
      $\mn{tp}_{\Imc_{\Tmc_2,t_2}}(d)\in
      \ell_{q(h(d))}(t_{q(h(d))})$.}\tag{$\dagger$}
      \label{eq:inv-mosaics2}
  \end{equation}
  Let $d_0$ be the root of $\Imc_{\Tmc_2,t_2}$. We start with setting
  $h(d_0)=e_0$, where $e_0$ is as above. By the assumption that
  $t_2\in \ell_M(t_M)$, invariant~\eqref{eq:inv-mosaics2} is
  satisfied. Now, exhaustively apply the following step:

  Choose $d\in\Delta^{\Imc_{\Tmc_2,t_2}|^\mn{con}_\sig}$ such that
  $h(d)$ is not defined but $h(d')=e$ is defined for the predecessor
  $d'$ of $d$. Let $t=\mn{tp}_{\Imc_{\Tmc_2,t_2}}(d)$,
  $t'=\mn{tp}_{\Imc_{\Tmc_2,t_2}}(d')$, and $M'=q(d')$. By definition
  of $\Imc_{\Tmc_2,t_2}$, $t'\rightsquigarrow_r^{\Tmc_2} t$ for some
  $r\in\mn{rol}(\Tmc_2)$. Let $\sigma=\{s\mid \Tmc\models r\sqsubseteq
  s\}$. By Invariant~\eqref{eq:inv-mosaics2}, $t'\in
  \ell_{M'}(t_{M'})$. Thus, one of~(a) to~(c) in Condition~\textbf{(M)}
  applies.  Condition~(a) in impossible since
  $\sigma|_\sig\neq\emptyset$ as
  $d,d'\in\Delta^{\Imc_{\Tmc_2,t_2}|^{\mn{con}}_\sig}$.  In case
  of~(b), we extend $h$ by setting $h(d)$ to the predecessor of
  $h(d')$. In case of~(c), we extend $h$ by setting $h(d)$ to the
  according successor of $h(d')$.

  Note that $h$ extended like this satisfies the homomorphism
  conditions and preserves~\eqref{eq:inv-mosaics2} due to the
  conditions in~(b) and~(c).
\end{proof}
%
%
%
We now finish the proof of Theorem~\ref{cor:bounded_hmph_in_2EXPTIME}.
Due to Lemma~\ref{lem:bounded_hmph_via_can_omega}, we can decide
$\Imc_{\Tmc_2,t_2}|^{\mn{con}}_\sig \rightarrow^\mn{fin}_{\sig}
\Imc_{\Tmc_1,t_1}$ by checking whether there is a $\Jmc \in
\MOD_\omega(\Tmc_1,t_1)$ with $\Imc_{\Tmc_2,t_2}|^{\mn{con}}_\sig
\rightarrow_\sig \Jmc$.  By Lemma~\ref{lem:mosaics}, this can be done
by constructing the described sequence of mosaics
$\Mmc_0,\Mmc_1,\dots\Mmc_p$ and checking whether $\Mmc_p$ contains a
mosaic $M$ with $t_2 \in T_M$.

The time bound stated in
Theorem~\ref{cor:bounded_hmph_in_2EXPTIME} is a consequence of the
following observations. We can compute $\Mmc_0$ as follows. First,
enumerate all possible tuples $(t^-,\rho,t,S)$ with the number of
elements in $S$ bounded by $|\Tmc_1|$ (since the outdegree of
$\Imc_{\Tmc_1,t_1}$ is bounded by $|\Tmc_1|$). Then remove those
which do not correspond to $1$-neighborhoods in $\Imc_{\Tmc_1,t_1}$.
Note that this requires computing $\rightsquigarrow_r$ for every $r$,
which can be done in time $2^{p(|\Tmc_1|)}$, $p$ a polynomial.
Moreover, we have to perform reachability checks to verify that the
tuple $(t^-,\rho,t,S)$ indeed occurs as the $1$-neighborhood in
$\Imc_{\Tmc_1,t_1}$. Based on this set of all tuples, we consider all
possible combinations with labelings $\ell$ and remove those tuples
$(t^-,\rho,t,S,\ell)$ which do not satisfy~\textbf{(M)}. It is routine
to verify that there are at most $2^{2^{q(|\Tmc_2|\log|\Tmc_1|)}}$
many mosaics, for some polynomial $q$. It remains to note that
Conditions~1 and~2 of a mosaic being good can be checked in time
polynomial in the size of the current set of mosaics $\Mmc_i$ and that
the maximal number of iterations is $|\Mmc_0|$.

\subsection{Automata Construction}
\label{sec:automata_construction}

We now prove the upper bound on CQ entailment stated in
Theorem~\ref{thm:complexity} based on the characterization provided by
Theorem~\ref{thm:homs}, using the decision procedure asserted by
Theorem~\ref{cor:bounded_hmph_in_2EXPTIME} as a black box.  Our main
tool are alternating two-way tree automata with counting (\ata) which
extend alternating automata on unranked infinite trees as used for
example by~\citeauthor{GradelW99}~\citeyear{GradelW99} with the ability to count.

A \emph{tree} is a
non-empty (potentially infinite) set of words $T \subseteq (\Nbbm
\setminus 0)^*$ closed under prefixes.  
We assume that trees
are finitely branching, that is, for every $w \in T$, the set $\{ i >0
\mid w \cdot i \in T \}$ is finite.
For $w \in (\Nbbm \setminus 0)^*$, set $w \cdot
0 := w$. For $w=n_0n_1 \cdots n_k$, we set $w
\cdot -1 := n_0 \cdots n_{k-1}$.  For an alphabet $\Theta$, a
\emph{$\Theta$-labeled tree} is a pair $(T,L)$ with $T$ a tree and
$L:T \rightarrow \Theta$ a node labeling function.

A \emph{\ata} is a tuple $\Amf = (Q,\Theta,q_0,\delta,\Omega)$ where
$Q$ is a finite set of {\em states}, $\Theta$ is the {\em input
  alphabet}, $q_0\in Q$ is the {\em initial state}, $\delta$ is a {\em
  transition function}, and $\Omega:Q\to \mathbb{N}$ is a {\em
  priority function}.  The transition function $\delta$ maps every
state $q$ and input letter $a \in \Theta$ to a positive Boolean
formula $\delta(q,a)$ over the truth constants $\mn{true}$ and
$\mn{false}$ and \emph{transition atoms} of the form $q$,
$\Diamond^- q$, $\Box^- q$, $\Diamond_n q$ and $\Box_n q$.
A
transition $q$ expresses that a copy of \Amf is sent to the current
node in state $q$; $\Diamond^- q$ means that a copy is sent in
state $q$ to the predecessor node, which is required to exist; $\Box^- q$
means the same except that the predecessor node is not required to
exist; $\Diamond_n q$ means that a copy of $q$ is sent to $n$
successors and $\Box_n q$ means that a copy of $q$ is sent to
all but $n$ successors.
We use $\Diamond q$ and $\Box q$ to abbreviate $\Diamond_{1} q$ and $\Box_0
q$, respectively.
%
%
The semantics of \ata is given in terms of runs as usual, details are
in Appendix~\ref{appx:automata}. We use $L(\Amf)$ to denote the set of
$\Theta$-labeled trees accepted by $\Amf$.
It is not hard to show that \ata are closed under
intersection and that the intersection automaton can be 
constructed in polynomial time \cite<using techniques from, e.g.,>{tata2007}.
The \emph{emptiness
  problem} for \ata means to decide, given a \ata \Amf, whether
$L(\Amf)=\emptyset$. We assume here that the numbers in transitions of
the form $\Diamond_n q$ and $\Box_n q$ are encoded in
unary.\footnote{For binary encoding, Theorem~\ref{thm:nonemptiness}
below includes additionally an exponential dependence in the
\emph{size} of the largest number appearing in $\delta$.}
The following result is obtained via reduction to the
emptiness problem of the more standard alternating parity tree
automata on ranked trees \cite{Vardi98}, see
  Appendix~\ref{appx:emptiness} for details.
\begin{restatable}{theorem}{nonemptiness}\label{thm:nonemptiness}
  The emptiness problem for \ata is in \ExpTime. More precisely, it
  can be solved in time single exponential in
  the number of states and the maximal priority, and polynomial in all
  other components.
\end{restatable}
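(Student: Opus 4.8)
The plan is to prove Theorem~\ref{thm:nonemptiness} by reduction to the emptiness problem for alternating parity tree automata on ranked trees, whose emptiness is known to be solvable in time single-exponential in the number of states and exponential in the maximal priority~\cite{Vardi98}. The two non-standard features of \ata that require care are the two-way moves ($\Diamond^- q$, $\Box^- q$) and the counting transitions ($\Diamond_n q$, $\Box_n q$) over unranked, finitely-branching trees. The strategy is to eliminate both features in turn, controlling the blow-up in the number of states and in the maximal priority so that the final parity automaton yields the claimed bound.

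First I would handle the unranked-to-ranked conversion together with the two-way moves. The standard device is to encode an unranked tree by a binary ``first-child / next-sibling'' tree and to simulate upward moves by a subset-construction that guesses, at each node, the set of states in which the automaton intends to travel back into the parent; correctness is enforced by a consistency check between a node and its children, in the spirit of the two-way simulation of~\citeA{Vardi98}. This contributes only a polynomial increase in the number of states and leaves the priorities essentially unchanged. Second I would eliminate the counting transitions. The key observation is that, because we assume \emph{unary} encoding of the numbers $n$ in $\Diamond_n q$ and $\Box_n q$, a transition $\Diamond_n q$ (``send $q$ to at least $n$ successors'') and $\Box_n q$ (``send $q$ to all but at most $n$ successors'') can be rewritten, over the first-child/next-sibling encoding, into an automaton that walks along the sibling chain and maintains a counter ranging over $\{0,1,\dots,n\}$ in its state. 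Since $n \le |\delta|$ and the encoding is unary, each counter contributes only polynomially many states, so the total number of states remains single-exponentially bounded and, crucially, the maximal priority is unaffected by counting.

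Concretely, the key steps, in order, are: (1) fix the first-child/next-sibling encoding of finitely-branching unranked trees as binary trees; (2) translate each counting atom into a sibling-chain gadget using a bounded counter, valid because of the unary encoding; (3) apply the two-way elimination of~\citeA{Vardi98} to remove $\Diamond^-$ and $\Box^-$, obtaining a one-way alternating parity automaton on ranked trees whose state set is polynomial in the original and whose priorities coincide with $\Omega$; and (4) invoke the known emptiness bound for one-way alternating parity tree automata, which is single-exponential in the number of states and the maximal priority and polynomial elsewhere. Composing the polynomial state blow-ups of steps~(1)--(3) with the final emptiness test gives exactly the stated bound: single-exponential in the number of states and the maximal priority, polynomial in everything else.

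The main obstacle I anticipate is step~(2): making the counting-to-sibling-chain translation correct in the \emph{alternating, two-way} setting, where a single node can be visited by many copies simultaneously and copies may re-enter from the parent. The difficulty is that $\Box_n q$ is a genuinely universal-with-exceptions quantifier over successors, so the gadget must guess which (at most $n$) siblings are the exceptions and verify this globally along the chain, all while the two-way simulation is running. I would address this by performing the counting elimination \emph{before} the two-way elimination, so that the sibling gadget only ever moves downward and rightward, and by threading the exception-counter through the state in a way that interacts transparently with the subset-construction used for upward moves; the unary encoding is precisely what keeps this counter small. The remaining arguments---closure of the gadget under the run semantics and the bookkeeping on priorities---are routine and are deferred to Appendix~\ref{appx:emptiness}.
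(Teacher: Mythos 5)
Your overall strategy (reduce to a known automata-emptiness result while controlling the blow-up) is the same as the paper's, but your concrete pipeline has a genuine gap in the two-way elimination step, and it is not cosmetic. You claim that the elimination of $\Diamond^-$ and $\Box^-$ ``in the spirit of \citeA{Vardi98}'' yields a one-way \emph{alternating} parity automaton with only a \emph{polynomial} increase in the number of states. Neither half of this claim holds. The subset construction you describe (guessing, at each node, the set of states in which copies travel back into the parent) tracks subsets of $Q$ in the state space and is therefore exponential in $|Q|$; and the translation of \citeA{Vardi98} removes two-wayness by producing a one-way \emph{nondeterministic} parity automaton of exponential size, not an alternating one of polynomial size. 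No polynomial translation from two-way alternating to one-way alternating parity tree automata is known, and you do not supply one. With the correct accounting your plan breaks: an exponential-state automaton followed by your final step (emptiness of one-way alternating parity automata, itself exponential in the number of states) gives a doubly exponential bound, not the claimed single exponential. Moreover, the part you defer as ``routine bookkeeping on priorities'' is precisely the hard core of any two-way elimination: infinite paths that repeatedly descend into a subtree and return must have their priorities summarized consistently, which is exactly what forces the exponential blow-up in Vardi's construction.

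The repair is to not eliminate two-wayness or alternation at all, which is what the paper does. It first proves, via strategy trees, memoryless determinacy of parity games, and a pruning argument (Lemma~\ref{lem:bounded-outdegree}), that a nonempty \ata accepts some tree of outdegree at most $n\cdot C$, where $n$ is the number of states and $C$ the largest counting bound; it then pads inputs to $k$-ary trees for $k=n\cdot C$, rewrites $\Diamond_n q$ and $\Box_n q$ as succinctly represented Boolean formulas over the $k$ explicit directions, and feeds the resulting automaton---which has only a constant number of extra states---to the known \ExpTime emptiness test for two-way alternating parity automata on $k$-ary trees (Theorem~\ref{thm:vardi-emptiness}), which is exponential only in states and priorities. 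Your first-child/next-sibling encoding with unary counter gadgets is in fact an attractive alternative to the bounded-outdegree lemma, since it absorbs unbounded finite branching for free; but it should likewise end by invoking the two-way alternating emptiness result directly on binary trees (adding a small gadget that forces sibling chains to be finite), rather than by an elimination step that does not exist in the form you assume.
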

To prove the \TwoExpTime upper bound for CQ entailment stated in
Theorem~\ref{thm:complexity}, it thus suffices to show that given
\HornALCHIF TBoxes $\Tmc_1$ and $\Tmc_2$ and signatures \Abf and \Qbf,
one can construct in time $2^{2^{p(|\Tmc_2| \mn{log}|\Tmc_1|)}}$ a
\ata \Amf with single exponentially many states and both maximal
priority and maximal occurring number one such that $L(\Amf) \neq
\emptyset$ iff $\Tmc_1
\not\models_{\sigmaabox,\sigmaquery}^{\textup{CQ}} \Tmc_2$. This is
what we do in the following.

The desired {\ata} runs over $\Theta$-labeled trees with
$\Theta=2^{\Theta_0}\times 2^{\Theta_1}\times 2^{\Theta_2}$ where
\begin{itemize}
\item 
$\Theta_0 = \sigmaabox\cup \{r^-\mid r\in\sigmaabox\}$;
\item $\Theta_i = \mn{sig}(\Tmc_i)\cup \{r^-\mid r\in \mn{sig}(\Tmc_i)\}$
for $i=1,2$.
\end{itemize}
 For a $\Theta$-labeled tree $(T,L)$, we use $L_i$, $i\in \{0,1,2\}$ to
refer to the $i$-th component of $L$, that is,
$L(n)=(L_0(n),L_1(n),L_2(n))$, for all $n\in T$.
The component $L_0$ represents a (possibly infinite) ABox
$$\Amc_L = \{ A(n) \mid A\in L_0(n)\} \cup \{ r(n\cdot -1,n)\mid n\neq
\varepsilon, r\in L_0(n)\},
$$
where $r^-(a,b)$ is identified with $r(b,a)$. Note that even when
$\Amc_L$ is finite it is not necessarily a tree-shaped ABox as it
might have multi-edges and be disconnected. Components $L_1$ and $L_2$
represent interpretations $\Imc_{L,1}=(T,\cdot^{\Imc_{L,1}})$ and
$\Imc_{L,2}=(\mn{ind}(\Amc_L),\cdot^{\Imc_{L,2}})$, where for $i\in \{1,2\}$:
\begin{align*}
  A^{\Imc_{L,i}} &\!\!\;=\!\!\; \{n \mid A\in L_i(n)\}\\
  r^{\Imc_{L,i}} &\!\!\;=\!\!\; \{ (n,n\!\!\;\cdot\!\!\; -1) \mid
  r^-\!\in\! L_i(n)\} \cup \{(n\!\!\;\cdot\!\!\;-1,n)\mid r\!\in\!
  L_i(n)\}
\end{align*}
%

Now, let $\Tmc_1$ and $\Tmc_2$ be \HornALCHIF TBoxes and let \Abf, \Qbf, and 
\Sbf be signatures. The claimed \ata \Amf is constructed as the
intersection of the four {\ata}s $\Amf_1,\Amf_2,\Amf_3,\Amf_4$
provided by the following lemma.
\begin{lemma}
  \label{lem:automata}
  There are {\ata}s $\Amf_1,\Amf_2,\Amf_3,\Amf_4$ such that:
  \begin{itemize}

    \item[--] $\Amf_1$ accepts $(T,L)$ iff $\Amc_L$ is finite,
      tree-shaped, and $\varepsilon \in \mn{ind}(\Amc_L)$;

    \item[--] $\Amf_2$ accepts $(T,L)$ iff $\Imc_{L,1}$ is a model of
      $\Amc_L$ and $\Tmc_1$;

    \item[--] $\Amf_3$ accepts $(T,L)$ iff $\Amc_L$ is consistent with
      $\Tmc_2$, and $\Imc_{L,2}$ is
      $\Imc_{\Tmc_2,\Amc_L}$ restricted to $\mn{ind}(\Amc_L)$;
      
    \item[--] $\Amf_4$ accepts $(T,L)$ iff Conditions~(1) and (2) from
      Theorem~\ref{thm:homs} are not both satisfied, when
      $\Imc_{\Tmc_2,\Amc_L}$ is replaced with $\Imc_{L,2}$.

  \end{itemize}
  The number of states of $\Amf_1$ and $\Amf_2$ is polynomial in
  $|\Tmc_1|$ (and independent of $\Tmc_2$); the number of states of
  $\Amf_3$ is polynomial in $|\Tmc_2|$ (and independent of $\Tmc_1$),
  and the number of states of $\Amf_4,$ is exponential in
  $|\Tmc_2|$ (and independent of $\Tmc_1$). All automata can be
  constructed in time $2^{2^{p(|\Tmc_2| \mn{log}|\Tmc_1|)}}$, $p$ a
  polynomial and have maximal priority of one.
\end{lemma}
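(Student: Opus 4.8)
The plan is to construct each $\Amf_i$ directly and take their intersection, which is possible in polynomial time; all four will use only the two priorities $0$ and $1$, so the product has maximal priority one and single-exponentially many states, and Theorem~\ref{thm:nonemptiness} then yields the \TwoExpTime bound. Throughout, I read the relevant structures off the three label components: $L_0$ encodes $\Amc_L$, $L_1$ encodes the candidate model $\Imc_1:=\Imc_{L,1}$ of $\Tmc_1$, and $L_2$ is intended to encode the restriction of $\Imc_{\Tmc_2,\Amc_L}$ to $\mn{ind}(\Amc_L)$. Since the TBoxes are in normal form, every semantic requirement decomposes into statements about $1$-neighborhoods, which is exactly what the transition atoms $q$, $\Diamond^-q$, $\Box^-q$, $\Diamond_n q$, $\Box_n q$ inspect.

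For $\Amf_1$, tree-shapedness of $\Amc_L$ is a conjunction of local checks: at most one role label per edge (no multi-edges), absence of self-loops, that $\varepsilon$ carries ABox content, and that the individuals form a connected subtree rooted at $\varepsilon$ in which each non-root individual is linked to its parent by an ABox role. Finiteness of $\Amc_L$ is the only non-safety requirement and is enforced by a co-B\"uchi condition that forbids any branch from carrying ABox symbols infinitely often (priority $1$ inside the ABox region, priority $0$ once the branch has permanently left it). For $\Amf_2$, I translate each normal-form CI of $\Tmc_1$ into a local transition: $\top\sqsubseteq A$ and $A\sqsubseteq\bot$ become membership constraints on $L_1$, $A_1\sqcap A_2\sqsubseteq B$ a Boolean condition at the node, $A\sqsubseteq\exists r.B$ a $\Diamond/\Diamond^-$ transition, and $A\sqsubseteq\forall r.B$ a $\Box/\Box^-$ transition; role inclusions are checked edge-wise and each $\mn{func}(r)$ by the counting transition $\Box_1$, which permits at most one outgoing $r$-edge across parent and children. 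That $\Imc_{L,1}\models\Amc_L$ is a local comparison of $L_0$ with $L_1$. Both automata are safety automata with polynomially many states in $|\Tmc_1|$ and independent of $\Tmc_2$.

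The automaton $\Amf_3$ must certify that each $L_2(n)$ equals the universal-model type $\{A \mid \Tmc_2,\Amc_L\models A(n)\}$ and that the induced edges are correct. Closure of $L_2$ under the propagation rules induced by $\Tmc_2$ and by functional roles (if $A\in L_2(n)$, $A\sqsubseteq\forall r.B\in\Tmc_2$, and an $r$-edge leaves $n$, then $B$ holds at the target, and likewise for merges forced by functional roles) is a local safety condition, and clash-freeness, together with local detection of functionality-induced merges on the tree-shaped ABox, witnesses consistency of $\Amc_L$ with $\Tmc_2$. The delicate point is \emph{minimality}: the universal-model type is a least fixpoint, so I must rule out spurious concept names supported only by a cyclic dependency. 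I do this with a well-foundedness (co-B\"uchi) acceptance condition requiring every asserted membership $A\in L_2(n)$ to admit a finite derivation, so that no branch postpones its justification forever. This keeps the state set polynomial in $|\Tmc_2|$ and independent of $\Tmc_1$.

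The automaton $\Amf_4$ is the crux, and is where the mosaic procedure enters. It must accept exactly when Condition~(1) \emph{or} Condition~(2) of Theorem~\ref{thm:homs} fails, reading $\Imc_{L,2}$ for the individual part of $\Imc_{\Tmc_2,\Amc_L}$ and treating the anonymous $\sigmaquery$-subtrees through their generating $\Tmc_2$-types, using the regularity of universal models together with $\rightsquigarrow^{\Tmc_2}_r$ and $\rightsquigarrow^{\Tmc_2,\Amc}_r$. I first precompute, via Theorem~\ref{cor:bounded_hmph_in_2EXPTIME}, for every pair $(t_1,t_2)$ of a $\Tmc_1$-type and a $\Tmc_2$-type the bit $\Imc_{\Tmc_2,t_2}|^{\mn{con}}_{\sigmaquery}\rightarrow^{\mn{fin}}_{\sigmaquery}\Imc_{\Tmc_1,t_1}$; this table drives exactly Condition~(2b), costs double-exponential time but contributes no states, and its $\Tmc_1$-dependence stays in the transition function since $t_1=L_1(a)\cap\mn{N_C}$ is read from the input. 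The automaton then nondeterministically chooses either to refute Condition~(1) or to guess a $\sigmaquery$-subtree, identified by the $\Tmc_2$-type $t_2$ of its root, and refute both (2a) and (2b) for it. The refutations of (1) and (2a) are both non-existence of a $\sigmaquery$-homomorphism into $\Imc_{L,1}$; since \ata are closed under complement by dualization, I build the existential homomorphism-guessing automaton—which navigates the regular source via $L_2$ and the $\rightsquigarrow$-relations and the target $\Imc_{L,1}$ along $T$—and dualize it. The refutation of (2b) sends universal copies to all individuals $a$ and consults the table at $(L_1(a)\cap\mn{N_C},\,t_2)$, requiring a negative entry everywhere. The main obstacle is precisely this separation of the unbounded part (Conditions~(1) and (2a), handled by alternation and regularity) from the bounded part (Condition~(2b), which the automaton cannot verify and which the mosaic table offloads), while keeping the anonymous subtrees purely type-indexed so that the state count of $\Amf_4$ remains exponential in $|\Tmc_2|$ and independent of $\Tmc_1$.
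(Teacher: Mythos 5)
Your proposal is correct and follows essentially the same route as the paper's proof: the same four-automaton decomposition, the mosaic table from Theorem~\ref{cor:bounded_hmph_in_2EXPTIME} precomputed and baked into the transitions of $\Amf_4$ (so its $\Tmc_1$-dependence adds no states), type-indexed tracking of the anonymous $\sigmaquery$-subtrees via $R_\sigmaquery$ and $\rightsquigarrow^{\Tmc_2}_r$, derivation-tree well-foundedness (priority~1) to pin $L_2$ to the least fixpoint in $\Amf_3$, and odd-priority well-founded refutation of unbounded homomorphism existence for Conditions~(1)/(2a). The only cosmetic difference is that you obtain the refutation automaton by dualizing a positive homomorphism-guessing safety automaton, whereas the paper writes the dual transitions (the states $\overline{q}^t_2$, $\overline{q}^{\rho,t}_2$, $\overline{q}^{\rho,t,\downarrow}_2$) directly with the priority assignment made explicit---the same dualization device the paper itself applies to the $\overline{q}_A$ states of $\Amf_3$.
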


It can be verified that indeed $L(\Amf)\neq\emptyset$ iff $\Tmc_1
\not\models_{\sigmaabox,\sigmaquery}^{\textup{CQ}} \Tmc_2$.
The rest of this section is devoted to proving
Lemma~\ref{lem:automata}.  

\subsubsection*{Automata $\Amf_1$ and $\Amf_2$.}

The construction of the automaton $\Amf_1$ is straightforward and left
to the reader. Also $\Amf_2$ is easy to construct. It checks that
$\Imc_{L,1}$ is a model of $\Amc_L$ by synchronizing the $L_0$ and
$L_1$ components of the input tree and that all statements in $\Tmc_1$
are satisfied by $\Imc_{L,1}$ by imposing constraints on the $L_1$
component. The latter is particularly simple since $\Tmc_1$ is in
normal form. For the sake of completeness and as a warm up, we present
the details.  Define $\Amf_2=(Q_2,\Theta,q_0,\delta_2,\Omega_2)$ where
\begin{align*}
  Q_2 ={} & \{q_0,q_\Amc\}\cup \{q_\alpha\mid \alpha\in
  \Tmc_1\}\cup\{q_\rho,\overline q_{\rho}\mid \rho\in\Theta_1\}\cup {}
  \\
  & \{q_{r,B},q_{r,B}^\downarrow, \overline q_{r,B}, \overline
  q_{r,B}^\downarrow \mid \exists r.B \text{ occurs in } \Tmc_1\},
\end{align*}
and $\Omega_2$ assigns $0$ to all states. 
Here, the transition function
$\delta_2$ is given as follows. For $\sigma=(L_0,L_1,L_2)$, set
$$ \begin{array}{rcll}
  \delta_2(q_0,\sigma) & = & \Box q_0 \wedge q_\Amc \wedge
  \bigwedge_{\alpha\in\Tmc_1}q_\alpha \\[1mm]
  \delta_2(q_\Amc,\sigma)&=& \displaystyle\bigwedge_{\rho\in L_0}
  q_\rho \\
  \delta_2(q_{\mn{func}(r)},\sigma) & = & (q_{r^-}\wedge \Box\overline
  q_r)\vee(\overline q_{r^-}\wedge \Box_1 \overline q_r)\\[1mm]
  %
  \delta_2(q_{r\sqsubseteq s},\sigma) & = & \overline q_r\vee
  q_s\\[1mm]
  %
  \delta_2(q_{A_1\sqcap A_2\sqsubseteq B},\sigma) &=& \overline
  q_{A_1}\vee \overline q_{ A_2 }\vee q_B\\[1mm]
  %
  \delta_2(q_{A\sqsubseteq \bot},\sigma) &=& \overline q_A\\[1mm]
  \delta_2(q_{\top\sqsubseteq A},\sigma) &=& q_A\\[1mm]
  %
  \delta_2(q_{A\sqsubseteq \exists r.B},\sigma) &=& \overline q_A\vee
  q_{r,B}\\[1mm]
  %
  \delta_2(q_{A\sqsubseteq \forall r.B},\sigma) &=& 
  q_{B} \vee \overline q_{r^-,A}\\[1mm]
  %
  \delta_2(q_{r,B},\sigma) &=&  \Diamond q_{r,B}^\downarrow \vee
  (q_{r^-}\wedge \Diamond^- q_B) \\[1mm]
  %
  \delta_2(\overline q_{r,B}, \sigma) &=& \Box \overline
  q_{r,B}^\downarrow \wedge (\overline q_{r^-} \vee {\Box^-} \overline
  q_B) \\[1mm]
  \delta_2(q_{r,B}^\downarrow, \sigma) &=& q_r\wedge q_B \\[1mm]
  %
  %
  \delta_2(\overline q_{r,B}^\downarrow, \sigma) &=&\overline q_r\vee
  \overline q_B.
  %
\end{array}$$
We further set for all $\rho\in\Theta_1$: $$\begin{array}{rcll}
  \delta_2(q_\rho, \sigma) &=&  \left\{\begin{array}{ll} \mn{true} &
    \text{if } \rho\in L_1 \\ \mn{false} & \text{if }\rho\notin L_1
  \end{array} \right. \\[4mm]
  \delta_2(\overline q_\rho, \sigma) &=&  \left\{\begin{array}{ll}
    \mn{true} & \text{if } \rho\notin L_1 \\ \mn{false} & \text{if
    $\rho\in L_1$.} \end{array} \right. 
  %
\end{array}$$

\subsubsection*{Automaton $\Amf_4$.}

We next consider the automaton $\Amf_4$ as it is the
most interesting and crucial ingredient to the construction of \Amf.
Recall that $\Amf_4$ has to make sure that Conditions~(1) and (2) from
Theorem~\ref{thm:homs} are not both satisfied. It achieves this by
verifying that its input $(T,L)$ is such that there is an
$n\in\mn{ind}(\Amc_L)$ for which one of the following conditions
holds,
where $\Imc_{\Tmc_2,L_2(n) \cap \NC}$ is the universal model of the 
$\Tmc_2$-type $L_2(n) \cap \NC$ and $\Tmc_2$:
\begin{enumerate}

  \item there is a $\sigmaquery$-role $r$ and an $n'\in\mn{ind}(\Amc_L)$ such that
    $(n,n')\in r^{\Imc_{L,2}}$, but $(n,n')\notin r^{\Imc_{L,1}}$;

  \item there is no $\Qbf$-homomorphism from $\Imc_{\Tmc_2,L_2(n) \cap
    \NC}|^{\mn{con}}_\Qbf$ to 
    $\Imc_{L,1}$ that maps the root of
    $\Imc_{\Tmc_2,L_2(n) \cap \NC}$ to~$n$;

  \item there is a $\sigmaquery$-subtree $\Imc$ of
    $\Imc_{\Tmc_2,L_2(n) \cap \NC}$ satisfying the following two conditions.
	
    \begin{enumerate}

      \item $\Imc\not\rightarrow_\sigmaquery \Imc_{L,1}$;

      \item
	$\Imc\not\rightarrow_\sigmaquery^{\mn{fin}}\Imc_{\Tmc_1,\mn{tp}_{\Imc_{L,1}}(m)}$,
	for all $m$ with $L_0(m)\neq \emptyset$.

    \end{enumerate}

\end{enumerate}
For Condition~3, note that every \Qbf-subtree $\Imc$ of
$\Imc_{\Tmc_2,L_2(n) \cap \NC}$ is of the form
$\Imc_{\Tmc_2,t'}|^{\mn{con}}_\Qbf$ with $\Imc_{\Tmc_2,t'}$ the
universal model of the $\Tmc_2$-type $t'$ satisfied
at the root of $\Imc$ and $\Tmc_2$. For a $\Tmc_2$-type $t$, we use
$R_\sigmaquery(t)$ to denote the set of all types that are realized at
the root of a $\sigmaquery$-subtree in the universal model
$\Imc_{\Tmc_2,t}$ of $t$ and $\Tmc_2$. Then, the \Qbf-subtrees $\Imc$
in Condition~3b are exactly the interpretations
$\Imc_{\Tmc_2,t}|^{\mn{con}}_\Qbf$, $t \in R_\sigmaquery(L_2(n) \cap
\NC)$. Also note that Condition~3b is exactly the question addressed
by Theorem~\ref{cor:bounded_hmph_in_2EXPTIME}.

The essence of Conditions~2 and~3a is to ensure that there is no
\Qbf-homomorphism from $\Imc_{\Tmc_2,t}|^{\mn{con}}_\Qbf$ to
$\Imc_{L,1}$ that maps the root of $\Imc_{\Tmc_2,t}$ to $n$, for some
$\Tmc_2$-type~$t$ and $n \in \mn{ind}(\Amc_L)$ (for Condition~3, one
has to consider all $n \in \mn{ind}(\Amc_L)$). The automaton verifies
this by making sure that one of the following is true:
\begin{itemize}

\item[(i)] there is a concept name from \Qbf that is true at the root of
  $\Imc_{\Tmc_2,t}$ but not at $n$ in $\Imc_{L,1}$;

\item[(ii)] there is a $\Tmc_2$-type $t'$ and a set of
  $\mn{sig}(\Tmc_2)$-roles $\rho$ with
  $t\rightsquigarrow^{\Tmc_2}_{\rho}t'$ such that there is no
  \Qbf-homomorphism from $\Imc_{\Tmc_2,t'}|^{\mn{con}}_\Qbf$ to
  $\Imc_{L,1}$ that maps the root to a $\rho$-neighbor of $n$.



\end{itemize} 
Note that this is sufficient since the subtrees of $\Imc_{\Tmc_2,t}$
rooted at $\rho$-successors of the root are exactly the models
$\Imc_{\Tmc_2,t'}$,
$t\rightsquigarrow^{\Tmc_2}_{\rho}t'$. Condition~(ii) is recursive in
the sense that we are faced with the same conditions that we started
with, only for a different $\Tmc_2$-type and element of
$\mn{ind}(\Amc)$. We need to ensure that the recursion terminates,
which is achieved by assigning appropriate priorities to states.

%
%
The above is implemented as follows. Define
$\Amf_4=(Q_4,\Theta,q_0,\delta_4,\Omega_4)$ where
\begin{align*}
  Q_4={} & \{q_0,q_1\} \cup
  \{ \overline{q}^t_2, \overline{q}^t_3,\overline{q}^t_{3b}\mid t\in
    \mn{tp}(\Tmc_2)\}\cup{}\\
    & \{ \overline{q}^{\rho,t}_2,\overline{q}^{\rho,t,\downarrow}_2 \mid t\in
    \mn{tp}(\Tmc_2), \rho\text{ set of $\mn{sig}(\Tmc_2)$-roles}\},
\end{align*}
and $\Omega_4$ assigns zero to all states, except for states of the
form $q_0$ and  $q^t_2$, $t\in \mn{tp}(\Tmc_2)$, to which it assigns one.  
%
%
For a $\Tmc_2$-type $t$ and set of roles $\rho$, we use
$t|_\sigmaquery$ and $\rho|_\sigmaquery$ to denote the restriction of
$t$ and $\rho$ to (the elements that only use symbols from) $\sigmaquery$.
For each
$\sigma=(L_0,L_1,L_2)$, $\delta_4$ contains the following
transitions:
$$ \begin{array}[h]{@{}r@{~}c@{~}ll@{}}
  \delta_4(q_0,\sigma) &=& \left\{\begin{array}{ll} \Diamond q_0 \vee
q_1\vee \overline{q}_2^{L_2 \cap \NC}\vee
  \bigvee_{t'\in\mn{R}_\sigmaquery(L_2 \cap \NC)} \overline{q}^{t'}_3
    & \text{if }L_0\neq \emptyset \\  \mn{false} & \text{otherwise}
  \end{array} \right. \\[4mm]
  %
  \delta_4(q_1,\sigma) &=& \left\{\begin{array}{ll@{}} \mn{true} &
    \text{if $L_2\setminus L_1$ contains a $\sigmaquery$-role } \\
    \mn{false} & \text{otherwise} \end{array}\right. \\[4mm]
  \delta_4(\overline{q}^t_2,\sigma) &=& \left\{\begin{array}{ll} \mn{true} &
    \text{if }t|_\sigmaquery\not\subseteq L_1 \\
    \bigvee_{t\rightsquigarrow^{\Tmc_2}_{\rho}t'} \overline{q}^{\rho,t'}_2 &
      \text{otherwise} \end{array} \right. \\[4mm]
    \delta_4( \overline{q}^{\rho,t}_2, \sigma ) &=& \left\{\begin{array}{ll} 
      \Box \overline{q}^{\rho,t,\downarrow}_2 & \text{if }\rho^-|_\sigmaquery\not\subseteq L_1
      \\  
      \Box \overline{q}^{\rho,t,\downarrow}_2 \wedge \Diamond^- \overline{q}^t_2 & \text{if
    $\rho^-|_\sigmaquery\subseteq L_1$} \end{array} \right.  \\[5mm] 
  \delta_4( \overline{q}^{\rho,t,\downarrow}_2, \sigma) &=&
  \left\{\begin{array}{ll} \mn{true} & \text{if
    }\rho|_\sigmaquery\not\subseteq L_1 \\ \overline{q}^t_2 & \text{if }\rho|_\sigmaquery \subseteq L_1 \end{array} \right.  \\[4mm] 
  \delta_4(\overline{q}^t_3,\sigma) &=& \Box \overline{q}^t_3 \wedge \Box^- \overline{q}^t_3 \wedge
  \overline{q}^t_2\wedge \overline{q}^t_{3b} \\[1mm]
  \delta_4(\overline{q}^t_{3b},\sigma) &=& \left\{\begin{array}{ll@{}} 
    \mn{true} & \text{if } L_0 = \emptyset\text{ or
    }\Imc_{\Tmc_2,t}|^{\mn{con}}_\sigmaquery\not\rightarrow^\mn{fin}_\sigmaquery
    \Imc_{\Tmc_1,L_1\cap \mn{N_C}}\\ \mn{false} & \text{otherwise} 
  \end{array} \right.
\end{array}$$
Note that states with index $i \in \{1,2,3,3b\}$ are used to enforce
Condition~$i$ from above. Also note that bounded homomorphisms are not
handled directly by the automaton. The condition
$\Imc_{\Tmc_2,t}|^{\mn{con}}_\sigmaquery\not\rightarrow^\mn{fin}_\sigmaquery
\Imc_{\Tmc_1,L_1\cap \mn{N_C}}$ from the last transition is only
needed during the construction of the automaton, and it can be decided
in the required time due to
Theorem~\ref{cor:bounded_hmph_in_2EXPTIME}. The sets
$R_\sigmaquery(t)$
can clearly be computed in single exponential time.

\subsubsection*{Automaton $\Amf_3$.}

Recall that $\Amf_3$ has to accept an input $(T,L)$ iff $\Amc_L$ is
consistent with $\Tmc_2$, and $\Imc_{L,2}$ is $\Imc_{\Tmc_2,\Amc_L}$
restricted to $\mn{ind}(\Amc_L)$. The construction requires a few
preliminaries, in particular a characterization of whether $\Tmc,\Amc
\models A(a)$ in terms of derivation trees. Similar yet slightly
different characterizations have been used before~\cite<e.g.
in>{BLW13}.

\smallskip Let $\Tmc$ be a \HornALCHIF TBox and \Amc an ABox.  A
\emph{derivation tree} for an assertion $A_0(a_0)$ in \Amc w.r.t.\
\Tmc with $A_0 \in \NC$ is a finite $\mn{ind}(\Amc) \times
\NC$-labeled tree $(T,V)$ that satisfies the following conditions:
\begin{enumerate}

\item $V(\varepsilon)=(a_0,A_0)$;

\item if $V(x)=(a,A)$ and neither $A(a) \notin \Amc$ nor $\top
  \sqsubseteq A \in \Tmc$,
  then one of the following holds:
  \begin{enumerate}[(i)]

  \item $x$ has successors $y_1,\dots,y_k$, $k \geq 1$ with
    $V(y_i)=(a,A_i)$ for $1 \leq i \leq k$ and $\Tmc \models A_1
    \sqcap \cdots \sqcap A_k \sqsubseteq A$;

  \item $x$ has a single successor $y$ with $V(y)=(b,B)$ and there
    is an $B \sqsubseteq \forall r.A \in \Tmc$ and an $s(b,a) \in
    \Amc$ such that
    $\Tmc \models s \sqsubseteq r$;

  \item $x$ has a single successor $y$ with $V(y)=(b,B)$ and there
    is a $B \sqsubseteq \exists r . A \in \Tmc$ such that $r(b,a) \in
    \Amc$ and $\mn{func}(r) \in \Tmc$. 

\end{enumerate}

\end{enumerate}
Item~(i) of Point~2 above requires $\Tmc \models A_1\sqcap
\dotsb\sqcap A_n \sqsubseteq A$ instead of $A_1 \sqcap A_2 \sqsubseteq
A \in \Tmc$ to `shortcut' parts of the universal model that are
generated by existential restrictions. In fact, elements of the
universal
model generated by existential restrictions do never appear in a
derivation tree.
%
The main properties of derivation trees are summarized in the
following lemma, proved in Appendix~\ref{appx:derivlem}.
\begin{restatable}{lemma}{lemderivationtrees} \label{lem:derivationtrees}
  Let \Tmc be a \HornALCHIF TBox and \Amc. Then
  \begin{enumerate}

  \item if \Amc is consistent with \Tmc, then $\Tmc,\Amc \models
    A_0(a_0)$ iff there is a derivation tree for $A_0(a_0)$ in \Amc
    w.r.t. \Tmc, for all assertions $A_0(a_0)$;

  \item \Amc is consistent with~\Tmc iff the following are
    satisfied:
  \begin{enumerate}

    \item the ABox $\Amc_a = \{ A(a) \mid \Tmc,\Amc \models A(a) \}$ is
    consistent with \Tmc, for all $a \in \mn{ind}(\Amc)$;

    \item the relation $\{(a,b) \mid s(a,b) \in \Amc\}$ is a partial
      function whenever $\mn{func}(s)\in\Tmc$.

  \end{enumerate}
\end{enumerate}
\end{restatable}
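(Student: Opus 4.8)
The plan is to prove the two parts in turn, treating the ``only if'' direction of Part~1 as the routine soundness base and deferring the real work to the ``if'' direction, whose model construction will also serve Part~2.

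\emph{Part~1, soundness.} Assuming a derivation tree $(T,V)$ for $A_0(a_0)$ exists, I would show by induction on the height of $(T,V)$ that every model $\Imc$ of $\Tmc$ and $\Amc$ satisfies $a \in A^\Imc$ whenever $V(x)=(a,A)$. The leaves are covered by $A(a) \in \Amc$ (so $a \in A^\Imc$ since $\Imc \models \Amc$) or $\top \sqsubseteq A \in \Tmc$. The three inductive cases mirror rules~(i)--(iii): case~(i) applies $\Imc \models \Tmc$ to $\Tmc \models A_1 \sqcap \cdots \sqcap A_k \sqsubseteq A$; case~(ii) uses $(b,a) \in s^\Imc \subseteq r^\Imc$ together with $b \in B^\Imc$ and $B \sqsubseteq \forall r.A$; and case~(iii) uses $\mn{func}(r)$ to identify the unique $r$-successor of $b$ demanded by $B \sqsubseteq \exists r.A$ with the given $a$. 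Evaluating at the root yields $a_0 \in A_0^\Imc$ for all models, i.e.\ $\Tmc,\Amc \models A_0(a_0)$; note this needs no consistency assumption.

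\emph{Part~1, completeness, and the construction reused in Part~2.} For the converse I set $t_a = \{A \mid \text{there is a derivation tree for } A(a)\}$ for each $a \in \mn{ind}(\Amc)$ and build an interpretation $\Imc$ over $\mn{ind}(\Amc)$ by putting $a \in A^\Imc$ iff $A \in t_a$ and $(a,b) \in r^\Imc$ iff $s(a,b) \in \Amc$ with $\Tmc \models s \sqsubseteq r$, then attaching below every element the anonymous forest exactly as in the definition of the universal model (driven by $\rightsquigarrow^\Tmc_r$), letting an existing ABox successor serve a functional existential in place of a fresh one. The goal is $\Imc \models \Amc$ and $\Imc \models \Tmc$: then $\Tmc,\Amc \models A_0(a_0)$ forces $a_0 \in A_0^\Imc$, i.e.\ $A_0 \in t_{a_0}$, yielding the desired tree. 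First, rule~(i) closes $\{t_a\}$ under $\Tmc \models \bigsqcap t_a \sqsubseteq A$, so each $t_a$ is a genuine $\Tmc$-type. Checking the CIs then reduces to closure of $\{t_a\}$ under the three rules: conjunction inclusions and, crucially, every inference that runs inside the anonymous part or flows from an anonymous element back to its generating individual are absorbed by rule~(i) via the full subsumption test; backward $\forall$-propagation along an ABox edge is exactly rule~(ii); and a functional existential whose successor is already present as an ABox edge is exactly rule~(iii), where the standing assumption that functional roles have no subroles guarantees $r(a,b) \in \Amc$ is the only way such an edge arises. Consistency of $\Amc$ together with soundness rules out any $A \in t_a$ with $\Tmc \models A \sqsubseteq \bot$ and any pair of distinct ABox successors for a functional role, so $\Imc$ is well defined; the anonymous forests come from the well-defined $\Imc_{\Tmc,t_a}$ and hence avoid $\bot$ (cf.\ Lemma~\ref{lem:universal_model}).

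\emph{Part~2.} For ``$\Rightarrow$'' I would fix any model $\Imc$ of $\Tmc$ and $\Amc$: the standard name assumption makes two distinct ABox $s$-successors of one individual violate $\mn{func}(s)$, giving~(b), while $\Imc \models \Tmc$ and $a \in A^\Imc$ for every $A(a) \in \Amc_a$ give~(a). For ``$\Leftarrow$'' I reuse the construction above, driven by the syntactic types $t_a$. By soundness, $A \in t_a$ implies $A(a) \in \Amc_a$, so $\{A(a) \mid A \in t_a\}$ is a subset of $\Amc_a$; hypothesis~(a) (each $\Amc_a$ consistent with $\Tmc$) therefore makes each $t_a$ a $\Tmc$-consistent type whose universal model avoids $\bot$, and hypothesis~(b) makes the ABox role edges respect functionality. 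These are exactly the two obstructions removed in the completeness argument, so attaching the anonymous forests never forces $\bot$ and $\Imc$ is a genuine model of $\Tmc$ and $\Amc$, witnessing consistency.

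\emph{Main obstacle.} The delicate step is the completeness verification that $\Imc$ models every CI of $\Tmc$: one must argue that no concept-name membership at an ABox individual, forced by the interplay of inverse roles, universal restrictions, and functionality, escapes rules~(i)--(iii). The load-bearing observation is that rule~(i) quantifies over arbitrary entailed conjunction subsumptions $\Tmc \models A_1 \sqcap \cdots \sqcap A_k \sqsubseteq A$ rather than single normal-form axioms, which lets it swallow all propagation confined to the anonymous forest (or returning from it to its root individual), leaving only genuinely ABox-structural propagation for rules~(ii) and~(iii); the accompanying functionality bookkeeping---never adding a fresh successor for a functional role that already has an ABox successor, while letting rule~(iii) transfer the payload---is the fussiest part.
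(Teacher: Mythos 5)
Your proposal is correct and follows essentially the same route as the paper's proof: soundness by induction over the derivation tree, and for the converse (reused for Part~2 ``$\Leftarrow$'') a model built over $\mn{ind}(\Amc)$ whose concept memberships are exactly the derivation-tree-derivable ones, with universal-model forests attached below the individuals under the same functionality bookkeeping (no fresh successor for a functional role with an existing ABox successor, rule~(iii) transferring the payload), and with the entailment-based rule~(i) absorbing all propagation internal to, or returning from, the anonymous part. The only difference is presentational: the paper first computes the closure of $\Amc$ under a three-rule chase and then invokes its one-to-one correspondence with the derivation-tree conditions, whereas you work with the types $t_a$ of derivable concept names directly; the substance of the construction and verification is the same.
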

We now construct the automaton $\Amf_3$.  It ensures that when a
$\Theta$-labeled tree $(T,L)$ is accepted, then for all
$n\in\mn{ind}(\Amc_L)$, concept names $A$, and roles $r$:
\begin{enumerate}[(i)]

  \item $A\in L_2(n)$ iff there is a derivation tree for $A(n)$ in
    $\Amc_L$;

  \item for all $n\neq\varepsilon$, $r\in L_2(n)$ iff there is a role
    $s$ such that $s(n\cdot-1,n)\in \Amc_L$ and $\Tmc_2\models s\sqsubseteq r$.

\end{enumerate}
By Point~1 of Lemma~\ref{lem:derivationtrees}, these conditions ensure
that the interpretation $\Imc_{L,2}$ is the universal model of
$\Tmc_2$ and $\Amc$ restricted to $\mn{ind}(\Amc_L)$, in case \Amc is
consistent with $\Tmc_2$. Given this, we can verify consistency of
\Amc with $\Tmc_2$ based on Point~2 of
Lemma~\ref{lem:derivationtrees}, that is, we ensure the following:
\begin{itemize}

  \item[(iii)] the set $L_2(n)\cap\mn{N_C}$ is consistent with
    $\Tmc_2$, for all $n\in\mn{ind}(\Amc)$;

  \item[(iv)] the relation $\{(a,b) \mid s(a,b) \in \Amc_L\}$ is a partial
      function whenever $\mn{func}(s)\in\Tmc_2$.

\end{itemize}
We take $\Amf_3=(Q_3,\Theta,q_0,\delta_3,\Omega_3)$ where
\begin{align*}
  Q_3={} & \{q_0,q_{0}',q_1\} \cup \{q_A,\overline q_A\mid
    A\in\Theta_2\cap \mn{N_C}\}\cup{} \\ & \{q_r, \overline q_r, q_r^\Amc,
      \overline q_r^\Amc, q^f_r, q_{\neg r}\mid r \in
      \Theta_2\setminus\mn{N_C}\}\cup{}\\[0.5mm] & \{q_{r,B},\overline
      q_{r,B}\mid r \in\Theta_2\cap\mn{N_R}, B\in \Theta_2\cap
      \mn{N_C}\}
\end{align*}
and $\Omega_3$ assigns zero to all states, except for states of the
form $q_A$, to which it assigns $1$.  

For Condition~(i), we use states $q_A$ for the ``$\Leftarrow$'' part, and states
$\overline q_A$ for the ``$\Rightarrow$'' part. 
Intuitively, a state $q_A$ assigned to some node $n$ is an obligation
to verify the existence of a derivation tree for $A(n)$. Conversely,
$\overline q_A$ is the obligation that there is {\em no} such
derivation tree.
Conditions~(ii) to~(iv) are rather straightforward to verify. 
The automaton starts with the following transitions for every
$\sigma=(L_0,L_1,L_2)$:
$$ \begin{array}[h]{@{}r@{~}c@{~}ll@{}} \delta_3(q_0,\sigma) &=&
  \begin{cases} 
    \mn{true} & \text{if $L_0=\emptyset$} \\
    q_0' & \text{if $L_0\neq \emptyset$} \end{cases} \\[5mm]
  \delta_3(q_0',\sigma) &=& \begin{cases} \mn{false} & \text{if
      $L_2\cap \mn{N_C}$ inconsistent with $\Tmc_2$} \\
    \Box q_0\wedge\Box q_{1}\wedge\displaystyle \bigwedge_{A\in
      L_2\cap\mn{N_C}} \!\!q_A\wedge \bigwedge_{A\in
	(\Theta_2\cap\mn{N_C})\setminus L_2} \!\!\overline q_A &
	\text{otherwise} \end{cases}\\[7mm]
  \delta_3(q_{1},\sigma)& = & \begin{cases} \mn{true} &\text{if
    $L_0=\emptyset$} \\ \displaystyle\bigwedge_{\mn{func}(r)\in
  \Tmc_2} q^f_{r}~\wedge \bigwedge_{r\in L_2\cap \mn{N_R}} q_r \wedge
  \bigwedge_{r\in (\Theta_2\cap\mn{N_R})\setminus L_2} \overline q_r &
  \text{otherwise} \end{cases} \\[7mm]
  \delta_3(q^f_r,\sigma) &=& \left\{\begin{array}{ll} \Box q_{\neg r}
    & \text{if }r^-\in L_0 \\ \Box_1 q_{\neg r} & \text{if }r^-\notin
    L_0\end{array} \right.  \\
  \delta_3(q_{\neg r},\sigma) &=& \left\{\begin{array}{ll} \mn{true} &
    \text{if }r\notin L_0 \\ \mn{false} & \text{otherwise}\end{array}
  \right. \\
  \delta_3(q_r,\sigma) &=& \left\{\begin{array}{ll} \mn{true} &
    \text{if there is an } s\notin L_0 \text{ with } \Tmc_2 \models s
    \sqsubseteq r\\ \mn{false} & \text{otherwise}\end{array} \right.
  \\
  \delta_3(\overline{q}_r,\sigma) &=& \left\{\begin{array}{ll}
    \mn{true} & \text{if there is no } s\notin L_0 \text{ with }
    \Tmc_2 \models s \sqsubseteq r\\ \mn{false} &
    \text{otherwise}\end{array} \right.
\end{array}$$
For states $q_A$, we implement the conditions of derivation trees as
transitions.  Finiteness of the derivation tree is ensured by the
priority assigned to these states. The relevant transitions are as
follows:
$$\begin{array}[h]{@{}r@{~}c@{~}ll@{}}
  \delta_3(q_A,\sigma) &=& \mn{false} & \text{if $L_0=\emptyset$}
  \\[1mm]
  \delta_3(q_A,\sigma) &=& \mn{true} & \text{if $A\in L_0$} \\[1mm]
  \delta_3(q_A,\sigma) &=& 
      \bigvee_{\Tmc_2 \models A_1\sqcap \dots \sqcap A_n \sqsubseteq
      A} \big(q_{A_1}\wedge\dots\wedge q_{A_n}\big) \vee{} & \text{if
      $L_0\neq \emptyset$ and $A\notin L_0$} \\[1mm]
      %
      %
      && \bigvee_{B\sqsubseteq \forall r.A\in \Tmc_2,
      \Tmc_2\models s\sqsubseteq r} \big((q_{s}^\Amc\wedge
      \Diamond^- q_B) \vee \Diamond q_{s^-,B}\big) \vee{}\\[1mm]
       && \bigvee_{\substack{B\sqsubseteq \exists r.A \in
	\Tmc_2,\mn{func}(r) \in \Tmc_2}} \big((q_s^\Amc\wedge
	\Diamond^- q_B)\vee \Diamond q_{s^-,B}\big)  %
      \\[1mm]
    \delta_3(q^\Amc_r,\sigma) &=& \left\{\begin{array} {ll} \mn{true} & \text{if $r\in L_0$} \\
      \mn{false} & \text{otherwise} \end{array}\right. \\[1mm]
    \delta_3(q_{s,B},\sigma) &=& q_s^\Amc\wedge q_B
\end{array}$$
%
%
%
%
%
%
%
%
The transitions for $\overline q_A$ are obtained by dualizing the ones
for $q_A$.  More precisely, for every $q$ of the form $q_A$,
$q_r^\Amc$, and $q_{s,B}$, we define $\delta_3(\overline
q,\sigma)=\overline{\delta_3(q,\sigma)}$, where $\overline{\varphi}$
is obtained from $\varphi$ by exchanging $\wedge$ with $\vee$,
$\Diamond$ with $\Box$, $\Diamond^-$ with $\Box^-$, and $\mn{true}$
with $\mn{false}$, and replacing every state $p$ with $\overline p$.


  
     




\section{Tree-Shaped CQs and Deductive Conservative Extensions}
\label{sec:lower_bounds}

We consider the \onetCQ versions of entailment, inseparability, and
conservative extensions in \HornALCHIF (and fragments) as well as
their deductive companions in \ELHIFbot (and fragments). The results
are summed up be the following theorem.
\begin{theorem}
  \label{thm:complexity2}
  The following problems are \TwoExpTime-complete:
  \begin{enumerate}

  \item In \HornALCHIF and any of its fragments that contains \ELI or
    Horn-\ALC: $(\sigmaabox,\sigmaquery)$-\onetCQ entailment,
    $(\sigmaabox,\sigmaquery)$-\onetCQ inseparability, and
    $(\sigmaabox,\sigmaquery)$-\onetCQ conservative extensions; this holds
    even when $\sigmaabox=\sigmaquery$;

\item In \ELHIFbot and any of its fragments that contains \ELI: $\sig$-deductive
  entailment, $\sig$-deductive inseparability, and $\sig$-deductive
  conservative extensions.

 \end{enumerate}
 Moreover, all these problems can be 
  solved in time $2^{2^{p(|\Tmc_2| \mn{log}|\Tmc_1|)}}$, where $p$ is a 
  polynomial. 
\end{theorem}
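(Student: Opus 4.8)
The plan is to treat upper and lower bounds separately and, within the upper bounds, to reduce Part~2 to Part~1. For Part~1 the decisive advantage over CQ entailment is Theorem~\ref{thm:homs_tCQ}: $(\sigmaabox,\sigmaquery)$-\onetCQ entailment is characterized by the existence of \emph{unbounded} $\sigmaquery$-simulations $\Imc_{\Tmc_2,\Amc}\preceq_\sigmaquery\Imc_1$, so no bounded variant, and hence no mosaic precomputation à la Theorem~\ref{cor:bounded_hmph_in_2EXPTIME}, is needed. I would therefore re-use the machinery of Section~\ref{sec:automata_construction} almost verbatim: build a \ata over the same alphabet $\Theta$, keep $\Amf_1,\Amf_2,\Amf_3$ from Lemma~\ref{lem:automata} unchanged (so that $L_0$ codes a finite tree-shaped ABox $\Amc_L$ with root $\varepsilon$, $\Imc_{L,1}$ is a weakly tree-shaped model of $\Amc_L$ and $\Tmc_1$, and $\Imc_{L,2}$ is $\Imc_{\Tmc_2,\Amc_L}$ restricted to $\mn{ind}(\Amc_L)$), and replace only $\Amf_4$ by an automaton $\Amf_4'$ accepting iff $\Imc_{\Tmc_2,\Amc_L}\not\preceq_\sigmaquery\Imc_{L,1}$, i.e.\ some individual $a$ with $L_0(a)\neq\emptyset$ admits a winning Spoiler strategy in the simulation game from $(a,a)$.

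For $\Amf_4'$ the plan is to encode that simulation game directly. Spoiler, aiming to refute the simulation, wins from a pair iff finitely many of his moves force either a $\sigmaquery$-concept name present on the $\Imc_{\Tmc_2,\Amc_L}$-side but absent on the $\Imc_{L,1}$-side, or a $\sigmaquery$-role successor on the $\Imc_{\Tmc_2,\Amc_L}$-side for which Duplicator has no response. The automaton guesses Spoiler moves by existential branching over a $\sigmaquery$-role $r$ together with, in the anonymous part, a successor type $t'$ with $t\rightsquigarrow^{\Tmc_2}_\rho t'$ and $r\in\rho$, exactly as $\Amf_4$ navigated types; and it universally branches over all Duplicator responses in $\Imc_{L,1}$ (the predecessor and matching children, reached via $\Diamond^-/\Box^-$ and $\Box$), with a stuck Duplicator handled automatically by an empty universal branch. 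Because Spoiler must win in finitely many steps, the game states carry odd (least-fixpoint) priority so that every infinite run is rejecting, keeping the maximal priority at one. The states are indexed essentially by $\Tmc_2$-types, so there are exponentially many in $|\Tmc_2|$ and only polynomially many depending on $\Tmc_1$; intersecting with $\Amf_1,\Amf_2,\Amf_3$ and applying Theorem~\ref{thm:nonemptiness} yields decidability in time $2^{2^{p(|\Tmc_2|\mn{log}|\Tmc_1|)}}$, with only single-exponential dependence on $|\Tmc_1|$. The upper bound for \onetCQ entailment then transfers to \onetCQ inseparability (two entailment checks) and to \onetCQ conservative extensions (entailment under the given inclusion promise).

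For Part~2 I would invoke Proposition~\ref{lem:deductive_versus_query_entailment}: $\sig$-deductive entailment is decidable in polynomial time with oracles for $(\sig,\sig)$-\onetCQ entailment and \onetCQ evaluation. The first oracle is answered within the Part~1 bound, the second is in \ExpTime by the argument used for CQ evaluation \cite{EGOS08}, and the auxiliary TBoxes $\Tmc_i^A$ from Section~\ref{sec:query_entailment} are polynomial in the $\Tmc_i$; since only polynomially many oracle calls are made, the bound $2^{2^{p(|\Tmc_2|\mn{log}|\Tmc_1|)}}$ is preserved. Deductive inseparability and deductive conservative extensions reduce to deductive entailment as in the query case.

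For the lower bounds it suffices, exactly as for Theorem~\ref{thm:complexity}, to prove \TwoExpTime-hardness of the \emph{conservative-extension} variants, since these are the special case under the promise $\Tmc_1\subseteq\Tmc_2$ and hence imply hardness of entailment and of inseparability. I would reduce the \TwoExpTime-hard problem of deciding the existence of a homomorphism between universal models of ABoxes w.r.t.\ an \ELI TBox \cite{GuJuSa-IJCAI18} to $\sig$-deductive conservative extensions in \ELI: the two ABoxes/universal models are encoded into TBoxes $\Tmc_1\subseteq\Tmc_2$ together with a signature $\sig$ chosen so that, via Lemma~\ref{lem:deductive_versus_query_entailment_aux} and the simulation/homomorphism correspondence underlying Theorem~\ref{thm:homs_tCQ}, $\Tmc_1\models^{\textup{TBox}}_\sig\Tmc_2$ holds exactly when the homomorphism fails to exist. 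The same instance certifies hardness of $(\sig,\sig)$-\onetCQ conservative extensions through Lemma~\ref{lem:deductive_versus_query_entailment_aux}, giving Part~1 hardness even when $\sigmaabox=\sigmaquery$. The main obstacle is precisely this reduction: designing a gadget that faithfully captures the homomorphism question by separating \ELIbot concept inclusions while simultaneously respecting the inclusion promise $\Tmc_1\subseteq\Tmc_2$ and the single-signature constraint, and verifying correctness by matching the directedness of the universal models against the unbounded simulations of Theorem~\ref{thm:homs_tCQ}.
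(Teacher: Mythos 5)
Your upper bounds are correct and follow the paper's own route almost exactly. For Part~1, the paper likewise keeps $\Amf_1,\Amf_2,\Amf_3$ from Lemma~\ref{lem:automata} and replaces $\Amf_4$ by an automaton accepting iff $\Imc_{L,2}\not\preceq_\sigmaquery\Imc_{L,1}$: it drops the states $\overline{q}^t_3$, $\overline{q}^t_{3b}$ and all $\overline{q}^{\rho,t}_2,\overline{q}^{\rho,t,\downarrow}_2$ with $|\rho|>1$, and lets $\overline{q}^t_2$ branch existentially over single roles $r\in\rho$ with $t\rightsquigarrow^{\Tmc_2}_{\rho}t'$, with the $\Box$/$\Diamond^-$ transitions ranging over Duplicator's possible responses and priority one on the recurring states making infinite runs rejecting --- precisely your game encoding, with the same state count and the same appeal to Theorem~\ref{thm:nonemptiness}; no mosaics are needed. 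Part~2's upper bound is obtained exactly as you describe, from Part~1, Proposition~\ref{lem:deductive_versus_query_entailment}, and \ExpTime{} \onetCQ evaluation.

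The genuine gap is in the lower bounds: you never construct the reduction, and what you defer as ``the main obstacle'' is in fact the entire content of the paper's lower-bound proof. The paper's gadget is small and concrete: given an instance $\Tmc,\sig,A,B$ of the homomorphism problem of \cite{GuJuSa-IJCAI18}, it sets $\Tmc_1=\Tmc\cup\{Y\sqsubseteq \exists s.B\}$, $\Tmc_2=\Tmc_1\cup\{Y\sqsubseteq \exists s.A\}$, and $\sig'=\sig\cup\{Y,s\}$ with $Y,s$ fresh (so $\Tmc_1\subseteq\Tmc_2$ by construction), and proves via Theorem~\ref{thm:homs_tCQ} the claim that $\Imc_{\Tmc,\{A(a)\}}\to_\sig\Imc_{\Tmc,\{B(a)\}}$ iff $\Imc_{\Tmc_2,\Amc}\preceq_{\sig'}\Imc_{\Tmc_1,\Amc}$ for all tree-shaped $\sig'$-ABoxes $\Amc$; the ``$\Leftarrow$'' direction instantiates $\Amc=\{Y(a)\}$ and reads a homomorphism off the simulation using tree-shapedness of the two universal models, and the strengthening to $\sig=\mn{sig}(\Tmc_1)$ requires an additional inspection of the hardness construction of \cite{GuJuSa-IJCAI18} (their TBox splits as a disjoint union satisfying a property $(\ast)$ on universal models). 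The paper then transfers hardness to deductive conservative extensions via Lemma~\ref{lem:deductive_versus_query_entailment_aux}, noting $\models^\bot_\sig$ and RI-entailment are trivial in \ELI{} --- the direction opposite to yours, which is equally fine. Note also that you state the correspondence inverted (entailment holds iff the homomorphism \emph{fails}); this is harmless only because \TwoExpTime{} is closed under complement, but it is a further sign the gadget was not actually worked out.

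A second, smaller defect: your reduction lives in \ELI, so it cannot yield hardness for fragments that contain Horn-\ALC{} but not \ELI{} (in particular Horn-\ALC{} itself, which lacks inverse roles), yet Part~1 asserts completeness for all such fragments. The paper covers Part~1's lower bounds by citing \cite{DBLP:journals/ai/BotoevaLRWZ19}, whose constructions include the Horn-\ALC{} case; you should retain that citation rather than derive Part~1 hardness solely from your \ELI{} construction.
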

It is worth remembering that in \EL, which is the fragment of \ELI that does not
admit inverse roles, the reasoning problems mentioned in Point~2 of
Theorem~\ref{thm:complexity2} are \ExpTime-complete~\cite{LW10}.  We
find it remarkable that adding inverse roles causes a jump by one
exponential. This has so far only been observed for reasoning problems
that involve (non-tree shaped) conjunctive queries
\cite<e.g.\ in>{BHLW16}, but this is not the case for the problems in Point~2.

For Point~1, the lower bounds have been established
by~\citeauthor{DBLP:journals/ai/BotoevaLRWZ19}~\citeyear{DBLP:journals/ai/BotoevaLRWZ19}. The upper
bound is proved almost exactly as in
Section~\ref{sec:automata_construction}. By
Theorem~\ref{thm:homs_tCQ}, the only difference is that the automaton
$\Amf_4$ now needs to accept a $\Theta$-labeled tree $(T,L)$ iff
$\Imc_{L,2}\not\preceq_\Sbf \Imc_{L,1}$. To achieve this, $\Amf_4$ 
ensures that for some $n\in\mn{ind}(\Amc_L)$, one of the following
conditionds hold:
\begin{enumerate}[1.']

  \item[1.] there is a $\sigmaquery$-role $r$ and an $n'\in\mn{ind}(\Amc_L)$ such that
    $(n,n')\in r^{\Imc_{L,2}}$, but $(n,n')\notin r^{\Imc_{L,1}}$;

  \item[2.$'$] $\Imc_{\Tmc_2,L_2(n) \cap \NC} \not\preceq_\sigmaquery
    \Imc_{L,1}$ via a \Qbf-simulation that relates the root of
    $\Imc_{\Tmc_2,L_2(n) \cap \NC}$ to $n$.  \end{enumerate}
This can be achieved by modifying the construction of $\Amf_4$ from
Section~\ref{sec:automata_construction} as follows: drop all states
$\overline{q}^t_3$, $\overline{q}^t_{3b}$, and
$\overline{q}^{\rho,t}_2,\overline{q}^{\rho,t,\downarrow}_2$ with
$|\rho|>1$ (and all according transitions) and replace the transitions
for $q_0$ and $\overline{q}^t_2$ as follows:
$$ \begin{array}[h]{r@{~}c@{~}ll}
  \delta_4(q_0,\sigma) &=& \left\{\begin{array}{ll} \Diamond q_0 \vee
    q_1\vee \overline{q}_2^{L_2 \cap \NC} & \text{if }L_0\neq
    \emptyset \\  \mn{false} & \text{otherwise} \end{array} \right.
  \\[5mm]
  \delta_4(\overline{q}_2^t,\sigma) &=& \left\{\begin{array}{ll}
    \mn{true} & \text{if }t|_\sigmaquery\not\subseteq L_1 \\
    \displaystyle\bigvee_{t\rightsquigarrow^{\Tmc_2}_{\rho}t'}\,\bigvee_{r\in\rho}
    \overline{q}_2^{\{r\},t'} & \text{otherwise.} \end{array} \right. 
\end{array}$$
It thus remains to prove Point~2 of Theorem~\ref{thm:complexity2}.
The upper bound is a consequence of Point~1,
Proposition~\ref{lem:deductive_versus_query_entailment}, and the fact
that \onetCQ evaluation in \ELHIFbot can be done in exponential
time~\cite{EGOS08}. We
turn to the lower bounds.
\begin{theorem}
  In \ELI, deciding $(\Sbf,\Sbf)$-\onetCQ conservative extensions,
  \Sbf-deductive conservative extensions, deductive $\sig$-entailment,
  and deductive $\sig$-inseparability are \TwoExpTime-hard. This even
  holds when $\Sbf = \mn{sig}(\Tmc_1)$.
\end{theorem}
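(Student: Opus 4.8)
The plan is to reduce from the \TwoExpTime-hard homomorphism problem between universal models of \ELI TBoxes studied by \citeA{GuJuSa-IJCAI18}: given an \ELI TBox \Tmc and two pointed ABoxes $(\Amc_1,a_1)$ and $(\Amc_2,a_2)$, decide whether $\Imc_{\Tmc,\Amc_2}\to\Imc_{\Tmc,\Amc_1}$ via a homomorphism mapping $a_2$ to $a_1$. A single reduction will handle all four listed problems at once. I would build instances with $\Tmc_1\subseteq\Tmc_2$, with $\sig=\mn{sig}(\Tmc_1)$, containing no occurrence of $\bot$ and no role inclusions, so that both $\Tmc_1\models_\sig^\bot\Tmc_2$ and $\Tmc_1\models_{\sig,\sig}^{\textup{RI}}\Tmc_2$ hold vacuously. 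On such instances Lemma~\ref{lem:deductive_versus_query_entailment_aux} collapses $\sig$-deductive entailment to $(\sig,\sig)$-\onetCQ entailment, and since $\Tmc_1\subseteq\Tmc_2$ makes the converse direction trivial, all of \onetCQ conservative extension, \onetCQ inseparability, $\sig$-deductive entailment, $\sig$-deductive inseparability, and $\sig$-deductive conservative extension coincide with the single condition $\Tmc_1\models_{\sig,\sig}^{\textup{\onetCQ}}\Tmc_2$. It therefore suffices to make this one condition encode the source problem.

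By Theorem~\ref{thm:homs_tCQ}, $\Tmc_1\models_{\sig,\sig}^{\textup{\onetCQ}}\Tmc_2$ fails iff some tree-shaped $\sig$-ABox \Amc yields $\Imc_{\Tmc_2,\Amc}\not\preceq_\sig\Imc_{\Tmc_1,\Amc}$. I would realize the source instance by a \emph{two-zone gadget} triggered from a single individual. Fix a $\sig$-concept $\mn{Start}$ and take the designated ABox $\Amc=\{\mn{Start}(a)\}$. Let $\Tmc_1$ be a relativized copy of \Tmc that, triggered by $\mn{Start}$ and propagated by a guard $G_1\in\sig$, grows from $a$ an isomorphic copy of $\Imc_{\Tmc,\Amc_1}$ over the source signature. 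Let $\Tmc_2=\Tmc_1\cup\Delta$, where $\Delta$ is a second relativized copy of \Tmc, triggered from $\mn{Start}$ via a \emph{fresh} guard $G_2\notin\sig$ and propagated only inside its own zone, growing a copy of $\Imc_{\Tmc,\Amc_2}$ over the \emph{same} source symbols. The guards $G_1,G_2$ prevent the two zones from interacting in the chase, and placing $G_2$ outside $\sig=\mn{sig}(\Tmc_1)$ makes zone~$2$ invisible \emph{as a zone} to $\sig$-simulations while its source-symbol content stays visible. Hence $\Imc_{\Tmc_1,\Amc}$ contains only the zone-$1$ copy of $\Imc_{\Tmc,\Amc_1}$, whereas $\Imc_{\Tmc_2,\Amc}$ additionally contains the zone-$2$ copy of $\Imc_{\Tmc,\Amc_2}$; a root-respecting $\sig$-simulation of the latter model into the former exists iff the zone-$2$ copy $\sig$-simulates into the zone-$1$ copy, which by the multi-edge-freeness arranged below coincides with the existence of the source homomorphism.

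For correctness, the ``no''-direction is immediate: if the source homomorphism does not exist, then already the genuine $\sig$-ABox $\Amc=\{\mn{Start}(a)\}$ witnesses $\Imc_{\Tmc_2,\Amc}\not\preceq_\sig\Imc_{\Tmc_1,\Amc}$, so $\Tmc_2$ is not a conservative extension of $\Tmc_1$. The ``yes''-direction is the substantial one: assuming the source homomorphism exists, I must show $\Imc_{\Tmc_2,\Amc}\preceq_\sig\Imc_{\Tmc_1,\Amc}$ for \emph{every} tree-shaped $\sig$-ABox \Amc. The zone-$1$ material of $\Imc_{\Tmc_2,\Amc}$ simulates into $\Imc_{\Tmc_1,\Amc}$ by the identity (the two models agree wherever $\Delta$ does not fire), while each zone-$2$ fragment attached by $\Delta$ must be sent into the co-located zone-$1$ material by a stitched-in copy of the source simulation. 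I would align polarities so that a source ``yes'' corresponds to $\Tmc_2$ being a conservative extension of $\Tmc_1$; since \TwoExpTime\ is closed under complement, the choice between the existence and non-existence formulation of the source is immaterial.

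The hard part is twofold. First, the source is phrased with homomorphisms while Theorem~\ref{thm:homs_tCQ} gives a \emph{simulation} characterization, and the two genuinely differ on weakly tree-shaped interpretations because of multi-edges. I would remove this gap by a polynomial preprocessing that renders the source universal models $\Imc_{\Tmc,\Amc_i}$ genuinely tree-shaped (multi-edge-free) and by engineering the gadget so that $\Imc_{\Tmc_2,\Amc}$ inherits this property; for a tree-shaped, multi-edge-free source a $\sig$-simulation can be read off top-down as a $\sig$-homomorphism and conversely, so the two notions coincide and the source homomorphism problem transfers verbatim. Second, and more delicate, is establishing the ``yes''-direction uniformly over \emph{all} tree-shaped $\sig$-ABoxes rather than just the designated $\Amc$: one must verify that the guards $G_1,G_2$ really block cross-zone firing of $\Delta$ on arbitrary inputs (recall $G_2\notin\sig$ is never asserted, only derived from $\sig$-triggers in $\Tmc_2$), that $\Delta$ never forces a $\sig$-concept or $\sig$-role beyond the behaviour already present under $\Tmc_1$, and that the two zones have $\sig$-compatible root types (which may require normalizing the source so that $a_1$ and $a_2$ carry the same source type). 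Proving that no adversarial $\sig$-ABox can exploit $\Delta$ to produce a separation other than the one encoding the source instance is the technical heart of the lower bound.
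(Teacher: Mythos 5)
There is a genuine gap, and it sits at the very foundation of your reduction: the source problem. The \TwoExpTime-hardness established by Guti\'errez-Basulto et al.\ (and invoked by the paper) is for the \emph{signature-restricted} homomorphism problem $\Imc_{\Tmc,\{A(a)\}} \to_\sig \Imc_{\Tmc,\{B(a)\}}$, where the symbols of $\Tmc$ \emph{outside} $\sig$ are essential to the encoding. Your formulation drops the signature, and the full-signature problem is not hard: since $\Imc_{\Tmc,\{A(a)\}}$ coincides with $\Imc_{\Tmc,t_A}$ for the type $t_A$ of $a$, Lemma~\ref{lem:universal_model}~(\ref{it:univ_model_regularity_supertypes}) shows that a full-signature homomorphism exists iff $t_A \subseteq t_B$, a type-containment test decidable in \ExpTime. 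So the problem you reduce from cannot serve as a \TwoExpTime-hard source. If you instead keep the source signature, your blanket choice $\Sbf=\mn{sig}(\Tmc_1)$ backfires: your $\Tmc_1$ is a relativized copy of \emph{all} of $\Tmc$, hence contains the hidden symbols, and an $\Sbf$-simulation is then required to preserve them, destroying the correspondence with $\to_\sig$. The paper resolves exactly this tension by exploiting the specific shape of the hardness construction: there $\Tmc$ splits as a disjoint union $\Tmc_1\cup\Tmc_2$ satisfying property $(\ast)$, the hidden symbols live only in $\Tmc_2$, and the signature is $\mn{sig}(\Tmc_1)$; one then takes $\Tmc_1'=\Tmc_1\cup\{Y\sqsubseteq\exists s.B\}$ (which contains no hidden symbols, so $\Sbf=\mn{sig}(\Tmc_1')$ is legitimate) and $\Tmc_2'=\Tmc_1\cup\Tmc_2\cup\{Y\sqsubseteq\exists s.B,\,Y\sqsubseteq\exists s.A\}$. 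Your proposal has no mechanism playing this role.

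The gadget itself is also problematic, in a way you flag but do not solve. Rooting both relativized copies at the same ABox individual $a$ means that, under $\Tmc_2$, both guards hold at $a$, so the zone-1 axioms fire on zone-2-derived facts at $a$ (and the zone-2 axioms on zone-1-derived facts); $\Imc_{\Tmc_2,\Amc}$ is therefore \emph{not} $\Imc_{\Tmc_1,\Amc}$ plus a disjoint copy of the second universal model, and over adversarial tree-shaped $\Sbf$-ABoxes the guards give no protection at all, since $G_1$ and $\mn{Start}$ are in $\Sbf$ and can be asserted anywhere. Declaring this "the technical heart" without an argument leaves the yes-direction unproved. The paper's construction makes all of this machinery unnecessary: $\Tmc$ is left untouched, and both comparison models are hung off a \emph{fresh role} $s$ below a \emph{fresh concept} $Y$, with $s$ \emph{inside} the signature ($\Tmc_1=\Tmc\cup\{Y\sqsubseteq\exists s.B\}$, $\Tmc_2=\Tmc_1\cup\{Y\sqsubseteq\exists s.A\}$, $\sig'=\sig\cup\{Y,s\}$). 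Then for every ABox, $\Imc_{\Tmc_2,\Amc}$ is literally $\Imc_{\Tmc_1,\Amc}$ plus disjoint anonymous copies of $\Imc_{\Tmc,\{A(a)\}}$ below the $Y$-individuals, the simulation must push each such copy into a sibling copy of $\Imc_{\Tmc,\{B(a)\}}$ (no cross-contamination, no root-type issue, since the copies' roots are anonymous rather than shared individuals), and the converse direction follows by instantiating $\Amc=\{Y(a)\}$ and reading a homomorphism off the simulation, which works because these models are genuinely tree-shaped. A final minor point: your generalization of the source to arbitrary pointed ABoxes $(\Amc_i,a_i)$ cannot be realized by your gadget in any case, because an \ELI TBox generates only tree-shaped anonymous structure from a single individual.
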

%
%
\begin{proof} We start with \onetCQ conservative extensions (not
  assuming $\Sbf = \mn{sig}(\Tmc_1)$) and reduce from the following
  problem which has been shown to be \TwoExpTime-hard~\cite[implicit in Lemma~8]{GuJuSa-IJCAI18}.

\begin{description}

  \item[\textbf{Input}:] \ELI TBox \Tmc, signature $\sig$, concept names $A,B$ 

  \item[\textbf{Question}:] $\Imc_{\Tmc,\{A(a)\}}\to_\sig \Imc_{\Tmc,\{B(a)\}}$?

\end{description}
For the reduction, let $\sig,\Tmc,A,B$ be an input. Define two
TBoxes $\Tmc_1$ and $\Tmc_2$ and a signature $\sig'$ by taking
\begin{align*}
  \Tmc_1 & = \Tmc \cup \{Y\sqsubseteq \exists s.B\}, \\
  \Tmc_2 & = \Tmc_1\cup \{Y\sqsubseteq \exists s.A\}, \\
  \sig' & = \sig\cup \{Y,s\},
\end{align*}
for some fresh concept name $Y$ and a fresh role name $s$. By
Theorem~\ref{thm:homs_tCQ}, it suffices to establish the following
claim.

\smallskip\noindent{\bf Claim.} $\Imc_{\Tmc,\{A(a)\}}\to_\sig
\Imc_{\Tmc,\{B(a)\}}$ iff $\Imc_{\Tmc_2,\Amc}\preceq_{\sig'}
\Imc_{\Tmc_1,\Amc}$ for all tree-shaped $\Sbf'$-ABoxes \Amc.

\par\medskip\noindent
\textbf{{\boldmath ``$\Rightarrow$''.}}~
Assume $\Imc_{\Tmc,\{A(a)\}}\to_\sig
\Imc_{\Tmc,\{B(a)\}}$ and let \Amc be a tree-shaped $\Sbf'$-ABox. By
construction of $\Tmc_1$, the universal model $\Imc_{\Tmc_1,\Amc}$ can
be obtained from $\Imc_{\Tmc,\Amc}$ by adding to every $c$ with
$Y(c)\in \Amc$ an $s$-successor to a fresh isomorphic copy of
$\Imc_{\Tmc,\{B(a)\}}$. Similarly, the universal model
$\Imc_{\Tmc_2,\Amc}$ can be obtained from $\Imc_{\Tmc,\Amc}$ by adding
to every $c$ with $Y(c)\in \Amc$ an $s$-successor to a copy of
$\Imc_{\Tmc,\{B(a)\}}$ and another $s$-successor to a copy of
$\Imc_{\Tmc,\{A(a)\}}$.  The required simulation can be easily
constructed based on the existing homomorphism
$\Imc_{\Tmc,\{A(a)\}}\to_\sig \Imc_{\Tmc,\{B(a)\}}$.

\par\medskip\noindent
\textbf{{\boldmath ``$\Leftarrow$''.}}~
Suppose that
$\Imc_{\Tmc_2,\Amc}\preceq_{\sig'} \Imc_{\Tmc_1,\Amc}$ for all
tree-shaped $\Sbf'$-ABoxes \Amc. Then, in particular,
$\Imc_{\Tmc_2,\{Y(a)\}}\preceq_{\sig'} \Imc_{\Tmc_1,\{Y(a)\}}$. We can
now read off an $\sig$-homomorphism from the $\sig'$-simulation
because $\Imc_{\Tmc_2,\{Y(a)\}}$ and $\Imc_{\Tmc_1,\{Y(a)\}}$ are
tree-shaped.

\medskip

This finishes the proof of the claim and shows that \onetCQ
conservative extensions are \TwoExpTime-hard in \ELI. It remains to
note that Proposition~\ref{lem:deductive_versus_query_entailment_aux}
provides a reduction from \onetCQ entailment over \ELI TBoxes to
deductive conservative extensions in \ELHIFbot (importantly, we
trivially have $\Tmc_1\models^\bot_\sig\Tmc_2$, no role hierarchies,
and no functionality assertions).

\medskip

For the case of $\Sbf = \mn{sig}(\Tmc_1)$, we do a closer inspection
of the \TwoExpTime-hardness proof of the mentioned homomorphism
problem by~\citeauthor{GuJuSa-IJCAI18}~\citeyear{GuJuSa-IJCAI18}. The constructed TBox $\Tmc$ is in
fact a union of two disjoint TBoxes $\Tmc=\Tmc_1\cup\Tmc_2$ which
additionally satisfy:
\begin{itemize}
    
  \item[$(\ast)$] 
    $\Imc_{\Tmc_1\cup\Tmc_2,\{A(a)\}}=\Imc_{\Tmc_2,\{A(a)\}}$ for all
    $A\notin\mn{sig}(\Tmc_1)$, and
    $\Imc_{\Tmc_1\cup\Tmc_2,\Amc}=\Imc_{\Tmc_1,\Amc}$, for all
    $\mn{sig}(\Tmc_1)$-ABoxes~\Amc, 

\end{itemize}
and the used signature is actually $\mn{sig}(\Tmc_1)$. Thus, the
problem of deciding $\Imc_{\Tmc_2,\{A(a)\}}\to_{\mn{sig}(\Tmc_1)}
\Imc_{\Tmc_1,\{B(a)\}}$, for $B\in\mn{sig}(\Tmc_1)$ and
$A\notin\mn{sig}(\Tmc_1)$ is \TwoExpTime-hard for TBoxes
$\Tmc_1,\Tmc_2$ satisfying~$(\ast)$. For the reduction, let us fix
TBoxes $\Tmc_1'$ and $\Tmc_2'$ by taking
\begin{align*}
  \Tmc_1' & = \Tmc_1 \cup \{Y\sqsubseteq \exists s.B\}, \\
  \Tmc_2' & = \Tmc_1\cup \Tmc_2 \cup \{Y\sqsubseteq \exists s.B,
  Y\sqsubseteq \exists s.A\},\\
  \sig & = \mn{sig}(\Tmc_1') = \mn{sig}(\Tmc_1)\cup \{Y,s\}
\end{align*}
for fresh symbols $Y,s$. Based on~$(\ast)$, it is not hard to verify
that the following claim can be proved analogously to the claim above, thus establishing
hardness by Theorem~\ref{thm:homs_tCQ}.

\smallskip\noindent{\bf Claim.} $\Imc_{\Tmc_2,\{A(a)\}}\to_{\mn{sig}(\Tmc_1)} \Imc_{\Tmc_1,\{B(a)\}}$
iff $\Imc_{\Tmc_2',\Amc}\preceq_\sig\Imc_{\Tmc_1',\Amc}$,
for all tree-shaped $\sig$-ABoxes. 
\end{proof}

\section{Conclusion}
\label{sec:concl}

We have shown that query conservative extensions of TBoxes are
\TwoExpTime-complete for all DLs between \ELI and Horn-\ALCHIF as
well as between Horn-\ALC and Horn-\ALCHIF, and that deductive
conservative extensions are \TwoExpTime-complete for all DLs between
\ELI and $\ELHIFbot$. This gives a fairly complete picture of the
complexity and decidability of conservative extensions in Horn DLs
with inverse roles.

An interesting problem left open is the decidability and complexity of
deductive conservative extensions in Horn-\ALCHIF. A natural approach
would be to come up with a characterization of the expressive power in
terms of a `(bi)simulation-type' indistinguishability relation on
models, to then use that as the foundation for a characterization of
deductive conservative extensions similar to what has been done for
expressive DLs~\cite{DBLP:conf/ijcai/LutzW11} and \EL~\cite{DBLP:conf/kr/LutzSW12}, and to finally
develop a decision procedure based on tree automata. A first step has
recently been made
by~\citeauthor{DBLP:conf/lics/JungPWZ19}~\citeyear{DBLP:conf/lics/JungPWZ19}
who presented a
(bi)simulation-type relation for the fragment Horn-\ALC of
Horn-\ALCHIF. Unfortunately, the proposed relation is
much more complicated than standard (bi)simulations and while it can
be used to characterize deductive conservative extensions, it is
far from clear how such a characterization can lead to a decision
procedure.

It would also be interesting to add transitive roles to the picture,
that is, to transition from Horn-\ALCHIF to Horn-$\mathcal{SHIF}$.  In
fact, we are not aware of any results on conservative extensions or
related notions that concern description logics with transitive roles.

Finally, it would be interesting to consider both query and deductive
conservative extensions in the Datalog$^\pm$ family of ontology
languages (aka existential rules) such as frontier-guarded existential
rules \cite{DBLP:conf/ijcai/BagetMRT11}. In fact, we are not aware of
any (un)decidability results regarding conservative extensions in such
languages. The increased existential power of Datalog$^\pm$ languages
brings in serious additional technical challenges. In this context, it
is interesting to remark that deductive conservative extensions are
undecidable in the guarded fragment, which might be seen as an
extension of relevant Datalog$^\pm$ languages
\cite{DBLP:conf/icalp/JungLM0W17}.

\section*{Acknowledgments}
This work was partially supported by DFG project SCHN 1234/3
and ERC Consolidator Grant 647289 CODA.

\appendix

\section{Proofs for Section~\ref{sec:prelims}}

This section contains the proofs omitted from Sections~\ref{sec:query_entailment}
and~\ref{sec:dCEs}.

\subsection{Proof of Lemma~\ref{lem:inconsistent_ABoxes_aux}}
\label{appx:inconsistent_ABoxes}


\leminconsistentABoxesaux*

\noindent
\begin{proof}
  We prove both implications via contraposition.

  \par\medskip\noindent
  \textbf{{\boldmath ``$\Rightarrow$''.}}~
  Assume (1) and (2) are both false,
  i.e., $\Tmc_1$ is not $(\sigmaabox,\sigmaquery)$-universal
  and either (a)~$\Tmc_1 \not\models_{\sigmaabox,\sigmaquery}^{\text{CQ}} \Tmc_2$
  or (b)~$\Tmc_1 \not\models_{\sigmaabox}^{\bot} \Tmc_2$.
  In case~(a), $\Tmc_1$ trivially does not $(\sigmaabox,\sigmaquery)$-CQ entail $\Tmc_2$ with inconsistent ABoxes.
  In case~(b), consider a witness $\sigmaabox$-ABox $\Amc$.
  Since $\Tmc_1$ is not $(\sigmaabox, \sigmaquery)$-universal,
  there is an $\sigmaabox$-ABox $\Amc'$, a $\sigmaquery$-CQ $q(\xbf)$
  and a tuple $\abf \subseteq \mn{ind}(\Amc')$ with $|\abf| = |\xbf|$
  such that $\Tmc_1,\Amc' \not\models q(\abf)$.
  We assume w.l.o.g.\ that $\Amc$ and $\Amc'$ use distinct sets of individuals.
  We set $\Amc'' = \Amc \cup \Amc'$ and have:
  \begin{itemize}
    \item
      $\Tmc_2,\Amc'' \models q(\abf)$ because
      $\Amc$ is inconsistent with $\Tmc_2$ and so is $\Amc''$;
    \item
      $\Tmc_1,\Amc'' \not\models q(\abf)$:~
      let \Jmc be the disjoint union of the universal model $\Imc_{\Tmc_1,\Amc}$
      and the model \Imc witnessing $\Tmc_1,\Amc' \not\models q(\abf)$.
      Clearly $\Jmc$ is a model of $\Tmc_1$ and $\Amc''$, but $\Jmc \not\models q(\abf)$.
  \end{itemize}
  Hence 
  $\Tmc_1$ does not $(\sigmaabox,\sigmaquery)$-CQ entail $\Tmc_2$ with inconsistent ABoxes,
  as desired.

  \par\medskip\noindent
  \textbf{{\boldmath ``$\Leftarrow$''.}}~
  Assume
  $\Tmc_1$ does not $(\sigmaabox,\sigmaquery)$-CQ entail $\Tmc_2$ with inconsistent ABoxes
  and consider a witness $(\Amc,q,\abf)$.
  Then it is immediate that (2) does not hold.
  Furthermore,
  if $\Amc$ is consistent with both $\Tmc_1$ and $\Tmc_2$, then
  $\Tmc_1 \not\models_{\sigmaabox,\sigmaquery}^{\text{CQ}} \Tmc_2$.
  Otherwise $\Amc$ must be inconsistent with $\Tmc_2$ but consistent with $\Tmc_1$;
  hence $\Tmc_1 \not\models_{\sigmaabox}^{\bot} \Tmc_2$.
  Therefore (1) does not hold either.
\end{proof}
%
%

\par\medskip\noindent
%
%

\subsection{Characterization of Inconsistency Entailment}
\label{appx:inconsistentcy_entailment}


We show that inconsistency entailment can be reduced to CQ
entailment. We write $\Tmc_1 \models_\sigmaabox^{\textup{fork}} \Tmc_2$
if for all $\sigmaabox$-ABoxes $\Amc = \{r(a,b),r(a,c)\}$: if $\Amc$
is inconsistent with $\Tmc_2$, then also with $\Tmc_1$. Note that
$\Tmc_1 \models_\sigmaabox^{\textup{fork}} \Tmc_2$ can be decided by
evaluating the \onetCQ $A(a)$ on all $\sigmaabox$-ABoxes of the form
$\Amc = \{r(a,b),r(a,c)\}$ for both $\Tmc_i$, where $A$ is a concept
name that does not occur in $\Tmc_1 \cup \Tmc_2$. Thus, $\Tmc_1
\models_\sigmaabox^{\textup{fork}} \Tmc_2$ is basically a shortcut for
polynomially many CQ entailment tests.
%
%
%
\begin{lemma}
  \label{lem:inconsistency_entailment}
  Let $\sigmaabox$ be a signature and let $\Tmc_1$ and $\Tmc_2$ be $\HornALCHIF$ TBoxes.
  Furthermore, let $A$ be a fresh concept name
  and 
  %
  %
  $\Tmc_i^A$ be obtained from $\Tmc_i$
  by replacing each occurrence of $\bot$ with $A$
  and adding the axioms $A \sqsubseteq \forall s.A$ and $A \sqsubseteq
  \forall s^-.A$
  for every role $s$ occurring in $\Tmc_i$, for $i=1,2$.
%
  Then the following are equivalent.
  \begin{enumerate}
    \item[(1)]
      $\Tmc_1 \models_{\sigmaabox}^{\bot} \Tmc_2$;
    \item[(2)]
      $\Tmc_1^A \models_{\sigmaabox,\{A\}}^{\textup{CQ}} \Tmc_2^A$
      ~and~ $\Tmc_1 \models_\sigmaabox^{\textup{fork}} \Tmc_2$.
  \end{enumerate}
\end{lemma}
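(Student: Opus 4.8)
The plan is to split (in)consistency into two independent causes and to show that the $A$-translation captures exactly one of them while the fork condition captures the other. First I would record that an $\sigmaabox$-ABox $\Amc$ is inconsistent with a \HornALCHIF TBox $\Tmc$ iff either (i) forward chaining forces $\bot$ at some (named or anonymous) element---a \emph{$\bot$-clash}---or (ii) a functional role forces two distinct named individuals to coincide---a \emph{fork clash}. Since functional roles have no proper subroles, every fork clash is already witnessed by a two-assertion sub-ABox $\{r(a,b),r(a,c)\}$, and inconsistency is monotone under adding assertions. From this I would extract two observations: (a) $\Amc$ is consistent with $\Tmc_i^A$ iff $\Amc$ has no fork clash w.r.t.\ $\Tmc_i$, because $\Tmc_i^A$ contains no $\bot$ and inherits all RIs and FAs of $\Tmc_i$, while fork clashes between named individuals are determined by role assertions, RIs, and FAs alone; and (b) $\Tmc_1\models_\sigmaabox^{\textup{fork}}\Tmc_2$ is exactly the restriction of condition~(1) to the two-assertion fork-ABoxes.

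The technical core is a reachability characterization of the marker $A$. Here the propagation must reach precisely the relevant ABox individuals, \emph{uniformly} for both TBoxes; I would ensure this by noting that $\sigmaabox$-roles absent from a TBox are inert for consistency, so one may assume without loss of generality that every $\sigmaabox$-role occurs in both $\Tmc_i$ (adding a vacuous axiom with a fresh concept name), whence $A$ propagates along all ABox edges. I would then prove that for $\Amc$ consistent with $\Tmc^A$ and $a\in\mn{ind}(\Amc)$,
\[
  \Tmc^A,\Amc \models A(a)
  \quad\text{iff}\quad
  \text{the role-connected component of $a$ in $\Amc$ is inconsistent with $\Tmc$.}
\]
For ``$\Leftarrow$'', a $\bot$-clash in $a$'s component becomes a derivation of $A$ in $\Tmc^A$, and the axioms $A\sqsubseteq\forall s.A$ and $A\sqsubseteq\forall s^-.A$ carry $A$ \emph{in both directions}, out of any anonymous subtree back to its root and across all ABox edges, until $a$ is reached. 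For ``$\Rightarrow$'' I would argue contrapositively: if $a$'s component is consistent with $\Tmc$ it has no $\bot$-cause, so the universal model of $\Tmc^A$ and $\Amc$ realizes $A$ only outside that component. I expect this characterization---getting the marker to flood exactly the right component while keeping the components aligned across $\Tmc_1$ and $\Tmc_2$---to be the main obstacle.

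Given these facts, the equivalence is a case analysis. For $(1)\Rightarrow(2)$, the fork part is observation~(b); for the CQ part, take $\Amc$ consistent with both $\Tmc_i^A$ and an $\{A\}$-CQ $q$ (a conjunction of $A$-atoms) with $\Tmc_2^A,\Amc\models q(\abf)$. For an answer atom $A(x)$ with $x\mapsto a$, the characterization makes $a$'s component inconsistent with $\Tmc_2$; that sub-ABox is fork-consistent, so by~(1) it is $\bot$-inconsistent with $\Tmc_1$, and the characterization (now for $\Tmc_1$) gives $\Tmc_1^A,\Amc\models A(a)$. A quantified atom $\exists y\,A(y)$ is handled identically, using that $\Tmc^A,\Amc\models\exists y\,A(y)$ iff $\Amc$ is \emph{globally} inconsistent with $\Tmc$. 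For $(2)\Rightarrow(1)$, let $\Amc$ be inconsistent with $\Tmc_2$: if the clash is a fork, its two-assertion witness is inconsistent with $\Tmc_2$, hence with $\Tmc_1$ by the fork part, hence $\Amc$ is by monotonicity; otherwise $\Amc$ is fork-consistent with $\Tmc_2$, so $\Tmc_2^A,\Amc\models A(a)$ for some $a$ in a $\bot$-inconsistent component, and then either $\Amc$ already has a fork clash w.r.t.\ $\Tmc_1$ (so is inconsistent with $\Tmc_1$) or, being fork-consistent with $\Tmc_1$, the CQ part yields $\Tmc_1^A,\Amc\models A(a)$ and the characterization makes $\Amc$ inconsistent with $\Tmc_1$. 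In all cases $\Amc$ is inconsistent with $\Tmc_1$, which is~(1).
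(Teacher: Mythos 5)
Your overall plan is sound and takes a genuinely different route from the paper. The paper argues both directions by contraposition: for $(1)\Rightarrow(2)$ it takes a witness $(\Amc,q,\abf)$ to the failure of the CQ part, normalizes $q$ to $A(x)$ or $\exists y\,A(y)$ and $\Amc$ to be connected, and shows that $\Amc$ is inconsistent with $\Tmc_2$ but consistent with $\Tmc_1$; for $(2)\Rightarrow(1)$ it takes $\Amc$ inconsistent with $\Tmc_2$ yet consistent with $\Tmc_1$ and splits into a $\bot$-clash case (yielding the CQ witness $A(x)$) and a fork case. You instead isolate a reusable characterization ($\Tmc^A,\Amc\models A(a)$ iff the connected component of $a$ in $\Amc$ is inconsistent with $\Tmc$, for $\Amc$ consistent with $\Tmc^A$) and run a direct case analysis on clash types; your observation that $\bot$-freeness of $\Tmc_i^A$ reduces its inconsistencies to fork clashes matches the paper's ingredients, and modulo the issue below your case analysis goes through.

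The genuine problem is the step you present as ``without loss of generality''. Adding a vacuous axiom mentioning a missing $\sigmaabox$-role to $\Tmc_i$ leaves condition~(1) untouched, but it changes $\Tmc_i^A$ --- it acquires the propagation axioms $A\sqsubseteq\forall r.A$ and $A\sqsubseteq\forall r^-.A$ for that role --- and hence changes condition~(2), which is stated for the translations of the \emph{original} TBoxes. Such a step is only a WLOG if it preserves the truth value of the whole biconditional, and it does not: take $\Tmc_1=\{B\sqsubseteq\bot\}$, $\Tmc_2=\{B\sqsubseteq\bot,\;C\sqsubseteq\exists r.C'\}$, and $\sigmaabox=\{B,r\}$. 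Here condition~(1) holds (an $\sigmaabox$-ABox is inconsistent with either TBox iff it contains a $B$-assertion) and the fork condition holds vacuously; yet $\Tmc_1^A=\{B\sqsubseteq A\}$ contains no propagation axioms because no role occurs in $\Tmc_1$, so for $\Amc=\{B(a),r(a,b)\}$ (consistent with both $\Tmc_i^A$) we get $\Tmc_2^A,\Amc\models A(b)$ via $A\sqsubseteq\forall r.A$ while $\Tmc_1^A,\Amc\not\models A(b)$, i.e., condition~(2) fails. After your modification, (2) holds as well. So your argument establishes a corrected variant of the lemma, not the lemma as stated, and the two are not interchangeable.

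In fairness, your instinct points at a real soft spot rather than a private mistake: the paper's own proof of $(1)\Rightarrow(2)$ relies on the claim that the propagation axioms push $A$ into \emph{every} element of the connected universal model $\Imc_{\Tmc_1^A,\Amc}$, which silently presupposes that every $\sigmaabox$-role occurring in $\Amc$ also occurs in $\Tmc_1$ --- exactly what your modification enforces, and exactly what fails in the example above. The clean repair is not a WLOG inside the proof but a change to the statement: either define $\Tmc_i^A$ to contain propagation axioms for every role in $\sigmaabox\cup\mn{sig}(\Tmc_1)\cup\mn{sig}(\Tmc_2)$, or add the hypothesis that every $\sigmaabox$-role occurs in both $\Tmc_i$. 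Under either repair, your characterization lemma plus case analysis constitutes a complete and arguably more transparent proof.
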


%
\par\medskip\noindent
\begin{proof}
  We prove both implications via contraposition.

  \par\smallskip\noindent
  \textbf{{\boldmath (1) $\Rightarrow$ (2).}}~
  Assume $\Tmc_1^A \not\models_{\sigmaabox,\{A\}}^{\textup{CQ}} \Tmc_2^A$
  or $\Tmc_1 \not\models_\sigmaabox^{\textup{fork}} \Tmc_2$.
  In case $\Tmc_1 \not\models_\sigmaabox^{\textup{fork}} \Tmc_2$,
  every witness ABox
  is a witness for $\Tmc_1 \not\models_{\sigmaabox}^{\bot} \Tmc_2$ too.

  In case $\Tmc_1^A \not\models_{\sigmaabox,\{A\}}^{\textup{CQ}} \Tmc_2^A$ is violated, 
  consider a witness $(\Amc,q,\abf)$.
  Since $A$ is the only symbol allowed in $q$, all atoms of $q$
  have the form $A(z)$ for arbitrary variables $z$.
  If $q$ consists of several atoms, then it is disconnected
  and we can omit all but one atom from $q$
  and still have a witness
  (see also proof of Proposition~\ref{lem:tree_shaped_witnesses},
  Property~\ref{it:tree-q-connected}).
  Hence we can assume w.l.o.g.\ that $q$ is of the form (i) $q(x) = A(x)$ or (ii) $q() = \exists y\,A(y)$
  and, furthermore, that $\Amc$ and thus the universal models
  $\Imc_{\Tmc_i,\Amc}$ are connected.
  (Due to the ``propagation'' of $A$ in the $\Tmc_i$,
  we can even assume that $q$ is of the form (i) only,
  but that does not matter in the following argumentation.)
  We now have:
  \begin{itemize}
    \item
      $\Amc$ is inconsistent with $\Tmc_2$:
      \par\smallskip\noindent
      Assume to the contrary that
      $\Amc$ is consistent with $\Tmc_2$
      and consider the universal model $\Imc_{\Tmc_2,\Amc}$
      for $\Tmc_2$ and \Amc (Section~\ref{sec:universal_model}).
      Clearly, 
      for all domain elements $d$ of $\Imc_{\Tmc_2,\Amc}$,
      we have $\Tmc_2 \not\models \bigsqcap\mn{tp}_{\Imc_{\Tmc_2,\Amc}}(d) \sqsubseteq \bot$.
      Since $A$ is fresh and by the definition of $\Tmc_2^A$
      we get $\Tmc_2^A \not\models \bigsqcap\mn{tp}_{\Imc_{\Tmc_2,\Amc}}(d) \sqsubseteq A$.
      Now Lemma~\ref{lem:universal_model}~(\ref{it:univ_model_is_a_model}) for $\Tmc_2^A$ implies that 
      $\Imc_{\Tmc_2,\Amc}$ is a model of $\Tmc_2$ and $\Amc$;
      hence $\Imc_{\Tmc_2,\Amc}$ satisfies all axioms in $\Tmc_2^A$
      that have been taken over from $\Tmc_2$ without modification,
      i.e., all axioms that are not of the form $B \sqsubseteq A$.
      But axioms of the latter form are also satisfied
      because $\Tmc_2^A \not\models \bigsqcap\mn{tp}_{\Imc_{\Tmc_2,\Amc}}(d) \sqsubseteq A$
      for every domain element $d$. Hence $\Imc_{\Tmc_2,\Amc}$ is a model of $\Tmc_2^A$ and $\Amc$.
      Now, since $\Imc_{\Tmc_2,\Amc}$ has no $A$-instance,
      we cannot have $\Tmc_2^A,\Amc \models q(\abf)$ for any $\{A\}$-query $q$;
      contradicting the assumption that $(\Amc,q,\abf)$ is a witness.
      \par\smallskip\noindent
    \item
      $\Amc$ is consistent with $\Tmc_1$:
      \par\smallskip\noindent
      Since $(\Amc,q,\abf)$ is a witness,
      we have $\Imc_{\Tmc_1^A,\Amc} \not\models q(\abf)$
      by Lemma~\ref{lem:universal_model}~(\ref{it:univ_model_queries}).
      Due to the additional axioms in the definition of $\Tmc_1^A$,
      which ``propagate'' $A$ into every domain element of the connected (see above)
      universal model $\Imc_{\Tmc_1^A,\Amc}$,
      we have $\Tmc_1^A \not\models \bigsqcap\mn{tp}_{\Imc_{\Tmc_1^A,\Amc}}(d) \sqsubseteq A$
      for all domain elements $d$.
      Since $A$ is fresh, we have 
      $\Tmc_1 \not\models \bigsqcap\mn{tp}_{\Imc_{\Tmc_1^A,\Amc}}(d) \sqsubseteq \bot$.
      With the same reasoning as above,
      we obtain that $\Imc_{\Tmc_1^A,\Amc}$ is a model of $\Tmc_1$ and $\Amc$;
      hence $\Amc$ is consistent with $\Tmc_1$.
  \end{itemize}
  Consequently $\Tmc_1 \not\models_{\sigmaabox}^{\bot} \Tmc_2$, as desired.

%
  \par\medskip\noindent
  \textbf{{\boldmath (2) $\Rightarrow$ (1).}}~
  Assume $\Tmc_1 \not\models_{\sigmaabox}^{\bot} \Tmc_2$,
  i.e., there is an $\sigmaabox$-ABox \Amc that is
  is inconsistent with $\Tmc_2$ but consistent with $\Tmc_1$.
  We need to show that
  $\Tmc_1^A \not\models_{\sigmaabox,\{A\}}^{\textup{CQ}} \Tmc_2^A$
  or $\Tmc_1 \not\models_\sigmaabox^{\textup{fork}} \Tmc_2$.
  
  From $\Amc$ being inconsistent with $\Tmc_2$,
  we first conclude that one of the following two properties must hold.
  \begin{itemize}
    \item[(i)]
      There is some $d \in B^{\Imc_{\Tmc_2,\Amc}}$
      with $B \sqsubseteq \bot \in \Tmc_2$;
    \item[(ii)]
      $\Amc$ contains a ``fork'' $\Amc^- = \{r(a,b),r(a,c)\}$
      such that $\Amc^-$ is inconsistent with $\Tmc_2$.
  \end{itemize}
  Indeed,
  if neither~(i) nor~(ii) holds, then we have $\Imc_{\Tmc_2,\Amc} \models (\Tmc_2,\Amc)$,
  contradicting the inconsistency of $\Amc$ with $\Tmc_2$:
  First, $\Imc_{\Tmc_2,\Amc} \models \Amc$
  follows directly from the construction of $\Imc_{\Tmc_2,\Amc}$.
  Second, $\Imc_{\Tmc_2,\Amc} \models \Tmc_2$
  can be shown analogously to the (omitted) standard proof of
  Lemma~\ref{lem:universal_model}~(\ref{it:univ_model_is_a_model}),
  via a case distinction over the axioms in $\Tmc_2$,
  using ``not~(i)'' and ``not~(ii)'' instead of the
  assumption that $\Amc$ is consistent with $\Tmc_2$.

  Now first assume that (ii)~holds.
  Since $\Amc$ is consistent with $\Tmc_1$,
  so is $\Amc^-$.
  Hence $\Tmc_1 \not\models_\sigmaabox^{\textup{fork}} \Tmc_2$.

  In case (ii)~does not hold, (i)~must hold.
  To show that $\Tmc_1^A \not\models_{\sigmaabox,\{A\}}^{\textup{CQ}} \Tmc_2^A$,
  consider the CQ $q = A(x)$ and some $a \in \mn{ind}(\Amc)$
  to which the element $d$ from~(i) is connected in $\Imc_{\Tmc_2,\Amc}$,
  i.e., if $d \in \mn{ind}(\Amc)$, then choose $a=d$;
  otherwise choose $a$ such that $d$ is in the subtree
  $\Imc_{\Tmc_2,\Amc}|_a$.
  We then have:
  \begin{itemize}
    \item
      $\Amc$ is consistent with $\Tmc_2^A$:
      \par\smallskip\noindent
      Since $\Tmc_2^A$ does not contain $\bot$
      and $\Amc$ does not contain forks as in~(ii),
      \Amc is consistent with $\Tmc_2^A$ is consistent, as witnessed
      by the universal model $\Imc_{\Tmc_2^A,\Amc}$
      (we again refer to the standard proof of Lemma~\ref{lem:universal_model}~(\ref{it:univ_model_is_a_model});
      except that the FA case in the ABox part of $\Imc_{\Tmc_2^A,\Amc}$ is now due to ``not~(ii)'').
      \par\smallskip\noindent
    \item
      $\Amc$ is consistent with $\Tmc_1^A$:
      \par\smallskip\noindent
      It is not difficult to see that
      $\Imc_{\Tmc_1,\Amc}$ is a model of $\Tmc_1^A$ and $\Amc$:
      by Lemma~\ref{lem:universal_model}~(1),
      $\Imc_{\Tmc_1,\Amc}$ is a model of $\Tmc_1$ and $\Amc$;
      in particular,
      $\Imc_{\Tmc_1,\Amc}$ satisfies all axioms in $\Tmc_1^A$
      that $\Tmc_1^A$ shares with $\Tmc_1$.
      The modified axioms $B \sqsubseteq A$ with $B \sqsubseteq \bot \in \Tmc_1$
      are satisfied, too, because $\Imc_{\Tmc_1,\Amc}$ cannot have any $B$-instances.
      Finally, the additional propagation axioms are satisfied because
      $\Imc_{\Tmc_1,\Amc}$ has no $A$-instance as $A$ is fresh.
      \par\smallskip\noindent
    \item
      $\Tmc_2^A,\Amc \models q(a)$:
      \par\smallskip\noindent
      Due to~(i), we have $\Imc_{\Tmc_2,\Amc} \models \exists y\,B(y)$
      for some $B \sqsubseteq \bot \in \Tmc_2$.
      Hence $\Imc_{\Tmc_2^A,\Amc} \models \exists y\,B(y)$,
      which follows from the construction of both universal models
      (in fact the only difference between $\Imc_{\Tmc_2^A,\Amc}$ and $\Imc_{\Tmc_2,\Amc}$
      is that some domain elements of $\Imc_{\Tmc_2^A,\Amc}$ may be $A$-instances).
      Hence $\Imc_{\Tmc_2^A,\Amc}$ has a $B$-instance
      in the subtree $\Imc_{\Tmc_2,\Amc}|_a$
      and thus, by construction, an $A$-instance.
      By the ``propagation'' of $A$ in $\Tmc_2^A$,
      we have that $a$ is an instance of $A$ in $\Imc_{\Tmc_2^A,\Amc}$;
      hence $\Imc_{\Tmc_2^A,\Amc} \models A(a) = q$.
      \par\smallskip\noindent
    \item
      $\Tmc_1^A,\Amc \not\models q(a)$:
      \par\smallskip\noindent
      Follows from $\Imc_{\Tmc_1,\Amc}$ being a model of $\Tmc_1^A$ and $\Amc$ (as shown above)
      and $\Imc_{\Tmc_1,\Amc} \not\models q(a)$ (given the lack of $A$-instances).
  \end{itemize}
\end{proof}
%

\subsection{Proof of Lemma~\ref{lem:deductive_versus_query_entailment_aux}}
\label{appx:deductive_versus_query_entailment}



\myRest*

\noindent
\begin{proof}
%
%
%
  We prove both implications via contraposition.

  \par\medskip\noindent
  \textbf{{\boldmath ``$\Leftarrow$''.}}~
  We assume that
  $\Tmc_1 \not\models_{\sig}^{\textup{TBox}} \Tmc_2$.
  In case this is witnessed by an $\sig$-FA $\mn{func}(r)$,
  we immediately get a witness $\sig$-ABox = $\{r(a,b),r(a,c)\}$
  for $\Tmc_1 \not\models_\sig^\bot \Tmc_2$ and are done.

  Otherwise, $\Tmc_1$ contains all $\sig$-FAs from $\Tmc_2$,
  and there is a witness $\sig$-CI $C \sqsubseteq D$
  (witness RIs are excluded by the assumption
  $\Tmc_1 \models_{\sig,\sig}^{\textup{RI}}\Tmc_2$).
  Since $\ELIbot$ concepts that contain $\bot$ are equivalent to $\bot$,
  the left-hand side $C$ cannot contain $\bot$ (i.e., is an \ELI concept)
  and, if $D$ does, then $C \sqsubseteq \bot$ is a witness.
  We show that such witnesses
  give rise to either 
  a witness $\Amc_C$ for $\Tmc_1 \not\models_{\sig}^{\bot} \Tmc_2$
  or a witness $(\Amc_C,q_D,a)$
  for $\Tmc_1 \not\models_{\sig,\sig}^{\textup{\onetCQ}} \Tmc_2$
  with $q_D(x)$ a \emph{\onetCQ}.

  We first consider the case that there is a witness
  $C \sqsubseteq \bot$ with $C$ an \ELI concept.
  We can construct from $C$ in the obvious way
  a tree-shaped $\sig$-ABox $\Amc_C$ and root $a$:
  \Amc reflects the tree structure of $C$;
  however, to respect the $\sig$-FAs in $\Tmc_1$
  (and thus those in $\Tmc_2$),
  we need to merge the subtrees of all nodes
  that are $r$-neighbors of the same node, whenever $\mn{func}(r) \in \Tmc_1$.
  Consider the universal model $\Imc_{\Tmc_2,\Amc_C}$%
  \footnote{%
    The assumption that \Amc is consistent with \Tmc
    is not needed for the construction of $\Imc_{\Tmc,\Amc}$,
    only for the proof of Lemma~\ref{lem:universal_model}\,(\ref{it:univ_model_is_a_model}).%
  }
  and observe that $a \in C^{\Imc_{\Tmc_2,\Amc_C}}$ from the construction of $\Imc_{\Tmc_2,\Amc_C}$.
  Since $\Tmc_2 \models C \sqsubseteq \bot$,
  we have that $\Imc_{\Tmc_2,\Amc_C}$ is not a model of $\Tmc_2$.
  Hence, by the contrapositive of Lemma~\ref{lem:universal_model}~(\ref{it:univ_model_is_a_model}),
  $\Amc_C$ is inconsistent with $\Tmc_2$.
  On the other hand, since $\Tmc_1 \not\models C \sqsubseteq \bot$,
  there is a model $\Imc \models \Tmc_1$ and an instance $d \in C^\Imc$.
  We can turn $\Imc$ into a model of $\Amc_C$ by interpreting the ABox individuals accordingly
  (``partial'' unraveling might be necessary to ensure that the standard name assumption is respected),
  witnessing the consistency of $\Amc_C$ with $\Tmc_1$.
  We thus have $\Tmc_1 \not\models_\sig^\bot \Tmc_2$
  and are done.

  In the second case, \emph{all} witnesses $C \sqsubseteq D$ consist
  solely of \ELI concepts $C,D$.
  We construct the same ABox $\Amc_C$ with root $a$ from $C$ and
  transform $D$
  into an $\sig$-\onetCQ $q_D(x)$ with a single answer variable
  that represents the tree shape of $D$.
  Now $(\Amc_C,q_D,a)$ is a witness to
  $\Tmc_1 \not\models_{\sig,\sig}^{\textup{\onetCQ}} \Tmc_2$
  for the following reasons.
  \begin{itemize}
    \item
      $\Amc_C$ is consistent with $\Tmc_1$:
      a model can be obtained in the obvious way
      from the model witnessing $\Tmc_1 \not\models C \sqsubseteq D$
      (possibly involving ``partial'' unraveling as above).
    \item
      $\Amc_C$ is consistent with $\Tmc_2$:
      since $C \sqsubseteq \bot$ is not a witness
      to $\Tmc_1 \not\models_{\sig}^{\textup{TBox}} \Tmc_2$,
      there must be a model $\Imc \models \Tmc_2$
      with $d \in C^\Imc$. We claim that we can turn $\Imc$
      into a model of $\Amc_C$ by interpreting the ABox
      individuals without violating the standard name assumption.
      If we assume to the contrary that this is not possible,
      then there are subconcepts $C_1,\dots,C_n$ of $C$
      corresponding to subtrees that have been merged in the construction of $\Amc_C$,
      such that $\Tmc_2 \models C_1 \sqcap \dots \sqcap C_n \sqsubseteq \bot$.
      However, $\Tmc_1 \not\models C_1 \sqcap \dots \sqcap C_n \sqsubseteq \bot$
      because $\Amc_C$ is consistent with $\Tmc_1$, as shown previously.
      Hence $C_1 \sqcap \dots \sqcap C_n \sqsubseteq \bot$
      would be a witness to $\Tmc_1 \not\models_{\sig}^{\textup{TBox}} \Tmc_2$,
      which we have ruled out -- a contradiction.
    \item
      $\Tmc_2,\Amc_C \models q_D(a)$,
      witnessed by $\Imc_{\Tmc_2,\Amc_C}$,
      together with 
      $a \in C^{\Imc_{\Tmc_2,\Amc_C}}$
      and $\Tmc_2 \models C \sqsubseteq D$.
    \item
      $\Tmc_1,\Amc_C \not\models q_D(a)$:
      take a model \Imc witnessing $\Tmc_1 \not\models C \sqsubseteq D$
      and an element $d \in C^\Imc \setminus D^\Imc$.
      As in the previous case, we can turn $\Imc$ into a model
      $\Jmc$ of $\Amc_C$ by interpreting the ABox individuals
      (again involving unraveling if necessary),
      obtaining $\Jmc \not\models q_D(a)$.
  \end{itemize}

  \par\smallskip\noindent
  \textbf{{\boldmath ``$\Rightarrow$''.}}~
  Assume $\Tmc_1 \not\models_{\sig,\sig}^{\textup{\onetCQ}} \Tmc_2$
  or $\Tmc_1 \not\models_\sig^\bot \Tmc_2$.

  In case $\Tmc_1 \not\models_\sig^\bot \Tmc_2$,
  consider a witness $\sig$-Box \Amc and assume w.l.o.g.\
  that \Amc is tree-shaped. Let $a \in \mn{ind}(\Amc)$ be its root.
  We can assume that $\Tmc_1$ contains all $\sig$-FAs from $\Tmc_2$
  (otherwise $\Tmc_1 \not\models_{\sig}^{\textup{TBox}} \Tmc_2$ and we are done).
  We turn \Amc into an $\sig$-\ELI concept $C_\Amc$
  in the obvious way.
  Then $C_\Amc \sqsubseteq \bot$ is a witness to 
  $\Tmc_1 \not\models_{\sig}^{\textup{TBox}} \Tmc_2$:
  \begin{itemize}
    \item
      $\Tmc_2 \models C_\Amc \sqsubseteq \bot$
      because, if there were a model $\Imc$ of $\Tmc_2$
      with $d \in C_\Amc^\Imc$,
      we could turn it into a model of $\Tmc_2$ and $\Amc$
      by interpreting the ABox individuals accordingly
      (possibly involving partial unraveling as above),
      which would contradict the assumption that \Amc
      is a witness to $\Tmc_1 \not\models_\sig^\bot \Tmc_2$.
    \item
      $\Tmc_1 \not\models C_\Amc \sqsubseteq \bot$,
      witnessed by $\Imc_{\Tmc_1,\Amc}$.
  \end{itemize}
  In case $\Tmc_1 \not\models_{\sig,\sig}^{\textup{\onetCQ}} \Tmc_2$,
  by Proposition~\ref{lem:tree_shaped_witnesses_tCQs}
  there is a witness $(\Amc,q,a)$ with 
  $\Amc$ tree-shaped and
  $q$ an $\sig$-\onetCQ.
  We construct $C_\Amc$ as above and another $\sig$-\ELI concept $D_q$
  from $q$ in the obvious way.
  It can be shown analogously to the previous case that $C_\Amc \sqsubseteq D_q$
  is a witness to $\Tmc_1 \not\models_{\sig}^{\textup{TBox}} \Tmc_2$.
\end{proof}
%

\section{Proofs for Section~\ref{sec:automata}}
\label{appx:automata}

We first state the semantics of 2ATA$_c$ precisely. 
Let $(T,L)$ be a $\Theta$-labeled tree and 
$\Amf=(Q,\Theta,q_0,\delta,\Omega)$ a \ata.  A {\em run of \Amf over 
  $(T,L)$} is a $T\times Q$-labeled tree $(T_r,r)$ such that 
$\varepsilon\in T_r$, $r(\varepsilon)=(\varepsilon,q_0)$, and for all 
$y\in T_r$ with $r(y)=(x,q)$ and $\delta(q,V(x))=\theta$, there is an 
assignment $v$ of truth values to the transition atoms in $\theta$
such that $v$ satisfies $\theta$ and:
    \begin{itemize}

      \item if $v(q')=1$, then $r(y')=(x,q')$ for some successor 
        $y'$ of $y$ in $T_r$;

      \item if $v(\Diamond^- q')=1$, then $x \neq \varepsilon$ and 
        $r(y')=(x\cdot -1,q')$ for some successor $y'$ of $y$ in $T_r$;

      \item if $v(\Box^- q')=1$, then $x=\varepsilon$ or 
        $r(y')=(x\cdot -1,q')$ for some successor $y'$ of $y$ in $T_r$;

      \item if $v(\Diamond_n q')=1$, then there are 
        pairwise different $i_1,\ldots,i_n$ such that, for each $j$,
        there is some successor $y'$ of $y$ in $T_r$ with 
        $r(y')=(x\cdot i_j,q')$;

      \item if $v(\Box_n q')=1$, then for all but 
        $n$ successors $x'$ of $x$, there is a 
        successor $y'$ of $y$ in $T_r$ with 
        $r(y')=(x',q')$. 


\end{itemize}
Let $\gamma=i_0i_1\cdots$ be an infinite path in $T_r$ and denote, for 
all $j\geq 0$, with $q_j$ the state such that $r(i_j)=(x,q_j)$. The 
path $\gamma$ is {\em accepting} if the largest number $m$ such that 
$\Omega(q_j)=m$ for infinitely many $j$ is even.  A run $(T_r,r)$ is 
accepting, if all infinite paths in $T_r$ are accepting. 
\Amf accepts a tree if \Amf has an accepting run over it. 

\subsection{Proof of Theorem~\ref{thm:nonemptiness}}
\label{appx:emptiness}


The proof is by reduction to the emptiness
problem of standard two-way alternating tree automata on trees of some
fixed outdegree~\cite{Vardi98}. We need to introduce strategy trees
similar to~\cite[Section 4]{Vardi98}. A {\em strategy tree for $\Amf$}
is a tree $(T,\tau)$ where $\tau$ labels every node in $T$ with a
subset $\tau(x)\subseteq 2^{Q\times \mathbb{N}\cup\{-1\}\times Q}$,
that is, with a graph with nodes from $Q$ and edges labeled with
natural numbers or $-1$. Intuitively, $(q,i,p)\in\tau(x)$ expresses
that, if we reached node $x$ in state $q$, then we should send a copy
of the automaton in state $p$ to $x\cdot i$.  For each label $\zeta$,
we define $\mn{state}(\zeta)=\{q\mid (q,i,q')\in \zeta\}$, that is,
the set of sources in the graph $\zeta$. A strategy tree is \emph{on
an input tree $(T',L)$} if $T=T'$, $q_0\in
\mn{state}(\tau(\varepsilon))$, and for every $x\in T$, the following
conditions are satisfied: 

\begin{enumerate}[(i)]

  \item if $(q,i,p)\in \tau(x)$, then $x\cdot i\in T$; 

  \item if $(q,i,p)\in \tau(x)$, then $p\in \mn{state}(\tau(x\cdot
    i))$;

  \item if $q\in \mn{state}(\tau(x))$, then the truth assignment
    $v_{q,x}$ defined below satisfies $\delta(q,L(x))$:
    \begin{itemize}

      \item $v_{q,x}(p)=1$ iff $(q,0,p)\in \tau(x)$;

      \item $v_{q,x}(\Diamond^- p) = 1$ iff $(q,-1,p)\in
	\tau(x)$;

      \item $v_{q,x}(\Box^- p) = 1$ iff $x=\varepsilon$ or $(q,-1,p)\in
	\tau(x)$;

      \item $v_{q,x}(\Diamond_n p) = 1$ iff $(q,i,p)\in \tau(x)$ for
	$n$ pairwise distinct $i\geq 1$;

      \item $v(\Box_n p) = 1$ iff for all but at most $n$ values
	$i\geq 1$ with $x\cdot i\in T$, we have $(q,i,p)\in \tau(x)$.

    \end{itemize}

\end{enumerate}
A {\em path $\beta$} in a strategy tree $(T,\tau)$ is a sequence
$\beta=(u_1,q_1)(u_2,q_2)\cdots$ of pairs from $T\times Q$ such that
for all $i>0$, there is some $c_i$ such that $(q_i,c_i,q_{i+1})\in
\tau(u_i)$ and $u_{i+1}=u_i\cdot c_i$. Thus, $\beta$ is obtained by
moves prescribed in the strategy tree.  We say that $\beta$ is
\emph{accepting} if the largest number $m$ such that $\Omega(q_i)=m$, for
infinitely many $i$, is even. A strategy tree $(T,\tau)$ is \emph{accepting}
if all infinite paths in $(T,\tau)$ are accepting.
\begin{lemma}
  A \ata accepts an input tree iff there is an accepting strategy tree
  on the input tree.
\end{lemma}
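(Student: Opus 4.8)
The plan is to prove the two directions separately. The direction ``$\Leftarrow$'' (from an accepting strategy tree to acceptance) I would handle directly by unfolding the strategy into a run. The direction ``$\Rightarrow$'' is the substantial one, since a strategy tree is a \emph{positional} object whereas an arbitrary accepting run may resolve the transition at a pair $(x,q)$ differently in different subtrees; here I would route through the acceptance parity game of $\Amf$ and invoke positional determinacy.

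For ``$\Leftarrow$'', let $(T,\tau)$ be an accepting strategy tree on $(T,L)$. I would build a run $(T_r,r)$ by unfolding $\tau$ along its edges, starting from $r(\varepsilon)=(\varepsilon,q_0)$ and maintaining the invariant that $r(y)=(x,q)$ implies $q\in\mn{state}(\tau(x))$. Given such a $y$, condition~(iii) of a strategy tree guarantees that the assignment $v_{q,x}$ satisfies $\delta(q,L(x))$, so I take $v_{q,x}$ as the witness of the run condition at $y$ and create, for each edge $(q,i,p)\in\tau(x)$, a successor $y'$ with $r(y')=(x\cdot i,p)$. A case distinction over the transition atoms shows that these children realise exactly the atoms made true by $v_{q,x}$ (for instance $v_{q,x}(\Diamond_n p)=1$ iff $(q,i,p)\in\tau(x)$ for $n$ distinct $i\ge 1$, and analogously for $\Box_n p$, $\Diamond^- p$, $\Box^- p$, and $p$); conditions~(i) and~(ii) supply $x\cdot i\in T$ and $p\in\mn{state}(\tau(x\cdot i))$, the latter preserving the invariant. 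Every infinite path of $(T_r,r)$ then projects to an infinite path of $(T,\tau)$ visiting the same sequence of states, so the acceptance of $(T,\tau)$ yields the acceptance of the run.

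For ``$\Rightarrow$'', I would recast acceptance as a parity game $G$ on the arena whose Player~0 positions are the pairs $(x,q)\in T\times Q$: at $(x,q)$ Player~0 picks a local move, i.e.\ a set $m\subseteq(\Nbbm\cup\{-1\})\times Q$ of pairs $(i,p)$ with $x\cdot i\in T$ whose induced assignment (defined from $m$ exactly as $v_{q,x}$ is from $\tau(x)$) satisfies $\delta(q,L(x))$; Player~1 then picks some $(i,p)\in m$ and play continues at $(x\cdot i,p)$. Assigning to each position the priority $\Omega(q)$ (intermediate choice positions receiving a dominated priority), a play is won by Player~0 iff the largest priority seen infinitely often is even, matching the acceptance condition on runs; it is then routine that $\Amf$ has an accepting run on $(T,L)$ iff Player~0 wins $G$ from $(\varepsilon,q_0)$, since an accepting run is exactly a winning strategy for Player~0 read off node-by-node, and conversely. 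Now the positional (memoryless) determinacy of parity games \cite<e.g.>{Vardi98} provides a \emph{memoryless} winning strategy $f$, and I read off a strategy tree by letting $\tau(x)$ contain $(q,i,p)$ exactly for the positions $(x,q)$ reachable under $f$ with $(i,p)\in f(x,q)$. Conditions~(i)--(iii) follow because each $f(x,q)$ is a satisfying local move and every target of a move is again reachable under $f$, while the infinite paths of $(T,\tau)$ coincide with the plays consistent with $f$ and are therefore all winning, i.e.\ accepting.

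The main obstacle is exactly this positionalisation: committing naively to one move per $(x,q)$ taken from an accepting run could splice together choices from unrelated subtrees and manufacture a non-accepting infinite path, and it is positional determinacy that forbids this. The remaining difficulties are bookkeeping: encoding the counting modalities $\Diamond_n,\Box_n$ faithfully as moves of $G$, checking that the parity condition of $G$ coincides with run acceptance, and treating vacuously satisfied transitions (by a standard accepting-sink convention) so that condition~(ii) of a strategy tree is met; all of these are routine once the arena of $G$ is fixed.
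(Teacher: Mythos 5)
Your proof is correct and takes essentially the same route as the paper: the ``$\Leftarrow$'' direction is the same direct unfolding of the strategy tree into a run, and the ``$\Rightarrow$'' direction likewise recasts acceptance as a parity game between a prover and a challenger and then invokes memoryless determinacy of parity games (the paper cites Emerson and Jutla for this) to read off a positional winning strategy, which is exactly an accepting strategy tree. Your explicit handling of vacuously satisfied transitions via an accepting-sink convention addresses a corner case that the paper's proof silently glosses over, but this is a refinement of, not a departure from, the paper's argument.
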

\begin{proof}
  The ``$\Leftarrow$'' direction is immediate: just read off an accepting run
  from the accepting strategy tree.

  For the ``$\Rightarrow$'' direction, we observe that acceptance of an
  input tree can be defined in terms of a parity game between Player 1
  (trying to show that the input is accepted) and Player 2 (trying to
  challenge that). The initial configuration is $(\varepsilon,q_0)$
  and Player 1 begins. Consider a configuration $(x,q)$. Player 1
  chooses a satisfying truth assignment $v$ of $\delta(q,L(x))$.
  Player~2 chooses an atom $\alpha$ with $v_{q,x}(\alpha)=1$ and
  determines the next configuration as follows:
  \begin{itemize}

    \item if $\alpha=p$, then the next configuration is $(x,p)$;

    \item if $\alpha=\Diamond^- p$, then the next configuration
      is $(x\cdot -1,p)$ unless $x=\varepsilon$; in this case,
      Player~1 loses immediately;

    \item if $\alpha=\Box^- p$, then the next configuration is $(x\cdot
      -1,p)$ unless $x=\varepsilon$; in this case, Player~2 loses
      immediately;

    \item if $\alpha=\Diamond_n p$, then Player~1 selects pairwise
      distinct $i_1,\ldots,i_n$ with $x\cdot i_j\in T$, for all $j$
      (and loses if she cannot); Player~2 then chooses some $i_j$ and
      the next configuration is $(x\cdot i_j,p)$;

    \item if $\alpha = \Box_n p$, then Player~1 selects $n$ values
      $i_1,\ldots,i_n$; Player~2 then chooses some
      $\ell\notin\{i_1,\ldots,i_n\}$ such that $x\cdot \ell\in T$ (and
      loses if he cannot) and the next configuration is $(x\cdot
      \ell,p)$.
      
  \end{itemize}
  Player 1 wins an infinite play $(x_0,q_0)(x_1,q_1)\cdots$ if the
  largest number $m$ such that $\Omega(q_i)=m$, for infinitely many
  $i$, is even. It is not difficult to see that Player 1 has a winning
  strategy on an input tree iff \Amf accepts the input tree.

  Observe now that the defined game is a parity game and thus Player 1
  has a winning strategy iff she is has a {\em memoryless} winning
  strategy~\cite{EmersonJ91}. It remains to observe that a memoryless
  winning strategy is nothing else than an accepting strategy tree.
\end{proof}
\begin{lemma} \label{lem:bounded-outdegree}
  If $L(\Amf)\neq\emptyset$, then there is some $(T,L)\in L(\Amf)$
  such that $T$ has outdegree at most $n\cdot C$, where $n$ is the
  number of states in $\Amf$ and $C$ is the largest number in (some
  transition $\Diamond_m p$ or $\Box_m p$ in) $\delta$.
\end{lemma}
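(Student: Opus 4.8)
The plan is to start from an accepting strategy tree, which exists by the equivalence lemma just established, and to prune the branching at every node down to $n\cdot C$ successors while keeping the strategy accepting. Fix an accepting strategy tree $(T,\tau)$ for $\Amf$ on some $(T,L)\in L(\Amf)$. For a node $x$ and a state $q$ with $q\in\mn{state}(\tau(x))$, the derived assignment $v_{q,x}$ satisfies $\delta(q,L(x))$. The only transition atoms that refer to the successors of $x$ are $\Diamond_m p$ and $\Box_m p$, and by assumption $m\le C$ for every such atom. I would record, for each target state $p$, the quantity $D_p=\max\{m\mid v_{q,x}(\Diamond_m p)=1 \text{ for some } q\in\mn{state}(\tau(x))\}$, which is at most $C$. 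Since the demand $\Diamond_{D_p}p$ was met in $\tau$, there are at least $D_p$ successors of $x$ at which $p$ is live; I would keep $D_p$ of them. Letting $I_x$ be the union of these choices over all $p\in Q$, we obtain $|I_x|\le\sum_{p\in Q}D_p\le n\cdot C$, since there are at most $n$ states.

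Next I would define the pruned tree $T'$ by retaining, at each reachable node $x$, only the successors in $I_x$ (so $T'\subseteq T$ and $L'=L|_{T'}$), and define $\tau'$ on $T'$ by restricting $\tau$ and, crucially, adding every downward triple $(q,i,p)$ with $i\in I_x$, $q\in\mn{state}(\tau(x))$ and $p$ live at $x\cdot i$; such additions are legal because admissibility of a downward move depends only on $p$ being live at the target, not on the requesting state $q$. This is exactly what lets us meet the bound $n\cdot C$ rather than $n^2\cdot C$: a single kept successor where $p$ is live can simultaneously serve the $\Diamond$-demand on $p$ of every live source state, so providers need only be counted per target, not per source–target pair. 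I would then verify that $(T',\tau')$ is again a strategy tree on $(T',L')$: the atoms $q$, $\Diamond^- p$ and $\Box^- p$ are unaffected; every true $\Diamond_m p$ remains satisfied because we kept $D_p\ge m$ successors at which $p$ is live and added the corresponding triples; and every true $\Box_m p$ remains satisfied because the set of kept successors at which $p$ is \emph{not} live is a subset of the original such set, whose size was already at most $m$. Hence the locally derived assignments still satisfy $\delta$, and $T'$ has outdegree at most $n\cdot C$.

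The main obstacle is to argue that the pruned strategy is still \emph{globally} accepting, i.e.\ that every infinite path of $(T',\tau')$ satisfies the parity condition, despite the added triples. I would handle this through the parity-game formulation of acceptance used in the preceding proof: by positional determinacy, Player~1 has a memoryless winning strategy together with a winning region $W$ on the acceptance game over $(T,L)$, and every configuration $(x,q)$ with $q$ live in $\tau$ lies in $W$. On $(T',L')$ I would let Player~1 play exactly the moves encoded by $\tau'$; since each such move leads to a configuration whose state is live at its node, all configurations reached on $T'$ remain in $W$, and one shows that the plays so obtained project onto plays consistent with the original winning strategy and therefore inherit the even parity condition. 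Establishing this projection cleanly, rather than the bookkeeping of the previous steps, is the delicate part; everything else is routine. Once the pruned strategy is shown to be accepting, the equivalence between accepting strategy trees and acceptance yields $(T',L')\in L(\Amf)$ with outdegree at most $n\cdot C$, as required.
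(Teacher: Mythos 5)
Your construction coincides with the paper's: prune to at most $C$ kept providers per \emph{target} state (so at most $n\cdot C$ successors survive) and re-point every live source state to the kept providers; your verification of the local strategy-tree conditions (the $\Diamond_m p$ and $\Box_m p$ atoms) also matches the paper's Steps 1 and 2 and is fine. The genuine gap is the acceptance argument, which you rightly call the delicate part but then do not carry out. The winning-region reasoning cannot close it even in principle: the fact that every configuration visited by a play lies in Player~1's winning region $W$ does \emph{not} imply that the play satisfies the parity condition. Membership in $W$ only guarantees that from each such configuration \emph{some} strategy of Player~1 wins; it says nothing about plays that follow the re-pointed moves of $\tau'$, and these are in general not consistent with any single winning strategy precisely at the redirected steps, where a source $q$ is sent to a provider that originally served a different source. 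So everything rests on your closing assertion that plays of $\tau'$ ``project onto plays consistent with the original winning strategy'' --- and that assertion is not a routine verification; for the construction as you describe it (and as the paper literally describes it), it is false.

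Here is a concrete failure. Take states $a,b,p$ with priorities $\Omega(a)=2$, $\Omega(b)=1$, $\Omega(p)=0$ and transitions $\delta(a,\cdot)=\delta(b,\cdot)=\Diamond_1 p$, $\delta(p,\cdot)=\Diamond_1 a$, so $C=1$. Build an accepting strategy tree in which, at infinitely many nodes, both $a$ and $b$ are live, $a$ requests $p$ at the first child and $b$ requests $p$ at the second; below the first child, $p$ requests \emph{both} $a$ and $b$ again (an extra triple, permitted because transition conditions are positive and hence monotone), while below the second child $p$ requests only $a$, who in turn requests $p$ at a node of the first kind. In this tree every visit to $b$ is followed two steps later by $a$, so all infinite paths are accepting. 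Now prune, keeping the first child as the unique provider of $p$, and redirect $b$ to it: the pruned tree contains the infinite path with state sequence $b,p,b,p,\dots$ and priorities $1,0,1,0,\dots$, which violates the parity condition, even though the original tree has \emph{no} path with this state sequence (there, $b,p$ is always followed by $a$). Hence, with an arbitrary choice of kept providers --- which is all that your construction specifies --- the pruned object need not be accepting at all, so no projection or winning-region argument can succeed without changing the construction. A correct completion has to engage with exactly this point: either choose the kept providers in a parity-respecting way, or give up the sharing and keep up to $C$ providers per \emph{pair} of source and target state. In the latter case no triples are added, $\tau'(x)\subseteq\tau(x)$ everywhere, every path of the pruned tree is literally a path of the original accepting tree, and acceptance is immediate; the outdegree bound worsens to $n^2\cdot C$, which serves the intended application in Theorem~\ref{thm:nonemptiness} just as well. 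Note also that the paper's own proof at least attempts the required lifting explicitly, path by path; your game-theoretic shortcut replaces that work with an assertion, and the example above shows the assertion is exactly where the difficulty lives.
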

\begin{proof}
Let $(T,L) \in L(\Amf)$ and $\tau$ an accepting strategy tree on
$T$, and let $C$ be the largest number appearing in $\delta$.  We
inductively construct a tree $(T',L')$ with $T'\subseteq T$ and $L'$
the restriction of $L$ to $T'$ and an accepting strategy tree $\tau'$
on $(T',L')$. For the induction base, we start with $T' =
\{\varepsilon\}$ and $\tau'$ the empty mapping.  For the inductive
step, assume that $\tau'(x)$ is still undefined for some $x\in T'$,
and proceed as follows:
\begin{enumerate}

  \item For every $(q,i,p)\in \tau(x)$ with $i\in\{-1,0\}$, add
    $(q,i,p)\in \tau'(x)$;

  \item for every $p\in Q$, define $N_p = \{i\geq 1\mid
    (q,i,p)\in\tau(x), x\cdot i\in T\}$ and let $N_p'\subseteq N_p$ be
    a subset of $N_p$ with precisely $\min(C,|N_p|)$ elements. Then:

    \begin{enumerate}

      \item for all $i\in N_p'$, add $x\cdot i\in T'$;

      \item for all $(q,i,p)\in \tau(x)$ with $i\in N_p'$, add
	$(q,i,p)\in \tau'(x)$;

      \item for all $q\in \mn{state}(x)$ and $i\in N_p'$, add
	$(q,i,p)\in \tau'(x)$.

    \end{enumerate}

\end{enumerate}
By Step~2 above, $T'$ has outdegree bounded by $|Q|\cdot C$. It
remains to show that $\tau'$ is an accepting strategy tree on~$T'$.
Observe first that, by construction,
$q_0\in\mn{state}(\tau'(\varepsilon))$.

We verify Conditions~(i)--(iii) of a strategy tree being \emph{on an input
tree}. Condition~(i) follows directly from the construction.  For~(ii),
assume that $(q,i,p)\in\tau'(x)$. By construction, there is some $q'$
with $(q',i,p)\in\tau(x)$, and, by Condition~(ii)
$p\in\mn{state}(\tau(x\cdot i))$. Hence, there is some
$(p,j,p')\in\mn{state}(\tau(x\cdot i))$. By construction, there is
also some $(p,j',p')\in\mn{state}(\tau'(x\cdot i)$, thus
$p\in\mn{state}(x\cdot i)$. For Condition~(iii), take any $x\in T'$
and $q\in \mn{state}(\tau'(x))$. As $q\in\mn{state}(\tau(x))$, we know
that the truth assignment $v_{q,x}$ defined for $\tau$ in
Condition~(iii) satisfies $\delta(q,L(x))$. We show that for all
transitions $\alpha$ with $v_{q,x}(\alpha)=1$, we also have
$v'_{q,x}(\alpha)=1$, where $v_{q,x}'$ is the truth assignment defined
for $\tau'$. By Step~1 of the construction, this is true for all
$\alpha$ of the shape $p$, $\Diamond^- p$, and $\Box^-p$. Let now be
$\alpha=\Diamond_k p$, that is, there are $k$ pairwise distinct $i\geq
1$ such that $(q,i,p)\in\tau(x)$. By the choice of $C$, we have
$|N_P'|\geq k$. By Step~2(c), we know that there are $k$ pairwise
distinct $i$ such that $(q,i,p)\in\tau'(x)$, hence
$v_{q,x}'(\alpha)=1$.  Consider now $\alpha=\Box_k p$, that is, for
all but at most $k$ values $i\geq 1$ with $x\cdot i\in T$, we have
$(q,i,p)\in\tau(x)$.  By Step~2(b), this remains true for $\tau'$,
hence $v_{q,x}'(\alpha)=1$.

We finally argue that $\tau'$ is also accepting.  Let
$\beta=(u_1,q_1)(u_2,q_2)\cdots$ be an infinite path in $(T',\tau')$.
We construct an infinite path
$\beta'=(u_1',q_1)(u_2',q_2)(u_3',q_3)\cdots$ in $(T,\tau)$ as
follows:
\begin{itemize}

  \item[--] $u_1'=u_1$;

  \item[--] Let $u_{i+1}=u_i\cdot \ell$ for some $\ell$ with
    $(q_i,\ell,q_{i+1})\in\tau'(x)$. If $\ell\in\{0,1\}$, we have
    $(q_i,\ell,q_{i+1})\in\tau(x)$, by Step~1. We set
    $u'_{i+1}=u_i'\cdot \ell$.  If $\ell\geq 0$ then, by Step~2(c),
    there is some $\ell'$ with $(q_i,\ell',q_{i+1})\in \tau(x)$ and
    $x\cdot\ell'\in T'$. Set $u_{i+1}'=u_i'\cdot\ell'$.

\end{itemize}
Since every infinite path in $(T,\tau)$ is accepting, so is $\beta'$,
and thus $\beta$.
\end{proof}
We now reduce the emptiness problem of 2ATA$_c$ to the
emptiness of alternating automata running on trees of fixed
outdegree~\cite{Vardi98}, which we recall here. A tree $T$ is
\emph{$k$-ary} if every node has exactly $k$. A \emph{two-way
  alternating tree automaton over $k$-ary trees (2ATA$^k$)} that are
$\Theta$-labeled is a tuple $\Amc=(Q, \Theta, q_0, \delta, \Omega)$
where $Q$ is a finite set of \emph{states}, $\Theta$ is the
\emph{input alphabet}, $q_0 \in Q$ is an \emph{initial state},
$\delta$ is the \emph{transition function}, and $\Omega:Q\to\Nbbm$ is
a \emph{priority function}. The transition function maps a state $q$
and some input letter $\theta$ to a \emph{transition condition}
$\delta(q,\theta)$, which is a positive Boolean formula over the truth
constants $\mn{true}$, $\mn{false}$, and transitions of the form
$(i,q)\in [k]\times Q$ where $[k]=\{-1,0,\ldots,k\}$.  A \emph{run} of
$\Amc$ on a $\Theta$-labeled tree $(T,L)$ is a $T \times Q$-labeled
tree $(T_r, r)$ such that
\begin{enumerate}

  \item $r(\varepsilon)=(\varepsilon,q_0)$; 

  \item for all $x \in T_r$ with $r(x) =(w,q)$ and $\delta(q,\tau(w))=
    \vp$, there is a (possibly empty) set $\mathcal S = \{(m_1,q_1), \ldots,
    (m_n,q_n)\} \subseteq [k] \times Q$ such that $\mathcal S$
    satisfies~$\vp$ and for $1 \leq i \leq n$, we have $x\cdot i
    \in T_r$, $w \cdot m_i$ is defined,
    and $\tau_r(x \cdot i)= (w\cdot m_i, q_i)$.  

\end{enumerate}
Accepting runs and accepted trees are defined as for \ata{}s. It is
well-known that emptiness for 2ATA$^k$s can be checked in exponential
time~\cite{Vardi98}, more precisely: 

\begin{theorem} \label{thm:vardi-emptiness}
  The emptiness problem for 2ATA$^k$s can be solved in time single
  exponential in the number of states and the maximal priority, and
  polynomial in all other inputs.
\end{theorem}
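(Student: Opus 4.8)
This is a standard result that is essentially due to \citeA{Vardi98}; I sketch the argument mainly to keep track of the stated resource bounds. The plan is to reduce emptiness of a two-way alternating parity automaton $\Amc=(Q,\Theta,q_0,\delta,\Omega)$ on $k$-ary trees to emptiness of a \emph{one-way nondeterministic} parity tree automaton $\mathcal{N}$, and then to decide the latter by solving a parity game. The crucial step is Vardi's two-way-to-one-way simulation, and the point of the reduction is that nondeterministic automata admit an emptiness test whose only exponential dependence is on the number of priorities.

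The first step is the conversion of $\Amc$ into an equivalent one-way nondeterministic parity automaton $\mathcal{N}$. The idea is that the behaviour of $\Amc$ on the subtree rooted at a node $w$ --- which may involve moving up to the parent of $w$ and re-entering the subtree --- is completely captured by a \emph{profile}, namely a relation $R_w\subseteq Q\times Q$ collecting those pairs $(p,q)$ such that a copy of $\Amc$ entering $w$ in state $p$ can eventually exit to the parent of $w$ in state $q$ while respecting the parity condition on the intervening finite excursion. At each node, $\mathcal{N}$ guesses a profile for every child, runs a copy of $\Amc$ in which each upward transition atom $(-1,q)$ is discharged against the \emph{guessed} profile of the current node rather than by actually travelling up, and performs a local check that the guesses are consistent with $\delta$, reinforced by a subsidiary parity condition that rejects spurious infinite ``bouncing'' between a node and its parent. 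Correctness is exactly the content of Vardi's simulation theorem. The construction yields $|Q_{\mathcal{N}}|$ single exponential in $|Q|$, a number of priorities polynomial in the maximal value of $\Omega$, and only polynomial dependence on $k$ and $|\Theta|$.

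The second step is to decide emptiness of $\mathcal{N}$ by solving the associated parity game, in which one player chooses, node by node, a transition of $\mathcal{N}$ and the other chooses a direction into the $k$-ary tree. This game has $O(|Q_{\mathcal{N}}|\cdot|\delta_{\mathcal{N}}|)$ positions and the same priorities as $\mathcal{N}$, and parity games are solvable in time polynomial in the number of positions and exponential in the number of priorities \cite{EmersonJ91}. Composing the two steps, the overall running time is $2^{\mathrm{poly}(|Q|,\,\max\Omega)}$ and polynomial in all remaining inputs, as required. I expect the main obstacle to be the first step: establishing correctness of the profile-based simulation while keeping the blow-up single exponential. The delicate points are that the profiles $R_w$ must be recognised by a \emph{well-founded} fixpoint so that the guessing admits a sound and complete local consistency check, and that the parity acceptance condition must be encoded so that infinite runs oscillating between a node and its parent are classified correctly; once the simulation is in place, the reduction to parity games and the complexity bookkeeping are routine.
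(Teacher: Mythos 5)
You should first know that the paper never proves this statement: Theorem~\ref{thm:vardi-emptiness} is imported as a known result, cited to \citeA{Vardi98} (its label even reads \emph{vardi-emptiness}), and the paper's own work begins only afterwards, in the reduction of \ata-emptiness to it in the proof of Theorem~\ref{thm:nonemptiness} --- where strategy trees and memoryless determinacy of parity games \cite{EmersonJ91} are indeed used, but to bound the outdegree of accepted trees and to handle the counting transitions, not to prove the present theorem. So your proposal has to be measured against Vardi's original argument rather than against anything written in the paper. Measured that way, your route is the standard and correct one: convert the two-way alternating parity automaton into a one-way nondeterministic parity tree automaton $\mathcal{N}$ that guesses, node by node, how excursions into each subtree behave, and then decide emptiness of $\mathcal{N}$ via the associated parity game, which costs time polynomial in the number of positions and exponential only in the number of priorities. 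With $|Q_{\mathcal{N}}|$ single exponential in $|Q|$ and $\max\Omega$, and polynomially many priorities, this composes to the claimed bound.

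One concrete repair is needed, however. A profile $R_w \subseteq Q \times Q$ cannot work: a run of the two-way automaton may perform infinitely many finite detours into subtrees, and the parity condition must see the priorities occurring \emph{inside} those detours. Vardi's annotations are accordingly subsets of $Q \times \{0,\dots,\max\Omega\} \times Q$, where the middle component records the maximal priority visited along the detour, and the auxiliary acceptance condition of $\mathcal{N}$ is formulated over these recorded priorities. With priority-free profiles the construction is unsound --- a run could bury odd priorities inside detours and still look accepting. This enrichment is also precisely where the exponential dependence on $\max\Omega$ (and not only on $|Q|$) enters the state space of $\mathcal{N}$, so it is not an optional refinement but the very part of the construction that yields the bound stated in the theorem. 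You flag this, together with the soundness of the local consistency check for claimed detours, as the delicate point, which is the right instinct; but as written, the profile definition would have to be changed before the rest of your argument goes through.
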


We are now ready to prove Theorem~\ref{thm:nonemptiness}.

\nonemptiness*
 
\noindent\begin{proof}
  Let $\Amf=(Q,\Theta,q_0,\delta,\Omega)$ be an \ata with $n$ states
  and $C$ the largest number in~$\delta$. We devise a
  2ATA$^k$ $\Amf'=(Q',\Theta',q_0',\delta',\Omega)$ with $k=n\cdot C$,
  that is, the bound from Lemma~\ref{lem:bounded-outdegree}, such that
  $L(\Amf)$ is empty iff $L(\Amf')$ is empty. Set
  $Q'=Q\cup\{q_0',q_1,q_r,q_\bot\}$ and
  $\Theta'=(\Theta\cup\{d_\bot\})\times\{0,1\}$. The extended alphabet
  and the extra states are used to simulate transitions of the form
  $\Box^-p$ (using the second component in the alphabet and state
  $q_r$) and to allow for input trees of outdegree less than $k$
  (using the additional symbol $d_\bot)$.

  We obtain $\delta'$ from $\delta$ by replacing $q$ with $(0,q)$,
  $\Diamond^- q$ with $(-1,q)$ and $\Box^-q$ with
  $(0,q_r)\vee(-1,q)$. Moreover, we replace

  \begin{itemize}

    \item[--] $\Diamond_n q$ with $\bigvee_{X\in \binom{\{1,\ldots
      k\}}{n}} \bigwedge_{i\in X} (i,q)$; 

    \item[--] $\Box_n q$ with $\bigvee_{X\in \binom{\{1,\ldots k\}}
    {n}} \bigwedge_{i\in \{1,\ldots,N\}\setminus X} (i,q)$,

\end{itemize}
where, as usual, $\binom{M}{m}$ denotes the set of all $m$-elementary
subsets of a set $M$. The transition function $\delta'$ for $\Amf'$ is
$\delta$ extended with the following transitions:
%
%
%
%
%
 %
  $$ \begin{array}[h]{rcll}
    \delta'(q_0',(\theta,b)) &=&  \left\{\begin{array}{ll} \mn{false}
      & \text{if $b=0$ or $\theta=d_\bot$,} \\
      q_0\wedge \bigwedge_{i=1}^k (i,q_1) & \text{otherwise,}
    \end{array}\right.\\[4mm]
    \delta'(q_1,(\theta,b)) &=&  \left\{\begin{array}{ll}
      \bigwedge_{i=1}^k (i,q_1) & \text{if $b=0$ and $\theta\neq
      d_\bot$}, \\
      \mn{true} & \text{if $\theta=d_\bot$,} \\
      \mn{false} & \text{otherwise},
    \end{array}\right.\\[4mm]
    \delta'(q_r,(\theta,b)) &=&  \left\{\begin{array}{ll} \mn{true} &
      \text{if $b=1$}, \\
    \mn{false}& \text{otherwise},
    \end{array}\right.  \\
    \delta(q,(d_\bot,b)) &=& \mn{false} \hspace{2cm}\text{for all
    $q\in Q$}.
  \end{array}$$
By Lemma~\ref{lem:bounded-outdegree}, we have that $L(\Amf)$ is empty
iff $L(\Amf')$ is empty. Note that this is not a polynomial time
reduction to emptiness of 2ATA$^k$s due to the translations for
$\Diamond_n$ and $\Box_n$.  It should be clear, however, that these
translations can be \emph{represented} in polynomial size, e.g., in the
way they are given above. It remains to observe that, for the
algorithm from Theorem~\ref{thm:vardi-emptiness} to work in the
desired time, it suffices
to be able to perform the following task in polynomial time:
\begin{itemize}

  \item Given a set $\Smc\subseteq [k]\times Q$, a state $q\in Q$, and
    $a\in \Theta$, check whether $\Smc$ satisfies $\delta(q,a)$.

\end{itemize}
This is certainly possible for the mentioned representation, which
concludes the proof of the theorem. 
\end{proof}

\subsection{Proof of Lemma~\ref{lem:derivationtrees}}
\label{appx:derivlem}

\lemderivationtrees*

\noindent
\begin{proof}\ We start with Point~1.  The ``$\Leftarrow$'' direction is
  straightforward to prove by induction on the depth of the derivation
  tree. We thus concentrate on the ``$\Rightarrow$'' direction. Thus assume
  that $\Tmc,\Amc \models A_0(a_0)$.  We construct a sequence of
  ABoxes by `chasing' \Amc with the TBox \Tmc, that is, by
  exhaustively and fairly applying the following rules:
  \begin{enumerate}

    \item If $A_1(a),\dots,A_k(a) \in \Amc$ and $\Tmc \models A_1
      \sqcap \cdots \sqcap A_k \sqsubseteq A$, then add $A(a)$ to
      \Amc;

    \item If $r(a,b), A(a) \in \Amc$ and 
      $A\sqsubseteq \forall r.B\in \Tmc$, then add $B(b)$ to \Amc;

    \item If $r(a,b),B(a) \in \Amc$ and $B \sqsubseteq \exists r . A,
      \mn{func}(r) \in \Tmc$, then add $A(b)$ to \Amc.
      
  \end{enumerate}
  Let $\Amc=\Amc_0,\Amc_1,\dots,\Amc_k$ be the emerging (finite!)
  sequence of ABoxes.  We establish the following central claim.
  
  \medskip\noindent \textbf{Claim.} There is a model \Imc of $\Amc_k$ and \Tmc such
  that for all $a \in \mn{ind}(\Amc)$ and concept names $A$,
  $a \in A^\Imc$ implies $A(a) \in \Amc_k$.

  \medskip \noindent To construct the claimed $\Imc$, we start with
  $\Imc_0$ defined as follows. 
  \begin{align*}
    \Delta^{\Imc_0} &= \mn{ind}(\Amc) \\
    A^{\Imc_0} & = \{a\mid A(a)\in \Amc_k\}\\
    r^{\Imc_0} & = \{(a,b)\mid s(a,b)\in\Amc, \Tmc\models s\sqsubseteq
    r\}
  \end{align*}
  Denote with $\Amc_a$ the set $\{A(a)\in \Amc_k\mid A\in\NC\}$.  Now,
  obtain $\Imc$ from $\Imc_0$ by proceeding as follows for every $a\in
  \mn{ind}(\Amc)$, $A\sqsubseteq \exists r.B\in \Tmc$, and $a\in
  A^\Imc$. Let $\Imc'$ be the subtree rooted at a $\rho$-successor $b$
  of $a$ in $\Imc_{\Tmc,\Amc_a}$ with $b\in B^\Imc$ and $r\in \rho$.
  If $\mn{func}(r)\notin \Tmc$ or there is no $a'$ with $r(a,a')\in
  \Amc_k$, then add $\Imc'$ as a $\rho$-successor of $a$ in $\Imc_0$.
  Clearly, $\Imc_0$ and thus $\Imc$ is a model of $\Amc_k$.  Based on
  the construction of $\Amc_k$ and \Imc, it can also be verified that
  $\Imc$ is a model of \Tmc. This finishes the proof of the claim.

  \medskip We are now ready to finish the proof of Point~1. From $\Tmc,\Amc
  \models A_0(a_0)$ and the claim, we obtain $A_0(a_0) \in
  \Amc_k$. Exploiting that the three rules used to construct
  $\Amc_0,\dots,\Amc_k$ are in one-to-one correspondence with
  Conditions~(i) to~(iii) from the definition of derivation trees, it
  is easy to show by induction on $i$ that for all $i \in
  \{1,\dots,k\}$ and all $A(a) \in \Amc_i$, there is a derivation tree
  for $A(a)$ in \Amc w.r.t.\ \Tmc. In particular, this gives the
  desired derivation tree for $A_0(a_0)$.

\medskip

Now for Point~2. The ``$\Rightarrow$'' direction is immediate, so we
concentrate on ``$\Leftarrow$''. Thus assume that Conditions~(a) and~(b) are
satisfied. By~(a) and
Lemma~\ref{lem:universal_model}~\eqref{it:univ_model_is_a_model}, the universal model
$\Imc_a=\Imc_{\Tmc,\Amc_a}$ is a model of $\Tmc$ and $\Amc_a$. In
particular, $\Imc_a$ is weakly tree-shaped with root $a$ and
satisfies, for all concept names $B\in \NC$:
\begin{itemize}

  \item[$(\ast)$] $a\in B^{\Imc_a}$ iff $\Tmc,\Amc_a \models B(a)$.

\end{itemize}
 
  
  We construct a model $\Imc$ of \Amc and \Tmc as follows. Start with
  $\Imc_0$ by taking 
  \begin{align*}
    \Delta^{\Imc_0} &= \mn{ind}(\Amc) \\
    A^{\Imc_0} & = \{a\mid \text{$A(a)$ has a derivation tree in \Amc
    w.r.t.~\Tmc}\}\\
    r^{\Imc_0} & = \{(a,b)\mid s(a,b)\in\Amc, \Tmc\models s\sqsubseteq
    r\}
  \end{align*}
  and then adding, for every $a\in \mn{ind}(\Amc)$ and
  $\rho$-successor $b$ of $a$ in $\Imc_a$ such that for all roles
  $r\in\rho$ with $\mn{func}(r)\in\Tmc$ we have
  $r(a,a')\notin\Amc$ for all $a'\in\mn{ind}(\Amc)$, the subinterpretation of $\Imc_a$
  rooted at $b$ as a $\rho$-successor of $a$.
    
  Clearly, $\Imc_0$ and thus $\Imc$ is a model of \Amc. 
  Based on~$(\ast)$ and the assumptions, it is also straightforward to show
  that $\Imc$ is a model \Tmc. By construction, all elements from
  $\Delta^\Imc\setminus\mn{ind}(\Amc)$ already satisfy~\Tmc. 
  For $a\in\mn{ind}(\Amc)$, we distinguish cases on the shape of
  concept inclusions: 
  \begin{itemize}

    \item If $\top \sqsubseteq A\in \Tmc$, then trivially $A(a)$ has a
      derivation tree in \Amc w.r.t.~\Tmc, hence $a\in A^{\Imc}$.

    \item Suppose $A\sqsubseteq \bot\in \Tmc$ and $a\in A^\Imc$.
      By definition of $\Imc$, $A(a)$ has a derivation tree in
      $\Amc$ w.r.t.~\Tmc, and thus $\Tmc,\Amc\models A(a)$. But then the
      ABox $\Amc_a$ is not consistent with $\Tmc$, a contradiction.

    \item If $A_1\sqcap A_2\sqsubseteq B\in \Tmc$ and $a\in (A_1\sqcap
      A_2)^\Imc$, then by construction, $A_1(a)$ and $A_2(a)$ have
      derivation trees in \Amc w.r.t.~\Tmc. Thus also $B(a)$ has a
      derivation tree, and hence $a\in B^\Imc$.

    \item If $A\sqsubseteq \exists r.B\in \Tmc$ and $a\in A^\Imc$,
      then $A(a)$ has a derivation tree in \Amc w.r.t.~\Tmc. Thus,
      $\Tmc,\Amc\models A(a)$, and $a$ has a $\rho$-successor
      $b$ in $\Imc_a$ with $r\in \rho$ and $b\in B^\Imc$. By
      construction, $a$ has the $\rho$-successor $b$ in \Imc unless
      $\mn{func}(r)\in \Tmc$ and $r(a,a')\in \Amc$ for some
      $a'\in\mn{ind}(\Amc)$. But then, $B(a')$ has a derivation tree
      and thus $a'\in B^\Imc$ and $a\in (\exists r.B)^\Imc$.

    \item If $A\sqsubseteq \forall r.B\in \Tmc$ and $a\in (\exists
      r^-.A)^\Imc$, there is some $b$ with $b\in A^\Imc$ and $(b,a)\in
      r^\Imc$.  
      In case $b\in \mn{\Amc}$, $B(a)$ has a derivation tree
      by definition.  If $b\notin\mn{ind}(\Amc)$, we get $a\in
      B^{\Imc_a}$.  By~$(\ast)$, we know that $\Tmc,\Amc_a\models
      B(a)$ and thus there $B(a)$ has a derivation tree. In both
      cases, the construction yields $a\in B^\Imc$. 

%

    \item It remains to note that $\Imc$ satisfies the functionality
      assertions from \Tmc because of assumption~(b).

  \end{itemize}
\end{proof}

\bibliographystyle{theapa}

\end{document}